\DeclareFontFamily{U}{matha}{\hyphenchar\font45}
\DeclareFontShape{U}{matha}{m}{n}{
  <-6> matha5 <6-7> matha6 <7-8> matha7
  <8-9> matha8 <9-10> matha9
  <10-12> matha10 <12-> matha12
  }{}
\DeclareSymbolFont{matha}{U}{matha}{m}{n}
\DeclareMathSymbol{\Lt}{3}{matha}{"CE}
\newcommand{\B}[1]{\mathbb{#1}}
\newcommand{\C}[1]{\mathcal{#1}}
\newcommand{\EE}{\mathbb{E}}
\newcommand{\bb}[1]{\mathbf{#1}}
\newcommand{\indep}{\protect\mathpalette{\protect\independenT}{\perp}}
\newcommand{\dx}[1]{\mathrm{d}#1}
\newcommand{\sgn}{\mathop{\mathrm{sign}}}
\def\independenT#1#2{\mathrel{\rlap{$#1#2$}\mkern2mu{#1#2}}}
\DeclareMathOperator{\unif}{\textsc{Unif}}
\DeclareMathOperator{\bern}{\textsc{Bern}}
\DeclareMathOperator{\bbeta}{\textsc{Beta}}
\DeclareMathOperator{\norm}{\textsc{N}}
\DeclareMathOperator{\logit}{\text{logit}}
\DeclareMathOperator{\rest}{\upharpoonright}
\DeclareMathOperator{\Span}{\textsc{Span}}
\DeclareMathOperator{\supp}{\textsc{Supp}}
\DeclareMathOperator{\pPr}{Pr}
\DeclareMathOperator{\up}{{Up}}
\DeclareMathOperator{\low}{{Lo}}
\DeclareMathOperator{\img}{\textsc{Img}}
\DeclareMathOperator{\abs}{\textsc{Abs}}
\DeclareMathOperator{\id}{id}
\newcommand*\diff{\mathop{}\!\mathrm{d}}
\renewcommand{\Pr}{\pPr}
\newlength{\myl}
\theoremstyle{definition}
\newtheorem{defn}{Definition}
\theoremstyle{plain}
\newtheorem{prop}{Proposition}
\newtheorem{thm}{Theorem}
\newtheorem{lem}{Lemma}
\newtheorem{cor}{Corollary}
\theoremstyle{remark}
\newtheorem{rmk}{Remark}
\newtheorem{case}{Case}
\pgfplotsset{compat=1.17}
\begin{document}

\twocolumn[
\icmltitle{Causal Conceptions of Fairness and their Consequences}
\icmlsetsymbol{equal}{*}
\begin{icmlauthorlist}
  \icmlauthor{Hamed Nilforoshan}{equal,stanford}
  \icmlauthor{Johann Gaebler}{equal,stanford}
  \icmlauthor{Ravi Shroff}{nyu}
  \icmlauthor{Sharad Goel}{hks}
\end{icmlauthorlist}

\icmlaffiliation{stanford}{Stanford University, Stanford, CA}
\icmlaffiliation{nyu}{New York University, New York, NY}
\icmlaffiliation{hks}{Harvard University, Cambridge, MA}

\icmlcorrespondingauthor{Hamed Nilforoshan}{hamedn@cs.stanford.edu}
\icmlcorrespondingauthor{Johann Gaebler}{jgaeb@stanford.edu}
\icmlcorrespondingauthor{Ravi Shroff}{ravi.shroff@nyu.edu}
\icmlcorrespondingauthor{Sharad Goel}{sgoel@hks.harvard.edu}

\icmlkeywords{Machine Learning, ICML}

\vskip 0.3in
]

\printAffiliationsAndNotice{\icmlEqualContribution}

\begin{abstract}
  Recent work highlights the role of causality in designing equitable
  decision-making algorithms. It is not immediately clear, however, how existing
  causal conceptions of fairness relate to one another, or what the consequences
  are of using these definitions as design principles. Here, we first assemble
  and categorize popular causal definitions of algorithmic fairness into two
  broad families: (1) those that constrain the effects of decisions on
  counterfactual disparities; and (2) those that constrain the effects of
  legally protected characteristics, like race and gender, on decisions. We then
  show, analytically and empirically, that both families of definitions
  \emph{almost always}---in a measure theoretic sense---result in strongly
  Pareto dominated decision policies, meaning there is an alternative,
  unconstrained policy favored by every stakeholder with preferences drawn from
  a large, natural class. For example, in the case of college admissions
  decisions, policies constrained to satisfy causal fairness definitions would
  be disfavored by every stakeholder with neutral or positive preferences for
  both academic preparedness and diversity. Indeed, under a prominent definition
  of causal fairness, we prove the resulting policies require admitting all
  students with the same probability, regardless of academic qualifications or
  group membership. Our results highlight formal limitations and potential
  adverse consequences of common mathematical notions of causal fairness.
\end{abstract}

\section{Introduction}

Imagine designing an algorithm to guide decisions for college admissions. To
help ensure algorithms such as this are broadly equitable, a plethora of formal
fairness criteria have been proposed in the machine learning
community~\citep{
  darlington1971another, cleary1968test, zafar2017parity, dwork2012fairness,
  chouldechova2017fair, hardt2016equality, kleinberg2016inherent,
  woodworth2017learning, zafar2017fairness, corbett2017algorithmic,
  chouldechova2020snapshot, berk2021fairness%
}.
For example, under the principle that fair algorithms should have comparable
performance across demographic groups~\citep{hardt2016equality}, one might check
that among applicants who were ultimately academically ``successful'' (e.g., who
eventually earned a college degree, either at the institution in question or
elsewhere), the algorithm would recommend admission for an equal proportion of
candidates across race groups. Alternatively, following the principle that
decisions should be agnostic to legally protected attributes like race and
gender~\citep[cf.][]{corbett2018measure, dwork2012fairness}, one might mandate
that these features not be provided to the algorithm.

Recent scholarship has argued for extending equitable algorithm design
by adopting a causal perspective, leading to myriad additional formal criteria
for fairness~\citep{
  coston2020counterfactual, imai2020principal, imai2020experimental,
  wang2019equal, kusner2017counterfactual, nabi2018fair, wu2019pc,
  mhasawade2021causal, kilbertus2017avoiding, zhang2018fairness,
  zhang2016causal, chiappa2019path, loftus2018causal, galhotra2022causal,
  carey2022causal%
}.
Less attention, however, has been given to understanding the potential
downstream consequences of using these causal definitions of fairness as
algorithmic design principles, leaving an important gap to fill if these
criteria are to responsibly inform policy choices.

Here we synthesize and critically examine the statistical properties and
concomitant consequences of popular causal approaches to fairness. We begin, in
Section~\ref{sec:defs}, by proposing a two-part taxonomy for causal conceptions
of fairness that mirrors the illustrative, non-causal fairness principles
described above. Our first category of definitions encompasses those that
consider the effect of decisions on counterfactual disparities. For example,
recognizing the causal effect of college admission on later success, one might
demand that among applicants who would be academically successful \emph{if
admitted} to a particular college, the algorithm would recommend admission for
an equal proportion of candidates across race groups. The second category of
definitions encompasses those that seek to limit both the direct and indirect
effects of one's group membership on decisions. For example, because one's race
might impact earlier educational opportunities, and hence test scores, one might
require that admissions decisions are robust to the effect of race along such
causal paths.

We show, in Section~\ref{sec:construction}, that when the distribution of causal
effects is known (or can be estimated), one can efficiently compute
utility-maximizing decision policies constrained to satisfy each of the causal
fairness criteria we consider. However, for natural families of utility
functions---for example, those that prefer both higher degree attainment and
more student-body diversity---we prove in Section~\ref{sec:pareto} that causal
fairness constraints \emph{almost always} lead to strongly Pareto dominated
decision policies. To establish this result, we use the theory of
prevalence~\citep{
  christensen1972sets, hunt1992prevalence, anderson2001genericity,
  ott2005prevalence%
},
which extends the notion of full-measure sets to infinite-dimensional vector
spaces. In particular, in our running college admissions example, adhering to
any of the common conceptions of causal fairness would simultaneously result in
a lower number of degrees attained and lower student-body diversity, relative to
what one could achieve by explicitly tailoring admissions policies to achieve
desired outcomes. In fact, under one prominent definition of causal fairness, we
prove that the induced policies require simply admitting all applicants with
equal probability, irrespective of one's academic qualifications or group
membership. These results, we hope, elucidate the structure---and
limitations---of current causal approaches to equitable decision making.

\section{Causal Approaches to Fair Decision Making}
\label{sec:defs}

We describe two broad classes of causal notions of fairness: (1) those that
consider outcomes when \emph{decisions} are counterfactually altered; and (2)
those that consider outcomes when \emph{protected attributes} are
counterfactually altered. We illustrate these definitions in the context of a
running example of college admissions decisions.

\subsection{Problem Setup}
\label{sec:example}

Consider a population of individuals with observed covariates \(X\), drawn i.i.d
from a set \(\mathcal{X} \subseteq \mathbb{R}^n\) with distribution
\(\mathcal{D}_{X}\). Further suppose that \(A \in \mathcal{A}\) describes one or
more discrete protected attributes, such as race or gender, which can be derived
from \(X\) (i.e., \(A = \alpha(X)\) for some measurable function \(\alpha\)).
Each individual is subject to a binary decision \(D \in \{0, 1\}\), determined
by a (randomized) rule \(d(x) \in [0, 1]\), where \(d(x) = \Pr(D = 1 \mid X =
x)\) is the probability of receiving a positive decision.\footnote{%
  That is, \(D = \B 1_{U_D \leq d(X)}\), where \(U_D\) is an independent uniform
  random variable.
}
Given a budget \(b\) with \(0 < b < 1\), we require the decision rule to satisfy
\(\EE[D] \leq b\), limiting the expected proportion of positive decisions.

In our running example, we imagine a population of applicants to a particular
college, where \(d\) denotes an admissions rule and \(D\) indicates a binary
admissions decision. To simplify our exposition, we assume all admitted students
attend the school. In our setting, the covariates \(X\) consist of an
applicant's test score and race \(A \in \{a_0, a_1\}\), where, for notational
convenience, we consider two race groups. The budget \(b\) bounds the expected
proportion of admitted applicants.

Assuming there is no interference between units~\cite{imbens2015causal}, we
write \(Y(1)\) and \(Y(0)\) to denote potential outcomes of interest under each
of the two possible binary decisions, where \(Y = Y(D)\) is the realized
outcome. We assume that \(Y(1)\) and \(Y(0)\) are drawn from a (possibly
infinite) set \(\C Y \subseteq \B R\), where \(|\C Y| > 1\). In our admissions
example, \(Y\) is a binary variable that indicates college graduation (i.e.,
degree attainment), with \(Y(1)\) and \(Y(0)\) describing, respectively, whether
an applicant would attain a college degree if admitted to or if rejected from
the school we consider. Note that \(Y(0)\) is not necessarily zero, as a
rejected applicant may attend---and graduate from---a different university.

Given this setup, our goal is to construct decision policies \(d\) that are
broadly equitable, formalized in part by the causal notions of fairness
described below. We focus on decisions that are made algorithmically, informed
by historical data on applicants and subsequent outcomes.

\subsection{Limiting the Effect of Decisions on Disparities}

A popular class of non-causal fairness definitions requires that error rates
(e.g., false positive and false negative rates) are equal across protected
groups~\citep{hardt2016equality, corbett2018measure}. Causal analogues of these
definitions have recently been proposed~\citep{%
  coston2020counterfactual, imai2020principal, imai2020experimental,
  mishler2021fairness%
},
which require various conditional independence conditions to hold between the
potential outcomes, protected attributes, and decisions.\footnote{%
  In the literature on causal fairness, there is at times ambiguity between
  ``predictions'' \(\hat{Y} \in \{0,1\}\) of \(Y\) and ``decisions'' \(D \in
  \{0,1\}\). Following past work~\citep[e.g.,][]{%
    corbett2017algorithmic, kusner2017counterfactual, wang2019equal%
  },
  here we focus exclusively on decisions, with predictions implicitly impacting
  decisions but not explicitly appearing in our definitions.
}

Below we list three representative examples of this class of fairness
definitions: counterfactual predictive parity~\citep{coston2020counterfactual},
counterfactual equalized odds~\citep{mishler2021fairness,
coston2020counterfactual}, and conditional principal
fairness~\citep{imai2020principal}.\footnote{%
  Our subsequent analytical results extend in a straightforward manner to
  structurally similar variants of these definitions (e.g., requiring \(Y(0)
  \indep A \mid D=1\) or \(D \indep A \mid Y(0)\), variants of counterfactual
  predictive parity and counterfactual equalized odds, respectively).
}

\begin{defn}
\label{defn:predictive-parity}
  \emph{Counterfactual predictive parity} holds when
  \begin{equation}
  \label{eq:counterfactual_predictive_parity}
    Y(1) \indep A \mid D = 0.
  \end{equation}
\end{defn}

In our college admissions example, counterfactual predictive parity means that
among rejected applicants, the proportion who would have attained a college
degree, had they been accepted, is equal across race groups.

\begin{defn}
\label{defn:cf-eo}
  \emph{Counterfactual equalized odds} holds when
  \begin{equation}
  \label{eq:counterfactual_equalized_odds}
    D \indep A \mid Y(1).
  \end{equation}
\end{defn}

In our running example, counterfactual equalized odds is satisfied when two
conditions hold: (1) among applicants who would graduate if admitted (i.e.,
\(Y(1) = 1\)), students are admitted at the same rate across race groups; and
(2) among applicants who would not graduate if admitted (i.e., \(Y(1) = 0\)),
students are again admitted at the same rate across race groups.

\begin{defn}
\label{defn:principal-fairness}
  \emph{Conditional principal fairness} holds when
  \begin{equation}
  \label{eq:conditional_principal_fairness}
    D \indep A \mid Y(0), Y(1), W,
  \end{equation}
  where, for a measurable function \(\omega\) on \(\C X\), \(W = \omega(X)\)
  describes a reduced set of the  covariates \(X\). When \(W\) is constant (or,
  equivalently, when we do not condition on \(W\)), this condition is called
  \emph{principal fairness}.
\end{defn}

In our example, conditional principal fairness means that ``similar''
applicants---where similarity is defined by the potential outcomes and
covariates \(W\)---are admitted at the same rate across race groups.

\subsection{Limiting the Effect of Attributes on Decisions}

An alternative causal framework for understanding fairness considers the effects
of protected attributes on decisions \cite{%
  wang2019equal, kusner2017counterfactual, nabi2018fair, wu2019pc,
  mhasawade2021causal, kilbertus2017avoiding, zhang2018fairness,
  zhang2016causal%
}.
This approach, which can be understood as codifying the legal notion of
disparate treatment~\cite{goel2017combatting, zafar2017fairness}, considers a
decision rule to be fair if, at a high level, decisions for individuals are the
same in ``(a) the actual world and (b) a counterfactual world where the
individual belonged to a different demographic
group''~\cite{kusner2017counterfactual}.\footnote{%
  Conceptualizing a general causal effect of an immutable characteristic such as
  race or gender is rife with challenges, the greatest of which is expressed by
  the mantra, ``no causation without manipulation''
  \cite{holland1986statistics}. In particular, analyzing race as a causal
  treatment requires one to specify what exactly is meant by ``changing an
  individual's race" from, for example, white to
  Black~\citep{gaebler2020causal, hu2020s}. Such difficulties can sometimes be
  addressed by considering a change in the \emph{perception} of race by a
  decision maker~\cite{greiner2011causal}---for instance, by changing the name
  listed on an employment application~\citep{bertrand2004emily}, or by masking
  an individual's
  appearance~\citep{%
    goldin2000orchestrating, grogger2006testing, pierson2020large,
    chohlas2021blind%
  }.
}

In contrast to ``fairness through unawareness''---in which race and other
protected attributes are barred from being an explicit input to a decision
rule~\citep[cf.][]{dwork2012fairness, corbett2018measure}---the causal versions
of this idea consider both the direct and indirect effects of protected
attributes on decisions. For example, even if decisions only directly depend on
test scores, race may indirectly impact decisions through its effects on
educational opportunities, which in turn influence test scores. This idea can be
formalized by requiring that decisions remain the same in expectation even if
one's protected characteristics are counterfactually altered, a condition known
as counterfactual fairness ~\cite{kusner2017counterfactual}.

\begin{defn}
\label{defn:counterfactual-fairness}
  \emph{Counterfactual fairness} holds when
  \begin{equation}
  \label{eq:counterfactual_fairness}
    \EE[D(a') \mid X] = \EE[D \mid X].
  \end{equation}
  where \(D(a')\) denotes the decision
  when one's protected attributes are counterfactually altered to be any \(a'
  \in \mathcal{A}\).
\end{defn}

In our running example, this means that for each group of observationally
identical applicants (i.e., those with the same values of \(X\), meaning
identical race and test score), the proportion of students who are actually
admitted is the same as the proportion who would be admitted if their race were
counterfactually altered.

Counterfactual fairness aims to limit all direct and indirect effects of
protected traits on decisions. In a generalization of this criterion---termed
path-specific fairness~\citep{%
  chiappa2019path, nabi2018fair, zhang2016causal, wu2019pc%
}---one
allows protected traits to influence decisions along certain causal paths but
not others. For example, one may wish to allow the direct consideration of race
by an admissions committee to implement an affirmative action policy, while also
guarding against any indirect influence of race on admissions decisions that may
stem from cultural biases in standardized tests~\citep{williams1983some}.

\begin{figure}
  \begin{center}
    \begin{tikzpicture}[xscale = 1.75, yscale = 2, align = center]
      \node at (0,0) (race) {\(A\)\\{\scriptsize Race}};
      \node at (1,0) (educ) {\(E\)\\{\scriptsize Education}};
      \node at (2,0) (test) {\(T\)\\{\scriptsize Test Score}};
      \node at (3/2, -2/3) (medi) {\(M\)\\{\scriptsize Preparation}};
      \node at (3,0) (deci) {\(D\)\\{\scriptsize Decision}};
      \node at (4, 0) (pass) {\(Y\)\\{\scriptsize Graduation}};

      \draw[->, color = red, line width=0.25mm] (race) to (educ);
      \draw[->, bend left = 30, line width=0.25mm] (race) to (deci);
      \draw[->, line width=0.25mm] (educ) to (medi);
      \draw[->, color = red, line width=0.25mm] (educ) to (test);
      \draw[->, line width=0.25mm] (medi) to (test);
      \draw[->, bend right = 20, line width=0.25mm] (medi) to (pass);
      \draw[->, color = red, line width=0.25mm] (test) to (deci);
      \draw[->, line width=0.25mm] (deci) to (pass);

    \end{tikzpicture}
  \end{center}
  \caption{%
    A causal DAG illustrating a hypothetical process for college admissions.
    Under path-specific fairness, one may require, for example, that race does
    not affect decisions along the path highlighted in red.
  }
\label{fig:dag}
\end{figure}
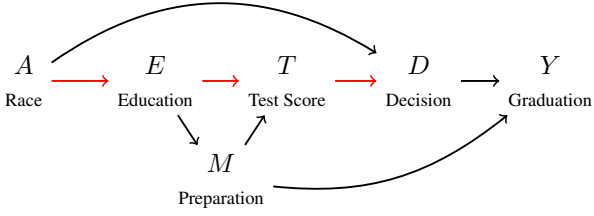

The formal definition of path-specific fairness requires specifying a causal DAG
describing relationships between attributes (both observed covariates and latent
variables), decisions, and outcomes. In our running example of college
admissions, we imagine that each individual's observed covariates are the result
of the process illustrated by the causal DAG in Figure~\ref{fig:dag}. In this
graph, an applicant's race \(A\) influences the educational opportunities \(E\)
available to them prior to college; and educational opportunities in turn
influence an applicant's level of college preparation, \(M\), as well as their
score on a standardized admissions test, \(T\), such as the SAT. We assume the
admissions committee only observes an applicant's race and test score so that
\(X = (A, T)\), and makes their decision \(D\) based on these attributes.
Finally, whether or not an admitted student subsequently graduates (from any
college), \(Y\), is a function of both their preparation and whether they were
admitted.\footnote{%
  In practice, the racial composition of an admitted class may itself influence
  degree attainment, if, for example, diversity provides a net benefit to
  students~\citep{page2007making}. Here, for simplicity, we avoid consideration
  of such peer effects.
}

To define path-specific fairness, we start by defining, for the decision \(D\),
path-specific counterfactuals, a general concept in causal
DAGs~\citep[cf.][]{pearl2001direct}. Suppose \(\mathcal{G} = (\mathcal{V},
\mathcal{U}, \mathcal{F})\) is a causal model with nodes \(\mathcal{V}\),
exogenous variables \(\mathcal{U}\), and structural equations \(\mathcal{F}\)
that define the value at each node \(V_j\) as a function of its parents
\(\wp(V_j)\) and its associated exogenous variable \(U_j\). (See, for example,
\citet{pearl2009causal} for further details on causal DAGs.) Let \(V_1, \dots,
V_m\) be a topological ordering of the nodes, meaning that \(\wp(V_j) \subseteq
\{V_1, \dots, V_{j-1}\}\) (i.e., the parents of each node appear in the ordering
before the node itself). Let \(\Pi\) denote a collection of paths from node
\(A\) to \(D\). Now, for two possible values \(a\) and \(a'\) for the variable
\(A\), the path-specific counterfactuals \(D_{\Pi,a,a'}\) for the decision \(D\)
are generated by traversing the list of nodes in topological order, propagating
counterfactual values obtained by setting \(A = a'\) along paths in \(\Pi\), and
otherwise propagating values obtained by setting \(A = a\). (In Algorithm 1 in
the Appendix, we formally define path-specific counterfactuals for an arbitrary
node---or collection of nodes---in the DAG.)

To see this idea in action, we work out an illustrative example, computing
path-specific counterfactuals for the decision \(D\) along the single path \(\Pi
= \{A \rightarrow E \rightarrow T \rightarrow D\}\) linking race to the
admissions committee's decision through test score, highlighted in red in
Figure~\ref{fig:dag}. In the system of equations below, the first column
corresponds to draws \(V^*\) for each node \(V\) in the DAG, where we set \(A\)
to \(a\), and then propagate that value as usual. The second column corresponds
to draws \(\overline{V}^*\) of path-specific counterfactuals, where we set \(A\)
to \(a'\), and then propagate the counterfactuals only along the path \(A
\rightarrow E \rightarrow T \rightarrow D\). In particular, the value for the
test score \(\overline{T}^*\) is computed using the value of \(\overline{E}^*\)
(since the edge \(E \rightarrow T\) is on the specified path) and the value of
\(M^*\) (since the edge \(M \rightarrow T\) is not on the  path). As a result of
this process, we obtain a draw \(\overline{D}^*\) from the distribution of
\(D_{\Pi,a,a'}\).
\begin{align*}
  A^{*} &= a,                 & \overline{A}^* &= a', \\
  E^{*} &= f_E(A^{*}),        & \overline{E}^* &= f_E(\overline{A}^*), \\
  M^{*} &= f_M(E^{*}),        & \overline{M}^* &= f_M(E^{*}), \\
  T^{*} &= f_T(E^{*}, M^{*}), & \overline{T}^* &= f_T(\overline{E}^*, M^{*}), \\
  D^{*} &= f_D(A^{*}, T^{*}), & \overline{D}^* &= f_D(A^{*}, \overline{T}^*).
\end{align*}

Path-specific fairness formalizes the intuition that the influence of a
sensitive attribute on a downstream decision may, in some circumstances, be
considered legitimate (i.e., it may be acceptable for the attribute to affect
decisions along certain paths in the DAG). For instance, an admissions committee
may believe that the effect of race \(A\) on admissions decisions \(D\) which
passes through college preparation \(M\) is legitimate, whereas the effect of
race along the path \(A \rightarrow E \rightarrow T \rightarrow D\), which may
reflect access to test prep or cultural biases of the tests, rather than actual
academic preparedness, is illegitimate. In that case, the admissions committee
may seek to ensure that the proportion of applicants they admit from a certain
race group remains unchanged if one were to counterfactually alter the race of
those individuals along the path \(\Pi = \{A \rightarrow E \rightarrow T
\rightarrow D\}\).

\begin{defn}
\label{defn:ps}
  Let \(\Pi\) be a collection of paths, and, for a measurable function \(w\) on
  \(\C X\), let \(W = \omega(X)\) describe a reduced set of the covariates
  \(X\). \emph{Path-specific fairness}, also called \emph{\(\Pi\)-fairness},
  holds when, for any \(a' \in \mathcal{A}\),
  \begin{equation}
  \label{eq:path_specific_fairness}
    \EE[D_{\Pi, A, a'} \mid W] = \EE[D \mid W].
  \end{equation}
\end{defn}

In the definition above, rather than a particular counterfactual level \(a\),
the baseline level of the path-specific effect is \(A\), i.e., an individual's
actual (non-counterfactually altered) group membership (e.g., their actual
race). We have implicitly assumed that the decision variable \(D\) is a
descendant of the covariates \(X\). In particular, without loss of generality,
we assume \(D\) is defined by the structural equation \(f_D(x, u_D) =
\mathbb{1}_{u_D \leq d(x)}\), where the exogenous variable \(u_D \sim
\unif(0,1)\), so that \(\Pr(D = 1 \mid X = x) = d(x)\). If \(\Pi\) is the set of
all paths from \(A\) to \(D\), then \(D_{\Pi,A,a'} = D(a')\), in which case, for
\(W = X\), path-specific fairness is the same as counterfactual fairness.

\section{Constructing Causally Fair Policies}
\label{sec:construction}

The definitions of causal fairness above constrain the set of decision policies
one might adopt, but, in general, they do not yield a unique policy. For
instance, a policy in which applicants are admitted randomly and independently
with probability \(b\)---where \(b\) is the specified budget---satisfies
counterfactual equalized odds (Def.~\ref{defn:cf-eo}), conditional principal
fairness (Def.~\ref{defn:principal-fairness}), counterfactual fairness
(Def.~\ref{defn:counterfactual-fairness}), and path-specific fairness
(Def.~\ref{defn:ps}).\footnote{%
  A policy satisfying counterfactual predictive parity
  (Def.~\ref{defn:predictive-parity}) is not guaranteed to exist. For example,
  if \(b = 0\)---in which case \(D = 0\) a.s.---and \(\EE[Y(1) \mid A = a_1]
  \neq \EE[Y(1) \mid A = a_2]\), then
  Eq.~\eqref{eq:counterfactual_predictive_parity} cannot hold. Similar
  counterexamples can be constructed for \(b \ll 1\).
}

However, such a randomized policy may be sub-optimal in the eyes of
decision-makers aiming to maximize outcomes such as class diversity or degree
attainment. Past work has described multiple approaches to selecting a single
policy from among those satisfying any given fairness definition, including
maximizing concordance of the decision with the outcome
variable~\citep{nabi2018fair, chiappa2019path} or with an existing
policy~\citep{wang2019equal} (e.g., in terms of binary accuracy or
KL-divergence).

Here, as we are primarily interested in the downstream consequences of various
causal fairness definitions, we consider causally fair policies that maximize
utility~\citep{%
  liu2018delayed, kasy2021fairness, corbett2017algorithmic,cai2020fair,L2BF%
}.

Suppose \(u(x)\) denotes the utility of assigning a positive decision to
individuals with observed covariate values \(x\), relative to assigning them
negative decisions. In our running example, we set
\begin{equation}
\label{eq:util}
  u(x) = \EE[Y(1) \mid X = x] + \lambda \cdot \mathbb{1}_{\alpha(x) = a_1},
\end{equation}
where \(\EE[Y(1) \mid X = x]\) denotes the likelihood the applicant would
graduate if admitted, \(\mathbb{1}_{\alpha(x) = a_1}\) indicates whether the
applicant identifies as belonging to race group \(a_1\) (e.g., \(a_1\) may
denote a group historically underrepresented in higher education), and \(\lambda
\geq 0\) is an arbitrary constant that balances preferences for both student
graduation and racial diversity.

We seek decision policies that maximize expected utility, subject to satisfying
a given definition of causal fairness, as well as the budget constraint.
Specifically, letting \(\mathcal{C}\) denote the family of all decision policies
that satisfy one of the causal fairness definitions listed above, a
utility-maximizing policy \(d^*\) is given by
\begin{align}
  \label{eq:opt}
  \begin{split}
    d^* \in \arg\max_{d \in \mathcal{C}} & \quad \EE[d(X) \cdot u(X)] \\
    \text{s.t.}   &\quad \EE[d(X)] \leq b.
  \end{split}
\end{align}

Constructing optimal policies poses both statistical and computational
challenges. One must, in general, estimate the joint distribution of covariates
and potential outcomes---and, even more dauntingly, causal effects along
designated paths for path-specific definitions of fairness. In some settings, it
may be possible to obtain these estimates from observational analyses of
historical data or randomized controlled trials, though both approaches
typically involve substantial hurdles in practice.

We prove that if one has this statistical information, it is possible to
efficiently compute causally fair utility-maximizing policies by solving either
a single linear program or a series of linear programs (Appendix,
Theorem~\ref{thm:lp}). In the case of counterfactual equalized odds, conditional
principal fairness, counterfactual fairness, and path-specific fairness, we show
that the definitions can be translated to linear constraints. For counterfactual
predictive parity, the defining independence condition yields a quadratic
constraint, which we show can be expressed as a linear constraint by further
conditioning on one of the decision variables, and the optimization problem in
turn can be solved through a series of linear programs.

\section{The Structure of Causally Fair Policies}
\label{sec:pareto}

Above, for each definition of causal fairness, we sketched how to construct
utility-maximizing policies that satisfy the corresponding constraints. Now we
explore the structural properties of causally fair policies. We show---both
empirically and analytically, under relatively mild distributional
assumptions---that policies constrained to be causally fair are disfavored by
every individual in a natural class of decision makers with varying preferences
for diversity. To formalize these results, we start by introducing some notation
and then defining the concept of (strong) Pareto dominance.

\subsection{Pareto Dominance and Consistent Utilities}

For a real-valued utility function \(u\) and decision policy \(d\), we write
\(u(d) = \EE[d(X) \cdot u(X)]\) to denote the utility of \(d\) under \(u\).

\begin{defn}
  For a budget \(b\), we say a decision policy \(d\) is \emph{feasible} if
  \(\EE[d(X)] \leq b\).
\end{defn}

Given a collection of utility functions encoding the preferences of different
individuals, we say a decision policy \(d\) is Pareto dominated if there exists
a feasible alternative \(d'\) such that none of the decision makers prefers
\(d\) over \(d'\), and at least one decision maker strictly prefers \(d'\) over
\(d\), a property formalized in Definition~\ref{defn:pareto}.

\begin{defn}
\label{defn:pareto}
  Suppose \(\mathcal{U}\) is a collection of utility functions. A decision
  policy \(d\) is \emph{Pareto dominated} if there exists a feasible alternative
  \(d'\) such that \(u(d') \geq u(d)\) for all \(u \in \mathcal{U}\), and there
  exists \(u' \in \mathcal{U}\) such that \(u'(d') > u'(d)\). A policy \(d\) is
  \emph{strongly Pareto dominated} if there exists a feasible alternative \(d'\)
  such that \(u(d') > u(d)\) for all \(u \in \mathcal{U}\). A policy \(d\) is
  \emph{Pareto efficient} if it is feasible and not Pareto dominated, and the
  \emph{Pareto frontier} is the set of Pareto efficient policies.
\end{defn}

To develop intuition about the structure of causally fair decision policies, we
continue working through our illustrative example of college admissions. We
consider a collection of decision makers with utilities \(\mathcal{U}\) of the
form in Eq.~\eqref{eq:util}, for \(\lambda \geq 0\). In this example, decision
makers differ in their preferences for diversity (as determined by \(\lambda\)),
but otherwise have similar preferences. We call such a collection of utilities
\emph{consistent modulo \(\alpha\)}.

\begin{defn}
\label{defn:eight}
  We say that a set of utilities \(\mathcal{U}\) is \emph{consistent modulo
  \(\alpha\)} if, for any \(u, u' \in \mathcal{U}\):
  \begin{enumerate}
    \item For any \(x\), \(\sgn(u(x)) = \sgn(u'(x))\);
    \item For any \(x_1\) and \(x_2\) such that \(\alpha(x_1) = \alpha(x_2)\),
      \(u(x_1) > u(x_2)\) if and only if \(u'(x_1) > u'(x_2)\).
  \end{enumerate}
\end{defn}

For consistent utilities, the Pareto frontier takes a particularly simple form,
represented by (a subset of) group-specific threshold policies.

\begin{prop}
\label{prop:threshold}
  Suppose \(\mathcal{U}\) is a set of utilities that is consistent modulo
  \(\alpha\). Then any Pareto efficient decision policy \(d\) is a multiple
  threshold policy. That is, for any \(u \in \mathcal{U}\), there exist
  group-specific constants \(t_{a} \geq 0\) such that, a.s.:
  \begin{equation}
    d(x)=
    \begin{cases}
      1 & u(x) > t_{\alpha(x)}, \\
      0 & u(x) < t_{\alpha(x)}. \\
    \end{cases}
  \end{equation}
\end{prop}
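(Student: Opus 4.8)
The plan is to argue by contradiction: suppose $d$ is feasible and Pareto efficient but fails to be a multiple threshold policy for some fixed $u \in \mathcal{U}$. Failure of the threshold form means, roughly, that there are two ``types'' $x_1, x_2$ with $\alpha(x_1) = \alpha(x_2)$, $u(x_1) > u(x_2)$, but $d$ assigns $x_2$ a positive decision with higher probability than it ``should'' relative to $x_1$ — i.e., $d(x_2) > 0$ while $d(x_1) < 1$ on sets of positive measure. I would also need to handle the sign condition: if $u(x) < 0$ on a positive-measure set where $d(x) > 0$, one can strictly improve by zeroing out $d$ there (this uses part~1 of consistency, so every decision maker agrees it is an improvement, and it only relaxes the budget). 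So the core case is the ``misallocation across a single group'' case.

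The key construction is a \emph{swap}: decrease $d$ by a small amount $\epsilon(x)$ on the region where $u$ is low and $d$ is positive, and increase $d$ by a compensating amount on the region where $u$ is high and $d$ is below $1$, choosing the two perturbations to have equal total mass $\int \epsilon\, d\mathcal{D}_X$ so that $\EE[d'(X)]$ is unchanged (hence $d'$ remains feasible). Because the mass we remove sits at strictly lower $u$-value than the mass we add — and, crucially, part~2 of ``consistent modulo $\alpha$'' guarantees this ordering is the \emph{same} for every $u' \in \mathcal{U}$ since the two regions lie in the same $\alpha$-class — the utility $u'(d')$ strictly increases for every $u' \in \mathcal{U}$ that ``sees'' the swap, and weakly increases (in fact is unchanged) for the rest. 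To make ``strictly increases for at least one $u'$'' rigorous I would localize: pick the two regions to be separated in $u$-value by a definite gap $\delta > 0$ (possible by a limiting/pigeonhole argument on the level sets of $u$), so that the net change is at least $\delta \cdot \int \epsilon\, d\mathcal{D}_X > 0$. This contradicts Pareto efficiency of $d$.

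After ruling out all violations, what remains is exactly the statement: for the fixed $u$, there exist group-wise thresholds $t_a$ with $d(x) = 1$ when $u(x) > t_{\alpha(x)}$ and $d(x) = 0$ when $u(x) < t_{\alpha(x)}$, a.s. Concretely, for each group $a$, define $t_a = \inf\{ s : \mathcal{D}_X(\{x : \alpha(x) = a,\ u(x) > s,\ d(x) < 1\}) = 0 \}$ (with $t_a = 0$ if $u$ is nonpositive on the group, absorbing the sign condition); the swap argument shows $d$ must be $1$ above this level and $0$ below it. One then checks $t_a \ge 0$, which follows because on $\{u(x) < 0\}$ efficiency forces $d = 0$.

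The main obstacle I anticipate is the measure-theoretic bookkeeping in the swap when $\mathcal{X}$ is a general subset of $\mathbb{R}^n$, $\mathcal{D}_X$ is arbitrary, and $Y(1)$ takes values in a possibly infinite $\mathcal{Y}$: I must ensure the ``high'' and ``low'' regions actually have positive measure (otherwise there is nothing to swap and $d$ is already a threshold policy), that I can select measurable perturbations $\epsilon(\cdot)$ realizing a prescribed transferred mass (a routine but slightly delicate appeal to non-atomicity of $\mathcal{D}_X$ restricted to the relevant level set, or to splitting along the independent randomization variable $U_D$ if $\mathcal{D}_X$ has atoms), and that the definite-gap $\delta$ can be extracted. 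Handling potential atoms of $\mathcal{D}_X$ is the one place the argument needs genuine care; everything else is a clean exchange argument.
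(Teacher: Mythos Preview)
Your approach is essentially the paper's: a within-group swap/exchange argument showing that any non-threshold policy is (strongly) Pareto dominated, with consistency modulo $\alpha$ guaranteeing all utilities agree on the swap's direction, plus a separate sign-condition step forcing $t_a \geq 0$. The paper's implementation differs only in one technical refinement worth noting: rather than selecting two regions with a definite gap $\delta$ and building measurable perturbations $\epsilon(\cdot)$, it finds a single level $t^*$ with positive ``misallocated'' mass both above and below, and swaps via the global multiplicative perturbation $d'(x) = (1 - m^{\mathrm{Up}})\, d(x)$ below $t^*$ and $d'(x) = 1 - (1 - m^{\mathrm{Lo}})(1 - d(x))$ above $t^*$---this automatically preserves $\EE[d'(X)] = \EE[d(X)]$ and sidesteps the atom and measurable-selection issues you flagged as the main obstacle.
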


The proof of Proposition~\ref{prop:threshold} is in the Appendix.\footnote{%
  \label{fn:thresholds} In the statement of the proposition, we do not specify
  what happens at the thresholds \(u(x) = t_{\alpha(x)}\) themselves, as one can
  typically ignore the exact manner in which decisions are made at the
  threshold. Specifically, given a threshold policy \(d\), we can construct a
  standardized threshold policy \(d'\) that is constant within group at the
  threshold (i.e., \(d'(x) = c_{\alpha(x)}\) when \(u(x) = t_{\alpha(x)}\)), and
  for which: (1) \(\EE[d'(X)|A] = \EE[d(X)|A]\); and (2) \(u(d') = u(d)\). In
  our running example, this means we can standardize threshold policies so that
  applicants at the threshold are admitted with the same group-specific
  probability.
}

\subsection{An Empirical Example}

With these preliminaries in place, we now empirically explore the structure of
causally fair decision policies in the context of our stylized example of
college admissions, given by the causal DAG in Figure~\ref{fig:dag}. In the
hypothetical pool of 100,000 applicants we consider, applicants in the target
race group \(a_1\) have, on average, fewer educational opportunities than those
applicants in group \(a_0\), which leads to lower average academic preparedness,
as well as lower average test scores. See Section~\ref{appendix:example} in the
Appendix for additional details, including the specific structural equations we
use.

\begin{figure}[t]
  \centering
  \includegraphics[width=\linewidth,clip]{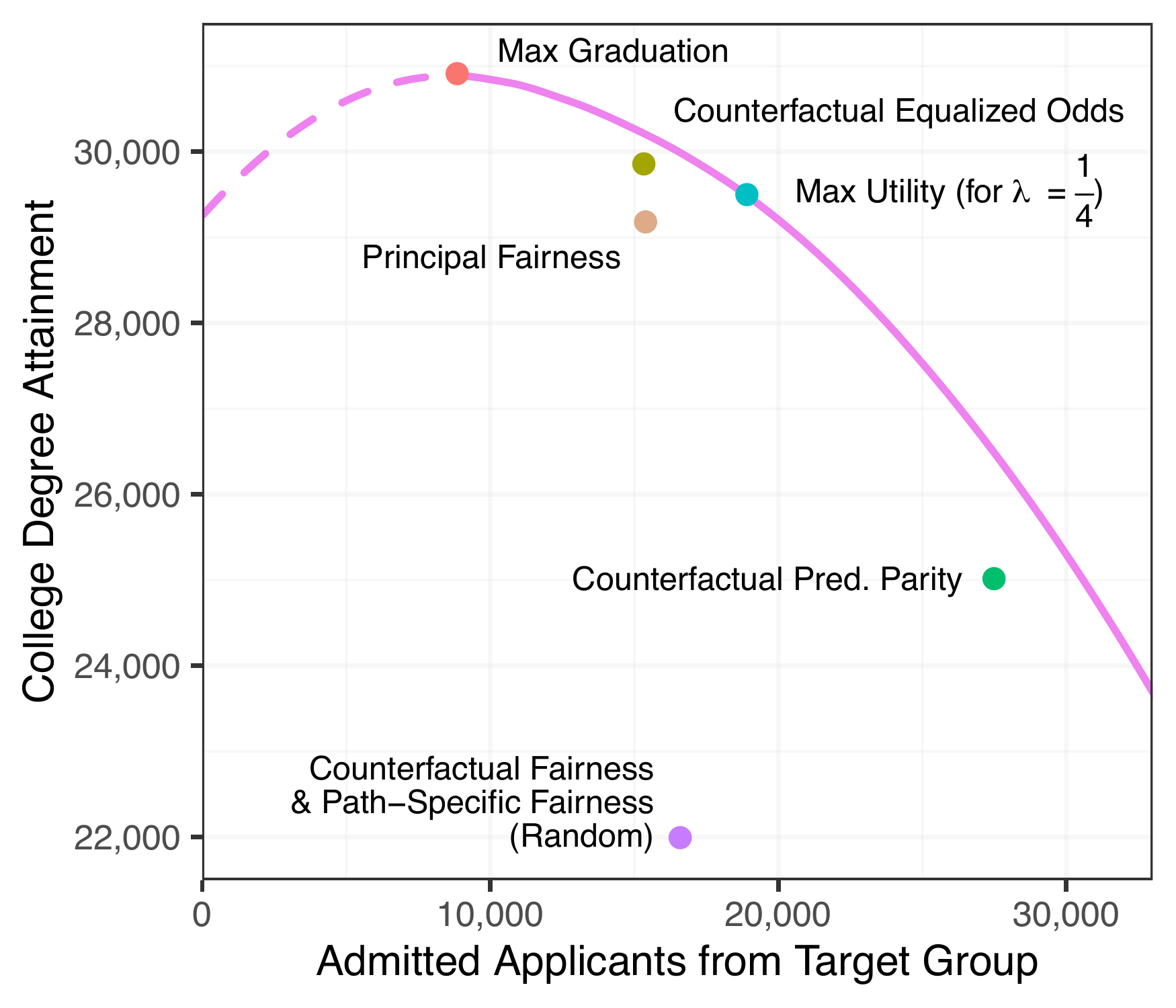}
  \vspace{-7mm}
  \caption{%
    Utility-maximizing policies for various definitions of causal fairness in an
    illustrative example of college admissions, with the Pareto frontier
    depicted by the solid purple curve. For path-specific fairness, we set
    \(\Pi\) equal to the single path \(A \rightarrow E \rightarrow T \rightarrow
    D\), and set \(W = X\). For each causal fairness definition, the  depicted
    constrained policies are strongly Pareto dominated, meaning there is an
    alternative feasible policy that simultaneously achieves greater
    student-body diversity and higher college degree attainment. Our analytical
    results show, more generally, that under mild distributional assumptions,
    every policy constrained to satisfy these causal fairness definitions is
    strongly Pareto dominated.
  }
\label{fig:frontier_01_pred_par}
\end{figure}

For the utility function in Eq.~\eqref{eq:util} with \(\lambda = \tfrac 1 4\),
we apply Theorem~\ref{thm:lp} to compute utility-maximizing policies for each of
the above causal definitions of fairness. We plot the results in
Figure~\ref{fig:frontier_01_pred_par}, where, for each policy, the horizontal
axis shows the expected number of admitted applicants from the target race
group, and the vertical axis shows the expected number of college graduates.
Additionally, for the family of utilities \(\C U\) given by Eq.~\eqref{eq:util}
for \(\lambda \geq 0\), we depict the Pareto frontier by the solid purple curve,
computed via Proposition~\ref{prop:threshold}.\footnote{%
  For all the cases we consider, the optimal policies admit the maximum
  proportion of students allowed under the budget \(b\) (i.e., \(\Pr(D = 1) =
  b\)). To compute the Pareto frontier in Figure~\ref{fig:frontier_01_pred_par},
  it is sufficient---by Proposition~\ref{prop:threshold} and
  Footnote~\ref{fn:thresholds}---to sweep over (standardized) group-specific
  threshold policies relative to the utility \(u_0(x) = \EE[Y(1) | X = x]\).
}
For reference, the dashed purple line corresponds to max-utility policies
constrained to satisfy the level of diversity indicated on the \(x\)-axis,
though these policies are not on the Pareto frontier, as they result in fewer
college graduates and lower diversity than the policy that maximizes graduation
alone (indicated by the ``max graduation'' point in Figure
\ref{fig:frontier_01_pred_par}).

For each fairness definition, the depicted policies are strongly Pareto
dominated, meaning that there is an alternative feasible policy favored by all
decision makers with preferences in \(\mathcal{U}\). In particular, for each
definition of causal fairness, there is an alternative feasible policy in which
one simultaneously achieves  more student-body diversity and more college
graduates. In some instances, the efficiency gap is quite stark.
Utility-maximizing policies constrained to satisfy either counterfactual
fairness or path-specific fairness require one to admit each applicant
independently with fixed probability \(b\) (where \(b\) is the budget),
regardless of academic preparedness or group membership.\footnote{%
  For path-specific fairness, we set \(\Pi\) equal to the single path \(A
  \rightarrow E \rightarrow T \rightarrow D\), and set \(W = X\) in this
  example.
}
These results show that constraining decision-making algorithms to satisfy
popular definitions of causal fairness can have unintended consequences, and may
even harm the very groups they were ostensibly designed to protect.

\subsection{The Statistical Structure of Causally Fair Policies}

The patterns illustrated in Figure~\ref{fig:frontier_01_pred_par} and discussed
above are not idiosyncracies of our particular example, but rather hold quite
generally. Indeed, Theorem~\ref{thm:dist} shows that for \emph{almost every}
joint distribution of \(X\), \(Y(0)\), and \(Y(1)\) such that \(u(X)\) has a
density, any decision policy satisfying counterfactual equalized odds or
conditional principal fairness is Pareto dominated. Similarly, for almost every
joint distribution of \(X\) and \(X_{\Pi, A, a}\), we show that policies
satisfying path-specific fairness (including counterfactual fairness) are Pareto
dominated. (NB: The analogous statement for counterfactual predictive parity is
not true, which we address in Proposition~\ref{prop:pred_parity}.)

The notion of \emph{almost every} distribution that we use here was formalized
by \citet{christensen1972sets}, \citet{hunt1992prevalence},
\citet{anderson2001genericity}, and others \citep[cf.][for a
review]{ott2005prevalence}. Suppose, for a moment, that combinations of
covariates and outcomes take values in a finite set of size \(m\). Then the
space of joint distributions on covariates and outcomes can be represented by
the unit \((m-1)\)-simplex: \(\Delta^{m-1} = \{p \in \mathbb{R}^{m} \mid p_i
\geq 0 \ \text{and} \ \sum_{i=1}^m p_i = 1\}\). Since \(\Delta^{m-1}\) is a
subset of an \((m-1)\)-dimensional hyperplane in \(\mathbb{R}^m\), it inherits
the usual Lebesgue measure on \(\mathbb{R}^{m-1}\). In this finite-dimensional
setting, \emph{almost every} distribution means a subset of distributions that
has full Lebesgue measure on the simplex. Given a property that holds for almost
every distribution in this sense, that property holds almost surely under any
probability distribution on the space of distributions that is described by a
density on the simplex. We use a generalization of this basic idea that extends
to infinite-dimensional spaces, allowing us to consider distributions with
arbitrary support. (See the Appendix for further details.)

To prove this result, we make relatively mild restrictions on the set of
distributions and utilities we consider to exclude degenerate cases, as
formalized by Definition~\ref{defn:fine} below.

\begin{defn}
\label{defn:fine}
  Let \(\C G\) be a collection of functions from \(\C Z\) to \(\B R^d\) for some
  set \(\C Z\). We say that a distribution of \(Z\) on \(\C Z\) is \emph{\(\C
  G\)-fine} if \(g(Z)\) has a density for all \(g \in \C G\).
\end{defn}

In particular, \(\C U\)-fineness ensures that the distribution of \(u(X)\) has a
density. In the absence of \(\C U\)-fineness, corner cases can arise in which an
especially large number of policies may be Pareto efficient, in particular when
\(u(X)\) has large atoms and \(X\) can be used to predict the potential outcomes
\(Y(0)\) and \(Y(1)\) even after conditioning on \(u(X)\). See
Prop.~\ref{prop:counterexample} for details. Our example of college admissions,
where \(\C U\) is defined by Eq.~\eqref{eq:util}, is \(\C U\)-fine.

\begin{thm}
\label{thm:dist}
  Suppose \(\C U\) is a set of utilities consistent modulo \(\alpha\). Further
  suppose that  for all \(a \in \C A\) there exist a \(\C U\)-fine distribution
  of \(X\) and a utility \(u \in \C U\) such that \(\Pr(u(X) > 0, A = a) > 0\),
  where \(A = \alpha(X)\). Then,
  \begin{itemize}
    \item For almost every \(\C U\)-fine distribution of \(X\) and \(Y(1)\), any
      decision policy satisfying counterfactual equalized odds is strongly
      Pareto dominated.
    \item If \(|\img(\omega)| < \infty\) and there exists a \(\C U\)-fine
      distribution of \(X\) such that \(\Pr(A = a, W = w) > 0\) for all \(a \in
      \C A\) and \(w \in \img(\omega)\), where \(W = \omega(X)\), then, for
      almost every \(\C U\)-fine joint distribution of \(X\), \(Y(0)\), and
      \(Y(1)\), any decision policy satisfying conditional principal fairness is
      strongly Pareto dominated.
    \item If \(|\img(\omega)| < \infty\) and  there exists a \(\C U\)-fine
      distribution of \(X\) such that \(\Pr(A = a, W = w_i) > 0\) for all \(a
      \in \C A\) and some distinct \(w_0, w_1 \in \img(\omega)\), then, for
      almost every \(\C U^{\C A}\)-fine joint distributions of \(A\) and the
      counterfactuals \(X_{\Pi, A, a'}\), any decision policy satisfying
      path-specific fairness is strongly Pareto dominated.\footnote{%
        Here, \(u^\C A : (x_a)_{a \in \C A} \mapsto (u(x_a))_{a \in \C A}\) and
        \(\C U^{\C A}\) is the set of \(u^{\C A}\) for \(u \in \C U\). In other
        words, the requirement is that the joint distribution of the \(u(X_{\Pi,
        A, a})\) has a density.
      }
  \end{itemize}
\end{thm}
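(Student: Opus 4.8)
The plan is to combine Proposition~\ref{prop:threshold} --- which shows every Pareto efficient policy is a group-wise threshold policy on a fixed reference utility $u_0 \in \C U$ (consistency modulo $\alpha$ makes the choice of $u_0$ immaterial, since the within-group ordering of $u$-values and the sign pattern are common to all $u \in \C U$) --- with a prevalence argument showing that a threshold policy can satisfy any of the three causal criteria only for a shy set of distributions. First I would recast each criterion as the statement that $\Pr(D = 1 \mid S = s, A = a)$ is constant in $a$ for every value $s$ of an appropriate stratum variable $S$: $S = Y(1)$ for counterfactual equalized odds, $S = (Y(0), Y(1), W)$ for conditional principal fairness, and $S = W$ for path-specific fairness (where the condition additionally compares $d$ evaluated at $X$ with $d$ evaluated at the counterfactual $X_{\Pi, A, a'}$, within each $W$-stratum). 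By $\C U$-fineness the law of $u_0(X)$ has no atoms (and in particular $u_0(X)$, hence $Y(1)$, is non-degenerate), so for a threshold policy $d(x) = \mathbb{1}_{u_0(x) > t_{\alpha(x)}}$ the criterion becomes a system of equations in the two real unknowns $(t_{a_0}, t_{a_1})$ with one equation per value of $S$, whose coefficients are the conditional tail probabilities $\Pr(u_0(X) > t \mid A = a, S = s)$. Since the Pareto frontier is a \emph{one-parameter} family (by Proposition~\ref{prop:threshold} together with a routine argument that the second threshold is pinned down, whether or not the budget binds), the causal criterion restricted to the frontier is a system of $\geq 2$ equations in one unknown as soon as $S$ takes at least two values --- which is guaranteed by $\C U$-fineness for counterfactual equalized odds, and is exactly why the hypotheses for conditional principal fairness and path-specific fairness require $\omega$ to take at least two values with the stated positivity.

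The core step is to show this over-determined system is unsolvable for almost every $\C U$-fine distribution. I would construct, for each criterion, a finite-dimensional ``probe'' space of signed perturbations of the relevant distribution --- perturbations of the covariate law $\C D_X$ (resp.\ of the joint law of $A$ and the counterfactuals $(X_{\Pi, A, a'})_{a'}$) that leave the outcome kernels $P(Y(0), Y(1) \mid X)$ fixed, so that $u_0$ itself is \emph{unchanged} while the conditional tail probabilities attached to distinct cells $\{A = a, S = s\}$ move independently. The positivity hypotheses (each relevant cell has positive probability) guarantee the probe can do this; since there are strictly more equations than unknowns, a Fubini/transversality argument then shows that for every base distribution the set of probe directions along which the perturbed system remains solvable is Lebesgue-null, which is exactly prevalence of the complement. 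Here one must keep perturbed distributions inside the admissible $\C U$-fine class (smooth density bumps) and control the non-compact range of $(t_{a_0}, t_{a_1})$ (exhaust it by compacta and take a countable union of null sets). The upshot is that for almost every $\C U$-fine distribution, \emph{no} Pareto efficient policy is causally fair, so every causally fair feasible policy is Pareto dominated.

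Finally I would upgrade ``Pareto dominated'' to ``strongly Pareto dominated''. Write $n_1(d) = \EE[d(X)\mathbb{1}_{\alpha(X) = a_1}]$ and $g_0(d) = \EE[d(X) u_0(X)]$; because every $u \in \C U$ is strictly increasing in each of these (using condition (2) of Definition~\ref{defn:eight}), it suffices to exhibit a feasible $d'$ with $g_0(d') > g_0(d)$ and $n_1(d') > n_1(d)$. Let $\tilde d$ be the threshold policy that within each group admits the highest-$u_0$ individuals up to the same per-group mass used by $d$; a rearrangement (bathtub) inequality gives $g_0(\tilde d) \geq g_0(d)$ with equality only if $d$ agrees a.s.\ with a threshold policy, which the previous step forbids, so $g_0(\tilde d) > g_0(d)$ while $n_1(\tilde d) = n_1(d)$ and $\EE[\tilde d] = \EE[d] \le b$. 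A sufficiently small perturbation of $\tilde d$ that shifts admission mass toward the $a_1$-margin (from the $a_0$-margin, or from the budget slack) then produces a feasible $d'$ with $n_1(d') > n_1(d)$ and $g_0$ changed by an arbitrarily small amount, hence still $g_0(d') > g_0(d)$; in the degenerate cases where $n_1(d)$ is already maximal or all budget is spent on group $a_1$, the policy $\tilde d$ itself already has $n_1(\tilde d) \ge n_1(d)$ and $g_0(\tilde d) > g_0(d)$, which gives strong domination directly. Specializing to each stratum variable $S$ --- counterfactual fairness being the special case of path-specific fairness with all paths and $W = X$, where the constraint is most rigid --- completes the proof.

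The step I expect to be the main obstacle is the probe construction in the second paragraph: one must exhibit explicit finite-dimensional perturbation families that genuinely move the constraint coefficients into general position for \emph{every} base distribution, verify that admissibility and the value of $u_0$ are preserved, and handle the non-compact threshold parameters uniformly. By comparison, the reduction in the first paragraph is bookkeeping once Proposition~\ref{prop:threshold} is available, and the case analysis in the third paragraph is tedious but routine.
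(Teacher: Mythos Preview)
Your overall architecture matches the paper's: reduce to (budget-exhausting) multiple threshold policies, observe that each fairness criterion imposes more balance equations than there are free thresholds, and use a finite-dimensional probe to show the solvable set is shy. Two points, however, diverge from the paper in ways that matter.

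First, your upgrade to \emph{strong} Pareto domination in the last paragraph rests on the claim that every \(u \in \C U\) is strictly increasing in the pair \((g_0(d), n_1(d))\). Consistency modulo \(\alpha\) does not give this: take \(\C U = \{u_0\}\), which is trivially consistent, and note that \(u_0(d) = g_0(d)\) is flat in \(n_1\). What \emph{does} work is exactly your rearrangement \(\tilde d\): since it preserves the per-group admitted mass and, within each group, moves mass to higher \(u_0\)-values, consistency condition~(2) forces a strict increase of \(\EE[d(X)u(X)\mid A=a^*]\) for \emph{every} \(u\in\C U\) in the group where \(d\) fails to be threshold, hence \(u(\tilde d)>u(d)\) for all \(u\). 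So \(\tilde d\) already strongly dominates \(d\), and the subsequent ``shift toward the \(a_1\)-margin'' is both unnecessary and potentially harmful (it can decrease some \(u\)). The paper takes this route directly (its Lemma strengthening Proposition~\ref{prop:threshold}): any feasible policy that is not a budget-exhausting multiple threshold policy with non-negative thresholds is \emph{already} strongly dominated, so one works with strong domination from the start rather than upgrading at the end.

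Second, your probe---perturb the covariate law \(\C D_X\) while holding the kernel \(P(Y(0),Y(1)\mid X)\) fixed---cannot do what you need. With the kernel frozen, any change to \(\C D_X\) moves the cells \(\{A=a,\,Y(1)=y_0\}\) and \(\{A=a,\,Y(1)=y_1\}\) in a coupled way, so you cannot break one balance equation while leaving the other (and the candidate thresholds) untouched. The paper instead perturbs the \emph{joint} law on \(\C X\times\C Y\) directly, with perturbations concentrated on a single outcome slice \(\{Y(1)=y_1\}\) (and, separately, on \(\{u(X)<0\}\)); these are engineered so that \(\Pr(u(X)>0)\) and \(\Pr(u(X)>0,\,A=a)\) are invariant, which is what pins down the \emph{unique} candidate threshold policy before one shows the perturbation breaks the remaining balance condition. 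This also forces a case split you omit: when \(\Pr(u(X)>0)\le b\) the only budget-exhausting non-negative threshold policy is \(\mathbb{1}\{u(x)>0\}\), and one must perturb below zero (shuffling mass between \(y_0\) and \(y_1\) on \(\{u(X)<0\}\)) rather than above. Finally, the paper spends nontrivial effort on the measure-theoretic scaffolding you wave at---universal measurability of the ``bad'' set via a closedness argument, and existence of measures with maximal support on the utility scale so that the probe ``sees'' every possible threshold---and these are not entirely routine.
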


The proof of Theorem~\ref{thm:dist} is given in the Appendix. At a high-level,
the proof proceed in three steps, which we outline below using the example of
counterfactual equalized odds. First, we show that for almost every fixed \(\C
U\)-fine joint distribution \(\mu\) of \(X\) and \(Y(1)\) there is at most one
policy \(d^*(x)\) satisfying counterfactual equalized odds that is not strongly
Pareto dominated. To see why, note that for any specific \(y_0\), since
counterfactual equalized odds requires that \(D \indep A \mid Y(1) = y_0\),
setting the threshold for one group determines the thresholds for all the
others; the budget constraint then can be used to fix the threshold for the
original group. Second, we construct a ``slice'' around \(\mu\) such that for
any distribution \(\nu\) in the slice, \(d^*(x)\) is still the only policy that
can potentially lie on the Pareto frontier while satisfying counterfactual
equalized odds. We create the slice by strategically perturbing \(\mu\) only
where \(Y(1) = y_1\), for some \(y_1 \neq y_0\). This perturbation moves mass
from one side of the thresholds of \(d^*(x)\) to the other, consequently
breaking the balance requirement \(D \indep A \mid Y(1) = y_1\) for almost every
\(\nu\) in the slice. This phenomenon is similar to the problem of
infra-marginality~\citep{simoiu2017problem, ayres2002outcome}, which likewise
afflicts non-causal notions of fairness~\citep{corbett2017algorithmic,
corbett2018measure}. Finally, we appeal to the notion of prevalence to stitch
the slices together, showing that for almost every distribution, any policy
satisfying counterfactual equalized odds is strongly Pareto dominated. Analogous
versions of this general argument apply to the cases of conditional principal
fairness and path-specific fairness.\footnote{%
  This argument does not depend in an essential way on the
  definitions being causal. In Corollary~\ref{cor:eo}, we show an analogous
  result for the non-counterfactual version of equalized odds.
}

In some common settings, path-specific fairness with \(W = X\) constrains
decisions so severely that the only allowable policies are constant (i.e.,
\(d(x_1) = d(x_2)\) for all \(x_1, x_2 \in \C X\)). For instance, in our running
example, path-specific fairness requires admitting all applicants with the same
probability, irrespective of academic preparation or group membership. Thus, all
applicants are admitted with probability \(b\), where \(b\) is the budget, under
the optimal policy constrained to satisfy path-specific fairness.

To build intuition for this result, we sketch the argument for a finite
covariate space \(\C X\). Given a policy \(d\) that satisfies path-specific
fairness, select \(x^* \in \arg \max_{x \in \C X} d(x)\).

By the definition of path-specific fairness, for any \(a \in \C A\),
\begin{equation}
  \begin{aligned}
  \label{eq:ps-sketch}
    d(x^*) & = \EE[D_{\Pi, A, a} \mid X = x^*] \\
      & = \ \sum_{\mathclap{x \in \alpha^{-1}(a)}} \ d(x) \cdot \Pr(X_{\Pi, A,
      a} = x \mid X = x^*).
  \end{aligned}
\end{equation}
That is, the probability of an individual with covariates \(x^*\) receiving a
positive decision must be the average probability of the individuals with
covariates \(x\) in group \(a\) receiving a positive decision, weighted by the
probability that an individual with covariates \(x^*\) in the real world
\emph{would} have covariates \(x\) counterfactually.

Next, we suppose that there exists an \(a' \in \C A\) such that \(\Pr(X_{\Pi,
A, a'} = x \mid X = x^*) > 0\) for all \(x \in \alpha^{-1}(a')\). In this case,
because \(d(x) \leq d(x^*)\) for all \(x \in \C X\), Eq.~\eqref{eq:ps-sketch}
shows that in fact \(d(x) = d(x^*)\) for all \(x \in \alpha^{-1}(a')\).

Now, let \(x'\) be arbitrary. Again, by the definition of path-specific
fairness, we have that
\begin{align*}
  d(x') & = \EE[D_{\Pi, A, a'} \mid X = x'] \\
    & = \ \sum_{\mathclap{x \in \alpha^{-1}(a')}} \ d(x) \cdot \Pr(X_{\Pi, A,
    a'} = x \mid X = x') \\
    &=  \ \sum_{\mathclap{x \in \alpha^{-1}(a')}} \ d(x^*) \cdot \Pr(X_{\Pi, A,
    a'} = x \mid X = x^*), \\
    &= d(x^*),
\end{align*}
where we use in the third equality the fact \(d(x) = d(x^*)\) for all \(x \in
\alpha^{-1}(a')\), and in the final equality the fact that \(X_{\Pi, A, a'}\) is
supported on \(\alpha^{-1}(a')\).

Theorem~\ref{thm:path_specific} formalizes and extends this argument to more
general settings, where \(\Pr(X_{\Pi, A, a'} = x \mid X = x^*)\) is not
necessarily positive for all \(x \in \alpha^{-1}(a')\). The proof of
Theorem~\ref{thm:path_specific} is in the Appendix, along with extensions to
continuous covariate spaces and a more complete characterization of \(\Pi\)-fair
policies for finite \(\C X\).

\begin{thm}
\label{thm:path_specific}
  Suppose \(\C X\) is finite and  \(\Pr(X = x) > 0\) for all \(x \in \C X\).
  Suppose \(Z = \zeta(X)\) is a random variable such that:
  \begin{enumerate}
    \item \(Z = Z_{\Pi, A, a'}\) for all \(a' \in \C A\),
    \item \(\Pr(X_{\Pi, A, a'} = x' \mid X = x) > 0\) for all \(a' \in \C A\)
      such that \(\alpha(x) \neq a'\) and \(x, x' \in \C X\) such that
      \(\zeta(x) = \zeta(x')\).
  \end{enumerate}
  Then, for any \(\Pi\)-fair policy \(d\), with  \(W = X\), there exists a
  function \(f\) such that \(d(X) = f(Z)\), i.e., \(d\) is constant across
  individuals having the same value of \(Z\).
\end{thm}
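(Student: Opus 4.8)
The plan is to turn $\Pi$-fairness with $W = X$ into a finite system of linear ``balance'' equations and then apply a maximum principle on each level set of $Z$. Since $D$ is generated by the structural equation $f_D(x, u_D) = \B 1_{u_D \le d(x)}$ with $u_D \sim \unif(0,1)$ independent of everything else, and $D$ is a descendant of the covariates, the path-specific counterfactual of $D$ is $D_{\Pi, A, a'} = \B 1_{u_D \le d(X_{\Pi, A, a'})}$; hence, writing $p_{a'}(x' \mid x) := \Pr(X_{\Pi, A, a'} = x' \mid X = x)$,
\begin{equation*}
  \EE[D_{\Pi, A, a'} \mid X = x] = \sum_{x' \in \C X} d(x')\, p_{a'}(x' \mid x), \qquad \EE[D \mid X = x] = d(x).
\end{equation*}
Since $\Pr(X = x) > 0$ for every $x$, Definition~\ref{defn:ps} with $W = X$ is equivalent to: for every $x \in \C X$ and every $a' \in \C A$, $d(x) = \sum_{x'} d(x')\, p_{a'}(x' \mid x)$, where necessarily $\sum_{x'} p_{a'}(x' \mid x) = 1$. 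In other words, $d(x)$ is a convex combination of the values $d(x')$ with mixing weights $p_{a'}(\cdot \mid x)$.

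Next I would decouple this system across the level sets of $Z$. Fix $z \in \img(\zeta)$ and set $S_z := \zeta^{-1}(z)$. Hypothesis~(1), $Z_{\Pi, A, a'} = Z$, says $\zeta(X_{\Pi, A, a'}) = \zeta(X)$, so $p_{a'}(x' \mid x) = 0$ whenever $\zeta(x') \neq \zeta(x)$; in particular, for $x \in S_z$ the counterfactual $X_{\Pi, A, a'}$ is supported on $S_z$, and the balance equation restricts to $d(x) = \sum_{x' \in S_z} d(x')\, p_{a'}(x' \mid x)$ with $\sum_{x' \in S_z} p_{a'}(x' \mid x) = 1$. Thus it suffices to show $d$ is constant on each $S_z$ separately.

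Now for the maximum principle. Let $M_z := \max_{x \in S_z} d(x)$, attained at some $x^* \in S_z$, and pick any $a' \in \C A$ with $a' \neq \alpha(x^*)$ (this is the one place the non-degeneracy of the attribute is used, and is exactly the regime in which hypothesis~(2) has content). By hypothesis~(2), $p_{a'}(x' \mid x^*) > 0$ for \emph{every} $x' \in S_z$. Applying the balance equation at $x^*$,
\begin{equation*}
  M_z = d(x^*) = \sum_{x' \in S_z} d(x')\, p_{a'}(x' \mid x^*) \le M_z \sum_{x' \in S_z} p_{a'}(x' \mid x^*) = M_z,
\end{equation*}
and since all the weights are strictly positive, equality forces $d(x') = M_z$ for every $x' \in S_z$. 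Hence $d$ is constant on each level set of $Z$; defining $f(z) := M_z$ gives $d(x) = f(\zeta(x))$ for all $x \in \C X$, i.e., $d(X) = f(Z)$ a.s.

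The only genuinely delicate step is the reduction in the first paragraph — justifying $D_{\Pi, A, a'} = \B 1_{u_D \le d(X_{\Pi, A, a'})}$ from the structural-equation form of $f_D$ together with $D$ being a descendant of the covariates, and hence that $\Pi$-fairness is precisely the stated convex-combination identity; once that is in place, the fiberwise argument is a one-line convex-combination estimate. (The harder work deferred to the Appendix is the continuous-$\C X$ extension, where the finite maximum must be replaced by an essential-supremum/approximation argument, and the finer characterization of $\Pi$-fair policies on finite $\C X$.)
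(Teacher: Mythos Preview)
Your proof is correct and matches the maximum-principle argument the paper sketches in the main text (there for constant $\zeta$): reduce $\Pi$-fairness with $W=X$ to the convex-combination identity $d(x)=\sum_{x'}d(x')\,p_{a'}(x'\mid x)$ (this is exactly the paper's Lemma~\ref{lem:d_pi}), decouple by the level sets $S_z=\zeta^{-1}(z)$ via hypothesis~(1), and then use hypothesis~(2) at a maximizer $x^*\in S_z$ with some $a'\neq\alpha(x^*)$ to force equality throughout $S_z$.

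The paper's \emph{formal} appendix proof, however, takes a genuinely different route. It rewrites Lemma~\ref{lem:d_pi} as $P_{a'}d=d$ for the stochastic matrices $P_{a'}=[\Pr(X_{\Pi,A,a'}=x_j\mid X=x_i)]$, averages to get $Pd=d$ with $P=\tfrac{1}{|\C A|}\sum_{a'}P_{a'}$, and then invokes Perron--Frobenius to characterize the $1$-eigenspace of a finite stochastic matrix: any $1$-eigenvector is constant on each recurrent class of $P$ (Lemma~\ref{lem:markov}, Cor.~\ref{cor:markov}). Under hypotheses~(1)--(2), $P$ is block diagonal with one irreducible block per $\zeta$-stratum, so each $S_z$ is a single recurrent class and $d$ is constant on it. Your direct approach is more elementary and self-contained; the Markov-chain approach buys a sharper necessary condition that holds even without the full strength of hypothesis~(2)---$d$ must be constant on the recurrent classes of $P$ and satisfy the explicit absorption-probability formula Eq.~\eqref{eq:pi_fair_character}---which is what the paper means by a ``more complete characterization of $\Pi$-fair policies for finite $\C X$.''
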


The first condition of Theorem~\ref{thm:path_specific} holds for any reduced set
of covariates \(Z\) that is not causally affected by changes in \(A\) (e.g.,
\(Z\) is not a descendent of \(A\)). The second condition requires that among
individuals with covariates \(x\), a positive fraction have covariates  \(x'\)
in a counterfactual world in which they belonged to another group \(a'\).
Because \(\zeta(x)\) is the same in the real and counterfactual worlds---since
\(Z\) is unaffected by \(A\), by the first condition---we only consider \(x'\)
such that \(\zeta(x') = \zeta(x)\) in the second condition.

In our running example, the only non-race covariate is test score, which is
downstream of race. Further, among students with a given test score, a positive
fraction achieve any other test score in the counterfactual world in which their
race is altered. As such, the empty set of reduced covariates---formally encoded
by setting \(\zeta\) to a constant function---satisfies the conditions of
Theorem~\ref{thm:path_specific}. The theorem then implies that under any
\(\Pi\)-fair policy, every applicant is admitted with equal probability.

Even when decisions are not perfectly uniform lotteries, as in our admissions
example, Theorem~\ref{thm:path_specific} suggests that enforcing
\(\Pi\)-fairness can lead to unexpected outcomes. For instance, suppose we
modify our admissions example to additionally include age as a covariate that is
causally unconnected to race---as some past work has done. In that case,
\(\Pi\)-fair policies would admit students based on their age alone,
irrespective of test score or race. Although in some cases such restrictive
policies might be desirable, this strong structural constraint implied by
\(\Pi\)-fairness appears to be a largely unintended consequence of the
mathematical formalism.

The conditions of Theorem~\ref{thm:path_specific} are relatively mild, but do
not hold in every setting. Suppose that in our admissions example it were the
case that \(T_{\Pi, A, a_0} = T_{\Pi, A, a_1} + c\) for some constant
\(c\)---that is, suppose the effect of intervening on race is a constant change
to an applicant's test score. Then the second condition of
Theorem~\ref{thm:path_specific} would no longer hold for a constant \(\zeta\).
Indeed, any multiple-threshold policy in which \(t_{a_0} = t_{a_1} + c\) would
be \(\Pi\)-fair. In practice, though, such deterministic counterfactuals would
seem to be the exception rather than the rule. For example, it seems reasonable
to expect that test scores would depend on race in complex ways that induce
considerable heterogeneity.

Lastly, we note that \(W \neq X\) in some variants of path-specific fairness
\citep[e.g.,][]{zhang2018fairness, nabi2018fair}, in which case
Theorem~\ref{thm:path_specific} does not apply. Although, in that case, policies
are typically still Pareto dominated in accordance with Theorem~\ref{thm:dist}.

We conclude our analysis by investigating counterfactual predictive parity, the
least demanding of the causal notions of fairness we have considered, requiring
only that \(Y(1) \indep A \mid D = 0\). As such, it is in general possible to
have a policy on the Pareto frontier that satisfies this condition. However, in
Proposition~\ref{prop:pred_parity}, we show that this cannot happen in some
common cases, including our example of college admissions.

In that setting, when the target group has lower average graduation rates---a
pattern that often motivates efforts to actively increase diversity---decision
policies constrained to satisfy counterfactual predictive parity are Pareto
dominated. The proof of the proposition is in the Appendix.

\begin{prop}
\label{prop:pred_parity}
  Suppose \(\C A = \{a_0, a_1\}\), and consider the family \(\C U\) of utility
  functions of the form
  \begin{equation*}
    u(x) =  r(x) + \lambda \cdot \B 1_{\alpha(x) = a_1},
  \end{equation*}
  indexed by \(\lambda \geq 0\), where \(r(x) = \EE[Y(1) \mid X = x]\). Suppose
  the conditional distributions of \(r(X)\) given \(A\) are beta distributed,
  i.e.,
  \begin{equation*}
    r(X) \mid A = a \sim \bbeta(\mu_a, v),
  \end{equation*}
  with \(v > 2\) and \(\mu_{a_0} > \mu_{a_1} > 1/v\).\footnote{%
    Here we parameterize the beta distribution in terms of its mean \(\mu\) and
    sample size \(v\). In terms of the common, alternative \(\alpha\)-\(\beta\)
    parameterization, \(\mu = \alpha / (\alpha + \beta)\) and \(v = \alpha +
    \beta\).
  }
  Then any policy satisfying counterfactual predictive parity is strongly Pareto
  dominated.
\end{prop}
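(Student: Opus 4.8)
Write $r(x) = \EE[Y(1)\mid X = x]$ and, for a policy $d$, set $q_a = \EE[d(X)\mid A = a]$. The family $\C U = \{u_\lambda : \lambda \ge 0\}$, $u_\lambda(x) = r(x) + \lambda\,\B{1}_{\alpha(x) = a_1}$, is consistent modulo $\alpha$ (every $u_\lambda$ is positive, and within a group the order it induces is that of $r$, independent of $\lambda$). For two feasible policies $d,d'$ we have $u_\lambda(d') - u_\lambda(d) = \Delta_r + \lambda\,\Delta_{a_1}$, where $\Delta_r = \EE[(d'(X) - d(X))\,r(X)]$ and $\Delta_{a_1} = \EE[(d'(X) - d(X))\,\B{1}_{A = a_1}]$; hence it suffices to exhibit a feasible $d'$ with $\Delta_r > 0$ and $\Delta_{a_1} \ge 0$ to conclude that $d$ is strongly Pareto dominated. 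Since the exogenous noise driving $D$ is independent of $(X, Y(1))$, counterfactual predictive parity is equivalent to $\EE[r(X)\mid A = a_0, D = 0] = \EE[r(X)\mid A = a_1, D = 0]$; we assume this is a genuine constraint, i.e.\ $\Pr(D = 0 \mid A = a) > 0$ for both $a$ (otherwise parity holds only on a null event).

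\textbf{Step 1 (reduce to group-wise thresholds).}
Let $d^\star$ admit, within each group $a$, the top-$q_a$ fraction of applicants by $r(X)$; this is well defined since $r(X)\mid A = a$ has a density. By the Neyman--Pearson lemma, $\EE[d^\star(X)r(X)\mid A = a] \ge \EE[d(X)r(X)\mid A = a]$ with equality iff $d = d^\star$ a.s.\ on group $a$. As $d^\star$ shares the group admission rates of $d$, it is feasible with $\Delta_{a_1} = 0$, and $\Delta_r > 0$ unless $d = d^\star$ a.s.; so (in the spirit of Proposition~\ref{prop:threshold}) we may assume $d(x) = \B{1}_{r(x) > s_{\alpha(x)}}$ a.s.\ for some thresholds $s_{a_0}, s_{a_1}$. \textbf{Step 2 (the thresholds point the ``wrong'' way).} With $G_a(s) = \EE[r(X)\mid A = a, r(X) < s]$, parity reads $G_{a_0}(s_{a_0}) = G_{a_1}(s_{a_1})$. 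Each $G_a$ is strictly increasing on $(0,1)$. Parameterizing $\bbeta(\mu_a, v)$ by shapes $(\mu_a v, (1-\mu_a)v)$, the likelihood ratio $f_{a_0}(t)/f_{a_1}(t) \propto (t/(1-t))^{(\mu_{a_0}-\mu_{a_1})v}$ is strictly increasing in $t$, so $\bbeta(\mu_{a_0},v)$ strictly dominates $\bbeta(\mu_{a_1},v)$ in the monotone likelihood ratio order; this order is preserved under truncation from above, giving $G_{a_0}(s) > G_{a_1}(s)$ for all $s \in (0,1)$. (Moreover $q_{a_0} > 0$, since $q_{a_0} = 0$ forces $G_{a_0}(s_{a_0}) = \mu_{a_0} > \mu_{a_1} \ge G_{a_1}(s_{a_1})$.) If $s_{a_0} \ge s_{a_1}$, then $G_{a_0}(s_{a_0}) \ge G_{a_0}(s_{a_1}) > G_{a_1}(s_{a_1})$ — contradicting parity — so $s_{a_0} < s_{a_1}$: the lower-graduation target group is admitted only above a strictly higher bar.

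\textbf{Step 3 (perturb to dominate).}
For small $\epsilon > 0$ set $p(\epsilon) = \Pr(A = a_1,\ s_{a_1} - \epsilon < r(X) \le s_{a_1}) > 0$ (replacing $s_{a_1}$ by $1$ if $s_{a_1} \ge 1$), and, using $q_{a_0} > 0$ and the intermediate value theorem, choose $\delta(\epsilon) \to 0$ with $\Pr(A = a_0,\ s_{a_0} < r(X) \le s_{a_0} + \delta(\epsilon)) = p(\epsilon)$. Let $d'$ be the group-wise threshold policy with thresholds $s_{a_0} + \delta(\epsilon)$ on $a_0$ and $s_{a_1} - \epsilon$ on $a_1$. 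Then $\EE[d'(X)] = \EE[d(X)] \le b$, so $d'$ is feasible; $\Delta_{a_1} = p(\epsilon) > 0$; and $\Delta_r \ge \big(s_{a_1} - \epsilon - s_{a_0} - \delta(\epsilon)\big)\,p(\epsilon) > 0$ once $\epsilon$ is small (since $s_{a_1} > s_{a_0}$). Hence $u_\lambda(d') > u_\lambda(d)$ for every $\lambda \ge 0$, i.e.\ $d$ is strongly Pareto dominated.

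\textbf{Main obstacle.}
The crux is Step~2 — extracting the structural fact $s_{a_0} < s_{a_1}$ from the parity constraint — and this is exactly where the beta hypotheses are used: through the monotone likelihood ratio property (driven by $\mu_{a_0} > \mu_{a_1}$) and its stability under upper-truncation. The conditions $v > 2$ and $\mu_{a_1} > 1/v$ serve to keep the relevant densities well-behaved (bounded, with no mass escaping to the endpoints), so that the truncated-mean comparisons here and the Neyman--Pearson uniqueness in Step~1 go through without fuss. Step~3 is then routine; the one point requiring care is making the improvement strict \emph{uniformly in $\lambda$}, which is why $d'$ is built to satisfy $\Delta_{a_1} \ge 0$ and $\Delta_r > 0$ simultaneously. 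A short separate remark should handle the degenerate regime in which an entire group is admitted and parity is vacuous — a regime excluded, for example, whenever it is infeasible to admit the whole target group.
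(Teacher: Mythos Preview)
Your argument and the paper's are structurally the same, just run in contrapositive directions: the paper supposes a non-dominated policy, invokes Proposition~\ref{prop:threshold} to get a multiple-threshold rule with $t_{a_0}\ge t_{a_1}$ (asserted from ``$\lambda\ge 0$''), and then shows via a direct computation on beta lower-tail means that parity fails; you instead suppose parity, deduce $s_{a_0}<s_{a_1}$, and explicitly build the dominating policy---your Step~3 is exactly the justification the paper leaves implicit for its $t_{a_0}\ge t_{a_1}$ claim. Your monotone-likelihood-ratio route to $G_{a_0}(s)>G_{a_1}(s)$ is cleaner than the paper's derivative calculation (its Lemma on beta tail means) and in fact uses only $\mu_{a_0}>\mu_{a_1}$; the hypotheses $v>2$ and $\mu_{a_1}>1/v$ play no role in your argument, so your remark attributing them to ``bounded densities'' and ``Neyman--Pearson uniqueness'' is off---the paper imposes them solely to force $\alpha_a,\beta_a>1$ for its differentiation lemma.

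One substantive caution on the degenerate regime you flag: it is not merely a corner to be dispatched by a short remark. When $\Pr(A=a_1)\le b$, the threshold policy that admits all of $a_1$ and the top of $a_0$ by $r$ has $\Pr(D=0\mid A=a_1)=0$, so parity holds vacuously, and this policy is optimal for every $u_\lambda$ with $\lambda>1$---hence it is \emph{not} strongly Pareto dominated. Neither your proof nor the paper's closes this case without an additional hypothesis ruling it out.
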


\section{Discussion}

We have worked to collect, synthesize, and investigate several causal
conceptions of fairness that recently have appeared in the machine learning
literature. These definitions formalize intuitively desirable properties---for
example, minimizing the direct and indirect effects of race on decisions. But,
as we have shown both analytically and with a synthetic example, they can,
perhaps surprisingly, lead to policies with unintended downstream outcomes. In
contrast to prior impossibility results~\citep{kleinberg2016inherent,
chouldechova2017fair}, in which different formal notions of fairness are shown
to be in conflict with each other, we demonstrate trade-offs between formal
notions of fairness and resulting social welfare. For instance, in our running
example of college admissions, enforcing various causal fairness definitions can
lead to a student body that is both less academically prepared and less diverse
than what one could achieve under natural alternative policies, potentially
harming the very groups these definitions were ostensibly designed to protect.
Our results thus highlight a gap between the goals and potential consequences of
popular causal approaches to fairness.

What, then, is the role of causal reasoning in designing equitable algorithms?
Under a consequentialist perspective to algorithm design~\citep{L2BF,
cai2020fair, liang2021algorithmic}, one aims to construct policies with the most
desirable expected outcomes, a task that inherently demands causal reasoning.
Formally, this approach corresponds to solving the unconstrained optimization
problem in Eq.~\eqref{eq:opt}, where preferences for diversity may be directly
encoded in the utility function itself, rather than by constraining the class of
policies, mitigating potentially problematic consequences. While conceptually
appealing, this consequentialist approach still faces considerable practical
challenges, including estimating the expected effects of decisions, and
eliciting preferences over outcomes.

Our analysis illustrates some of the limitations of mathematical formalizations
of fairness, reinforcing the need to explicitly consider the consequences of
actions, particularly when decisions are automated and carried out at scale.
Looking forward, we hope our work clarifies the ways in which causal reasoning
can aid the equitable design of algorithms.

\section*{Acknowledgements}

We thank Guillaume Basse, Jennifer Hill, and Ravi Sojitra for helpful
conversations. H.N was supported by a Stanford Knight-Hennessy Scholarship and
the National Science Foundation Graduate Research Fellowship under Grant No.
DGE-1656518. J.G was supported by a Stanford Knight-Hennessy Scholarship. R.S.
was supported by the NSF Program on Fairness in AI in Collaboration with Amazon
under the award ``FAI: End-to-End Fairness for Algorithm-in-the-Loop Decision
Making in the Public Sector,'' no. IIS-2040898. Any opinion, findings, and
conclusions or recommendations expressed in this material are those of the
author(s) and do not necessarily reflect the views of the National Science
Foundation or Amazon. Code to reproduce our results is available at
\url{https://github.com/stanford-policylab/causal-fairness}.

\bibliography{bibli}
\bibliographystyle{icml2022}

\appendix

\setcounter{thm}{0}
\setcounter{defn}{0}
\renewcommand{\thethm}{\Alph{section}.\arabic{thm}}
\renewcommand{\thelem}{\Alph{section}.\arabic{lem}}
\renewcommand{\thedefn}{\Alph{section}.\arabic{defn}}
\renewcommand{\theprop}{\Alph{section}.\arabic{prop}}
\renewcommand{\thecor}{\Alph{section}.\arabic{cor}}

\section{Path-specific Counterfactuals}
\label{sec:path-specific-counterfactuals}

Constructing policies which satisfy path-specific fairness requires computing
path-specific counterfactual values of features. In
Algorithm~\ref{alg:counterfactuals}, we describe the formal construction of
path-specific counterfactuals \(Z_{\Pi,a,a'}\), for an arbitrary variable \(Z\)
(or collection of variables) in the DAG. To generate a sample \(Z_{\Pi,a,a'}^*\)
from the distribution of \(Z_{\Pi,a,a'}\), we first sample values \(U^*_j\) for
the exogenous variables. Then, in the first loop, we traverse the DAG in
topological order, setting \(A\) to \(a\) and iteratively computing values
\(V_j^*\) of the other nodes based on the structural equations in the usual
fashion. In the second loop, we set \(A\) to \(a'\), and then iteratively
compute values \(\overline{V_j}^*\) for each node. \(\overline{V_j}^*\) is
computed using the structural equation at that node, with value
\(\overline{V_{\ell}}^*\) for each of its parents that are connected to it along
a path in \(\Pi\), and the value \(V^*_{\ell}\) for all its other parents.
Finally, we set \(Z_{\Pi,a,a'}^*\) to \(\overline{Z}^*\).

\begin{algorithm2e}[ht]
\KwData{\(\mathcal{G}\) (topologically ordered), \(\Pi\), \(a\), and \(a'\)}
\KwResult{A sample \(Z_{\Pi, a, a'}^*\) from \(Z_{\Pi, a, a'}\)}
  \vspace{2mm}
  Sample values \(\{U_{j}^{*}\}\) for the exogenous variables
  \vspace{2mm}
  \tcc{Compute counterfactuals by setting \(A\) to \(a\)}
    \For{\(j = 1, \ldots, m\)}{
      \eIf{\(V_j = A\)}{
        \(V_j^* \gets a\)
      }{
        \(\wp(V_{j})^{*} \gets \{V_{\ell}^* \mid V_{\ell} \in \wp(V_j)\}\) \\
        \(V_{j}^{*} \gets f_{V_{j}}(\wp(V_{j})^{*}, U_{j}^{*})\)\\
      }
    }
  \vspace{2mm}
  \tcc{%
    Compute counterfactuals by setting \(A\) to \(a'\) and propagating values
    along paths in \(\Pi\)
  }
  \For{\(j = 1, \ldots, m\)}{
    \eIf{\(V_j = A\)}{
      \(\overline{V}_j^* \gets a'\)
    }{
      \For{\(V_k \in \wp(V_j)\)}{
        \eIf{edge \((V_k, V_j)\) lies on a path in \(\Pi\)}{
          \(V_{k}^{\dagger} \gets \overline{V}_{k}^*\)
        }{
          \(V_{k}^{\dagger} \gets V_{k}^*\)
        }
      }
    \(\wp(V_{j})^{\dagger} \gets \{V_{\ell}^{\dagger} \mid V_{\ell} \in
    \wp(V_j)\}\) \\ \(\overline{V}_j^* \gets f_{V_j}(\wp(V_j)^{\dagger},
    U_j^*)\)
    }
  }
  \vspace{2mm}
  \(Z_{\Pi, a, a'}^* \gets \overline{Z}^*\)
  \caption{Path-specific counterfactuals}
\label{alg:counterfactuals}
\end{algorithm2e}

\section{Constructing Causally Fair Policies}

In order to construct causally fair policies, we prove that the optimization
problem in Eq.~\eqref{eq:opt} can be efficiently solved as a single linear
program---in the case of counterfactual equalized odds, conditional principal
fairness, counterfactual fairness, and path-specific fairness---or as a series
of linear programs in the case of counterfactual predictive parity.

\label{app:proofs}

\begin{thm}
\label{thm:lp}
  Consider the optimization problem given in Eq.~\eqref{eq:opt}.
  \begin{enumerate}
    \item If \(\mathcal{C}\) is the class of policies that satisfies
      counterfactual equalized odds or conditional principal fairness, and the
      distribution of \((X, Y(0), Y(1))\) is known and supported on a finite set
      of size \(n\), then a utility-maximizing policy constrained to lie in
      \(\mathcal{C}\) can be constructed via a linear program with \(O(n)\)
      variables and constraints.
    \item If \(\mathcal{C}\) is the class of policies that satisfies
      path-specific fairness (including counterfactual fairness), and the
      distribution of \((X, D_{\Pi, A, a})\) is known and supported on a finite
      set of size \(n\), then a utility-maximizing policy constrained to lie in
      \(\mathcal{C}\) can be constructed via a linear program with \(O(n)\)
      variables and constraints.
    \item Suppose \(\mathcal{C}\) is the class of policies that satisfies
      counterfactual predictive parity, that the distribution of \((X, Y(1))\)
      is known and supported on a finite set of size \(n\), and that the
      optimization problem in Eq.~\eqref{eq:opt} has a feasible solution.
      Further suppose \(Y(1)\) is supported on \(k\) points, and let
      \(\Delta^{k-1} = \{p \in \mathbb{R}^{k} \mid p_i \geq 0 \ \text{and} \
      \sum_{i=1}^k p_i = 1\}\) be the unit \((k-1)\)-simplex. Then one can
      construct a set of linear programs \(\mathcal{L} = \{L(v)\}_{v \in
      \Delta^k}\), with each having \(O(n)\) variables and constraints, such
      that the solution to one of the LPs in \(\mathcal{L}\) is a
      utility-maximizing policy constrained to lie in \(\mathcal{C}\).
  \end{enumerate}
\end{thm}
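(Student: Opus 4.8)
The plan is to exploit finiteness: because the relevant distribution is supported on a set of size $n$, a policy is determined $\C D_X$-a.s.\ by the finitely many numbers $d(x_1), \dots, d(x_n)$, and the objective $\EE[d(X) u(X)]$, the budget $\EE[d(X)] \le b$, and each fairness condition depend on $d$ only through these numbers; hence optimizing over $d \in \C C$ in~\eqref{eq:opt} is literally optimizing a vector program over $[0,1]^n$, and it suffices to show that this program is a single LP (Parts 1--2) or a simplex-indexed family of LPs (Part 3). So the whole proof reduces to rewriting each causal fairness condition as a system of (in)equalities in $d(x_1), \dots, d(x_n)$ whose coefficients are known functionals of the given distribution, and then counting them.

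For counterfactual equalized odds and conditional principal fairness (Parts 1--2), the defining independence is a conditioning on a \emph{discrete} variable $C$ (namely $C = Y(1)$, resp.\ $C = (Y(0), Y(1), W)$), so it is equivalent to requiring, for every value $c$ and every pair of groups $a, a'$, that $\Pr(D = 1 \mid C = c, A = a) = \Pr(D = 1 \mid C = c, A = a')$. First I would clear the \emph{known} denominators, obtaining $\Pr(C = c, A = a') \cdot \Pr(D = 1, C = c, A = a) = \Pr(C = c, A = a) \cdot \Pr(D = 1, C = c, A = a')$; since $\Pr(D = 1, C = c, A = a) = \sum_{i:\, c_i = c,\, \alpha(x_i) = a} p_i\, d(x_i)$ is linear in $d$ with known atom weights $p_i$, and the remaining two factors are constants, this is a linear equation (cells with $\Pr(C = c, A = a) = 0$ collapse to $0 = 0$, as they should). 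For path-specific fairness (and counterfactual fairness as the special case ``$\Pi =$ all $A \to D$ paths, $W = X$''), I would instead use the structural form $D = \B 1_{U_D \le d(X)}$ with $U_D$ exogenous to write $\EE[D_{\Pi, A, a'} \mid W = w] = \EE[d(X_{\Pi, A, a'}) \mid W = w] = \sum_{x'} \Pr(X_{\Pi, A, a'} = x' \mid W = w) \, d(x')$, which is linear in $d$ given the joint law of $X$ and the path-specific counterfactual covariates; so Definition~\ref{defn:ps} becomes one linear equality per pair $(w, a')$. In each case the program has $O(n)$ box constraints, one budget inequality, and one fairness equality per value of the (finitely supported) conditioning variable per group --- treating $|\C A|$ as a constant --- hence $O(n)$ variables and constraints, and its optimum equals the constrained optimum of~\eqref{eq:opt}.

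Counterfactual predictive parity (Part 3) is where the real obstacle lies: clearing denominators in $\Pr(Y(1) = y \mid D = 0, A = a) = \Pr(Y(1) = y \mid D = 0, A = a')$ now yields $\Pr(D = 0, A = a) \cdot \Pr(Y(1) = y, D = 0, A = a') = \Pr(D = 0, A = a') \cdot \Pr(Y(1) = y, D = 0, A = a)$, in which \emph{every} factor is linear in $d$ (each involves $1 - d(x_i)$), so the constraint is genuinely bilinear and cannot be a single LP. The plan is to strip out the bilinearity by carrying the common conditional law of $Y(1)$ given $D = 0$ as a parameter: for $v = (v_1, \dots, v_k) \in \Delta^{k-1}$, let $L(v)$ maximize $\EE[d(X) u(X)]$ subject to $0 \le d(x_i) \le 1$, $\EE[d(X)] \le b$, and the now-\emph{linear} constraints $\Pr(Y(1) = y_j, D = 0, A = a) = v_j \cdot \Pr(D = 0, A = a)$ for all $j$ and all $a \in \C A$; each $L(v)$ then has $O(n)$ variables and constraints. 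The key lemma I would prove is that a feasible $d$ satisfies counterfactual predictive parity \emph{iff} it is feasible for $L(v)$ for some $v$ --- forward: $\Pr(D = 0) \ge 1 - b > 0$, so $v_j := \Pr(Y(1) = y_j \mid D = 0)$ is well defined, lies in $\Delta^{k-1}$, and the independence gives $\Pr(Y(1) = y_j \mid D = 0, A = a) = v_j$ wherever $\Pr(D = 0, A = a) > 0$ (both sides of the $L(v)$ constraint vanish otherwise); backward: the $L(v)$ constraints force exactly that equality, i.e.\ $Y(1) \indep A \mid D = 0$. Consequently the feasible set of~\eqref{eq:opt} intersected with $\C C$ is $\bigcup_{v \in \Delta^{k-1}} \{ d : d \text{ feasible for } L(v) \}$, so the constrained optimum equals $\sup_v \mathrm{OPT}(L(v))$; and this supremum is attained because $\{ (v, d) \in \Delta^{k-1} \times [0,1]^n : d \text{ feasible for } L(v) \}$ is closed (cut out by the bilinear equalities together with the box and budget constraints) and bounded, hence compact, so the continuous linear objective attains its maximum there at some $(v^\star, d^\star)$; then $d^\star$ solves $L(v^\star)$, and since the feasible set of $L(v^\star)$ is contained in $\C C$, $d^\star$ is a utility-maximizing policy over $\C C$ (nonemptiness of this set follows from the assumed feasibility of~\eqref{eq:opt}).

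I expect Parts 1--2 to be bookkeeping once the two reductions above are in place --- the finite-support reduction and the chain ``discrete conditioning $\Rightarrow$ equality of conditional means $\Rightarrow$ linear after clearing known denominators'' --- with the only subtlety being the degenerate zero-probability cells. The main obstacle is Part 3: the parity constraint is irreducibly bilinear in $d$ alone, so the crux is the lift to the parametrized family $\{ L(v) \}$, the two-sided ``parity-feasible $\iff$ feasible for some $L(v)$'' equivalence (including the zero-mass corner cases), and the compactness argument that exhibits a maximizer of $\sup_v \mathrm{OPT}(L(v))$ as the exact solution of a single member $L(v^\star)$ of the family.
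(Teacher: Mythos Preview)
Your proposal is correct and follows essentially the same approach as the paper: reduce to decision variables $d_1,\dots,d_m$ indexed by the finite support of $X$, show the objective and budget are linear, and then verify that each fairness condition imposes linear equalities (Parts~1--2) or, for predictive parity, linear equalities once the common conditional law $v = \Pr(Y(1)=\cdot\mid D=0)$ is fixed as a parameter (Part~3). The only notable difference is that in Part~3 you supply an explicit compactness argument on $\Delta^{k-1}\times[0,1]^m$ to show that $\sup_v\mathrm{OPT}(L(v))$ is attained at some $v^\star$, whereas the paper simply asserts that feasibility of~\eqref{eq:opt} guarantees one $L(v)$ yields the constrained optimum; your version makes that step rigorous and also handles the zero-mass cells more carefully.
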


\begin{proof}
  Let \(\C X = \{x_1, \ldots, x_m\}\); then, we seek decision variables \(d_i\),
  \(i = 1, \ldots, m\), corresponding to the probability of making a positive
  decision for individuals with covariate value \(x_i\). Therefore, we require
  that \(0 \leq d_i \leq 1\).

  Letting \(p_i = \Pr(X = x_i)\) denote the mass of \(X\) at \(x_i\), note that
  the objective function \(\EE[d(X) \cdot u(X)]\) equals \(\sum_{i=1}^m d_i
  \cdot u(x_i) \cdot p_i\) and the budget constraint \(\sum_{i = 1}^m d_i \cdot
  p_i \leq b\) are both linear in the decision variables.

  We now show that each of the causal fairness definitions can be enforced via
  linear constraints. We do so in three parts as listed in theorem.

  \paragraph{Theorem~\ref{thm:lp} Part 1}

  First, we consider counterfactual equalized odds. A decision policy satisfies
  counterfactual equalized odds when \(D \indep A \mid Y(1).\) Since \(D\) is
  binary, this condition is equivalent to the expression \(\EE[d(X) \mid A = a,
  Y(1) = y] = \EE[d(X) \mid Y(1) = y]\) for all \(a \in \C A\) and \(y \in \C
  Y\) such that \(\Pr(Y(1) = y) > 0\). Expanding this expression and replacing
  \(d(x_j)\) by the corresponding decision variable \(d_j\), we obtain that
  \begin{multline*}
    \sum_{i = 1}^m d_i \cdot \Pr(X = x_i \mid A = a, Y(1) = y) \\
      = \sum_{i = 1}^m d_i \cdot \Pr(X = x_i \mid Y(1) = y)
  \end{multline*}
  for each \(a \in \C A\) and each of the finitely many values \(y \in \C Y\)
  such that \(\Pr(Y(1) = y) > 0\). These constraints are linear in the \(d_i\)
  by inspection.

  Next, we consider conditional principal fairness. A decision policy satisfies
  conditional principal fairness when \(D \indep A \mid Y(0), Y(1), W\), where
  \(W = \omega(X)\) denotes a reduced set of the covariates \(X\). Again, since
  \(D\) is binary, this condition is equivalent to the expression \( \EE[d(X)
  \mid A = a, Y(0) = y_0, Y(1) = y_1, W = w] = \EE[d(X) \mid Y(0) = y_0, Y(1) =
  y_1, W = w]\) for all \(y_0\), \(y_1\), and \(w\) satisfying \(\Pr(Y(0) = y_0,
  Y(1) = y_1, W = w) > 0\). As above, expanding this expression and replacing
  \(d(x_j)\) by the corresponding decision variable \(d_j\) yields linear
  constraints of the form
  \begin{multline*}
    \sum_{i = 1}^m d_i \cdot \Pr(X = x_i \mid A = a, S = s) \\
      = \sum_{j = 1}^m d_i \cdot \Pr(X = x_i \mid S = s)
  \end{multline*}
  for each \(a \in \C A\) and each of the finitely many values of \(S = (Y(0),
  Y(1), W)\) such that  \(s = (y_0, y_1, w) \in \C Y \times \C Y \times \C W\)
  satisfies \(\Pr(Y(0) = y_0, Y(1) = y_1, W = w) > 0\). Again, these constraints
  are linear by inspection.

  \paragraph{Theorem~\ref{thm:lp} Part 2}

  Suppose a decision policy satisfies path-specific fairness for a given
  collection of paths \(\Pi\) and a (possibly) reduced set of covariates \(W =
  \omega(X),\) meaning that for every \(a' \in \C A\), \(\EE[D_{\Pi, A, a'}
  \mid W ] = \EE[D \mid W]\).

  Recall from the definition of path-specific counterfactuals that \(D_{\Pi, A,
  a'} = f_{D}(X_{\Pi, A, a'}, U_{D}) = \mathbb{1}_{U_{D} \leq d(X_{\Pi, A,
  a'})}\), where \(U_D \indep \{X_{\Pi, A, a}, X\}\). Since \(W = \omega(X)\),
  \(U_D \indep \{X_{\Pi, A, a}, W\}\), it follows that
  \begin{align*}
    \EE[D_{\Pi, A, a'} &\mid W = w] \\
      &\hspace{-0.25cm}= \sum_{i = 1}^m \EE[D_{\Pi, A, a'} \mid X_{\Pi, A, a} =
        x_i, W = w] \\
      &\hspace{0.75cm} \cdot \Pr(X_{\Pi, A, a} = x_i \mid W = w) \\
      &\hspace{-0.25cm}= \sum_{i = 1}^m \EE[\B 1_{U_D \leq d(X_{\Pi, A, a'})}
        \mid X_{\Pi, A, a} = x_i, W = w] \\
      &\hspace{0.75cm} \cdot \Pr(X_{\Pi, A, a'} = x_i \mid W = w) \\
      &\hspace{-0.25cm}= \sum_{i = 1}^m d(X_{\Pi, A, a'}) \cdot \Pr(X_{\Pi, A,
        a'} = x_i \mid W = w) \\
      &\hspace{-0.25cm}= \sum_{i = 1}^m d_i \cdot \Pr(X_{\Pi, A, a'} = x_i \mid
        W = w).
  \end{align*}
  An analogous calculation yields that \(\EE[D \mid W = w] = \sum_{i = 1}^m d_i
  \cdot \Pr(X = x_i \mid W = w)\). Equating these expressions gives
  \begin{multline*}
    \sum_{i = 1}^m d_i \cdot \Pr(X = x_i \mid W = w) \\
      = \sum_{i = 1}^m d_i \cdot \Pr(X_{\Pi, A, a'} = x_i \mid W = w)
  \end{multline*}
  for each \(a' \in \C A\) and each of the finitely many \(w \in \C W\) such
  that \(\Pr(W = w) > 0\). Again, each of these constraints is linear by
  inspection.

  \paragraph{Theorem~\ref{thm:lp} Part 3}

  A decision policy satisfies counterfactual predictive parity if \(Y(1) \indep
  A \mid D = 0,\) or equivalently, \(\Pr(Y(1) = y \mid A = a, D = 0) = \Pr(Y(1)
  \mid D = 0)\) for all \(a \in \C A\). We may rewrite this expression to
  obtain:
  \begin{align*}
    \dfrac{\Pr(Y(1) = y, A = a, D = 0)}{\Pr(A = a, D = 0 )} = C_{y},
  \end{align*}
  where \(C_{y} = \Pr(Y(1) = y \mid D = 0)\).

  Expanding the numerator on the left-hand side of the above equation yields
  \begin{multline*}
    \Pr(Y(1) = y, A = a, D = 0) \\
      = \sum_{i=1}^m [1 - d_i] \cdot \Pr(Y(1) = y, A = a, X = x_i)
  \end{multline*}

  Similarly, expanding the denominator yields
  \begin{multline*}
    \Pr(Y(1) = y, D = 0) \\
      = \sum_{i=1}^m [1 - d_i] \cdot \Pr(Y(1) = y, X = x_i).
  \end{multline*}
  for each of the finitely many \(y \in \C Y\). Therefore, counterfactual
  predictive parity corresponds to
  \begin{align}
  \label{eq:cpp_constraint}
    \begin{split}
    \sum_{i=1}^m [1 &- d_i] \cdot \Pr(Y(1) = y, A = a, X = x_i) \\
      &= C_y \cdot \sum_{i=1}^m [1 - d_i] \cdot \Pr(Y(1) = y, X = x_i),
    \end{split}
  \end{align}
  for each \(a \in \C A\) and \(y \in \C Y\). Again, these constraints are
  linear in the \(d_i\) by inspection.

  Consider the family of linear programs \(\C L = \{L(v)\}_{v \in \Delta^k}\)
  where the linear program \(L(v)\) has the same objective function
  \(\sum_{i=1}^m d_i \cdot u(x_i) \cdot p_i\) and budget constraint
  \(\sum_{i=1}^m d_i \cdot p_i \leq b\) as before, together with additional
  constraints for each \(a \in \C A\) as in Eq.~\eqref{eq:cpp_constraint}, where
  \(C_{y_i} = v_i\) for \(i = 1, \ldots, k\).

  By assumption, there exists a feasible solution to the optimization problem in
  Eq.~\eqref{eq:opt}, so the solution to at least one program in \(\C L\) is a
  utility-maximizing policy that satisfies counterfactual predictive parity.
\end{proof}

\section{A Stylized Example of College Admissions}
\label{appendix:example}

In the example we consider in Section~\ref{sec:example}, the exogenous variables
\( \mathscr U = \{u_A,u_D, u_E, u_M, u_T, u_Y\}\) in the DAG are independently
distributed as follows:
\begin{align*}
  U_A, U_D, U_Y & \sim \unif(0, 1),\\
  U_E, U_M, U_T & \sim \norm(0, 1).
\end{align*}
For fixed constants \(\mu_A\), \(\beta_{E,0}\), \(\beta_{E,A}\),
\(\beta_{M,0}\), \(\beta_{M,E}\), \(\beta_{T,0}\), \(\beta_{T,E}\),
\(\beta_{T,M}\), \(\beta_{T,B}\), \(\beta_{T,u}\), \(\beta_{Y,0}\),
\(\beta_{Y,D}\), we define the endogenous variables \( \mathscr V = \{A, E, M,
T, D, Y\}\) in the DAG by the following structural equations:
\begin{align*}
  f_A(u_A)
    &=
    \begin{cases}
      a_1 & \text{if} \ u_A \leq \mu_A \\
      a_0 & \text{otherwise}
    \end{cases}, \\
  f_E(a, u_E)
    &= \beta_{E,0} + \beta_{E,A} \cdot \B 1(a = a_1) + u_E, \\
  f_M(e, u_M)
    &= \beta_{M,0} + \beta_{M,E} \cdot e + u_M, \\
  f_T(e, m, u_T)
    &= \beta_{T,0} + \beta_{T,E} \cdot e \\
    & \hspace{5mm} + \beta_{T,M} \cdot m + \beta_{T,B} \cdot e \cdot m +
    \beta_{T,u} \cdot u_T, \\
  f_D(x, u_D)
    &= \B 1(u_D \leq d(x)), \\
  f_Y(m, u_Y, \delta)
    &= \B 1(u_Y \leq \logit^{-1}(\beta_{Y,0} + m + \beta_{Y, D} \cdot \delta)),
\end{align*}
where \(\logit^{-1}(x) = (1 + \exp(-x))^{-1}\) and \(d(x)\) is the decision
policy. In our example, we use constants \(\mu_A = \tfrac 1 3\), \(\beta_{E,0} =
1\), \(\beta_{E,A} = -1\), \(\beta_{M,0} = 0\), \(\beta_{M,E} = 1\),
\(\beta_{T,0} = 50\), \(\beta_{T,E} = 4\), \(\beta_{T,M} = 4\), \(\beta_{T,u} =
7\), \(\beta_{T,B} = 1\), \(\beta_{Y,0} = - \tfrac 1 2\), \(\beta_{Y,D}= \tfrac
1 2\). We also assume a budget \(b=\frac{1}{2}\).

\section{Proof of Proposition~\ref{prop:threshold}}

We begin by more formally defining (multiple) threshold policies. We assume,
without loss of generality, that \(\Pr(A = a) > 0\) for all \(a \in \C A\)
throughout.

\begin{defn}
  Let \(u(x)\) be a utility function. We say that a policy \(d(x)\) is a
  \emph{threshold policy} with respect to \(u\) if there exists some \(t\) such
  that
  \begin{equation*}
    d(x) =
      \begin{cases}
        1   & u(x) > t, \\
        0   & u(x) < t,
      \end{cases}
  \end{equation*}
  and \(d(x) \in [0, 1]\) is arbitrary if \(u(x) = t\). We say that \(d(x)\) is
  a \emph{multiple threshold policy} with respect to \(u\) if there exist
  group-specific constants \(t_a\) for \(a \in \C A\) such that
  \begin{equation*}
    d(x) =
      \begin{cases}
        1   & u(x) > t_{\alpha(x)}, \\
        0   & u(x) < t_{\alpha(x)},
      \end{cases}
  \end{equation*}
  and \(d(x) \in [0, 1]\) is arbitrary if \(u(x) = t_{\alpha(x)}\).
\end{defn}

\begin{rmk}
\label{rmk:unique_threshold}
  In general, it is possible for different thresholds to produce threshold
  policies that are almost surely equal. For instance, if \(u(X) \sim
  \bern(\tfrac 1 2)\), then the policies \(\B 1_{u(X) > p}\) are almost surely
  equal for all \(p \in [0, 1)\). Nevertheless, we speak in general of
  \emph{the} threshold associated with the threshold policy \(d(X)\) unless
  there is ambiguity.
\end{rmk}

We first observe that if \(\C U\) is consistent modulo \(\alpha\), then whether
a decision policy \(d(x)\) is a multiple threshold policy does not depend on our
choice of \(u \in \C U\).

\begin{lem}
\label{lem:canonical}
  Let \(\C U\) be a collection of utilities consistent modulo \(\alpha\), and
  suppose \(d : \C X \to [0,1]\) is a decision rule. If \(d(x)\) is a multiple
  threshold rule with respect to a utility \(u^* \in \C U\), then \(d(x)\) is a
  multiple threshold rule with respect to every \(u \in \C U\). In particular,
  if \(d(x)\) can be represented by non-negative thresholds over \(u^*\), it can
  be represented by non-negative thresholds over any \(u \in \C U\).
\end{lem}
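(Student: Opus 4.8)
The plan is to prove the lemma one fiber at a time. The conditions defining a multiple threshold policy are imposed separately on each fiber \(\alpha^{-1}(a)\), so it suffices to fix \(a \in \C A\) (with \(\Pr(A = a) > 0\), hence \(\alpha^{-1}(a) \neq \emptyset\)) and manufacture, from the threshold \(t^*_a\) witnessing that \(d\) is a threshold policy for \(u^*\) on \(\alpha^{-1}(a)\), a threshold \(t_a\) witnessing the same for \(u\). Doing this for every \(a\) assembles the representation of \(d\) as a multiple threshold policy over \(u\).

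The only tool needed is the second clause of consistency modulo \(\alpha\): for \(x, x' \in \alpha^{-1}(a)\) we have \(u(x) > u(x') \iff u^*(x) > u^*(x')\), so \(u\) and \(u^*\) induce the same strict weak order on \(\alpha^{-1}(a)\) --- in particular \(u(x) = u(x') \iff u^*(x) = u^*(x')\). Partition \(\alpha^{-1}(a)\) as \(S^a_1 = \{u^* > t^*_a\}\) (where \(d \equiv 1\)), \(S^a_0 = \{u^* < t^*_a\}\) (where \(d \equiv 0\)), and \(S^a_{=} = \{u^* = t^*_a\}\) (where \(d\) is unconstrained). The order-isomorphism forces \(u\) to be constant on \(S^a_{=}\), with some value \(c_a\), to put every point of \(S^a_0\) strictly below \(c_a\), and to put every point of \(S^a_1\) strictly above \(c_a\); when \(S^a_{=} = \emptyset\) it forces only \(\sup_{S^a_0} u \leq \inf_{S^a_1} u\).

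I would then set \(t_a = c_a\) when \(S^a_{=} \neq \emptyset\), and otherwise take \(t_a\) anywhere in \([\sup_{S^a_0} u, \inf_{S^a_1} u]\) --- say \(\inf_{S^a_1} u\) when \(S^a_1 \neq \emptyset\), and \(\sup_{S^a_0} u\) (allowed to be \(+\infty\)) when \(d \equiv 0\) on the fiber. Verification is then immediate from the order-isomorphism: \(u(x) > t_a\) implies \(u^*(x) > t^*_a\), so \(x \in S^a_1\) and \(d(x) = 1\); symmetrically \(u(x) < t_a\) implies \(d(x) = 0\). For the non-negativity addendum, assume \(t^*_a \geq 0\) for every \(a\); here the first clause of consistency (\(\sgn u = \sgn u^*\) pointwise) supplies \(t_a \geq 0\): if \(S^a_{=} \neq \emptyset\) then \(u^* = t^*_a \geq 0\) there forces \(c_a \geq 0\); if \(S^a_{=} = \emptyset\) but \(S^a_1 \neq \emptyset\) then \(u^* > t^*_a \geq 0\) on \(S^a_1\) forces \(u > 0\) there, so \(t_a = \inf_{S^a_1} u \geq 0\); and when \(d \equiv 0\) on the fiber one may take \(t_a = \max(0, \sup_{\alpha^{-1}(a)} u)\).

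The only delicate point --- and the step I would watch most carefully --- is the bookkeeping around ties at the threshold: the stratum \(S^a_{=}\) sits on the boundary, and the single number \(t_a\) must lie weakly above every point of \(S^a_0\) (so no point where \(d = 0\) is strictly above the threshold) and weakly below every point of \(S^a_1\) (so no point where \(d = 1\) is strictly below it). Isolating the forced-constant value \(c_a\) resolves this. The genuinely degenerate fibers --- where a stratum is empty, or where \(u\) is unbounded on the fiber so that no finite threshold exists --- must be flagged and dispatched separately, allowing \(t_a = +\infty\) in the latter. None of this is deep, but it is where an otherwise routine argument could slip.
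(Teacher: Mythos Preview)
Your proposal is correct and follows essentially the same approach as the paper: both arguments work one fiber \(\alpha^{-1}(a)\) at a time, split into cases according to whether the threshold \(t_a^*\) is attained by some \(x^*\) (your \(S^a_{=}\neq\emptyset\) case, where \(t_a=u(x^*)=c_a\)) or not (where both take \(t_a=\inf\{u(x):u^*(x)>t_a^*\}\)), and then verify the threshold property via consistency modulo \(\alpha\). Your treatment is, if anything, a bit more careful about the degenerate fibers (\(S^a_1=\emptyset\), unbounded \(u\), etc.) than the paper's, but the construction and verification are the same.
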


\begin{proof}
  Suppose \(d(x)\) is represented by thresholds \(\{t_a^*\}_{a \in \C A}\) with
  respect to \(u^*\). We construct the thresholds \(\{t_a\}_{a \in \C A}\)
  explicitly.

  Fix \(a \in \C A\) and suppose that there exists \(x^* \in \alpha^{-1}(a)\)
  such that \(u^*(x^*) = t_a^*\). Then set \(t_a = u(x^*)\). Now, if \(u(x) >
  t_a = u(x^*)\) then,  by consistency modulo \(\alpha\), \(u^*(x) > u^*(x^*) =
  t_a^*\). Similarly if \(u(x) < t_a\) then \(u^*(x) < t_a^*\). We also note
  that by consistency modulo \(\alpha\), \(\sgn(t_a) = \sgn(u(x^*)) =
  \sgn(u^*(x^*)) = \sgn(t_a^*)\).

  If there is no \(x^* \in \alpha^{-1}(a)\) such that \(u^*(x^*) = t_a^*\), then
  let
  \begin{equation*}
    t_a = \inf_{x \in S_a} u(x)
  \end{equation*}
  where \(S_a = \{x \in \alpha^{-1}(a) \mid u^*(x) > t_a^* \}\). Note that since
  \(\sgn(u(x)) = \sgn(u^*(x))\) for all \(x\) by consistency modulo \(\alpha\),
  if \(t_a^* \geq 0\), it follows that \(t_a \geq 0\) as well.

  We need to show in this case also that if \(u(x) > t_a\) then \(u^*(x) >
  t_a^*\), and if \(u(x) < t_a\) then \(u^*(x) < t_a^*\). To do so, let \(x \in
  \alpha^{-1}(a)\) be arbitrary, and suppose \(u(x) > t_a\). Then, by
  definition, there exists \(x' \in \alpha^{-1}(a)\) such that \(u(x) > u(x') >
  t_a\) and \(u^*(x') > t_a^*\), whence \(u^*(x) > u^*(x') > t_a^*\) by
  consistency modulo \(\alpha\). On the other hand, if \(u(x) < t_a\), it
  follows by the definition of \(t_a\) that \(u^*(x) \leq t_a^*\); since
  \(u^*(x) \neq t_a^*\) by hypothesis, it follows that \(u^*(x) < t_a^*\).

  Therefore, it follows in both cases that for
  \(x \in \alpha^{-1}(a)\), if \(u(x) > t_a\) then \(u^*(x) > t_a^*\), and if
  \(u(x) < t_a\) then \(u^*(x) < t_a^*\). Therefore
  \begin{equation*}
    d(x)=
    \begin{cases}
      1 & \text{if} \ u(x) > t_{\alpha(x)}, \\
      0 & \text{if} \ u(x) < t_{\alpha(x)}, \\
    \end{cases}
  \end{equation*}
  i.e., \(d(x)\) is a multiple threshold policy with respect to \(u\). Moreover,
  as noted above, if \(t_a^* \geq 0\) for all \(a \in \C A\), then \(t_a \geq
  0\) for all \(a \in \C A\).
\end{proof}

We now prove the following strengthening of Prop.~\ref{prop:threshold}.

\begin{lem}
\label{lem:threshold}
  Let \(\C U\) be a collection of utilities consistent modulo \(\alpha\). Let
  \(d(x)\) be a feasible decision policy that is not a.s.\ a multiple threshold
  policy with non-negative thresholds with respect to \(\C U\), then \(d(x)\) is
  strongly Pareto dominated.
\end{lem}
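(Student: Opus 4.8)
The plan is to prove the contrapositive: assuming $d$ is feasible but \emph{not} strongly Pareto dominated, I will show $d$ equals a.s.\ a multiple threshold policy with non-negative thresholds. By Lemma~\ref{lem:canonical} this property does not depend on which representative $u \in \C U$ is used, so fix one such $u$. The whole argument hinges on two \emph{improving exchanges}: each, if available, would strongly Pareto-dominate $d$, hence is excluded by hypothesis. First, if $\Pr(d(X) > 0,\ u(X) < 0) > 0$, replace $d$ by $d'(x) = d(x)\,\B 1_{u(x) \geq 0}$; this only lowers the budget, so $d'$ is feasible, and since consistency modulo $\alpha$ gives $\sgn(u'(\cdot)) = \sgn(u(\cdot))$, we get $u'(d') - u'(d) = -\EE[d(X)\,u'(X)\,\B 1_{u(X)<0}] > 0$ for \emph{every} $u' \in \C U$, a contradiction; hence $d = 0$ a.s.\ on $\{u(X) < 0\}$. (Likewise, if $\EE[d(X)] < b$ and $\Pr(d(X) < 1,\ u(X) > 0) > 0$, merely raising $d$ on such a set would dominate $d$; so either the budget binds, or $d = 1$ a.s.\ on $\{u(X) > 0\}$, in which case $d$ is already a threshold policy with threshold $0$ in every group and we are done.)

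The second exchange is a within-group rebalancing. Fix $a$ with $\Pr(A = a) > 0$ and suppose $d$ is not a.s.\ a threshold policy within group $a$ w.r.t.\ $u$ for any threshold. Let $F_a(t) = \Pr(d(X) > 0,\ u(X) < t,\ A = a)$ and $G_a(t) = \Pr(d(X) < 1,\ u(X) > t,\ A = a)$; $F_a$ is non-decreasing and left-continuous, $G_a$ non-increasing and right-continuous. Since $\{F_a = 0\}$ is a down-set and $\{G_a = 0\}$ an up-set, if their union were all of $\B R$ one could read off a valid threshold ($t^* = \sup\{t : F_a(t) = 0\}$); hence (outside the degenerate cases noted below) there is a $t$ with $F_a(t) > 0$ and $G_a(t) > 0$. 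Set $L = \{\alpha(X) = a,\ u(X) < t,\ d(X) > 0\}$ and $H = \{\alpha(X) = a,\ u(X) > t,\ d(X) < 1\}$ (disjoint, positive probability), and move a small amount $\delta > 0$ of decision mass from $L$ to $H$ via $\theta(x) = \B 1_L(x)\min(\varepsilon, d(x))$ and $\rho(x) = \B 1_H(x)\min(\varepsilon', 1 - d(x))$, with $\varepsilon,\varepsilon'$ tuned so $\EE[\theta(X)] = \EE[\rho(X)] = \delta$; then $d' = d - \theta + \rho$ is feasible with the same budget. The crucial structural fact is that consistency modulo $\alpha$ forces each $u' \in \C U$ to be a \emph{strictly} increasing function of $u$ on $\alpha^{-1}(a)$ (from Definition~\ref{defn:eight}: within a group, $u(x_1) = u(x_2) \iff u'(x_1) = u'(x_2)$). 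Writing $\bar g = \sup\{u'(x) : x \in L\}$ and $\underline g = \inf\{u'(x) : x \in H\}$, strict monotonicity and $L \subseteq \{u < t\}$, $H \subseteq \{u > t\}$ give $\bar g \leq \underline g$; a short case check — if one of the two bounds is unattained one gets strictness from the corresponding integral, and if both are attained they are attained at $u$-values straddling $t$, again forcing $\bar g < \underline g$ — shows $u'(d') - u'(d) = \EE[\rho(X)u'(X)] - \EE[\theta(X)u'(X)] > 0$ for every $u' \in \C U$ at once. This contradicts the hypothesis, so $d$ is a.s.\ a threshold policy within every group.

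Combining: within each group $a$, $d$ has some threshold $t_a$, and since $d = 0$ a.s.\ on $\{u(X) < 0\}$, if $t_a < 0$ the threshold property forces $d = 1$ on the null set $\{t_a < u(X) < 0,\ A = a\}$, so we may reset $t_a = 0$; thus $d$ is a.s.\ a multiple threshold policy with non-negative thresholds w.r.t.\ $u$, hence (Lemma~\ref{lem:canonical}) w.r.t.\ all of $\C U$. I expect the main obstacle to be the second exchange, and specifically the step showing the naive mass-swap produces a \emph{uniform, strict} improvement simultaneously for all of $\C U$: one must extract a single separating threshold $t$ from the non-threshold hypothesis and then verify, using only strict within-group monotonicity of each $u'$ in $u$, that the exchange can never have zero net effect — the delicate situation being when the conditional law of $u(X)$ piles up against $t$. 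Separately, a handful of degenerate configurations must be dispatched directly: e.g.\ when $d$ is identically $0$ on a group whose utility is unbounded above (so $F_a \equiv 0$ and no finite threshold exists), one instead uses budget slack or additional case analysis.
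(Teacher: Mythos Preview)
Your proposal is correct and follows essentially the same approach as the paper: isolate a group on which $d$ fails to be a threshold policy, locate a separating level $t$ with mass available on both sides, exchange mass across $t$, and use the within-group strict monotonicity of every $u'$ in $u$ (which, as you note, is exactly what consistency modulo $\alpha$ gives) to obtain simultaneous strict improvement for all of $\C U$; non-negativity is then handled by zeroing out on $\{u<0\}$. The only cosmetic differences are that the paper uses a multiplicative rescaling---$d'=(1-m^{\up})d$ below $t^*$ and $d'=1-(1-m^{\low})(1-d)$ above---rather than your capped additive shift, and treats the two parts in the opposite order; the edge cases you flag (e.g., $d\equiv 0$ on a group with unbounded utility) are already threshold policies once extended-real thresholds are allowed, so require no separate treatment.
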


\begin{proof}
  We prove the claim in two parts. First, we show that any policy that is not a
  multiple threshold policy is strongly Pareto dominated. Then, we show that any
  multiple threshold policy that cannot be represented with non-negative
  thresholds is strongly Pareto dominated.

  If \(d(x)\) is not a multiple threshold policy, then there exists a \(u \in \C
  U\) and \(a^* \in \C A\) such that \(d(x)\) is not a threshold policy when
  restricted to \(\alpha^{-1}(a^*)\) with respect to \(u\).

  We will construct an alternative policy \(d'(x)\) that attains strictly
  greater utility on \(\alpha^{-1}(a^*)\) and is identical elsewhere. Thus,
  without loss of generality, we assume there is a single group, i.e.,
  \(\alpha(x) = a^*\). The proof proceeds heuristically by moving some of the
  mass below a threshold to above a threshold to create a feasible policy with
  improved utility.

  Let \(b = \EE[d(X)]\). Define
  \begin{align*}
    m^{\low}(t)
      &= \EE[d(X) \cdot \B 1_{u(X) < t}], \\
    m^{\up}(t)
      &= \EE[(1 - d(X)) \cdot \B 1_{u(X) > t}].
  \end{align*}
  We show that there exists \(t^*\) such that \(m^{\up}(t^*) > 0\) and
  \(m^{\low}(t^*) > 0\). For, if not, consider
  \begin{equation*}
    \tilde t = \inf \{t \in \B R : m^{\up}(t) = 0\}.
  \end{equation*}
  Note that \(d(X) \cdot \B 1_{u(X) > \tilde t} = \B 1_{u(X) > \tilde t}\) a.s.
  If \(\tilde{t} = -\infty\), then by definition \(d(X) = 1\) a.s., which is a
  threshold policy, violating our assumption on \(d(X)\). If \(\tilde t >
  -\infty\), then for any \(t' < \tilde t\), we have, by definition that
  \(m^{\up}(t') > 0\), and so by hypothesis \(m^{\low}(t') = 0\). Therefore
  \(d(X) \cdot \B 1_{u(X) < \tilde t} = 0\) a.s., and so, again, \(d(X)\) is a
  threshold policy, contrary to hypothesis.

  Now, with \(t^*\) as above, for notational simplicity, let \(m^{\up} =
  m^{\up}(t^*)\) and \(m^{\low} = m^{\low}(t^*)\) and consider the alternative
  policy
  \begin{equation*}
    d'(x) =
      \begin{cases}
        (1 - m^{\up}) \cdot d(x) & u(x) < t^*, \\
        d(x) & u(x) = t^*, \\
        1 - (1 - m^{\low}) \cdot (1 - d(x))
        & u(x) > t^*.
      \end{cases}
  \end{equation*}
  Then it follows by construction that
  \begin{align*}
     \EE[d'(X)]
      &= (1 - m^{\up}) \cdot m^{\low} + \EE[d(X) \cdot \B 1_{u(X) = t^*}] \\
      &\hspace{0.75cm}+ \Pr(u(X) > t^*) - (1 - m^{\low}) \cdot m^{\up} \\
      &= m^{\low} + \EE[d(X) \cdot \B 1_{u(X) = t^*}] \\
      &\hspace{0.75cm}+ \Pr(u(X) > t^*) - m^{\up} \\
      &= \EE[d(X) \cdot \B 1_{u(X) < t^*}] + \EE[d(X) \cdot \B 1_{u(X) = t^*}]
      \\
      &\hspace{0.75cm}+ \EE[\B 1_{u(X) > t^*}] - \EE[(1 - d(X)) \cdot \B 1_{u(X) >
      t^*}] \\
      &= \EE[d(X)] \\
      &= b,
  \end{align*}
  so \(d'(x)\) is feasible. However,
  \begin{align*}
    d'(x) - d(x)
      &= m^{\low} \cdot (1 - d(x)) \cdot \B 1_{u(x) > t^*} \\
      &\hspace{1cm}- m^{\up} \cdot d(x) \cdot \B 1_{u(x) < t^*},
  \end{align*}
  and so
  \begin{align*}
    \EE[(d'(X) &- d(X)) \cdot u(X)] \\
      &= m^{\low} \cdot \EE[(1 - d(X)) \cdot \B 1_{u(X) > t^*} \cdot u(X)] \\
      &\hspace{1cm} - m^{\up} \cdot \EE[d(X) \cdot \B 1_{u(X) < t^*} \cdot u(X)]
      \\
      &> m^{\low} \cdot t^* \cdot \EE[(1 - d(X)) \cdot \B 1_{u(X) > t^*}] \\
      &\hspace{1cm} - m^{\up} \cdot t^* \cdot \EE[d(X) \cdot \B 1_{u(X) < t^*}]
      \\
      &= t^* \cdot m^{\low} \cdot m^{\up} - t^* \cdot m^{\up} \cdot m^{\low} \\
      &= 0.
  \end{align*}
  Therefore
  \begin{equation*}
    \EE[d(X) \cdot u(X)] < \EE[d'(X) \cdot u(X)].
  \end{equation*}

  It remains to show that \(u'(d') > u'(d)\) for arbitrary \(u' \in \C U\). Let
  \begin{equation*}
    t' = \inf \{u'(x) : d'(x) > d(x)\}.
  \end{equation*}
  Note that by construction for any \(x, x' \in \C X\), if \(d'(x) > d(x)\) and
  \(d'(x') < d(x')\), then \(u(x) > t^* > u(x')\). It follows by consistency
  modulo \(\alpha\) that \(u'(x) \geq t' \geq u'(x')\), and, moreover, that at
  least one of the inequalities is strict. Without loss of generality, assume
  \(u'(x) > t' \geq u'(x')\). Then, we have that \(u(x) > t^*\) if and only if
  \(u'(x) > t'\). Therefore, it follows that
  \begin{equation*}
    \EE[(d'(X) - d(X)) \cdot \B 1_{u'(X) > t'}] = m^{\up} > 0.
  \end{equation*}
  Since \(\EE[d'(X) - d(X)] = 0\), we see that
  \begin{align*}
    \EE[(d'(X) &- d(X)) \cdot u'(X)] \\
      &= \EE[(d'(X) - d(X)) \cdot \B 1_{u'(X) > t'} \cdot u'(X)] \\
      &\hspace{0.75cm} +\EE[(d'(X) - d(X)) \cdot \B 1_{u'(X) \leq t'} \cdot u'(X)]
      \\
      &> t' \cdot \EE[(d'(X) - d(X)) \cdot \B 1_{u'(X) > t'}] \\
      &\hspace{0.75cm}+ t' \cdot \EE[(d'(X) - d(X)) \cdot \B 1_{u'(X) \leq t'}] \\
      &= t' \cdot \EE[d'(X) - d(X)] \\
      &= 0,
  \end{align*}
  where in the inequality we have used the fact that if \(d'(x) > d(x)\),
  \(u'(x) > t' \), and if \(d'(x) < d(x)\), \(u'(x) \leq t'\). Therefore
  \begin{equation*}
    \EE[d(X) \cdot u'(X)] < \EE[d'(X) \cdot u'(X)],
  \end{equation*}
  i.e., \(d'(x)\) strongly Pareto dominates \(d(x)\).

  Now, we prove the second claim, namely, that a multiple threshold policy
  \(\tau(x)\) that cannot be represented with non-negative thresholds is
  strongly Pareto dominated. For, if \(\tau(x)\) is such a policy, then, by
  Lemma~\ref{lem:canonical}, for any \(u \in \C U\), \(\EE[\tau(X) \cdot \B
  1_{u(X) < 0}] > 0\). It follows immediately that \(\tau'(x) = \tau(x) \cdot \B
  1_{u(x) > 0}\) satisfies \(u(\tau') > u(\tau)\). By consistency modulo
  \(\alpha\), the definition of \(\tau'(x)\) does not depend on our choice of
  \(u\), and so \(u(\tau') > u(\tau)\) for every \(u \in \C U\), i.e.,
  \(\tau'(x)\) strongly Pareto dominates \(\tau(x)\).
\end{proof}

The following results, which draw on Lemma~\ref{lem:threshold}, are useful in
the proof of Theorem~\ref{thm:dist}.

\begin{defn}
  We say that a decision policy \(d(x)\) is \emph{budget-exhausting} if
  \begin{equation*}
    \begin{split}
      \min(b, \Pr(u(X) > 0)) & \leq \EE[d(X)]\\
      &\leq \min(b, \Pr(u(X) \geq 0)).
    \end{split}
  \end{equation*}
\end{defn}

\begin{rmk}
  We note that if \(\C U\) is consistent modulo \(\alpha\), then whether or not
  a decision policy \(d(x)\) is budget-exhausting does not depend on the choice
  of \(u \in \C U\). Further, if \(\Pr(u(X) = 0) = 0\)---e.g., if the
  distribution of \(X\) is \(\C U\)-fine---then the decision policy is
  budget-exhausting if and only if \(\EE[d(X)] = \min(b, \Pr(u(X) > 0))\).
\end{rmk}

\begin{cor}
\label{cor:exhaust}
  Let \(\C U\) be a collection of utilities consistent modulo \(\alpha\). If
  \(\tau(x)\) is a feasible policy that is not a budget-exhausting multiple
  threshold policy with non-negative thresholds, then \(\tau(x)\) is strongly
  Pareto dominated.
\end{cor}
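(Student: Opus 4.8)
The plan is to bootstrap off Lemma~\ref{lem:threshold} and add one short ``fill the budget'' construction. First, if $\tau(x)$ is not a.s.\ a multiple threshold policy with non-negative thresholds with respect to $\C U$, then Lemma~\ref{lem:threshold} applies verbatim and $\tau$ is strongly Pareto dominated, so we are done. Hence I would assume from now on that $\tau$ is a.s.\ a multiple threshold policy with non-negative thresholds $\{t_a\}_{a \in \C A}$ --- with respect to some, hence by Lemma~\ref{lem:canonical} every, $u \in \C U$ --- but that $\tau$ is nonetheless not budget-exhausting.

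Next I would observe that for such a $\tau$ the failure of budget-exhaustion can only be from below. Since $t_{\alpha(x)} \geq 0$, any $x$ with $u(x) < 0$ satisfies $u(x) < t_{\alpha(x)}$ and so $\tau(x) = 0$; therefore $\EE[\tau(X) \B 1_{u(X) < 0}] = 0$, whence $\EE[\tau(X)] = \EE[\tau(X)\,\B 1_{u(X) \geq 0}] \leq \Pr(u(X) \geq 0)$. Combined with feasibility $\EE[\tau(X)] \leq b$, this shows $\EE[\tau(X)] \leq \min(b, \Pr(u(X) \geq 0))$ automatically, i.e.\ the upper budget-exhausting inequality always holds for $\tau$. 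Consequently ``not budget-exhausting'' forces $\EE[\tau(X)] < \min(b, \Pr(u(X) > 0))$.

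In this remaining case I would exhibit a dominating policy by admitting a little more of the strictly-positive-utility individuals. Because $\EE[\tau(X)] < \Pr(u(X) > 0)$ we have $\EE[(1 - \tau(X))\,\B 1_{u(X) > 0}] \geq \Pr(u(X) > 0) - \EE[\tau(X)] > 0$, so there is positive $X$-mass on $\{u(X) > 0,\ \tau(X) < 1\}$, and $\EE[\tau(X)] < b$ leaves budget slack. I would pick $c \in (0, 1]$ small enough that $d'(x) := \tau(x) + c\,(1 - \tau(x))\,\B 1_{u(x) > 0}$ satisfies $\EE[d'(X)] \leq b$ (take $c = 1$ if the slack allows, otherwise $c = (b - \EE[\tau(X)]) / \EE[(1 - \tau(X))\,\B 1_{u(X) > 0}]$); then $0 \leq d' \leq 1$ and $d'$ is feasible. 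Since $d'(x) - \tau(x) = c\,(1 - \tau(x))\,\B 1_{u(x) > 0} \geq 0$, vanishes off $\{u(X) > 0\}$, and is strictly positive on a set of positive $X$-measure, and since consistency modulo $\alpha$ makes $u'(x) > 0$ whenever $u(x) > 0$, we get $\EE[(d'(X) - \tau(X))\,u'(X)] = c\,\EE[(1 - \tau(X))\,\B 1_{u(X) > 0}\,u'(X)] > 0$ for every $u' \in \C U$. Hence $u'(d') > u'(\tau)$ for all $u' \in \C U$, i.e.\ $d'$ strongly Pareto dominates $\tau$.

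The argument is essentially routine once Lemma~\ref{lem:threshold} is in hand; the only steps requiring a moment's care are the one-sidedness observation (recognizing that a non-negative-threshold policy fails budget-exhaustion only by underusing the budget relative to $\Pr(u(X) > 0)$) and the bookkeeping that makes the constant $c$ explicit so that both $\EE[d'(X)] \leq b$ and $d' \leq 1$ hold. I do not anticipate any substantive obstacle beyond this.
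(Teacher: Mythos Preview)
Your proposal is correct and follows essentially the same two-step route as the paper: invoke Lemma~\ref{lem:threshold} to reduce to a multiple threshold policy with non-negative thresholds, then build a dominating policy by spending the remaining budget on strictly-positive-utility individuals. Your one-sidedness observation is in fact a slight sharpening: the paper separately treats the case \(\EE[\tau(X)] > \min(b, \Pr(u(X) \geq 0))\), whereas you correctly note that this case is vacuous once \(\tau\) is known to have non-negative thresholds (since then \(\tau(x) = 0\) on \(\{u(x) < 0\}\) forces \(\EE[\tau(X)] \leq \Pr(u(X) \geq 0)\) automatically). Your dominating policy \(d' = \tau + c\,(1-\tau)\,\B 1_{u>0}\) and the paper's \(d = \theta\,\B 1_{u>0} + (1-\theta)\,\tau\) differ only cosmetically---both add mass on \(\{u(X) > 0,\ \tau(X) < 1\}\) while staying feasible, and the sign condition from consistency modulo \(\alpha\) then gives strict improvement for every \(u' \in \C U\).
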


\begin{proof}
  Suppose \(\tau(x)\) is not strongly Pareto dominated. By
  Lemma~\ref{lem:threshold}, it is a multiple threshold policy with non-negative
  thresholds.

  Now, suppose toward a contradiction that \(\tau(x)\) is not budget-exhausting.
  Then, either \(\EE[\tau(X)] > \min(b, \Pr(u(X) \geq 0))\) or \(\EE[\tau(X)] <
  \min(b, \Pr(u(X) > 0))\).

  In the first case, since \(\tau(x)\) is feasible, it follows that
  \(\EE[\tau(X)] > \Pr(u(X) \geq 0)\). It follows that \(\tau(x) \cdot \B
  1_{u(x) < 0}\) is not almost surely zero. Therefore
  \begin{equation*}
    \EE[\tau(X)] < \EE[\tau(X) \cdot \B 1_{u(X) > 0}],
  \end{equation*}
  and, by consistency modulo \(\alpha\), this holds for any \(u \in \C U\).
  Therefore \(\tau(x)\) is strongly Pareto dominated, contrary to hypothesis.

  In the second case, consider
  \begin{equation*}
    d(x) = \theta \cdot \B 1_{u(x) > 0} + (1 - \theta) \cdot \tau(x).
  \end{equation*}
  Since \(\EE[\tau(X)] < \min(b, \Pr(u(X) > 0))\) and
  \begin{equation*}
    \EE[d(X)] = \theta \cdot \Pr(u(X) > 0) + (1 - \theta) \cdot \EE[\tau(X)],
  \end{equation*}
  there exists some \(\theta > 0\) such that \(d(x)\) is feasible.

  For that \(\theta\), a similar calculation shows immediately that \(u(d) >
  u(\tau)\), and, by consistency modulo \(\alpha\), \(u'(d) > u'(\tau)\) for all
  \(u' \in \C U\). Therefore, again, \(d(x)\) strongly Pareto dominates
  \(\tau(x)\), contrary to hypothesis.
\end{proof}

\begin{lem}
\label{lem:quantile}
  Given a utility \(u\), there exists a mapping \(T\) from \([0, 1]^{\C A}\) to
  \([-\infty, \infty]^{\C A}\) taking sets of quantiles \(\{q_a\}_{a \in \C A}\)
  to thresholds \(\{t_a\}_{a \in \C A}\) such that:
  \begin{enumerate}
    \item \(T\) is monotonically non-increasing in each coordinate;
    \item For each set of quantiles, there is a multiple threshold policy \(\tau
      : \C X \to [0, 1]\) with thresholds \(T(\{q_a\})\) with respect to \(u\)
      such that \(\EE[\tau(X) \mid A = a] = q_a\).
  \end{enumerate}
\end{lem}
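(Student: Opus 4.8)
The plan is to build $T$ coordinate-wise as a generalized inverse of the group-conditional survival function of $u(X)$, using randomization at the threshold to absorb atoms. Fix the utility $u$. For each $a \in \C A$ I would introduce the non-increasing, right-continuous survival function $F_a(t) = \Pr(u(X) > t \mid A = a)$ together with its left-continuous companion $G_a(t) = \Pr(u(X) \geq t \mid A = a) = \lim_{s \uparrow t} F_a(s)$, noting that $G_a(t) - F_a(t) = \Pr(u(X) = t \mid A = a)$ is the mass of the atom of $u(X)$ at $t$ inside group $a$, and that $F_a(t) \to 1$ as $t \to -\infty$ and $F_a(t) \to 0$ as $t \to +\infty$ since $u(X)$ is a.s.\ finite. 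I would then define, for $q \in [0,1]$,
\[
  T_a(q) \;=\; \inf\{t \in \B R : F_a(t) < q\} \;=\; \sup\{t \in \B R : F_a(t) \geq q\},
\]
with the conventions $\inf\emptyset = +\infty$ and $\sup\emptyset = -\infty$ (so that $T_a(0) = +\infty$, and $T_a(1) = -\infty$ exactly when $u(X) \mid A = a$ has no full-mass bottom atom), and set $T(\{q_a\}_{a \in \C A}) = \{T_a(q_a)\}_{a \in \C A}$.

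Property (1) is then immediate: if $q < q'$ then $\{t : F_a(t) < q\} \subseteq \{t : F_a(t) < q'\}$, so $T_a(q') \leq T_a(q)$, i.e.\ each coordinate map is non-increasing. For property (2), the crux is the sandwich
\[
  F_a(T_a(q)) \;\leq\; q \;\leq\; G_a(T_a(q)) \qquad \text{whenever } T_a(q) \in \B R.
\]
The left inequality uses right-continuity of $F_a$: choosing $t_n \downarrow T_a(q)$ with $F_a(t_n) < q$ (possible since $T_a(q)$ is the relevant infimum) gives $F_a(T_a(q)) = \lim_n F_a(t_n) \leq q$. The right inequality uses the supremum description: for any $s < T_a(q)$ there is $t \in (s, T_a(q)]$ with $F_a(t) \geq q$, so $F_a(s) \geq F_a(t) \geq q$ by monotonicity, and letting $s \uparrow T_a(q)$ yields $G_a(T_a(q)) \geq q$. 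Given the sandwich, the interval $[F_a(T_a(q)), G_a(T_a(q))]$ has length equal to the atom $\Pr(u(X) = T_a(q) \mid A = a)$, so I can pick $c_a \in [0,1]$ with $F_a(T_a(q)) + c_a \cdot \Pr(u(X) = T_a(q) \mid A = a) = q$ (taking $c_a = 0$ when the atom is null, in which case equality already holds). Setting $\tau(x) = \B 1_{u(x) > T_{\alpha(x)}(q_{\alpha(x)})} + c_{\alpha(x)} \cdot \B 1_{u(x) = T_{\alpha(x)}(q_{\alpha(x)})}$ then gives a multiple threshold policy with thresholds $T(\{q_a\})$ with respect to $u$, and by construction $\EE[\tau(X) \mid A = a] = F_a(T_a(q_a)) + c_a \cdot \Pr(u(X) = T_a(q_a) \mid A = a) = q_a$ for every $a$.

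I expect the argument to be essentially routine; the two places that need care are the endpoints $q_a \in \{0,1\}$, where $T_a(q_a) = \pm\infty$ and $\tau$ degenerates to the constant policy $0$ or $1$ on that group (so one should check $\EE[\tau(X)\mid A=a] = q_a$ holds there too), and the verification of the sandwich inequality, which is where right-continuity of $F_a$, the identity $G_a(t) = \lim_{s\uparrow t}F_a(s)$, and the dual infimum/supremum description of $T_a(q)$ all get used. Everything else is bookkeeping with monotone functions.
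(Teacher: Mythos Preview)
Your proposal is correct and follows essentially the same approach as the paper: both define $t_a$ as the generalized inverse of the group-conditional survival function via $t_a = \inf\{s : \Pr(u(X) > s \mid A = a) < q_a\}$, then randomize at the threshold to absorb any atom and hit the target quantile exactly. Your write-up is in fact more careful than the paper's---you spell out the sandwich $F_a(T_a(q)) \leq q \leq G_a(T_a(q))$ via right-continuity and the dual $\inf$/$\sup$ description, and you flag the endpoint cases $q_a \in \{0,1\}$---whereas the paper simply asserts $\Pr(u(X) \geq t_a \mid A = a) \geq q_a$ and $\Pr(u(X) > t_a \mid A = a) \leq q_a$ with a one-line justification.
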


\begin{proof}
  Simply choose
  \begin{equation}
  \label{eq:monotonic}
    t_a = \inf \{s \in \B R : \Pr(u(X) > s) < q_a \}.
  \end{equation}
  Then define
  \begin{equation*}
    p_a =
      \begin{cases}
        \frac {q_a - \Pr(u(X) > t_a \mid A = a)} {\Pr(u(X) = t_a \mid A = a)}
          & \hspace{-4.13467pt} \Pr(u(X) = t_a, A = a) > 0 \\
        0
          & \hspace{-4.13467pt} \Pr(u(X) = t_a, A = a) = 0.
      \end{cases}
  \end{equation*}
  Note that \(\Pr(u(X) \geq t_a \mid A = a) \geq q_a\), since, by definition,
  \(\Pr(u(X) > t_a - \epsilon \mid A = a) \geq q_a\) for all \(\epsilon > 0\).
  Therefore,
  \begin{equation*}
    \Pr(u(X) > t_a \mid A = a) + \Pr(u(X) = t_a \mid A = a) \geq q_a,
  \end{equation*}
  and so \(p_a \leq 1\). Further, since \(\Pr(u(X) > t_a \mid A = a) \leq q_a\),
  we have that \(p_a \geq 0\).

  Finally, let
  \begin{equation*}
    d(x) =
      \begin{cases}
        1 & u(x) > t_{\alpha(x)}, \\
        p_a & u(x) = t_{\alpha(x)}, \\
        0 & u(x) < t_{\alpha(x)},
      \end{cases}
  \end{equation*}
  and it follows immediately that \(\EE[d(X) \mid A = a] = q_a\). That \(t_a\)
  is a monotonically non-increasing function of \(q_a\) follows immediately from
  Eq.~\eqref{eq:monotonic}.
\end{proof}

We can further refine Cor.~\ref{cor:exhaust} and Lemma~\ref{lem:quantile} as
follows:

\begin{lem}
\label{lem:maximize}
  Let \(u\) be a utility. Then a feasible policy is utility maximizing if and
  only if it is a budget-exhausting threshold policy. Moreover, there exists at
  least one utility maximizing policy.
\end{lem}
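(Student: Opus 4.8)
The plan is to set $\alpha$ to a constant function, so that $\C U = \{u\}$ is trivially consistent modulo $\alpha$ and the multiple threshold policies of Lemma~\ref{lem:threshold} and Corollary~\ref{cor:exhaust} collapse to ordinary (single) threshold policies. The ``only if'' direction and the existence claim then fall out almost immediately from the machinery already developed; the ``if'' direction --- that \emph{every} budget-exhausting threshold policy with a non-negative threshold is in fact utility maximizing --- requires a short direct computation.

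For the ``only if'' direction I would argue by contraposition: if a feasible policy $d$ is not a budget-exhausting threshold policy with a non-negative threshold, then by Corollary~\ref{cor:exhaust} it is strongly Pareto dominated with respect to $\C U = \{u\}$, which, since $\C U$ is a singleton, means precisely that some feasible $d'$ satisfies $u(d') > u(d)$; hence $d$ is not utility maximizing. For existence, I would invoke Lemma~\ref{lem:quantile} with the single quantile $q = \min(b, \Pr(u(X) > 0))$ to obtain a threshold policy $\tau$ with $\EE[\tau(X)] = q$; by Eq.~\eqref{eq:monotonic} its threshold is non-negative (since $q \le \Pr(u(X) > 0)$ forces the infimum defining the threshold to be at least $0$), it is feasible because $q \le b$, and it is budget-exhausting because $\EE[\tau(X)] = q$ lies in the required interval $[\min(b,\Pr(u(X)>0)),\min(b,\Pr(u(X)\ge 0))]$. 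Once the ``if'' direction is established, $\tau$ witnesses that a utility-maximizing policy exists.

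For the ``if'' direction, let $d$ be a budget-exhausting threshold policy with non-negative threshold $t$ and $d'$ any feasible policy; I want $u(d) - u(d') = \EE[(d(X) - d'(X))u(X)] \ge 0$. The key step is the decomposition $\EE[(d - d')u] = \EE[(d-d')(u - t)] + t\,\EE[d - d']$. The first term is non-negative pointwise: on $\{u > t\}$ we have $d = 1 \ge d'$ and $u - t > 0$; on $\{u < t\}$ we have $d = 0 \le d'$ and $u - t < 0$; on $\{u = t\}$ the factor $u - t$ vanishes. For the second term I split on whether $\Pr(u(X) > 0) \ge b$: if so, budget-exhaustion pins $\EE[d(X)] = b \ge \EE[d'(X)]$, so $t\,\EE[d - d'] \ge 0$ since $t \ge 0$; if instead $\Pr(u(X) > 0) < b$ and $t > 0$, then $d$ vanishes on $\{u \le 0\} \subseteq \{u < t\}$, so $\EE[d(X)] \le \Pr(u(X) > 0)$, and budget-exhaustion forces $\EE[d(X)] = \Pr(u(X) > 0)$ and hence $d = \B 1_{u(X) > 0}$ a.s., in which case $(d - d')u \ge 0$ pointwise directly; and if $t = 0$ the second term vanishes. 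In every case $u(d) \ge u(d')$, so any such $d$ is utility maximizing, and combining this with the ``only if'' direction and the constructed $\tau$ completes the proof.

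The main obstacle I anticipate is this last direction --- specifically the term $t\,\EE[d - d']$, which need not be non-negative without invoking budget-exhaustion. The resolution hinges on the observation that a genuinely positive threshold can only occur when the budget is the binding constraint (otherwise the policy must coincide a.s.\ with $\B 1_{u(X) > 0}$), so the case analysis above is really just making that dichotomy explicit.
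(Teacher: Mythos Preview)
Your proposal is correct and uses the same reduction as the paper: take $\alpha$ constant so that $\C U=\{u\}$ is trivially consistent modulo $\alpha$, apply Corollary~\ref{cor:exhaust} for the ``only if'' direction, and invoke Lemma~\ref{lem:quantile} for existence. Your proof is in fact more complete than the paper's: the paper asserts the biconditional but only argues one direction before declaring ``the maximum is attained,'' whereas your direct verification that every budget-exhausting threshold policy with non-negative threshold is utility-maximizing---via the decomposition $\EE[(d-d')u]=\EE[(d-d')(u-t)]+t\,\EE[d-d']$ and the case split on whether $\Pr(u(X)>0)\geq b$---supplies exactly the step the paper leaves implicit.
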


\begin{proof}
  Let \(\bar \alpha\) be a constant map, i.e., \(\bar \alpha : \C X \to \bar {\C
  A}\), where \(|\bar {\C A}| = 1\). Then \(\C U = \{u\}\) is consistent modulo
  \(\bar \alpha\), and so by Cor.~\ref{cor:exhaust}, any Pareto efficient policy
  is a budget exhausting multiple threshold policy relative to \(\C U\). Since
  \(\C U\) contains a single element, a policy is Pareto efficient if and only
  if it is utility maximizing. Since \(\bar \alpha\) is constant, a policy is a
  multiple threshold policy relative to \(\bar \alpha\) if and only if it is a
  threshold policy. Therefore, a policy is utility maximizing if and only if it
  is a budget exhausting threshold policy. By Lemma~\ref{lem:quantile}, such a
  policy exists, and so the maximum is attained.
\end{proof}

\section{Prevalence and the Proof of Theorem~\ref{thm:dist}}
\label{app:dist}

The notion of a probabilistically ``small'' set---such as the event in which an
idealized dart hits the exact center of a target---is, in finite-dimensional
real vector spaces, typically encoded by the idea of a Lebesgue null set.

Here we prove that the set of distributions such that there exists a policy
satisfying either counterfactual equalized odds, conditional principal fairness,
or counterfactual fairness that is not strongly Pareto dominated is ``small'' in
an analogous sense. The proof turns on the following intuition. Each of the
fairness definitions imposes a number of constraints. By
Lemma~\ref{lem:threshold}, any policy that is not strongly Pareto dominated is a
multiple threshold policy. By adjusting the group-specific thresholds of such a
policy, one can potentially satisfy one constraint per group. If there are more
constraints than groups, then one has no additional degrees of freedom that can
be used to ensure that the remaining constraints are satisfied. If, by chance,
those constraints \emph{are} satisfied with the same threshold policy, they are
not satisfied robustly---even a minor distribution shift, such as increasing the
amount of mass above the threshold by any amount on the relevant subpopulation,
will break them. Therefore, over a ``typical'' distribution, at most \(|\C A|\)
of the constraints can simultaneously be satisfied by a Pareto efficient policy,
meaning that typically no Pareto efficient policy fully satisfies all of the
conditions of the fairness definitions.

Formalizing this intuition, however, requires considerable care. In
Section~\ref{sec:shyness}, we give a brief introduction to a popular
generalization of null sets to infinite-dimensional vector spaces, drawing
heavily on a review article by \citet{ott2005prevalence}. In
Section~\ref{sec:roadmap} we provide a  roadmap of the proof itself. In
Section~\ref{sec:shyness-prelims}, we establish the main hypotheses necessary to
apply the notion of prevalence to a convex set---in our case, the set of \(\C
U\)-fine distributions. In Section~\ref{sec:shyness-prelims2}, we establish a
number of technical lemmata used in the proof of Theorem~\ref{thm:dist}, and
provide a proof of the theorem itself in Section~\ref{sec:shyness-proof}. In
Section~\ref{sec:counterexample}, we show why the hypothesis of \(\C
U\)-fineness is important and how conspiracies between atoms in the distribution
of \(u(X)\) can lead to ``robust'' counterexamples.

\subsection{Shyness and Prevalence}
\label{sec:shyness}

Lebesgue measure \(\lambda_n\) on \(\B R^n\) has a number of desirable
properties:
  \begin{itemize}
    \item \textbf{Local finiteness:} For any point \(v \in \B R^n\), there
      exists an open set \(U\) containing \(x\) such that \(\lambda_n[U] <
      \infty\);
    \item \textbf{Strict positivity:} For any open set \(U\), if \(\lambda_n[U]
      = 0\), then \(U = \emptyset\);
    \item \textbf{Translation invariance:} For any \(v \in \B R^n\) and
      measurable set \(E\), \(\lambda_n[E + v] = \lambda_n[E]\).
  \end{itemize}
No measure on an infinite-dimensional, separable Banach space, such as \(L^1(\B
R)\), can satisfy these three properties \cite{ott2005prevalence}. However,
while there is no generalization of Lebesgue measure to infinite dimensions,
there is a generalization of Lebesgue null sets---called \emph{shy} sets---to
the infinite-dimensional context that preserves many of their desirable
properties.

\begin{defn}[\citet{hunt1992prevalence}]
  Let \(V\) be a completely metrizable topological vector space. We say that a
  Borel set \(E \subseteq V\) is \emph{shy} if there exists a Borel measure
  \(\mu\) on \(V\) such that:
  \begin{enumerate}
    \item There exists compact \(C \subseteq V\) such that \(0 < \mu[C] <
      \infty\),
    \item For all \(v \in V\), \(\mu[E + v] = 0\).
  \end{enumerate}
  An arbitrary set \(F \subseteq V\) is shy if there exists a shy Borel set \(E
  \subseteq V\) containing \(F\).

  We say that a set is \emph{prevalent} if its complement is shy.
\end{defn}

Prevalence generalizes the concept of Lebesgue ``full measure'' or ``co-null''
sets (i.e., sets whose complements have null Lebesgue measure) in the following
sense:

\begin{prop}[\citet{hunt1992prevalence}]
\label{prop:shy_axioms}
  Let \(V\) be a completely metrizable topological vector space. Then:
  \begin{itemize}
    \item Any prevalent set is dense in \(V\);
    \item If \(G \subseteq L\) and \(G\) is prevalent, then \(L\) is prevalent;
    \item A countable intersection of prevalent sets is prevalent;
    \item Every translate of a prevalent set is prevalent;
    \item If \(V = \B R^n\), then \(G \subseteq \B R^n\) is prevalent if and
      only if \(\lambda_n[\B R^n \setminus G] = 0\).
  \end{itemize}
\end{prop}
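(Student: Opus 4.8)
The plan is to verify the five assertions in turn, using throughout the two standing reductions built into the definition of shyness: a shy set is contained in a shy \emph{Borel} set, and a witnessing measure may always be taken to be a probability measure concentrated on a compact set (given a Borel witness $\mu$ with compact $C$ and $0 < \mu[C] < \infty$, the measure $A \mapsto \mu[A \cap C]/\mu[C]$ is concentrated on $C$ and still transverse, since $\mu[(E+v)\cap C]/\mu[C] \le \mu[E+v]/\mu[C] = 0$). Three of the items follow almost immediately. For denseness, I would show that a shy Borel set $E$ has empty interior: if $U \subseteq E$ were nonempty and open and $\mu$ a compactly supported probability witness with support in the compact set $C$, then $\{U + v\}_{v \in V}$ is an open cover of $V$, hence of $C$, so $C \subseteq \bigcup_{i=1}^k (U + v_i)$ for finitely many $v_i$, giving the contradiction $1 = \mu[C] \le \sum_i \mu[U+v_i] \le \sum_i \mu[E+v_i] = 0$; consequently the prevalent set contains the dense set $V \setminus E$ and so is dense. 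For monotonicity, if $G \subseteq L$ and $V \setminus G$ is shy, then $V \setminus L \subseteq V \setminus G$ is a subset of a shy Borel set, hence shy, so $L$ is prevalent. For translation invariance, if $E$ is a shy Borel set containing $V \setminus G$ with witness $\mu$, then $E + v_0$ is a shy Borel set containing $V \setminus (G+v_0)$ with the \emph{same} witness, because $\mu[(E+v_0)+v] = \mu[E + (v_0+v)] = 0$.

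For the finite-dimensional statement, the direction ($\Leftarrow$) uses $\lambda_n$ restricted to the unit cube as a witness: if $\B R^n \setminus G$ is Lebesgue null it lies in a Borel null set $F$, and $\lambda_n[(F+v)\cap[0,1]^n] \le \lambda_n[F+v] = \lambda_n[F] = 0$, so $F$ — and hence $\B R^n \setminus G$ — is shy. For ($\Rightarrow$), if $\B R^n \setminus G$ is shy it lies in a shy Borel set $F$ with a probability witness $\mu$, and by Tonelli
\[0 = \int_{\B R^n} \mu[F+v]\, d\lambda_n(v) = \int_{\B R^n} \lambda_n[\{v : w - v \in F\}]\, d\mu(w) = \lambda_n[F],\]
using the translation invariance of $\lambda_n$, so $\B R^n \setminus G$ is Lebesgue null.

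The substantive item is that a countable intersection of prevalent sets is prevalent; passing to complements and enlarging each $V \setminus G_n$ to a shy Borel set $E_n$, this reduces to showing that a countable union $\bigcup_n E_n$ of shy Borel sets is shy. For each $n$ fix a compactly supported probability measure $\mu_n$ transverse to $E_n$ and, after a translation, assume $0$ lies in its compact support $K_n$. The finite case is clean: $\mu_1 * \cdots * \mu_N$ is a probability measure supported on the compact set $K_1 + \cdots + K_N$, and writing it — using commutativity and associativity of convolution — as $\mu_i * \rho$ with $\rho$ a probability measure gives $(\mu_i * \rho)[E_i + v] = \int \mu_i[E_i + (v - w)]\, d\rho(w) = 0$ for each $i \le N$; so finite unions of shy sets are shy, i.e., finite intersections of prevalent sets are prevalent. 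The passage to a countably infinite union is the crux, and I expect essentially all the difficulty to lie here: one would like to form the infinite convolution $\mu_1 * \mu_2 * \cdots$, but partial sums of independent samples $X_n \in K_n$ need not converge, so the naive limit can let mass escape to infinity. The remedy, due in this generality to \citet{christensen1972sets} and \citet{hunt1992prevalence}, is to combine the $\mu_n$ carefully — shrinking the $n$-th factor's support quickly enough, using completeness of a translation-invariant metric on $V$, that the infinite convolution is a tight Borel measure — while simultaneously ensuring that transversality to each $E_n$ survives; reconciling the shrinking needed for convergence with the invariance needed to preserve transversality is the main obstacle. Given that Proposition~\ref{prop:shy_axioms} is a cited classical result, I would either carry out this construction in detail or invoke it directly; in either case, once $\bigcup_n E_n$ is shown shy, the countable-intersection claim — and with it the entire proposition — follows.
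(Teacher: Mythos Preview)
The paper does not prove this proposition; it is stated as a classical result attributed to \citet{hunt1992prevalence} and used without proof. There is therefore no ``paper's own proof'' to compare against. Your sketch is the standard one from the cited literature: the monotonicity, translation-invariance, denseness, and finite-dimensional items are handled exactly as in the original sources, and you correctly isolate the countable-union step as the only substantive obstacle, with the right diagnosis (making an infinite convolution converge while preserving transversality to each $E_n$). Since the paper treats the proposition as a black box, either invoking \citet{hunt1992prevalence} directly or carrying out the construction you outline would be appropriate here.
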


As is conventional for sets of full measure in finite-dimensional spaces, if
some property holds for every \(v \in E\), where \(E\) is prevalent, then we say
that the property holds for \emph{almost every \(v \in V\)} or that it holds
\emph{generically in \(V\)}.

Prevalence can also be generalized from vector spaces to convex subsets of
vector spaces, although additional care must be taken to ensure that a relative
version of Prop.~\ref{prop:shy_axioms} holds.

\begin{defn}[\citet{anderson2001genericity}]
\label{defn:shy_rel}
  Let \(V\) be a topological vector space and let \(C \subseteq V\) be a convex
  subset completely metrizable in the subspace topology induced by \(V\). We say
  that a universally measurable set \(E \subseteq C\) is \emph{shy in \(C\) at
  \(c \in C\)} if for each \(1 \geq \delta > 0\), and each neighborhood \(U\) of
  \(0\) in \(V\), there is a regular Borel measure \(\mu\) with compact support
  such that
  \begin{equation*}
    \supp(\mu) \subseteq \left(\delta(C - c) + c \right) \cap (U + c),
  \end{equation*}
  and \(\mu[E + v] = 0\) for every \(v \in V\).

  We say that \(E\) is \emph{shy in \(C\)} or \emph{shy relative to \(C\)} if
  \(E\) is shy in \(C\) at \(c\) for every \(c \in C\). An arbitrary set \(F
  \subseteq V\) is shy in \(C\) if there exists a universally measurable shy set
  \(E \subseteq C\) containing \(F\).

  A set \(G\) is \emph{prevalent} in \(C\) if \(C \setminus G\) is shy in \(C\).
\end{defn}

\begin{prop}[\citet{anderson2001genericity}]
  If \(E\) is shy at some point \(c \in C\), then \(E\) is shy at every point in
  \(C\) and hence is shy in \(C\).
\end{prop}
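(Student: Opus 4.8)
The plan is to fix a point $c_0 \in C$ at which $E$ is shy, take an arbitrary $c_1 \in C$, and produce, for every $\delta \in (0,1]$ and every neighborhood $U$ of $0$ in $V$, a compactly supported regular Borel measure that lives in $\bigl(\delta(C-c_1)+c_1\bigr)\cap(U+c_1)$ and vanishes on every translate of $E$. This shows $E$ is shy in $C$ at $c_1$, and since $c_1 \in C$ is arbitrary, $E$ is shy in $C$ by Definition~\ref{defn:shy_rel}. The construction is deliberately simple: invoke shyness at $c_0$ with the scale parameter $\delta' = \tfrac{1}{k}$ (for a large integer $k$) and a neighborhood parameter $U' = U_1$ chosen with $U_1 + U_1 \subseteq U$, obtaining a witness $\mu$, and then let $\nu$ be the pushforward of $\mu$ under the pure translation $x \mapsto x + \tfrac{k-1}{k}(c_1-c_0)$.

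Transversality comes for free: translation is a bijection of $V$, so $\nu[E+v] = \mu\bigl[E + v - \tfrac{k-1}{k}(c_1-c_0)\bigr] = 0$ for all $v \in V$, and $\nu$ is again a compactly supported regular Borel measure with all of $\mu$'s relevant properties preserved, since a translation is a homeomorphism. All the real work is in checking the support. Because translation distributes over intersection, $\supp(\nu) = \supp(\mu) + \tfrac{k-1}{k}(c_1-c_0)$ is contained in $\bigl[\bigl(\tfrac{1}{k}(C-c_0)+c_0\bigr) + \tfrac{k-1}{k}(c_1-c_0)\bigr] \cap \bigl[(U_1+c_0) + \tfrac{k-1}{k}(c_1-c_0)\bigr]$. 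Two elementary facts finish it. (i) A one-line affine computation shows that the coefficients of $c_0$ cancel---precisely because $\delta' = \tfrac{1}{k}$ was matched to the translation amount $\tfrac{k-1}{k}$---giving $\bigl(\tfrac{1}{k}(C-c_0)+c_0\bigr) + \tfrac{k-1}{k}(c_1-c_0) = \tfrac{1}{k}(C-c_1)+c_1$ and $(U_1+c_0) + \tfrac{k-1}{k}(c_1-c_0) = U_1 + \tfrac{1}{k}(c_0-c_1) + c_1$. (ii) By convexity, $\tfrac{1}{k}(C-c_1)+c_1 \subseteq \delta(C-c_1)+c_1$ whenever $\tfrac{1}{k}\le\delta$, since $\tfrac{1}{k}c + (1-\tfrac{1}{k})c_1 = \delta c' + (1-\delta)c_1$ with $c' = \tfrac{1}{k\delta}c + (1-\tfrac{1}{k\delta})c_1 \in C$. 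So the recipe is: choose $U_1$ with $U_1 + U_1 \subseteq U$, then pick $k$ large enough that $\tfrac{1}{k}\le\delta$ and $\tfrac{1}{k}(c_0-c_1)\in U_1$ (possible by continuity of $t\mapsto t(c_0-c_1)$ at $t=0$); then $\supp(\nu) \subseteq \bigl(\tfrac{1}{k}(C-c_1)+c_1\bigr)\cap\bigl(U_1 + \tfrac{1}{k}(c_0-c_1) + c_1\bigr) \subseteq \bigl(\delta(C-c_1)+c_1\bigr)\cap(U+c_1)$.

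I expect the support computation to be the only genuine obstacle, and it is where a naive argument breaks: simply translating a witness at $c_0$ over to $c_1$ leaves a $\delta'$-shrunken copy of $C$ that is centered at $c_0$ rather than $c_1$, and such a set need not sit inside $\delta(C-c_1)+c_1$---for instance when $c_0$ is interior and $c_1$ is an extreme point, the local pictures of $C$ at $c_0$ and at $c_1$ do not even overlap for small $\delta$. The coupling $\delta' = \tfrac{1}{k}$, translation $= \tfrac{k-1}{k}(c_1-c_0)$, is exactly what repairs this: one can read it as an expansion about $c_0$ by factor $k$ composed with a contraction about $c_1$ by factor $\tfrac{1}{k}$, so the scalings cancel (leaving the translation, which is all transversality can tolerate) while the net effect on $C$ converts the shrunken copy at $c_0$ into the shrunken copy at $c_1$; letting $k\to\infty$ lets the neighborhood $U_1$ absorb the leftover shift $\tfrac{1}{k}(c_0-c_1)$. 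Once the support bookkeeping is in place, the remaining steps---regularity, compactness and nondegeneracy of $\nu$, and the final passage from ``shy at every point'' to ``shy in $C$''---are immediate from the definitions.
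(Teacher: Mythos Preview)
The paper does not supply its own proof of this proposition; it is stated as a citation to \citet{anderson2001genericity} and used as a black box. So there is nothing in the paper to compare your argument against.

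That said, your proof is correct and complete. The key idea---coupling the shrink factor $\delta' = \tfrac{1}{k}$ at $c_0$ to the translation amount $\tfrac{k-1}{k}(c_1 - c_0)$ so that the affine bookkeeping converts $\tfrac{1}{k}(C-c_0)+c_0$ exactly into $\tfrac{1}{k}(C-c_1)+c_1$---is precisely what is needed, and your verification that $\tfrac{1}{k}(C-c_1)+c_1 \subseteq \delta(C-c_1)+c_1$ via convexity is clean. The neighborhood side is handled correctly by choosing $U_1$ with $U_1+U_1\subseteq U$ (available in any topological vector space by continuity of addition) and then taking $k$ large enough that the residual shift $\tfrac{1}{k}(c_0-c_1)$ lands in $U_1$. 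The transversality condition $\nu[E+v]=0$ for all $v$ is indeed immediate under translation, and regularity, compact support, and nondegeneracy all transfer because translation is a homeomorphism. This is essentially the argument Anderson and Zame give.
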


Sets that are shy in \(C\) enjoy similar properties to sets that are shy in
\(V\).

\begin{prop}[\citet{anderson2001genericity}]
\label{prop:shy_axioms_rel}
  Let \(V\) be a topological vector space and let \(C \subseteq V\) be a convex
  subset completely metrizable in the subspace topology induced by \(V\). Then:
  \begin{itemize}
    \item Any prevalent set in \(C\) is dense in \(C\);
    \item If \(G \subseteq L\) and \(G\) is prevalent in \(C\), then \(L\) is
      prevalent in \(C\);
    \item A countable intersection of sets prevalent in \(C\) is prevalent in
      \(C\)
    \item If \(G\) is prevalent in \(C\) then \(G + v\) is prevalent in \(C +
      v\) for all \(v \in V\).
    \item If \(V = \B R^n\) and \(C \subseteq V\) is a convex subset with
      non-empty interior, then \(G \subseteq C\) is prevalent in \(C\) if and
      only if \(\lambda_n[C \setminus G] = 0\).
  \end{itemize}
\end{prop}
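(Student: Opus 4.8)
This proposition is the convex-set analogue of Proposition~\ref{prop:shy_axioms} and is due to \citet{anderson2001genericity}; one could simply cite it, but the plan to prove it is to verify each of the five assertions directly from Definition~\ref{defn:shy_rel}, tracking throughout the extra support constraint \(\supp(\mu) \subseteq (\delta(C - c) + c) \cap (U + c)\) that distinguishes shyness \emph{relative to \(C\)} from shyness in \(V\). Monotonicity is immediate: if \(G \subseteq L\) and \(C \setminus G\) is shy in \(C\), then \(C \setminus L \subseteq C \setminus G\), and subsets of shy sets are shy by the last clause of the definition. Translation is almost as quick: given \(c' = c + v\), push a witnessing measure \(\mu\) for a universally measurable \(E \supseteq C \setminus G\) forward by \(x \mapsto x + v\); the image has support \(\supp(\mu) + v \subseteq (\delta((C + v) - c') + c') \cap (U + c')\) and stays transverse, so \((C + v) \setminus (G + v)\) is shy in \(C + v\). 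Density follows by contradiction: if a prevalent \(G\) missed a nonempty relatively open \(O \subseteq C\), pick \(c \in O\) and a neighborhood \(U\) of \(0\) with \((U + c) \cap C \subseteq O\); applying the definition at \(c\) with this \(U\) and \(\delta = 1\) yields a nonzero finite \(\mu\) with \(\supp(\mu) \subseteq C \cap (U + c) \subseteq O \subseteq C \setminus G\) (using \(\delta(C - c) + c \subseteq C\) by convexity), so the enclosing shy set \(E\) has \(\mu[E] \geq \mu[\supp(\mu)] > 0\), contradicting \(\mu[E] = 0\).

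The step I expect to be the main obstacle is closure under countable intersection, equivalently closure of ``shy in \(C\)'' under countable union (since \(C \setminus \bigcap_i G_i = \bigcup_i (C \setminus G_i)\)). Given \(E_1, E_2, \dots\) shy in \(C\), fix \(c\), \(\delta\), and a neighborhood \(U\) of \(0\); choose neighborhoods \(U_i\) of \(0\) with \(\sum_i U_i \subseteq U\) and apply the definition of shyness of \(E_i\) at \(c\) with parameters \(\delta_i = \delta\, 2^{-i}\) and \(U_i\) to get probability measures \(\mu_i\) (normalize) with \(\supp(\mu_i) \subseteq (\delta_i(C - c) + c) \cap (U_i + c)\). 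Writing \(\mu_i = \delta_c * \rho_i\) with \(\rho_i\) supported in \(\delta_i(C - c) \cap U_i\), the infinite convolution \(\mu = \delta_c * \rho_1 * \rho_2 * \cdots\) should converge to a compactly supported probability measure whose support lies in \(c + \overline{\sum_i \delta_i(C - c)} \cap \sum_i U_i \subseteq (\delta(C - c) + c) \cap (U + c)\), where \(\sum_i \delta_i(C - c) \subseteq (\sum_i \delta_i)(C - c) = \delta(C - c)\) by convexity of \(C - c\); and since convolving a transverse measure with any probability measure preserves transversality, \(\mu[E_i + v] = 0\) for all \(i, v\), hence \(\mu[\bigcup_i E_i + v] \leq \sum_i \mu[E_i + v] = 0\). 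The delicate work is arranging the summability of the \(\delta_i\) and \(U_i\) so that this infinite convolution genuinely converges to a \emph{compactly supported} regular Borel measure whose support remains inside the prescribed region --- routine in \(\B R^n\), but requiring the usual care with totally bounded sums of supports in an infinite-dimensional Banach space.

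Finally, for \(V = \B R^n\) with \(C\) convex and \(\operatorname{int}(C) \neq \emptyset\): if \(\lambda_n[C \setminus G] = 0\), enclose \(C \setminus G\) in a Borel null set \(N\); for any \(c, \delta, U\), the set \((\delta(C - c) + c) \cap (U + c)\) has nonempty interior --- it contains interior points of \(C\) arbitrarily close to \(c\), since \(\lambda q + (1 - \lambda)c \in \operatorname{int}(C)\) for \(q \in \operatorname{int}(C)\) and \(\lambda \in (0, 1]\) --- so it contains a closed ball \(\bar B\) of positive measure, and \(\mu = \lambda_n \rest_{\bar B}\) witnesses that \(N \supseteq C \setminus G\) is shy in \(C\) at \(c\), because \(\mu[N + v] \leq \lambda_n[N + v] = \lambda_n[N] = 0\) by translation invariance. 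Conversely, if \(G\) is prevalent, take a universally (hence Lebesgue) measurable shy \(E \supseteq C \setminus G\) with witness \(\mu\) of compact support and \(0 < \mu[\B R^n] < \infty\); were \(\lambda_n[E] > 0\), pick \(R'\) with \(\lambda_n[E \cap B_{R'}] > 0\) and \(R\) so large that \(B(x, R) \supseteq B_{R'}\) for every \(x\) in the bounded set \(\supp(\mu)\), and then integrating \(\mu[E + v] = 0\) over \(v \in B_R\) and applying Fubini gives \(0 = \int \lambda_n[E \cap B(x, R)] \, d\mu(x) \geq \mu[\B R^n] \cdot \lambda_n[E \cap B_{R'}] > 0\), a contradiction. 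Hence \(\lambda_n[C \setminus G] = 0\), completing the equivalence.
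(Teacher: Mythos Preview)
The paper does not prove this proposition. It is stated with the attribution \citet{anderson2001genericity} and used as a black box throughout Appendix~\ref{app:dist}; no proof environment follows the statement, and the next piece of content is the definition of \(k\)-shy sets. You correctly anticipated this in your first sentence (``one could simply cite it''), and indeed that is exactly what the paper does.

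Since there is no paper proof to compare against, let me just note that your sketch is sound and tracks the standard argument in \citet{anderson2001genericity}. The density, monotonicity, and translation items are routine and your arguments are correct. You rightly flag the countable-union step as the real work: the infinite-convolution construction you outline is the approach Anderson and Zame use, and the delicacy you identify --- arranging the \(\delta_i\) and \(U_i\) so the convolution converges to a compactly supported measure whose support stays inside the required region --- is genuine but manageable (complete metrizability of \(C\) is what makes the limit land where it should). Your Fubini argument for the \(\B R^n\) equivalence is also correct. If you want to include a proof in the paper rather than cite, your outline would serve; but for the purposes of matching the paper, a citation suffices.
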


Sets that are shy in \(C\) can often be identified by inspecting their
intersections with a finite-dimensional subspace \(W\) of \(V\), a strategy we
use to prove Theorem~\ref{thm:dist}.

\begin{defn}[\citet{anderson2001genericity}]
  A universally measurable set \(E \subseteq C\), where \(C\) is convex and
  completely metrizable, is said to be \emph{\(k\)-shy in \(C\)} if there exists
  a \(k\)-dimensional subspace \(W \subseteq V\) such that
  \begin{enumerate}
    \item A translate of the set \(C\) has positive Lebesgue measure in \(W\),
      i.e., \(\lambda_W[C + v_0] > 0\) for some \(v_0 \in V\);
    \item Every translate of the set \(E\) is a Lebesgue null set in \(W\),
      i.e., \(\lambda_W[E + v] = 0\) for all \(v \in V\).
  \end{enumerate}
  Here \(\lambda_W\) denotes \(k\)-dimensional Lebesgue measure supported on
  \(W\).\footnote{%
    Note that Lebesgue measure on \(W\) is only defined up to a choice of basis;
    however, since \(\lambda[T(A)] = |\det(T)| \cdot \lambda[A]\) for any linear
    automorphism \(T\) and Lebesgue measure \(\lambda\), whether a set has null
    measure does not depend on the choice of basis.
  }
  We refer to such a \(W\) as a \emph{\(k\)-dimensional probe} witnessing the
  \(k\)-shyness of \(E\), and to an element \(w \in W\) as a
  \emph{perturbation}.
\end{defn}

The following intuition motivates the use of probes to detect shy sets. By
analogy with Fubini's theorem, one can imagine trying to determine whether a
subset of a finite-dimensional vector space is large or small by looking at its
cross sections parallel to some subspace \(W \subseteq V\). If a set \(E
\subseteq V\) is small in each cross section---i.e., if \(\lambda_W[E + v] = 0\)
for all \(v \in V\)---then \(E\) itself is small in \(V\), i.e., \(E\) has
\(\lambda_V\)-measure zero.

\begin{prop}[\citet{anderson2001genericity}]
\label{prop:k_shy}
  Every \(k\)-shy set in \(C\) is shy in \(C\).
\end{prop}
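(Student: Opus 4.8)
The plan is to verify Definition~\ref{defn:shy_rel} directly. Let \(E \subseteq C\) be \(k\)-shy in \(C\), witnessed by a \(k\)-dimensional probe \(W \subseteq V\). Fix \(c \in C\), a scale \(\delta \in (0,1]\), and a neighborhood \(U\) of \(0\) in \(V\); the goal is to construct a regular Borel probability measure \(\mu\) on \(V\) with compact support satisfying \(\supp(\mu) \subseteq (\delta(C - c) + c) \cap (U + c)\) and \(\mu[E + v] = 0\) for every \(v \in V\). The first move is to use the probe to locate a ``fat'' slice of \(C\) parallel to \(W\): the first hypothesis of \(k\)-shyness gives \(v_0 \in V\) with \(\lambda_W[C + v_0] > 0\), which, writing \(w_0 = -v_0\), says that the convex set \(S := C \cap (w_0 + W)\) has positive \(k\)-dimensional Lebesgue measure inside the \(k\)-dimensional affine subspace \(w_0 + W\). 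Since a convex subset of a \(k\)-dimensional affine space with positive \(k\)-dimensional measure has nonempty relative interior, I fix a nondegenerate closed Euclidean ball \(B \subseteq S \subseteq C\) lying in \(w_0 + W\).

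Next I shrink \(B\) toward the apex \(c\). For \(0 < \delta' \leq \delta\), convexity of \(C\) gives \(\delta'(C - c) + c \subseteq \delta(C - c) + c \subseteq C\) (for the first inclusion, write \(\delta' y + (1 - \delta')c = \delta[(\delta'/\delta)y + (1 - \delta'/\delta)c] + (1 - \delta)c\) and apply convexity twice). Hence \(B' := \delta'(B - c) + c\) satisfies \(B' \subseteq \delta'(C - c) + c \subseteq \delta(C - c) + c\); moreover, since \(\delta' W = W\), the ball \(B'\) lies in the single translate \(u_1 + W\) of \(W\), where \(u_1 := \delta' w_0 + (1 - \delta')c\). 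Because \(B - c\) is compact and \(V\) is a topological vector space, every neighborhood of \(0\) absorbs \(B - c\), so for \(\delta'\) small enough (and still \(\leq \delta\)) we have \(\delta'(B - c) \subseteq U\), i.e., \(B' \subseteq U + c\). Thus \(B'\) is a nondegenerate closed \(k\)-ball contained in \((\delta(C - c) + c) \cap (U + c)\).

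Finally, let \(\mu\) be the \(k\)-dimensional Lebesgue measure on the affine subspace \(u_1 + W\), restricted to \(B'\) and normalized to total mass one; it is a regular Borel probability measure with \(\supp(\mu) = B'\), which is compact. For any \(v \in V\), translating by \(-u_1\) (an isometry between the parallel affine subspaces \(u_1 + W\) and \(W\)) gives \(\mu[E + v] = 0\), since \(\lambda^{(k)}_{u_1 + W}((E + v) \cap B') \leq \lambda^{(k)}_{u_1 + W}((E + v) \cap (u_1 + W)) = \lambda_W[E + (v - u_1)] = 0\) by the second hypothesis of \(k\)-shyness (every translate of \(E\) is \(\lambda_W\)-null); universal measurability of \(E\) makes all of these quantities well defined. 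Since \(c\), \(\delta\), and \(U\) were arbitrary, \(E\) is shy in \(C\) at every point, hence shy in \(C\).

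The one genuine subtlety — conceptual rather than computational — is the \emph{relative} nature of the claim: the probe \(W\) need not pass anywhere near \(c\), so one cannot simply translate a Lebesgue-type measure on \(W\) onto a neighborhood of \(c\) as in the non-relative setting. Convexity of \(C\) is exactly what rescues this, through the apex-scaling in the second step: scaling the fat slice \(B\) toward \(c\) by a small factor \(\delta'\) forces the support into \((\delta(C - c) + c) \cap (U + c)\) while keeping it inside a translate of \(W\), which is precisely what is needed for the translate-null condition on \(E\) to still apply. The remaining ingredients — the convex-geometry fact relating positive measure to nonempty relative interior, the absorption of compact sets by neighborhoods of \(0\) in a topological vector space, and the measurability bookkeeping — are routine.
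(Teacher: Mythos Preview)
The paper does not supply its own proof of this proposition; it is quoted from \citet{anderson2001genericity} without argument. Your proof is correct and is essentially the standard one: locate a fat \(k\)-dimensional slice of \(C\) parallel to the probe \(W\), scale it toward the prescribed apex \(c\) using convexity to land inside \((\delta(C-c)+c)\cap(U+c)\), and put normalized \(k\)-dimensional Lebesgue measure on the resulting ball. The key observation---that apex-scaling preserves both membership in \(C\) and the property of lying in a single translate of \(W\), so that the translate-null hypothesis on \(E\) still applies---is exactly the point, and you isolate it cleanly.

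One small remark: the step ``\(B-c\) is compact, so every neighborhood of \(0\) absorbs it'' tacitly uses that \(V\) is Hausdorff, so that the finite-dimensional affine subspace \(w_0+W\) carries the Euclidean topology and closed Euclidean balls in it are compact in \(V\). In the paper's actual application \(V=\bb K\) is a Banach space, so this is harmless, and Anderson's original setting is likewise Hausdorff; but if you want the statement at the generality of Definition~\ref{defn:shy_rel} as written, it is worth flagging the assumption.
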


\subsection{Outline}
\label{sec:roadmap}

To aid the reader in following the application of the theory in
Section~\ref{sec:shyness} to the proof of Theorem~\ref{thm:dist}, we provide the
following outline of the argument.

In \textbf{Section~\ref{sec:shyness-prelims}} we establish the context to which
we apply the notion of relative shyness. In particular, we introduce the vector
space \(\B K\) consisting of the \emph{totally bounded Borel measures} on the
state space \(\C K\)---where \(\C K\) is \(\C X \times \C Y\), \(\C X \times \C
Y \times \C Y\), or \(\C A \times \C X^{\C A}\), depending on which notion of
fairness is under consideration. We further isolate the subspace \(\bb K
\subseteq \B K\) of \(\C U\)-fine totally bounded Borel measures. Within this
space, we are interested in the convex set \(\bb Q \subseteq \bb K\), the set of
\emph{\(\C U\)-fine joint probability distributions} of, respectively, \(X\) and
\(Y(1)\); \(X\), \(Y(0)\), \(Y(1)\); or \(A\) and the \(X_{\Pi, A, a}\). Within
\(\bb Q\), we identify \(\bb E \subseteq \bb Q\), the set of \(\C U\)-fine
distributions on \(\C K\) \emph{over which there exists a policy satisfying the
relevant fairness definition that is not strongly Pareto dominated}. The claim
of Theorem~\ref{thm:dist} is that \(\bb E\) is shy relative to \(\bb Q\).

To ensure that relative shyness generalizes Lebesgue null measure in the
expected way---i.e., that Prop.~\ref{prop:shy_axioms_rel}
holds---Definition~\ref{defn:shy_rel} has three technical requirements: (1) that
the ambient vector space \(V\) be a topological vector space; (2) that the
convex set \(C\) be completely metrizable; and (3) that the shy set \(E\) be
universally measurable. In \textbf{Lemma~\ref{lem:banach}}, we observe that \(\B
K\) is a complete topological vector space under the total variation norm, and
so is a Banach space. We extend this in \textbf{Cor.~\ref{cor:banach}}, showing
that \(\bb K\) is also a Banach space. We use this fact in
\textbf{Lemma~\ref{lem:convex}} to show that \(\bb Q\) is a completely
metrizable subset of \(\bb K\), as well as convex. Lastly, in
\textbf{Lemma~\ref{lem:e_closed}}, we show that the set \(\bb E\) is closed, and
therefore universally measurable.

In \textbf{Section~\ref{sec:shyness-prelims2}}, we develop the machinery needed
to construct a probe \(\bb W\) for the proof of Theorem~\ref{thm:dist} and prove
several lemmata simplifying the eventual proof of the theorem. To build the
probe, it is necessary to construct measures \(\mu_{\max,a}\) with maximal
support on the utility scale. This ensures that if any two threshold policies
produce different decisions on \emph{any} \(\mu \in \bb K\), they will produce
different decisions on typical perturbations. The construction of the
\(\mu_{\max,a}\), is carried out in \textbf{Lemma~\ref{lem:mtu}} and
\textbf{Cor.~\ref{cor:maximal}}. Next, we introduce the basic style of argument
used to show that a subset of \(\bb Q\) is shy in \textbf{Lemma~\ref{lem:probe}}
and \textbf{Lemma~\ref{lem:condition}}, in particular, by showing that the set
of \(\mu \in \bb Q\) that give positive probability to an event \(E\) is either
prevalent or empty. We use then use a technical lemma,
\textbf{Lemma~\ref{lem:uncountable_sum}}, to show, in effect, that a generic
element of \(\bb Q\) has support on the utility scale wherever a given fixed
distribution \(\mu \in \bb Q\) does. \textbf{In Defn.~\ref{defn:overlap}}, we
introduce the concept of overlapping and splitting utilities, and show in
\textbf{Lemma~\ref{lem:overlap}} that this property is generic in \(\bb Q\)
unless there exists a \(\omega\)-stratum that contains no positive-utility
observables \(x\). Lastly, in \textbf{Lemma~\ref{lem:simple}}, we provide a mild
simplification of the characterization of finitely shy sets that makes the the
proof of Theorem~\ref{thm:dist} more straightforward.

Finally, in \textbf{Section~\ref{sec:shyness-proof}}, we give the proof of
Theorem~\ref{thm:dist}. We divide the proof into three parts. In the first part,
we restrict our attention to the case of counterfactual equalized odds, and show
in detail how to combine the lemmata of the previous section to construct the
(at most) \(2 \cdot |\C A|\)-dimensional probe \(\bb W\). In the second part we
consider two distinct cases. The argument in both cases is conceptually
parallel. First, we argue that the balance conditions of counterfactual
equalized odds encoded by Eq.~\eqref{eq:counterfactual_equalized_odds} must be
broken by a typical perturbation in \(\bb W\). In particular, we argue that for
a given base distribution \(\mu\), there can be at most one budget-exhausting
multiple threshold policy that can---although need not necessarily---satisfy
counterfactual equalized odds. We show that the form of this policy cannot be
altered by an appropriate perturbation in \(\bb W\), but that the conditional
probability of a positive decision will, in general, be altered in such a way
that Eq.~\eqref{eq:counterfactual_equalized_odds} can only hold for a
\(\lambda_{\bb W}\)-null set of perturbations. In the final section, we lay out
modiciations that can be made to the proof given for counterfactual equalized
odds in the first two parts that adapt the argument to the cases of conditional
principal fairness and path-specific fairness. In particular, we show how to
construct the probe \(\bb W\) in such a way that the additional conditioning on
the reduced covariates \(W = \omega(X)\) in
Eqs.~\eqref{eq:conditional_principal_fairness}~and~\eqref{eq:path_specific_fairness}
does not affect the argument.

\subsection{Convexity, Complete Metrizability, and Universal Measurability}
\label{sec:shyness-prelims}

In this section, we establish the background requirements of
Prop.~\ref{prop:k_shy} for the setting of Theorem~\ref{thm:dist}. In particular,
we exhibit the \(\C U\)-fine distributions as a convex subset of a topological
vector space, the set of totally bounded \(\C U\)-fine Borel measures. We show
that the \(\C U\)-fine probability distributions form a completely metrizable
subset in the topology it inherits from the space of totally bounded measures.
Lastly, we show that the set of regular distributions under which there exists a
Pareto efficient policy satisfying one of the three fairness criteria is closed,
and therefore universally measurable.

\subsubsection{Background and notation}

We begin by establishing some notational conventions. We let \(\C K\) denote the
underlying state space over which the distributions in Theorem~\ref{thm:dist}
range. Specifically, \(\C K = \C X \times \C Y\) in the case of counterfactual
equalized odds; \(\C K = \C X \times \C Y \times \C Y\) in the case of
conditional principal fairness; and \(\C K = \C A \times \C X^{\C A}\) in the
case of path-specific fairness. We note that since \(\C X \subseteq \B R^k\) for
some \(k\) and \(Y \subseteq \B R\), \(\C K\) may equivalently be considered a
subset of \(\B R^n\) for some \(n \in \B N\), with the subspace topology (and
Borel sets) inherited from \(\B R^n\).\footnote{%
  In the case of path-specific fairness, we can equivalently think of \(\C A\)
  as a set of integers indexing the groups.
}

We recall the definition of totally bounded measures.

\begin{defn}
  Let \(\C M\) be a \(\sigma\)-algebra on \(V\), and let \(\mu\) be a countably
  additive \((V, \C M)\)-measure. Then, we define
  \begin{equation}
    |\mu|[E] = \sup \sum_{i = 1}^\infty |\mu[E_i]|
  \end{equation}
  where the supremum is taken over all countable partitions \(\{E_i\}_{i \in \B
  N}\), i.e., collections such that \(\bigcup_{i=1}^\infty E_i = E\) and \(E_i
  \cap E_j = \emptyset\) for \(j \neq i\). We call \(|\mu|\) the \emph{total
  variation of \(\mu\)}, and the \emph{total variation norm of \(\mu\)} is
  \(|\mu|[V]\).

  We say that \(\mu\) is \emph{totally bounded} if its total variation norm is
  finite, i.e., \(|\mu|[V] < \infty\).
\end{defn}

\begin{lem}
\label{lem:tot_var}
  If \(\mu\) is totally bounded, then \(|\mu|\) is a finite positive measure on
  \((V, \C M)\), and \(|\mu[E]| \leq |\mu|[E]\) for all \(E \in \C M\).
\end{lem}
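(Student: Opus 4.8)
The plan is to verify, in turn, the three standard properties of the total variation: non-negativity together with the domination inequality, monotonicity together with finiteness, and countable additivity. The domination inequality \(|\mu[E]| \le |\mu|[E]\) is immediate, since the collection consisting of \(E\) itself padded with copies of \(\emptyset\) is one of the countable partitions of \(E\) over which the supremum defining \(|\mu|[E]\) is taken; the same observation gives \(|\mu|[E] \ge 0\), and since every partition of \(\emptyset\) consists of empty sets, \(|\mu|[\emptyset] = 0\). Monotonicity follows because, if \(E \subseteq F\), any countable partition \(\{E_i\}\) of \(E\) extends to the partition \(\{F \setminus E\} \cup \{E_i\}\) of \(F\), so \(\sum_i |\mu[E_i]| \le |\mu|[F]\); taking the supremum over \(\{E_i\}\) yields \(|\mu|[E] \le |\mu|[F]\). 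In particular, the hypothesis of total boundedness gives \(|\mu|[E] \le |\mu|[V] < \infty\) for every \(E \in \C M\), and combined with the domination inequality this shows that \(\mu\) itself is a bounded set function, so the triangle inequality \(|\mu[A] + \mu[B]| \le |\mu[A]| + |\mu[B]|\) (the only algebraic fact about the modulus we ever use) is unproblematic.

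The substance of the lemma is countable additivity of \(|\mu|\). Fix a measurable partition \(\{E_j\}_{j \in \mathbb{N}}\) of a set \(E\); I would show \(|\mu|[E] = \sum_j |\mu|[E_j]\) by proving both inequalities. For ``\(\le\)'': given any countable partition \(\{F_i\}\) of \(E\), each \(F_i\) is partitioned by \(\{F_i \cap E_j\}_j\), so by the triangle inequality \(|\mu[F_i]| \le \sum_j |\mu[F_i \cap E_j]|\); summing over \(i\) and exchanging the order of summation (legitimate since all terms are non-negative) gives \(\sum_i |\mu[F_i]| \le \sum_j \sum_i |\mu[F_i \cap E_j]| \le \sum_j |\mu|[E_j]\), because \(\{F_i \cap E_j\}_i\) partitions \(E_j\). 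Taking the supremum over \(\{F_i\}\) gives \(|\mu|[E] \le \sum_j |\mu|[E_j]\). For ``\(\ge\)'': fix \(\varepsilon > 0\); for each \(j\), choose a countable partition \(\{E_{j,k}\}_k\) of \(E_j\) with \(\sum_k |\mu[E_{j,k}]| > |\mu|[E_j] - 2^{-j}\varepsilon\), which is possible precisely because \(|\mu|[E_j] < \infty\). The doubly-indexed family \(\{E_{j,k}\}_{j,k}\) is a countable partition of \(E\), so \(|\mu|[E] \ge \sum_{j,k} |\mu[E_{j,k}]| = \sum_j \sum_k |\mu[E_{j,k}]| > \sum_j |\mu|[E_j] - \varepsilon\); letting \(\varepsilon \downarrow 0\) closes the gap. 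The two inequalities together give countable additivity, and combined with \(|\mu|[\emptyset] = 0\) and the finiteness established above, we conclude that \(|\mu|\) is a finite positive measure on \((V, \C M)\).

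This is entirely routine measure theory and no single step is a genuine obstacle. The only points needing mild care are (i) the interchange of the order of summation in the non-negative double series, justified by Tonelli's theorem for series, and (ii) ensuring \(|\mu|[E_j] < \infty\) before running the \(\varepsilon/2^j\) selection in the ``\(\ge\)'' direction — which is exactly where total boundedness is invoked, since it supplies \(|\mu|[E_j] \le |\mu|[V] < \infty\).
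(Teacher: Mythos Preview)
Your argument is correct and is essentially the standard textbook proof; the paper itself does not prove this lemma but simply cites Theorem~6.2 in Rudin's \emph{Real and Complex Analysis}, and your write-up is exactly that proof spelled out. Nothing more needs to be said.
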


See Theorem~6.2 in \citet{rudin1987real} for proof.

We let \(\B K\) denote the set of totally bounded Borel measures on \(\C K\). We
note that, in the case of path specific fairness, which involves the joint
distributions of counterfactuals, \(X\) is not defined directly. Rather, the
joint distribution of the counterfactuals \(X_{\Pi, A, a'}\) and \(A\) defines
the distribution of \(X\) through consistency, i.e., what would have happened to
someone if their group membership were changed to \(a' \in \C A\) is what
actually happens to them if their group membership \emph{is} \(a'\). More
formally, \(\Pr(X \in E \mid A = a') = \Pr(X_{\Pi, A, a'} \in E \mid A = a')\)
for all Borel sets \(E \subseteq \C X\). (See \S~3.6.3 in
\citet{pearl2009causality}.)

For any \(\mu \in \B K\), we adopt the following notational conventions. If we
say that a property holds \(\mu\)-a.s., then the subset of \(\C K\) on which the
property fails has \(|\mu|\)-measure zero. If \(E \subseteq \C K\) is a
measurable set, then we denote by \(\mu \rest_E\) the restriction of \(\mu\) to
\(E\), i.e., the measure defined by the mapping \(E' \mapsto \mu[E \cap E']\).
We let \(\EE_\mu[f] = \int_{\C K} f \, \dx \mu\), and for measurable sets \(E\),
\(\Pr_\mu(E) = \mu[E]\).\footnote{%
  To state and prove our results in a notationally uniform way, we occasionally
  write \(\Pr_\mu(E)\) even when \(\mu\) ranges over measures that may not be
  probability measures.
}
The fairness criteria we consider involve conditional independence relations. To
make sense of conditional independence relations more generally, for Borel
measurable \(f\) we define \(\EE_\mu[f \mid \C F]\) to be the Radon-Nikodym
derivative of the measure \(E \mapsto \EE_\mu[f \cdot \B 1_E]\) with respect to
the measure \(\mu\) restricted to the sub--\(\sigma\)-algebra of Borel sets \(\C
F\). (See \S~34 in \citet{billingsley2008probability}.) Similarly, we define
\(\EE_\mu[f \mid g]\) to be \(\EE_\mu[f \mid \sigma(g)]\), where \(\sigma(g)\)
denotes the sub--\(\sigma\)-algebra of the Borel sets generated by \(g\). In
cases where the condition can occur with non-zero probability, we can instead
make use of the elementary definition of discrete conditional probability.

\begin{lem}
\label{lem:cond_prob}
  Let \(g\) be a Borel function on \(\C K\), and suppose \(\Pr_\mu(g = c) \neq
  0\) for some constant \(c \in \B R\). Then, we have that \(\mu\)-a.s., for any
  Borel function \(f\),
  \begin{equation*}
    \EE_\mu[f \mid g] \cdot \B 1_{g = c} = \frac{\EE_\mu[f \cdot \B 1_{g = c}]}
    {\Pr_\mu(g = c)} \cdot \B 1_{g = c}.
  \end{equation*}
\end{lem}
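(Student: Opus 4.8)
The plan is to reduce the identity to the elementary observation that any $\sigma(g)$-measurable function is \emph{literally} constant on the fibre $\{g = c\}$, so that the constant can be read off by integrating against that fibre. First I would fix a version $h$ of $\EE_\mu[f \mid g] = \EE_\mu[f \mid \sigma(g)]$; by construction $h$ is $\sigma(g)$-measurable, so the Doob--Dynkin lemma supplies a Borel function $\psi : \B R \to \B R$ with $h = \psi \circ g$. In particular $h \cdot \B 1_{g = c} = \psi(c) \cdot \B 1_{g = c}$ everywhere (not merely $\mu$-a.s.), so the whole problem comes down to evaluating the number $\psi(c)$.

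To pin down $\psi(c)$, I would test the defining property of the conditional expectation against the set $E = \{g = c\}$. Since $\{c\}$ is Borel, $E = g^{-1}(\{c\}) \in \sigma(g)$, and the characterization of $\EE_\mu[\,\cdot \mid \sigma(g)]$ as the Radon--Nikodym derivative described in the text gives $\EE_\mu[h \cdot \B 1_E] = \EE_\mu[f \cdot \B 1_E]$, i.e.\ $\EE_\mu[h \cdot \B 1_{g = c}] = \EE_\mu[f \cdot \B 1_{g = c}]$. Using $h \cdot \B 1_{g=c} = \psi(c)\cdot \B 1_{g=c}$ and $\EE_\mu[\B 1_{g=c}] = \Pr_\mu(g = c)$, the left-hand side is $\psi(c)\cdot \Pr_\mu(g=c)$; since $\Pr_\mu(g=c) \neq 0$ by hypothesis I can divide to get $\psi(c) = \EE_\mu[f \cdot \B 1_{g=c}] / \Pr_\mu(g=c)$. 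Substituting back into $h \cdot \B 1_{g=c} = \psi(c)\cdot \B 1_{g=c}$ yields exactly the claimed equality, holding $\mu$-a.s.\ because $h$ is only determined up to a $|\mu|$-null set (and the resulting null set is allowed to depend on $f$).

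There is no serious obstacle here; this is a standard fact and the proof is short. The two points I would be slightly careful about are: (i) invoking Doob--Dynkin so that $h$ is genuinely—rather than just a.s.—constant on $\{g = c\}$, which is what lets $\psi(c)$ come out of the integral with no further null-set bookkeeping; and (ii) noting that nothing changes when $\mu$ is a general totally bounded (possibly signed) measure rather than a probability measure, since $E \mapsto \EE_\mu[f \cdot \B 1_E]$ is absolutely continuous with respect to $|\mu|$ restricted to $\sigma(g)$ (so the derivative defining $\EE_\mu[f \mid g]$ exists) and every step above is linear in $\mu$. Implicitly one restricts to $f$ for which $\EE_\mu[f \mid g]$ is defined, e.g.\ $f \in L^1(|\mu|)$.
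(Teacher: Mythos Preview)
Your proposal is correct and is the standard short argument for this fact. The paper itself does not supply a proof at all: it simply writes ``See \citet{rao2005conditional} for proof,'' so there is nothing in the paper to compare your approach against beyond the citation. Your use of Doob--Dynkin to make $h$ literally constant on $\{g=c\}$, followed by testing the defining Radon--Nikodym property on the $\sigma(g)$-set $\{g=c\}$, is exactly the expected route and would be a perfectly acceptable replacement for the bare citation; your remarks about signed $\mu$ and the implicit integrability assumption on $f$ are also appropriate given how the paper sets up $\EE_\mu[\,\cdot\mid\C F]$.
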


See \citet{rao2005conditional} for proof.

With these notational conventions in place, we turn to establishing the
background conditions of Prop.~\ref{prop:k_shy}.

\begin{lem}
\label{lem:banach}
  The set of totally bounded measures on a measure space \((V, \C M)\) form a
  complete topological vector space under the total variation norm, and hence a
  Banach space.
\end{lem}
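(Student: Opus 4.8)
The plan is to check, in turn, the three things the statement asks for: that the totally bounded measures on \((V, \C M)\) form a vector space, that the total variation norm \(\|\mu\| := |\mu|[V]\) is genuinely a norm, and — the substantive part — that the resulting normed space is complete. The first two are bookkeeping. For the vector space structure, note that \(a\mu + b\nu\) is again countably additive, and the definition of total variation as a supremum over countable partitions immediately yields \(|a\mu + b\nu|[E] \le |a| \cdot |\mu|[E] + |b| \cdot |\nu|[E]\); evaluating at \(E = V\) shows \(a\mu + b\nu\) is totally bounded, and also gives homogeneity and subadditivity of \(\|\cdot\|\). Positive-definiteness follows from Lemma~\ref{lem:tot_var}: if \(\|\mu\| = 0\) then \(|\mu[E]| \le |\mu|[E] \le \|\mu\| = 0\) for every \(E\), so \(\mu = 0\). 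Once the norm is in hand, the norm topology makes addition and scalar multiplication continuous, so the space is a topological vector space, and "complete normed vector space" is exactly the definition of a Banach space — hence it suffices to prove completeness.

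For completeness I would take a \(\|\cdot\|\)-Cauchy sequence \((\mu_n)\) and first build a candidate limit set-function. Since \(|\mu_n[E] - \mu_m[E]| = |(\mu_n - \mu_m)[E]| \le |\mu_n - \mu_m|[V] = \|\mu_n - \mu_m\|\) (again Lemma~\ref{lem:tot_var}), the scalar sequence \((\mu_n[E])_n\) is Cauchy in \(\B R\) for each \(E \in \C M\), so I define \(\mu[E] := \lim_n \mu_n[E]\). Finite additivity of \(\mu\) passes to the limit trivially. A Cauchy sequence is norm-bounded, \(\sup_n \|\mu_n\| =: M < \infty\), so for any countable partition \(\{E_i\}\) of \(V\) and any finite \(k\), \(\sum_{i \le k} |\mu[E_i]| = \lim_n \sum_{i \le k} |\mu_n[E_i]| \le M\); letting \(k \to \infty\) and taking the supremum over partitions gives \(|\mu|[V] \le M\), so \(\mu\) is totally bounded provided it is countably additive.

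The main obstacle is therefore countable additivity of \(\mu\), which is where uniformity and the finiteness of each \(|\mu_N|\) have to be used (a Vitali–Hahn–Saks-flavoured argument). Here is the step I would carry out: fix a partition \(\{E_i\}\) of \(E\) and \(\varepsilon > 0\), choose \(N\) with \(\|\mu_n - \mu_N\| < \varepsilon\) for all \(n \ge N\), and write \(R_k = \bigcup_{i > k} E_i\). For \(n \ge N\), \(|\mu_n[R_k]| \le \|\mu_n - \mu_N\| + |\mu_N|[R_k] < \varepsilon + |\mu_N|[R_k]\), and since \(|\mu_N|\) is a finite measure (Lemma~\ref{lem:tot_var}) and \(R_k \downarrow \emptyset\), we have \(|\mu_N|[R_k] \to 0\). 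Hence \(\bigl| \mu[E] - \sum_{i \le k} \mu[E_i] \bigr| = \lim_n |\mu_n[R_k]| \le \varepsilon + |\mu_N|[R_k]\), and letting \(k \to \infty\) and then \(\varepsilon \to 0\) gives \(\mu[E] = \sum_i \mu[E_i]\). Finally, for norm convergence: given \(\varepsilon\), choose \(N\) with \(\|\mu_n - \mu_m\| < \varepsilon\) for \(n, m \ge N\); then for \(n \ge N\) and any finite \(k\), \(\sum_{i \le k} |\mu_n[E_i] - \mu[E_i]| = \lim_m \sum_{i \le k} |(\mu_n - \mu_m)[E_i]| \le \varepsilon\), so taking \(k \to \infty\) and the supremum over partitions gives \(\|\mu_n - \mu\| \le \varepsilon\). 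This shows \((\mu_n)\) converges in \(\|\cdot\|\) to \(\mu\), completing the proof; alternatively, one can simply invoke the classical statement of this fact (e.g., in Dunford–Schwartz or \citet{rudin1987real}).
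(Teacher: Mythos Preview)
Your argument is correct: the verification that the totally bounded signed measures form a normed vector space is routine, and your completeness proof is the standard one --- construct the pointwise limit set-function, use the Cauchy condition together with continuity from above of the finite positive measure \(|\mu_N|\) (Lemma~\ref{lem:tot_var}) to upgrade finite additivity to countable additivity, and then pass the norm estimate to the limit. Each step is sound.

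By contrast, the paper does not actually prove this lemma: it simply records ``See, e.g., \citet{steele2019space} for proof,'' treating the statement as a classical fact. So your write-up is strictly more than what the paper provides; your closing remark that one can instead invoke a standard reference (Dunford--Schwartz or \citet{rudin1987real}) is in fact exactly the route the paper takes. What your self-contained argument buys is that the reader sees precisely where the finiteness hypothesis \(|\mu|[V] < \infty\) and Lemma~\ref{lem:tot_var} are used --- namely, to force \(|\mu_N|[R_k] \to 0\) in the countable-additivity step --- rather than having to chase a citation.
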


See, e.g., \citet{steele2019space} for proof. It follows from this that \(\B K\)
is a Banach space.

\begin{rmk}
\label{rmk:Borel}
  Since \(\B K\) is a Banach space, it possesses a topology, and consequently a
  collection of Borel subsets. These Borel sets are to be distinguished from the
  Borel subsets of the underlying state space \(\C K\), which the elements of
  \(\B K\) measure. The requirement that the subset \(E\) of the convex set
  \(C\) be universally measurable in Proposition~\ref{prop:k_shy} is in
  reference to the \emph{Borel subsets of \(\B K\)}; the requirement that \(\mu
  \in \B K\) be a Borel measure is in reference to the \emph{Borel subsets of
  \(\C K\)}.
\end{rmk}

Recall the definition of absolute continuity.

\begin{defn}
  Let \(\mu\) and \(\nu\) be measures on a measure space \((V, \C M)\). We say
  that a measure \(\nu\) is \emph{absolutely continuous with respect to
  \(\mu\)}---also written \(\nu \Lt \mu\)---if, whenever \(\mu[E] = 0\),
  \(\nu[E] = 0\).
\end{defn}

Absolute continuity is a closed property in the topology induced by the total
variation norm.

\begin{lem}
\label{lem:abscont}
  Consider the space of totally bounded measures on a measure space \((V, \C
  M)\) and fix \(\mu\). The set of \(\nu\) such that \(\nu \Lt \mu\) is closed.
\end{lem}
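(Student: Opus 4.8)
The plan is to exhibit $\{\nu : \nu \Lt \mu\}$ as an intersection of closed sets in the topology induced by the total variation norm. Fix a measurable set $E \in \C M$ with $\mu[E] = 0$. The first step is to check that the evaluation functional $\nu \mapsto \nu[E]$ is continuous --- indeed $1$-Lipschitz --- for the total variation norm. This is immediate from Lemma~\ref{lem:tot_var}: if $\nu, \nu'$ are totally bounded, then so is $\nu - \nu'$, and $|\nu[E] - \nu'[E]| = |(\nu - \nu')[E]| \leq |\nu - \nu'|[E] \leq |\nu - \nu'|[V]$. Consequently the zero set $Z_E = \{\nu : \nu[E] = 0\}$, being the preimage of $\{0\}$ under a continuous map, is closed.

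The second step is the elementary observation that $\nu \Lt \mu$ holds if and only if $\nu[E] = 0$ for every measurable $E$ with $\mu[E] = 0$, so that $\{\nu : \nu \Lt \mu\} = \bigcap_{E \,:\, \mu[E] = 0} Z_E$. An arbitrary intersection of closed sets is closed, which establishes the lemma. Equivalently, one may phrase the argument sequentially: if $\nu_n \Lt \mu$ for every $n$ and $|\nu_n - \nu|[V] \to 0$, then for any $E$ with $\mu[E] = 0$ we have $\nu_n[E] = 0$, whence $|\nu[E]| = |\nu[E] - \nu_n[E]| \leq |\nu - \nu_n|[V] \to 0$; thus $\nu[E] = 0$, and since $E$ was arbitrary, $\nu \Lt \mu$.

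There is essentially no obstacle here. The only point that requires care is invoking the bound $|\nu[E]| \leq |\nu|[E] \leq |\nu|[V]$ from Lemma~\ref{lem:tot_var}, which is exactly what makes convergence in total variation norm force convergence of the measures evaluated at a fixed measurable set; everything else is formal. It is worth noting that ``closed'' is understood here relative to the Banach space of totally bounded measures from Lemma~\ref{lem:banach} (and hence relative to $\B K$ when $\mu \in \B K$), since that is the ambient space in which the notion of prevalence will subsequently be applied.
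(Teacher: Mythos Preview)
Your proof is correct and takes essentially the same approach as the paper. Indeed, the sequential formulation you give at the end is exactly the paper's argument: for a convergent sequence $\nu_i \to \nu$ with each $\nu_i \Lt \mu$, one checks $\nu[E] = \lim_i \nu_i[E] = 0$ for any $E$ with $\mu[E] = 0$; your ``intersection of closed sets'' framing is a harmless repackaging of the same idea.
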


\begin{proof}
  Let \(\{\nu_i\}_{i \in \B N}\) be a convergent sequence of measures absolutely
  continuous with respect to \(\mu\). Let the limit of the \(\nu_i\) be \(\nu\).
  We seek to show that \(\nu \Lt \mu\). Let \(E \in \C M\) be an arbitrary set
  such that \(\mu[E] = 0\). Then, we have that
  \begin{align*}
    \nu[E]
      &= \lim_{n \to \infty} \nu_i[E] \\
      &= \lim_{n \to \infty} 0 \\
      &= 0,
  \end{align*}
  since \(\nu_i \Lt \mu\) for all \(i\). Since \(E\) was arbitrary, the result
  follows.
\end{proof}

Recall the definition of a pushforward measure.

\begin{defn}
  Let \(f : (V, \C M) \to (V', \C M')\) be a measurable function. Let \(\mu\) be
  a measure on \(V\). We define the \emph{pushforward measure} \(\mu \circ
  f^{-1}\) on \(V'\) by the map \(E' \mapsto \mu[f^{-1}(E')]\) for  \(E' \in \C
  M'\).
\end{defn}

Within \(\B K\), in the case of counterfactual equalized odds and conditional
principal fairness, we define the subspace \(\bb K\) to be the set of totally
bounded measures \(\mu\) on \(\C K\) such that the pushforward measure \(\mu
\circ u^{-1}\) is absolutely continuous with respect to the Lebesgue measure
\(\lambda\) on \(\B R\) for all \(u \in \C U\). By the Radon-Nikodym theorem,
these pushforward measures arise from densities, i.e., for any \(\mu \in \bb
K\), there exists a unique \(f_\mu \in L^1(\B R)\) such that for any measurable
subset \(E\) of \(\B R\), we have
\begin{equation*}
  \mu \circ u^{-1}[E] = \int_{E} f_\mu \, \dx \lambda.
\end{equation*}
In the case of path-specific fairness, we require the joint distributions of the
counterfactual utilities to have a joint density. That is, we define the
subspace \(\bb K\) to be the set of totally bounded measures \(\mu\) on \(\C K\)
such that the pushforward measure \(\mu \circ (u^{\C A})^{-1}\) is absolutely
continuous with respect to Lebesgue measure on \(\B R^{\C A}\) for all \(u \in
\C U\). Here, we recall that
\begin{equation*}
  u^{\C A} : (a, (x_{a'})_{a' \in \C A}) \mapsto (u(x_{a'}))_{a' \in \C A}.
\end{equation*}
As before, there exists a corresponding density \(f_\mu \in L^1(\B R^{\C A})\).

We therefore see that \(\bb K\) extends in a natural way the notion of a \(\C
U\)- or \(\C U^{\C A}\)-fine distribution, and so, by a slight abuse of
notation, refer to \(\bb K\) as the set of \emph{\(\C U\)-fine measures on \(\C
K\)}.

Indeed, since \(\Pr_\mu(u(X) \in E, A = a) \leq \Pr_\mu(u(X) \in E)\), it also
follows that, for \(a \in \C A\) such that \(\Pr_\mu(A = a) > 0\), the
conditional distributions of \(u(X) \mid A = a\) are also absolutely continuous
with respect to Lebesgue measure, and so also have densities. For notational
convenience, we set \(f_{\mu, a}\) to be the function satisfying
\begin{equation*}
  \Pr_\mu(u(X) \in E, A = a) = \int_E f_{\mu, a} \, \dx \lambda,
\end{equation*}
so that \(f_{\mu} = \sum_{a \in \C A} f_{\mu, a}\).

Since absolute continuity is a closed condition, it follows that \(\bb K\) is a
closed subspace of \(\B K\). This leads to the following useful corollary of
Lemma~\ref{lem:abscont}.

\begin{cor}
\label{cor:banach}
  The collection of \(\C U\)-fine measures on \(\C K\) is a Banach space.
\end{cor}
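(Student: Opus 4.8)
The plan is to exhibit the collection $\bb K$ of $\C U$-fine measures as a \emph{closed linear subspace} of the Banach space $\B K$ of totally bounded Borel measures on $\C K$ (Lemma~\ref{lem:banach}), so that it inherits completeness. For each $u \in \C U$, let $P_u$ denote the pushforward map $\mu \mapsto \mu \circ u^{-1}$ from $\B K$ into the space of totally bounded Borel measures on $\B R$ (in the path-specific case, $P_u$ is instead $\mu \mapsto \mu \circ (u^{\C A})^{-1}$, with target the totally bounded measures on $\B R^{\C A}$). Writing $\C{AC}$ for the set of measures in the target space that are absolutely continuous with respect to Lebesgue measure $\lambda$, the defining property of $\bb K$ says precisely that $\bb K = \bigcap_{u \in \C U} P_u^{-1}(\C{AC})$.

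First I would check that $\bb K$ is a linear subspace. Pushforward is linear in the measure: for Borel $E$ one has $(\mu + c\nu)\circ u^{-1}[E] = \mu[u^{-1}(E)] + c\,\nu[u^{-1}(E)]$, so each $P_u$ is a linear map; and absolute continuity with respect to $\lambda$ is closed under linear combinations, since $\lambda[E] = 0$ forces $(\mu\circ u^{-1})[E] = (\nu\circ u^{-1})[E] = 0$ and hence $((\mu + c\nu)\circ u^{-1})[E] = 0$. The zero measure belongs to $\bb K$ trivially. Thus each $P_u^{-1}(\C{AC})$ is a linear subspace of $\B K$, and so is their intersection $\bb K$.

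Next I would check that $\bb K$ is closed. The key observation is that $P_u$ is bounded: from the definition of total variation, for any partition $\{E_i\}$ of a Borel set $E \subseteq \B R$ the sets $\{u^{-1}(E_i)\}$ partition $u^{-1}(E)$, so $\sum_i |\mu\circ u^{-1}[E_i]| = \sum_i |\mu[u^{-1}(E_i)]| \le |\mu|[u^{-1}(E)]$, and taking the supremum gives $|\mu\circ u^{-1}|[\B R] \le |\mu|[\C K] = \|\mu\|$; the same computation applies to $\mu \mapsto \mu\circ(u^{\C A})^{-1}$. Hence each $P_u$ is a contraction, in particular continuous. By Lemma~\ref{lem:abscont}, $\C{AC}$ is closed (the proof of that lemma only uses that each $\nu_i \Lt \lambda$ and that total-variation convergence implies convergence of $\nu_i[E]$ for each fixed $E$; it does not require the reference measure to be totally bounded, so it applies with $\lambda$ the Lebesgue measure on $\B R$ or $\B R^{\C A}$). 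Therefore each $P_u^{-1}(\C{AC})$ is closed in $\B K$, and $\bb K$, being an arbitrary intersection of closed sets, is closed.

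Finally, a closed linear subspace of a Banach space, equipped with the restriction of the ambient norm, is itself complete and hence a Banach space; applying this to $\bb K \subseteq \B K$ under the total variation norm yields the corollary. I do not anticipate a substantive obstacle: the only points needing a little care are verifying that $P_u$ is bounded (so that preimages of closed sets remain closed) and confirming that Lemma~\ref{lem:abscont} is valid when the reference measure is the infinite Lebesgue measure rather than a totally bounded one — both of which are immediate from the definitions — together with handling the path-specific variant by the identical argument with $u^{\C A}$ in place of $u$.
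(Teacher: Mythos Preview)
Your proposal is correct and follows essentially the same approach as the paper: exhibit \(\bb K\) as a closed linear subspace of the Banach space \(\B K\) and appeal to Lemma~\ref{lem:abscont} for closedness. The paper's proof is terser---it simply asserts closedness ``by Lemma~\ref{lem:abscont}''---whereas you make the intermediate steps explicit (linearity and continuity of the pushforward, applicability of the lemma when the reference measure is Lebesgue), which is a welcome elaboration but not a different argument.
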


\begin{proof}
  It is straightforward to see that \(\bb K\) is a subspace of \(\B K\). Since
  \(\bb K\) is a closed subset of \(\B K\) by Lemma~\ref{lem:abscont}, it is
  complete, and therefore a Banach space.
\end{proof}

We note the following useful fact about elements of \(\bb K\).

\begin{lem}
\label{lem:mapping}
  Consider the mapping \(\mu \mapsto f_\mu\) from \(\bb K\) to \(L^1(\B R)\)
  given by associating a measure \(\mu\) with the Radon-Nikodym derivative of
  the pushforward measure \(\mu \circ u^{-1}\). This mapping is continuous.
  Likewise, the mapping \(\mu \mapsto f_{\mu,a}\) is continuous for all \(a \in
  \C A\), and, in the case of path-specific fairness, the mapping of \(\mu\) to
  the Radon-Nikodym derivative of \(\mu \circ (u^{\C A})^{-1}\) is continuous.
\end{lem}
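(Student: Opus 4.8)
The plan is to exploit that \(\bb K\), equipped with the total variation norm, and \(L^1(\B R)\) (resp.\ \(L^1(\B R^{\C A})\)) are Banach spaces, and that every map in the statement is \emph{linear}: \(\mu\mapsto\mu\rest_K\) is linear for any fixed measurable \(K\subseteq\C K\), \(\mu\mapsto\mu\circ g^{-1}\) is linear for any fixed measurable \(g\), and, by uniqueness of Radon--Nikodym derivatives, assigning to an absolutely continuous measure its density is linear. Since continuity of a linear map between normed spaces is equivalent to boundedness, it will be enough to show each map is \(1\)-Lipschitz. I would first reduce to a norm bound: because \(\bb K\) is a vector space (Cor.~\ref{cor:banach}), for \(\mu,\nu\in\bb K\) we can set \(\rho=\mu-\nu\in\bb K\) and note that \(\rho\circ u^{-1}=f_\mu\,\lambda-f_\nu\,\lambda=(f_\mu-f_\nu)\,\lambda\), so \(f_\rho=f_\mu-f_\nu\) \(\lambda\)-a.e.; it therefore suffices to prove \(\|f_\rho\|_{L^1(\B R)}\le|\rho|[\C K]\) for all \(\rho\in\bb K\), and similarly for the other two maps.

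Next I would invoke two standard facts. (i) Pushforward does not increase total variation: for measurable \(g:\C K\to\B R^d\) and any totally bounded signed Borel measure \(\rho\) on \(\C K\), a countable Borel partition \(\{E_i\}_{i\in\B N}\) of \(\B R^d\) pulls back to a countable Borel partition \(\{g^{-1}(E_i)\}_{i\in\B N}\) of \(\C K\), so \(\sum_i|\rho\circ g^{-1}[E_i]|=\sum_i|\rho[g^{-1}(E_i)]|\le|\rho|[\C K]\); taking the supremum over partitions gives \(|\rho\circ g^{-1}|[\B R^d]\le|\rho|[\C K]\). (ii) If a totally bounded signed measure \(\nu\) on \(\B R^d\) has a density \(h\) with respect to Lebesgue measure \(\lambda\), then \(|\nu|=|h|\,\lambda\) by the Jordan decomposition (cf.\ \citet{rudin1987real}), so \(\|h\|_{L^1(\lambda)}=|\nu|[\B R^d]\). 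Applying (ii) to \(\nu=\rho\circ u^{-1}\) --- which is \(\Lt\lambda\) since \(\rho\in\bb K\) --- and then (i) with \(g=u\) gives \(\|f_\rho\|_{L^1(\B R)}=|\rho\circ u^{-1}|[\B R]\le|\rho|[\C K]\), the desired bound. The identical argument with \(g=u^{\C A}\) and target \(L^1(\B R^{\C A})\) handles the Radon--Nikodym derivative of \(\mu\circ(u^{\C A})^{-1}\).

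For the maps \(\mu\mapsto f_{\mu,a}\), I would note that \(f_{\mu,a}\) is the density of \((\mu\rest_{\{A=a\}})\circ u^{-1}\) --- equivalently, of the measure \(E\mapsto\Pr_\mu(u(X)\in E,\,A=a)\), whose absolute continuity for \(\mu\in\bb K\) was recorded above. Since restriction and pushforward are linear, \(\big((\mu-\nu)\rest_{\{A=a\}}\big)\circ u^{-1}=(f_{\mu,a}-f_{\nu,a})\,\lambda\); applying (ii), then (i), together with the elementary bound \(|\rho\rest_{\{A=a\}}|[\C K]=|\rho|[\{A=a\}]\le|\rho|[\C K]\), yields \(\|f_{\mu,a}-f_{\nu,a}\|_{L^1(\B R)}\le|\mu-\nu|[\C K]\). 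This would establish all three continuity claims, each map being in fact \(1\)-Lipschitz.

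I do not expect a substantive obstacle here; the estimate is routine. The step needing the most care is the reduction in the first paragraph: passing from ``bounded'' to genuine Lipschitz continuity relies on the maps being linear \emph{as maps into \(L^1\)}, i.e.\ on \(\bb K\) being closed under differences (Cor.~\ref{cor:banach}) and on the density of \(\mu-\nu\) being \(\lambda\)-a.e.\ equal to \(f_\mu-f_\nu\), which is exactly where uniqueness of the Radon--Nikodym derivative enters. I would also keep in mind --- though it plays no role in the estimate --- that the Borel structure of the Banach space \(\bb K\) is distinct from the Borel \(\sigma\)-algebra of the state space \(\C K\) on which elements of \(\bb K\) act.
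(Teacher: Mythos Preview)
Your proposal is correct and is essentially the same argument as the paper's: both establish that the map is \(1\)-Lipschitz by bounding \(\|f_\mu-f_{\mu'}\|_{L^1}\) by \(|\mu-\mu'|[\C K]\). The paper spells this out directly via an \(\epsilon\)-\(\delta\) argument, splitting \(\B R\) into \(E^{\up}=\{f_\mu>f_{\mu'}\}\) and \(E^{\low}=\{f_\mu<f_{\mu'}\}\) and then bounding \(|(\mu-\mu')[u^{-1}(E^{\up})]|+|(\mu-\mu')[u^{-1}(E^{\low})]|\) by the total variation---which is precisely your facts (ii) and (i) unpacked; your linear-operator framing is a cleaner packaging of the same estimate.
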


\begin{proof}
  We show only the first case. The others follow by virtually identical
  arguments.

  Let \(\epsilon > 0\) be arbitrary. Choose \(\mu \in \bb K\), and suppose that
  \(|\mu - \mu'|[\C K] < \epsilon\). Then, let
  \begin{align*}
    E^{\up}
      &= \{x \in \B R : f_\mu(x) > f_{\mu'}(x)\} \\
    E^{\low}
      &= \{x \in \B R : f_\mu(x) < f_{\mu'}(x)\}.
  \end{align*}
  Then \(E^{\up}\) and \(E^{\low}\) are disjoint, so we have that
  \begin{align*}
    \|f_\mu - f_{\mu'}\|_{L^1(\B R)}
      &= \left| \int_{E^{\up}} f_{\mu} - f_{\mu'} \, \dx \lambda \right| \\
      &\hspace{1cm}+ \left| \int_{E^{\low}} f_{\mu} - f_{\mu'} \, \dx \lambda
      \right| \\
      &= |(\mu - \mu')[u^{-1}(E^{\up})]| \\
      &\hspace{1cm} + |(\mu - \mu')[u^{-1}(E^{\low})]| \\
      &< \epsilon,
  \end{align*}
  where the second equality follows by the definition of pushforward measures
  and the inequality follows from Lemma~\ref{lem:tot_var}. Since \(\epsilon\)
  was arbitrary, the claim follows.
\end{proof}

Finally, we define \(\bb Q\). We let \(\bb Q\) be the subset of \(\bb K\)
consisting of all \(\C U\)-fine probability measures, i.e., measures \(\mu \in
\B K\) such that:
\begin{enumerate}
  \item The measure \(\mu\) is \(\C U\)-fine;
  \item For all Borel sets \(E \subseteq \C K\), \(\mu[E] \geq 0\);
  \item The measure of the whole space is unity, i.e., \(\mu[\C K] = 1\).
\end{enumerate}

We conclude the background and notation by observing that threshold policies are
defined wholly by their thresholds for distributions in \(\bb K\) and \(\bb Q\).
Importantly, this observation does not hold when there are atoms on the utility
scale---which measures in \(\bb K\) lack---which can in turn lead to
counterexamples to Theorem~\ref{thm:dist}; see
Appendix~\ref{sec:counterexample}.

\begin{lem}
\label{lem:simplethresh}
  Let \(\tau_0(x)\) and \(\tau_1(x)\) be two multiple threshold policies. If
  \(\tau_0(x)\) and \(\tau_1(x)\) have the same thresholds, then for any \(\mu
  \in \bb K\), \(\tau_0(X) = \tau_1(X)\) \(\mu\)-a.s. Similarly, for \(\mu \in
  \bb Q\), if
  \begin{equation*}
    \EE_\mu[\tau_0(X) \mid A = a] = \EE_\mu[\tau_1(X) \mid A = a]
  \end{equation*}
  for all \(a \in \C A\) such that \(\Pr_\mu(A = a) > 0\), then \(\tau_0(X) =
  \tau_1(X)\) \(\mu\)-a.s.

  Moreover, for \(\mu \in \bb K\) in the case of path-specific fairness, if
  \(\tau_0(x)\) and \(\tau_1(x)\) have the same thresholds, then
  \(\tau_0(X_{\Pi, A, a}) = \tau_1(X_{\Pi, A, a})\) \(\mu\)-a.s.\ for any  \(a
  \in \C A\). Similarly, for \(\mu \in \bb Q\) in the case of path-specific
  fairness, if
  \begin{equation*}
    \EE_\mu[\tau_0(X_{\Pi, A, a})] = \EE_\mu[\tau_1(X_{\Pi, A, a})]
  \end{equation*}
  then \(\tau_0(X_{\Pi, A, a}) = \tau_1(X_{\Pi, A, a})\) \(\mu\)-a.s.\ as well.
\end{lem}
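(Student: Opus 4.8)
The plan is to reduce all four statements to a single structural fact about $\C U$-fine measures — they assign no mass to any level set of the utility — so that for a multiple threshold policy one may ignore the thresholds themselves $\mu$-almost surely, after which the two cases (equal thresholds; equal group means) follow by comparing indicators.

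The first thing I would record is that every $\mu \in \bb K$ has no atoms on the utility scale, i.e.\ $|\mu|\bigl[u^{-1}(\{t\})\bigr] = 0$ for every $t \in \B R$ and every $u \in \C U$: this is exactly the property of $\bb K$ recalled in the discussion preceding the lemma, coming from the defining absolute continuity of the pushforwards $\mu \circ u^{-1} \Lt \lambda$ (singletons being Lebesgue-null). In the path-specific case $\mu \circ (u^{\C A})^{-1} \Lt \lambda$ on $\B R^{\C A}$, so each marginal $u(X_{\Pi, A, a})$ also has a density and likewise $|\mu|\bigl[\{u(X_{\Pi, A, a}) = t\}\bigr] = 0$ for all $t$. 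Consequently, for any multiple threshold policy $\tau$ with thresholds $\{t_g\}_{g \in \C A}$ one has $\tau(X) = \B 1_{u(X) > t_{\alpha(X)}}$ and $\tau(X_{\Pi, A, a}) = \B 1_{u(X_{\Pi, A, a}) > t_{\alpha(X_{\Pi, A, a})}}$ $\mu$-a.s., since in each case the two sides can disagree only on $\bigcup_{g \in \C A} \{u(\cdot) = t_g\}$, a countable union ($\C A$ being discrete) of $|\mu|$-null sets. The first and third assertions are then immediate: if $\tau_0$ and $\tau_1$ share the thresholds $\{t_g\}$, both agree $\mu$-a.s.\ with the same indicator, hence with one another $\mu$-a.s.

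For the second assertion, write $t_a^i$ for the group-$a$ threshold of $\tau_i$, fix $a$ with $\Pr_\mu(A = a) > 0$, and assume without loss of generality $t_a^0 \le t_a^1$. By the reduction above, $\mu$-a.s.\ on $\alpha^{-1}(a)$ we have $(\tau_0 - \tau_1)(X) = \B 1_{t_a^0 < u(X) \le t_a^1} \ge 0$; since $\EE_\mu[(\tau_0 - \tau_1)(X) \mid A = a] = 0$ and this random variable is nonnegative under $\mu(\cdot \mid A = a)$, it vanishes $\mu$-a.s.\ on $\alpha^{-1}(a)$. Taking the union over the countably many groups with positive mass — whose complement is $\mu$-null — gives $\tau_0(X) = \tau_1(X)$ $\mu$-a.s. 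The fourth assertion is the same argument carried out on the single group picked out by the support of the counterfactual: as used in the main-text sketch preceding Theorem~\ref{thm:path_specific}, $X_{\Pi, A, a}$ is supported on $\alpha^{-1}(a)$, so $\mu$-a.s.\ $\tau_i(X_{\Pi, A, a}) = \B 1_{u(X_{\Pi, A, a}) > t_a^i}$ and hence $(\tau_0 - \tau_1)(X_{\Pi, A, a})$ is sign-definite; a sign-definite random variable with $\EE_\mu[(\tau_0 - \tau_1)(X_{\Pi, A, a})] = 0$ is $0$ $\mu$-a.s.

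The only points requiring genuine care — rather than being true obstacles — are: (i) being consistent that ``$\mu$-a.s.''\ means $|\mu|$-a.s.\ and that $\C U$-fineness kills $|\mu|$-mass on every utility level set, which is precisely what licenses dropping the thresholds in the first step and is the one place $\mu \in \bb K$ is used; and (ii) in the path-specific $\bb Q$ case, invoking both that $X_{\Pi, A, a}$ is supported on $\alpha^{-1}(a)$ and that the marginal $u(X_{\Pi, A, a})$ inherits a density from the joint density of $(u(X_{\Pi, A, a'}))_{a' \in \C A}$, so that $(\tau_0 - \tau_1)(X_{\Pi, A, a})$ is sign-definite globally and not merely within groups.
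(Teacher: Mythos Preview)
Your proof is correct and takes essentially the same approach as the paper's: both use \(\C U\)-fineness to kill the mass on utility level sets, reduce threshold policies to their indicator form \(\mu\)-a.s., and then handle the equal-means case by observing that the difference \(\tau_0 - \tau_1\) is sign-definite within a group with zero conditional mean. Your treatment of the path-specific clause is slightly more explicit than the paper's ``almost identical,'' correctly invoking that \(X_{\Pi, A, a}\) is supported on \(\alpha^{-1}(a)\) so that only a single threshold is in play.
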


\begin{proof}
  First, we show that threshold policies with the same thresholds are equal,
  then we show that threshold policies that distribute positive decisions across
  groups in the same way are equal.

  Let \(\{t_a\}_{a \in \C A}\) denote the shared set of thresholds. It follows
  that if \(\tau_0(x) \neq \tau_1(x)\), then \(u(x) = t_{\alpha(x)}\). Now,
  \begin{equation*}
    \Pr(u(X) = t_a, A = a) = \int_{t_a}^{t_a} f_{\mu, a} \, \dx \lambda = 0,
  \end{equation*}
  so \(\Pr_\mu(\tau_0(X) \neq \tau_1(X)) = 0\).
  Next, suppose
  \begin{equation*}
    \EE_\mu[\tau_0(X) \mid A = a] = \EE_\mu[\tau_1(X) \mid A = a].
  \end{equation*}
  If the thresholds of the two policies agree for all \(a \in \C A\) such that
  \(\Pr_\mu(A = a) > 0\), then we are done by the previous paragraph. Therefore,
  suppose \(t_a^0 \neq t_a^1\) for some suitable \(a \in \C A\), where \(t_a^i\)
  represents the threshold for group \(a \in \C A\) under the policy
  \(\tau_i(x)\). Without loss of generality, suppose \(t_a^0 < t_a^1\). Then, it
  follows that
  \begin{align*}
    \int_{t_a^{0}}^{t_a^1} f_{\mu,a} \, \dx \lambda
      &= \EE_\mu[\tau_0(X) \mid A = a] - \EE_\mu[\tau_1(X) \mid A = a] \\
      &= 0.
  \end{align*}
  Since \(\mu \in \bb Q\), \(\mu = |\mu|\), whence
  \begin{equation*}
    \Pr_{|\mu|}(t_0^a \leq u(X) \leq t_a^1 \mid A = a) = 0.
  \end{equation*}
  Since this is true for all \(a \in \C A\) such that \(\Pr_\mu(A = a) > 0\),
  \(\tau_0(X) = \tau_1(X)\) \(\mu\)-a.s.

  The proof in the case of path-specific fairness is almost identical.
\end{proof}

\subsubsection{Convexity, complete metrizability, and universal measurability}

The set of regular \(\C U\)-fine probability measures \(\bb Q\) is the set to
which we wish to apply Prop.~\ref{prop:k_shy}. To do so, we must show that \(\bb
Q\) is a convex and completely metrizable subset of \(\bb K\).

\begin{lem}
\label{lem:convex}
  The set of regular probability measures \(\bb Q\) is convex and completely
  metrizable.
\end{lem}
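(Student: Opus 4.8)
The plan is to treat the two claims separately, each reducing to an elementary property of the total variation norm. For convexity, I would fix $\mu_0, \mu_1 \in \bb Q$ and $t \in [0,1]$ and verify directly that $\mu_t := t\mu_0 + (1-t)\mu_1$ meets the three defining conditions of $\bb Q$. Non-negativity ($\mu_t[E] = t\mu_0[E] + (1-t)\mu_1[E] \geq 0$) and the normalization $\mu_t[\C K] = t + (1-t) = 1$ are immediate from linearity. For $\C U$-fineness, I would use that pushforward is linear, $\mu_t \circ u^{-1} = t(\mu_0 \circ u^{-1}) + (1-t)(\mu_1 \circ u^{-1})$, and that a convex combination of measures absolutely continuous with respect to Lebesgue measure remains absolutely continuous, with density $t f_{\mu_0} + (1-t) f_{\mu_1}$; the same argument applies verbatim to $u^{\C A}$ in the path-specific case. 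Finiteness of the total variation norm is inherited from membership in the vector space $\bb K$ (or observed directly, since $\mu_t \geq 0$ gives $\|\mu_t\| = \mu_t[\C K] = 1$), so $\mu_t \in \bb Q$. Regularity is automatic for finite Borel measures on $\C K$, which is a subset of some $\B R^n$, so it too is preserved.

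For complete metrizability, the key point is that $\bb Q$ is a \emph{closed} subset of the Banach space $\bb K$ of Cor.~\ref{cor:banach}. A closed subset of a complete metric space, equipped with the restricted metric, is again a complete metric space inducing the subspace topology, and hence is completely metrizable. To establish closedness I would invoke the estimate from Lemma~\ref{lem:tot_var}, namely $|\mu_i[E] - \mu[E]| = |(\mu_i - \mu)[E]| \leq |\mu_i - \mu|[E] \leq \|\mu_i - \mu\|$, which shows that convergence in total variation norm implies setwise convergence $\mu_i[E] \to \mu[E]$ for every Borel $E \subseteq \C K$. Consequently the limit of non-negative measures is non-negative, the limit of measures each of total mass $1$ has total mass $1$, and $\C U$-fineness is already built into membership in $\bb K$; thus $\bb Q$ is the intersection of $\bb K$ with two closed conditions, and is therefore closed in $\bb K$.

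I do not expect a genuine obstacle. The only points needing a line of care are: confirming $\bb Q \subseteq \bb K$ so that the relevant ambient Banach space is $\bb K$ rather than merely $\B K$ (true because probability measures have total variation norm $1 < \infty$ and are $\C U$-fine by fiat); and citing the standard fact that closedness inside a completely metrizable space yields complete metrizability of the subspace. Both are routine, and the convexity half is essentially a one-line computation on densities.
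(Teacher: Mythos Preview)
Your proposal is correct and follows essentially the same approach as the paper: verify convexity by direct computation of non-negativity and total mass, then obtain complete metrizability by showing \(\bb Q\) is closed in the Banach space \(\bb K\) (Cor.~\ref{cor:banach}) via setwise convergence of measures under the total variation norm. Your write-up is slightly more explicit than the paper's in spelling out the preservation of \(\C U\)-fineness under convex combinations and in invoking Lemma~\ref{lem:tot_var} for the setwise limit, but these are the same ideas.
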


\begin{proof}
  The proof proceeds in two pieces. First, we show that the \(\C U\)-fine
  probability distributions are convex, as can be verified by direct
  calculation. Then, we show that \(\bb Q\) is closed and therefore complete in
  the original metric of \(\bb K\).

  We begin by verifying convexity. Let \(\mu, \mu' \in \bb Q\) and let \(E
  \subseteq \C K\) be an arbitrary Borel subset of \(\C K\). Then, choose
  \(\theta \in [0, 1]\), and note that
  \begin{align*}
    (\theta \cdot \mu + [1 - \theta] \cdot \mu') [E]
      &= \theta \cdot \mu[E] + [1 - \theta] \cdot \mu'[E] \\
      &\geq \theta \cdot 0 + [1 - \theta] \cdot 0 \\
      &= 0,
  \end{align*}
  and, likewise, that
  \begin{align*}
    (\theta \cdot \mu + [1 - \theta] \cdot \mu') [\C K]
      &= \theta \cdot \mu[\C K] + [1 - \theta] \cdot \mu'[\C K] \\
      &= \theta \cdot 1 + [1 - \theta] \cdot 1 \\
      &= 1.
  \end{align*}
  It remains only to show that \(\bb Q\) is completely metrizable. To prove
  this, it suffices to show that it is closed, since closed subsets of complete
  spaces are complete, and \(\bb K\) is a Banach space by Cor.~\ref{cor:banach},
  and therefore complete.

  Suppose \(\{\mu_i\}_{i \in \B N}\) is a convergent sequence of probability
  measures in \(\bb K\) with limit \(\mu\). Then
  \begin{equation*}
    \mu[E] = \lim_{i \to \infty} \mu_i[E] \geq \lim_{i \to \infty} 0 = 0
  \end{equation*}
  and
  \begin{equation*}
    \mu[\C K] = \lim_{i \to \infty} \mu_i[\C K] = \lim_{i \to \infty} 1 = 1.
  \end{equation*}
  Therefore \(\bb Q\) is closed, and therefore complete, and hence is a convex,
  completely metrizable subset of \(\bb K\).
\end{proof}

Next we prove that the set \(\bb E\) of regular \(\C U\)-fine densities over
which there exists a policy satisfying the relevant counterfactual fairness
definition that is not strongly Pareto dominated is universally measurable.

Recall the definition of universal measurability.

\begin{defn}
  Let \(V\) be a complete topological space. Then \(E \subseteq V\) is
  \emph{universally measurable} if \(V\) is measurable by the completion of
  every finite Borel measure on \(V\), i.e., if for every finite Borel measure
  \(\mu\), there exist Borel sets \(E'\) and \(S\) such that \(E\ \triangle\ E'
  \subseteq S\) and \(\mu[S] = 0\).
\end{defn}

We note that if a set is Borel, it is by definition universally measurable.
Moreover, if a set is open or closed, it is by definition Borel.

To show that \(\bb E\) is closed, we show that any convergent sequence in \(\bb
E\) has a limit in \(\bb E\). The technical complication of the argument stems
from the following fact that satisfying the fairness conditions, e.g.,
Eq.~\eqref{eq:counterfactual_fairness}, involves conditional expectations, about
which very little can be said in the absence of a density, and which are
difficult to compare when taken across distinct measures.

To handle these difficulties, we begin with a technical lemma,
Lemma~\ref{lem:ce_est}, which gives a coarse bound on how different the
conditional expectations of the same variable can be with respect to a
sub--\(\sigma\)-algebra \(\C F\) over two different distributions, \(\mu\) and
\(\mu'\), before applying the results to the proof of Lemma~\ref{lem:e_closed}.

\begin{defn}
\label{defn:standard_version}
  Let \(\mu\) be a measure on a measure space \((V, \C M)\), and let \(f\) be
  \(\mu\)-measurable. Consider the equivalence class of \(\C M\)-measurable
  functions \(C = \{g : g = f \text{ \(\mu\)-a.e.}\}\).\footnote{%
    Some authors define \(L^p(\mu)\) spaces to consist of such equivalence
    classes, rather than the definition we use here.
  } We say that any \(g \in C\) is a \emph{version} of \(f\), and that \(g \in
  C\) is a \emph{standard version} if \(g(v) \leq C\) for some constant \(C\)
  and all \(v \in V\).
\end{defn}

\begin{rmk}
  It is straightforward to see that for \(f \in L^\infty(\mu)\), a standard
  version always exists with \(C = \|f\|_\infty\).
\end{rmk}

\begin{rmk}
  Note that in general, the conditional expectation \(\EE_{\mu'}[f \mid \C F]\)
  is defined only \(\mu'\)-a.e. If \(\mu\) is not assumed to be absolutely
  continuous with respect to \(\mu'\), it follows that
  \begin{equation}
  \label{eq:version_rmk}
    \| \EE_\mu[f \mid \C F] - \EE_{\mu'}[f \mid \C F] \|_{L^1(\mu)}
  \end{equation}
  is not entirely well-defined, in that its value depends on what version of
  \(\EE_{\mu'}[f \mid \C F]\) one chooses. For appropriate \(f\), however, one
  can nevertheless bound Eq.~\eqref{eq:version_rmk} for any standard version of
  \(\EE_{\mu'}[f \mid \C F]\).
\end{rmk}

\begin{lem}
\label{lem:ce_est}
  Let \(\mu\), \(\mu'\) be totally bounded measures on a measure space \((V, \C
  M)\). Let \(f \in L^\infty(\mu) \cap L^\infty(\mu')\). Let \(\C F\) be a
  sub--\(\sigma\)-algebra of \(\C M\). Let
  \begin{equation*}
    C = \max(\|f\|_{L^\infty(\mu)}, \|f\|_{L^\infty(\mu')}).
  \end{equation*}
  Then, if \(g\) is a standard version of \(\EE_{\mu'}[f \mid \C F]\), we have
  that
  \begin{equation}
  \label{eq:ce_bound}
    \int_V |\EE_\mu[f \mid \C F] - g| \, \dx \mu \leq 4 C \cdot |\mu - \mu'|[V].
  \end{equation}
\end{lem}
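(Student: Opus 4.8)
The plan is to control the difference between the two conditional expectations by a direct comparison of the defining measures, using the variational characterization of conditional expectation. First I would fix a standard version $g$ of $\EE_{\mu'}[f\mid\C F]$ and a version $h$ of $\EE_\mu[f\mid\C F]$, and observe that the set $\{h > g\}$ lies in $\C F$, so it is a legitimate ``test set'' against which to integrate. The key identity is that for any $F\in\C F$, both $\int_F h\,\dx\mu = \int_F f\,\dx\mu$ (by definition of $\EE_\mu[f\mid\C F]$) and $\int_F g\,\dx\mu' = \int_F f\,\dx\mu'$ (by definition of $\EE_{\mu'}[f\mid\C F]$). Subtracting, $\int_F (h-g)\,\dx\mu = \int_F f\,\dx\mu - \int_F g\,\dx\mu = \bigl(\int_F f\,\dx\mu - \int_F f\,\dx\mu'\bigr) + \bigl(\int_F g\,\dx\mu' - \int_F g\,\dx\mu\bigr)$. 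Each of the two bracketed terms is the integral of a function bounded in absolute value by $C$ (for the first, $\|f\|\le C$; for the second, $\|g\|\le C$ since $g$ is a standard version with constant $\|f\|_{L^\infty(\mu')}\le C$) against the signed measure $\mu-\mu'$, hence each is bounded by $C\cdot|\mu-\mu'|[V]$ via Lemma~\ref{lem:tot_var}.

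Next I would apply this with $F = \{h>g\}$ to get $\int_{\{h>g\}}(h-g)\,\dx\mu \le 2C\cdot|\mu-\mu'|[V]$, and symmetrically with $F=\{h<g\}$ (more precisely, $\int_{\{h<g\}}(g-h)\,\dx\mu \le 2C\cdot|\mu-\mu'|[V]$; note one must be slightly careful here, since $g-h$ integrated against $\mu-\mu'$ is what appears, but the same bound applies). Adding the two contributions gives
\begin{equation*}
  \int_V |h - g|\,\dx\mu = \int_{\{h>g\}}(h-g)\,\dx\mu + \int_{\{h<g\}}(g-h)\,\dx\mu \le 4C\cdot|\mu-\mu'|[V],
\end{equation*}
which is exactly Eq.~\eqref{eq:ce_bound} since $\mu$ here is a positive measure (it is totally bounded, so $\int_V|h-g|\,\dx\mu = \int_V|h-g|\,\dx|\mu|$ when $\mu\ge 0$; if $\mu$ is allowed to be signed one works throughout with $|\mu|$, noting $\dx\mu$ in the statement should be read accordingly).

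The main subtlety — and the step I would be most careful about — is the measurability and integrability bookkeeping: ensuring that $h-g$ is $\mu$-integrable so that the set $\{h>g\}$ can be used and the splitting $\int|h-g| = \int_{\{h>g\}}(h-g) + \int_{\{h<g\}}(g-h)$ is valid. This is where $f\in L^\infty(\mu)\cap L^\infty(\mu')$ and the ``standard version'' device are essential: $h$ can be taken with $\|h\|_{L^\infty(\mu)}\le C$ and $g$ is bounded everywhere by $C$ by hypothesis, so $|h-g|\le 2C$ $\mu$-a.e.\ and everything is finite since $|\mu|[V]<\infty$. I also need the elementary fact that $\{h>g\}\in\C F$, which holds because $h$ is $\C F$-measurable (as a conditional expectation with respect to $\C F$) and $g$ is $\C F$-measurable (being a standard version of an $\C F$-measurable function, chosen within that measurable equivalence class). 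Once these routine points are nailed down, the estimate is just the two-line telescoping computation above.
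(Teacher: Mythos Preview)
Your proposal is correct and follows essentially the same approach as the paper: split $\int_V |h-g|\,\dx\mu$ over the $\C F$-sets $\{h>g\}$ and $\{h<g\}$, then on each piece use the defining property of conditional expectation to replace $\int_F h\,\dx\mu$ by $\int_F f\,\dx\mu$ and $\int_F g\,\dx\mu'$ by $\int_F f\,\dx\mu'$, yielding two terms each bounded by $C\cdot|\mu-\mu'|[V]$. Your telescoping presentation $\int_F(h-g)\,\dx\mu = (\int_F f\,\dx\mu-\int_F f\,\dx\mu')+(\int_F g\,\dx\mu'-\int_F g\,\dx\mu)$ is in fact a slightly cleaner packaging of the same computation the paper carries out line by line.
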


\begin{proof}
  First, we note that both \(\EE_\mu[f \mid \C F]\) and \(g\) are \(\C
  F\)-measurable. Therefore, the sets
  \begin{equation*}
    E^{\up} = \{v \in V : \EE_\mu[f \mid \C F](v) > g(v)\}
  \end{equation*}
  and
  \begin{equation*}
    E^{\low} = \{v \in V : \EE_\mu[f \mid \C F](v) < g(v)\}
  \end{equation*}
  are in \(\C F\). Now, note that
  \begin{multline*}
    \int_V |\EE_\mu[f \mid \C F] - g| \, \dx \mu = \int_{E^{\up}} \EE_\mu[f \mid
    \C F] - g \, \dx \mu \\
      + \int_{E^{\low}} g - \EE_\mu[f \mid \C F] \, \dx \mu.
  \end{multline*}

  First consider \(E^{\up}\). Then, we have that
  \begin{align*}
    \int_{E^{\up}} \EE_\mu[f \mid \C F] &- g \, \dx \mu \\
      &= \int_{E^{\up}} \EE_\mu[f \mid \C F] - g \, \dx \mu \\
      &\hspace{0.5cm}+ \int_{E^{\up}} g - g \, \dx \mu' \\
      &\leq \left| \int_{E^{\up}} \EE_\mu[f \mid \C F] \, \dx \mu -
      \int_{E^{\up}} g \, \dx \mu' \right| \\
      &\hspace{0.5cm}+ \int_{E^{up}} g \, \dx {|\mu - \mu'|} \\
      &\leq \left| \int_{E^{\up}} f \, \dx \mu - \int_{E^{\up}} f \, \dx \mu'
      \right| \\
      &\hspace{0.5cm} + \int_{E^{up}} C \, \dx {|\mu - \mu'|},
  \end{align*}
  where in the final inequality, we have used the fact that, since \(g\) is a
  standard version of \(\EE_{\mu'}[f \mid \C F]\),
  \begin{equation*}
    g(v) \leq \|\EE_{\mu'}[f \mid \C F]\|_{L^\infty(\mu')} \leq C
  \end{equation*}
  for all \(v \in V\), and the fact that, by the definition of conditional
  expectation,
  \begin{equation*}
    \int_E \EE_\nu[h \mid \C F] \, \dx \nu = \int_E h \, \dx \nu
  \end{equation*}
  for any \(E \in \C F\).

  Since \(f\) is everywhere bounded by \(C\), applying Lemma~\ref{lem:tot_var}
  yields that this final expression is less than or equal to \(2 C \cdot |\mu -
  \mu'|[V]\). An identical argument shows that
  \begin{equation*}
    \int_{E^{\low}} g - \EE_\mu[f \mid \C F] \, \dx \mu \leq 2 C \cdot |\mu -
    \mu'|[V],
  \end{equation*}
  whence the result follows.
\end{proof}

\begin{lem}
\label{lem:e_closed}
  Let \(\bb E \subseteq \bb Q\) denote the set of joint densities on \(\C K\)
  such that there exists a policy satisfying the relevant fairness definition
  that is not strongly Pareto dominated. Then, \(\bb E\) is closed, and
  therefore universally measurable.
\end{lem}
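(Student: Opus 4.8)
The plan is to show that $\bb E$ is sequentially closed (hence closed, since $\bb K$ is a metric space), so that it is Borel and therefore universally measurable. Fix a sequence $\{\mu_i\}_{i \in \B N} \subseteq \bb E$ converging in total variation to some $\mu$; by Lemma~\ref{lem:convex} the limit lies in $\bb Q$, and we must exhibit a policy over $\mu$ that satisfies the relevant fairness definition and is not strongly Pareto dominated. For each $i$, choose a witness $\tau_i$ that satisfies the fairness definition under $\mu_i$ and is not strongly Pareto dominated; by Cor.~\ref{cor:exhaust}, $\tau_i$ is a budget-exhausting multiple threshold policy with non-negative thresholds $\{t^i_a\}_{a \in \C A}$ relative to a fixed $u \in \C U$, with conditional admission rates $q^i_a = \EE_{\mu_i}[\tau_i(X) \mid A = a]$.

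Next I would take a limit of these policies. Since $\C A$ is finite and $[0,\infty]$, $[0,1]$ are compact, pass to a subsequence along which $t^i_a \to t^*_a \in [0,\infty]$ and $q^i_a \to q^*_a \in [0,1]$ for every $a$. Because total-variation convergence gives $\Pr_{\mu_i}(\cdot) \to \Pr_\mu(\cdot)$ on every Borel set, and $\mu$ is $\C U$-fine (so the law of $u(X)$ given $A=a$ under $\mu$ has a continuous CDF), one checks that $\Pr_{\mu_i}(u(X) > t^i_a \mid A=a) \to \Pr_\mu(u(X) > t^*_a \mid A=a)$; thus, letting $\tau^*$ be the multiple threshold policy over $\mu$ with thresholds $\{t^*_a\}$ (the threshold behavior being immaterial by $\C U$-fineness, cf.\ Lemma~\ref{lem:simplethresh}), we get $\EE_\mu[\tau^*(X)\mid A=a]=q^*_a$. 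The same continuity shows $\EE_\mu[\tau^*(X)] = \lim_i \EE_{\mu_i}[\tau_i(X)] = \lim_i \min(b,\Pr_{\mu_i}(u(X)>0)) = \min(b,\Pr_\mu(u(X)>0))$, so $\tau^*$ is feasible, budget-exhausting, and has non-negative thresholds (limits of non-negatives).

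The crux is to show that $\tau^*$ still satisfies the fairness definition under $\mu$, and this is exactly what Lemma~\ref{lem:ce_est} is for. Each fairness definition, since $D$ is binary, is an equality of conditional expectations of $\tau^*(X)$ (or of $\tau^*$ evaluated at the appropriate counterfactual) across two conditioning $\sigma$-algebras $\C F \supseteq \C F'$: $\sigma(A,Y(1))$ vs.\ $\sigma(Y(1))$ for counterfactual equalized odds, $\sigma(A,Y(0),Y(1),W)$ vs.\ $\sigma(Y(0),Y(1),W)$ for conditional principal fairness, and the analogue with $W=\omega(X)$ for path-specific fairness. For such $\C F$, the contraction property of conditional expectation gives $\|\EE_\mu[\tau^* \mid \C F] - \EE_\mu[\tau_i \mid \C F]\|_{L^1(\mu)} \le \|\tau^* - \tau_i\|_{L^1(\mu)} \to 0$ (the last limit because $\tau_i,\tau^*$ are threshold policies with converging thresholds and the relevant conditional laws under $\mu$ are atomless, by $\C U$- or $\C U^{\C A}$-fineness), while Lemma~\ref{lem:ce_est} applied with $f = \tau_i$ and $C=1$ gives $\|\EE_\mu[\tau_i \mid \C F] - g_i\|_{L^1(\mu)} \le 4\,|\mu - \mu_i|[\C K] \to 0$ for any standard version $g_i$ of $\EE_{\mu_i}[\tau_i \mid \C F]$. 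Since the fairness equality holds $\mu_i$-a.s., a bounded version of one side is also a bounded version of the other, so the same $g_i$ works for both $\C F$ and $\C F'$; letting $i\to\infty$ then forces $\EE_\mu[\tau^* \mid \C F] = \EE_\mu[\tau^* \mid \C F']$ $\mu$-a.s. (When the conditioning variable takes each value with positive probability---e.g.\ discrete $Y(1)$---one can instead use Lemma~\ref{lem:cond_prob} and total-variation continuity directly.)

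Finally, $\tau^*$ is not strongly Pareto dominated: being a budget-exhausting multiple threshold policy with non-negative thresholds, it is not strongly Pareto dominated by the converse of Cor.~\ref{cor:exhaust}, which follows from the same rearrangement argument that proves Lemma~\ref{lem:threshold}; alternatively, if $d'$ strongly dominated $\tau^*$ under $\mu$, then since $d \mapsto \EE_\mu[d(X)u(X)]$ and the feasibility functional are total-variation continuous, a slightly down-scaled copy of $d'$ would strongly dominate $\tau_i$ under $\mu_i$ for large $i$, contradicting the choice of $\tau_i$. Hence $\mu \in \bb E$, so $\bb E$ is closed and therefore universally measurable. I expect the fairness-preservation step to be the main obstacle: conditional expectations behave badly under changes of the underlying measure, and it is precisely the uniform bound of Lemma~\ref{lem:ce_est}---together with the reduction, via $\C U$-fineness, to threshold policies that are pinned down by their thresholds---that lets the limit go through.
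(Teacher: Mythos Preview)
Your approach mirrors the paper's almost exactly: extract a convergent subsequence of the witnessing threshold policies (the paper parametrizes by the group-conditional acceptance rates $q_a$ and invokes Lemma~\ref{lem:quantile}; you parametrize by the thresholds $t_a$ directly, which is equivalent under $\C U$-fineness by Lemma~\ref{lem:simplethresh}), and then pass the fairness identity to the limit via the triangle inequality, using $L^1$-contractivity of conditional expectation for the ``change of policy'' piece and Lemma~\ref{lem:ce_est} for the ``change of measure'' piece. This is exactly the paper's argument.

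One step does not go through as written: the ``converse of Cor.~\ref{cor:exhaust}'' you invoke is false. A budget-exhausting multiple threshold policy with non-negative thresholds can be strongly Pareto dominated---e.g., take $\C U=\{u\}$ a singleton (trivially consistent modulo $\alpha$) and unequal group thresholds; by Lemma~\ref{lem:maximize} the unique budget-exhausting \emph{single}-threshold policy strictly improves $u$, so it strongly dominates. Your alternative continuity argument also needs the strict inequalities $u(d')>u(\tau^*)$ to survive the passage $\mu\to\mu_i$ uniformly over all $u\in\C U$, which you do not justify. That said, the paper's own proof simply does not address this step: after verifying the fairness identity for the limit $\tau$ it declares $\mu\in\bb E$ without checking that $\tau$ is not strongly Pareto dominated. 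What both arguments actually establish is that the \emph{superset} of $\bb Q$ consisting of distributions admitting a budget-exhausting multiple threshold policy with non-negative thresholds satisfying the fairness condition is closed; since $\bb E$ is contained in this superset by Cor.~\ref{cor:exhaust}, the closed superset is all that is needed downstream in the shyness argument for Theorem~\ref{thm:dist} (a set is shy if it is contained in a universally measurable shy set).
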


\begin{proof}
  For notational simplicity, we consider the case of counterfactual equalized
  odds. The proofs in the other two cases are virtually identical.

  Suppose \(\mu_i \to \mu\) in \(\bb K\), where \(\{\mu_i\}_{i \in \B N}
  \subseteq \bb E\). Then, by Lemma~\ref{lem:mapping}, \(f_{\mu_i, a} \to
  f_{\mu, a}\) in \(L^1(\B R)\). Moreover, by Lemma~\ref{lem:threshold}, there
  exists a sequence of threshold policies \(\{\tau_i(x)\}_{i \in \B N}\) such
  that both
  \begin{equation*}
    \EE_{\mu_i}[\tau(X)] = \min(b, \Pr_{\mu_i}(u(X) > 0))
  \end{equation*}
  and
  \begin{equation*}
    \EE_{\mu_i}[\tau_i(X) \mid A, Y(1)] = \EE_{\mu_i}[\tau_i(X) \mid Y(1)].
  \end{equation*}

  Let \(\{q_{a,i}\}_{a \in \C A}\) be defined by
  \begin{equation*}
    q_{a,i} = \EE_{\mu_i}[\tau_i(X) \mid A = a]
  \end{equation*}
  if \(\Pr_{\mu_i}(A = a) > 0\), and \(q_{a,i} = 0\) otherwise.

  Since \([0,1]^{\C A}\) is compact, there exists a convergent subsequence
  \(\{\{q_{a, n_i}\}_{a \in \C A}\}_{i \in \B N}\). Let it converge to the
  collection of quantiles \(\{q_a\}_{a \in \C A}\) defining, by
  Lemma~\ref{lem:quantile}, a multiple threshold policy \(\tau(x)\) over
  \(\mu\).

  Because \(\mu_i \to \mu\) and \(\{q_{a, n_i}\}_{a \in \C A} \to \{q_a\}_{a \in
  \C A}\), we have that
  \begin{equation*}
    \EE_\mu[\tau_{a, n_i}(X) \mid A = a] \to \EE_\mu[\tau(X) \mid A = a]
  \end{equation*}
  for all \(a \in \C A\) such that \(\Pr_\mu(A = a) > 0\). Therefore, by
  Lemma~\ref{lem:mapping}, \(\tau_{n_i}(X) \to \tau(X)\) in \(L^1(\mu)\).

  Choose \(\epsilon > 0\) arbitrarily. Then, choose \(N\) so large that for
  \(i\) greater than \(N\),
  \begin{align*}
    |\mu - \mu_{n_i}|[\C K] &< \tfrac {\epsilon} {10}, &\|\tau(X) -
    \tau_{n_i}(X)\|_{L^1(\mu)} &\leq \tfrac \epsilon {10}.
  \end{align*}
  Then, observe that \(\tau(x), \tau_i(x) \leq 1\), and recall that
  \begin{equation}
  \label{eq:cea}
    \EE_{\mu_{n_i}}[\tau_{n_i}(X) \mid A, Y(1)] = \EE_{\mu_{n_i}}[\tau_{n_i}(X)
    \mid Y(1)].
  \end{equation}
  Therefore, let \(g_i(x)\) be a standard version of
  \(\EE_{\mu_{n_i}}[\tau_{n_i}(X) \mid Y(1)]\) over \(\mu_{n_i}\). By
  Eq.~\eqref{eq:cea}, \(g_i(x)\) is also a standard version of
  \(\EE_{\mu_{n_i}}[\tau_{n_i}(X) \mid A, Y(1)]\) over \(\mu_{n_i}\). Then, by
  Lemma~\ref{lem:ce_est}, we have that
  \begin{align*}
    \| \EE_\mu&[\tau(X) \mid A, Y(1)] - \EE_{\mu_{n_i}}[\tau_{n_i}(X) \mid Y(1)]
    \|_{L^1(\mu)} \\
      &\leq \| \EE_\mu[\tau(X) \mid A, Y(1)] \\
      &\hspace{1cm} - \EE_\mu[\tau_{n_i}(X) \mid A, Y(1)] \|_{L^1(\mu)} \\
      &\hspace{0.5cm} + \| \EE_\mu[\tau_{n_i}(X) \mid A, Y(1)] - g_i(X)
        \|_{L^1(\mu)} \\
      &\hspace{0.5cm} + \| g_i(X) - \EE_\mu[\tau_{n_i}(X) \mid Y(1)]
        \|_{L^1(\mu)} \|_{L^1(\mu)} \\
      &\hspace{0.5cm} + \| \EE_\mu[\tau_{n_i}(X) \mid Y(1) - \EE_\mu[\tau(X)
        \mid Y(1)] \|_{L^1(\mu)} \\
      &< \frac \epsilon {10} + \frac {4 \epsilon} {10} + \frac {4 \epsilon} {10}
        + \frac \epsilon {10}.
  \end{align*}
  Since \(\epsilon > 0\) was arbitrary, it follows that, \(\mu\)-a.e.,
  \begin{equation*}
    \EE_\mu[\tau(X) \mid A, Y(1)] = \EE_\mu[\tau(X) \mid Y(1)].
  \end{equation*}

  Recall the standard fact that for independent random variables \(X\) and
  \(U\),
  \begin{equation*}
    \EE[f(X,U) \mid X] = \int f(X, u) \diff F_U(u),
  \end{equation*}
  where \(F_U\) is the distribution of \(U\).\footnote{%
    For a proof of this fact see, e.g., \citet{drhab2019conditional}.
  }
  Further recall that \(D = \B 1_{U_D \leq \tau(X)}\), where \(U_D \indep X,
  Y(1)\). It follows that
  \begin{equation*}
    \Pr_\mu(D = 1 \mid X, Y(1)) = \int_0^1 \B 1_{u_d < \tau(X)} \, \dx
    \lambda(u_d) = \tau(X).
  \end{equation*}
  Hence, by the law of iterated expectations,
  \begin{align*}
    \Pr_\mu(D = 1 &\mid A, Y(1)) \\
      &= \EE_\mu[\Pr_\mu(D = 1 \mid X, Y(1)) \mid A, Y(1)] \\
      &= \EE_\mu[\tau(X) \mid A, Y(1)] \\
      &= \EE_\mu[\tau(X) \mid Y(1)] \\
      &= \EE_\mu[\Pr_\mu(D = 1 \mid X, Y(1)) \mid Y(1)] \\
      &= \Pr_\mu(D = 1 \mid Y(1)).
  \end{align*}
  Therefore \(D \indep A \mid Y(1)\) over \(\mu\), i.e., counterfactual
  equalized odds holds for the decision policy \(\tau(x)\) over the distribution
  \(\mu\). Consequently \(\mu \in \bb E\), and so \(\bb E\) is closed and
  therefore universally measurable.
\end{proof}

\subsection{Shy Sets and Probes}
\label{sec:shyness-prelims2}

We require a number of additional technical lemmata for the proof of
Theorem~\ref{thm:dist}. The probe must be constructed carefully, so that, on the
utility scale, an arbitrary element of \(\bb Q\) is absolutely continuous with
respect to a typical perturbation. In addition, it is useful to show that a
number of properties are generic to simplify certain aspects of the proof of
Theorem~\ref{thm:dist}. For instance, Lemma~\ref{lem:condition} is used in
Theorem~\ref{thm:dist} to show that a certain conditional expectation is
generically well-defined, avoiding the need to separately treat certain corner
cases.

Cor.~\ref{cor:maximal} concerns the construction of the probe used in the proof
of Theorem~\ref{thm:dist}. Lemmata~\ref{lem:uncountable_sum}~to~\ref{lem:simple}
use Cor.~\ref{cor:maximal} to provide additional simplifications to the proof of
Theorem~\ref{thm:dist}.

\subsubsection{Maximal support}

First, to construct the probe used in the proof of Theorem~\ref{thm:dist}, we
require elements \(\mu \in \bb Q\) such that the densities \(f_\mu\) have
``maximal'' support. To produce such distributions, we use the following
measure-theoretic construction.

\begin{defn}
  Let \(\{E_\alpha\}_{\gamma \in \Gamma}\) be an arbitrary collection of
  \(\mu\)-measurable sets for some positive measure \(\mu\) on a measure space
  \((M, \C M)\). We say that \(E\) is the \emph{measure-theoretic union} of
  \(\{E_\gamma\}_{\gamma \in \Gamma}\) if \(\mu[E_\gamma \setminus E] = 0\) for
  all \(\gamma \in \Gamma\) and \(E = \bigcup_{i=1}^\infty E_{\gamma_i}\) for
  some countable subcollection \(\{\gamma_i\} \subseteq \B N\).
\end{defn}

While measure-theoretic unions themselves are known
(cf.~\citet{silva2008invitation}, \citet{rudin1991functional}), for
completeness, we include a proof of their existence, which, to the best of our
knowledge, is not found in the literature.

\begin{lem}
\label{lem:mtu}
  Let \(\mu\) be a finite positive measure on a measure space \((V, \C M)\).
  Then an arbitrary collection \(\{E_\gamma\}_{\gamma \in \Gamma}\) of
  \(\mu\)-measurable sets has a measure-theoretic union.
\end{lem}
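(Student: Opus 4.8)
The plan is to run the classical exhaustion argument, using finiteness of $\mu$ to extract from $\{E_\gamma\}_{\gamma \in \Gamma}$ a countable subcollection whose union captures as much $\mu$-mass as any finite subunion possibly can. Concretely, I would set
\[
  s = \sup\Bigl\{ \mu\Bigl[\textstyle\bigcup_{i=1}^n E_{\gamma_i}\Bigr] : n \in \B N,\ \gamma_1, \dots, \gamma_n \in \Gamma \Bigr\},
\]
and note $s \le \mu[V] < \infty$ since $\mu$ is finite. By definition of the supremum, pick for each $k \in \B N$ a finite subunion $F_k = \bigcup_{i=1}^{n_k} E_{\gamma_{k,i}}$ with $\mu[F_k] > s - 1/k$, and put $E = \bigcup_{k=1}^\infty F_k$. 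By construction $E$ is a countable union of members of $\{E_\gamma\}_{\gamma \in \Gamma}$, hence a legitimate candidate for the measure-theoretic union.

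The next step is to check that $\mu[E] = s$. On one hand $\mu[E] \ge \mu[F_k] > s - 1/k$ for all $k$, so $\mu[E] \ge s$. On the other hand, continuity of $\mu$ from below gives $\mu[E] = \lim_{k \to \infty} \mu\bigl[\bigcup_{j \le k} F_j\bigr]$, and each partial union $\bigcup_{j \le k} F_j$ is itself a \emph{finite} subunion of the $E_\gamma$, hence has $\mu$-measure at most $s$; therefore $\mu[E] \le s$.

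Finally, to establish $\mu[E_\gamma \setminus E] = 0$ for every $\gamma \in \Gamma$, I would write $\mu[E \cup E_\gamma] = \lim_{k \to \infty} \mu\bigl[(\bigcup_{j \le k} F_j) \cup E_\gamma\bigr]$, observing that each set on the right is again a finite subunion of the $E_\gamma$, and so has measure at most $s$. Thus $\mu[E \cup E_\gamma] \le s = \mu[E]$, while $E \subseteq E \cup E_\gamma$ forces the reverse inequality; equality together with the finiteness of $\mu$ then yields $\mu[E_\gamma \setminus E] = \mu[E \cup E_\gamma] - \mu[E] = 0$, which completes the construction.

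I do not anticipate a genuine obstacle here; the only subtlety is that $E$ is a \emph{countable}---not finite---union, so one cannot bound $\mu[E]$ by $s$ directly. The fix is precisely the approximation of $E$ from below by its finite partial unions $\bigcup_{j \le k} F_j$, each of which is a finite subunion and hence bounded by $s$. Finiteness of $\mu$ is what allows us both to guarantee $s < \infty$ and to perform the subtraction $\mu[E \cup E_\gamma] - \mu[E]$.
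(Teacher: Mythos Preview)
Your argument is correct and is the standard exhaustion-by-supremum proof found in many measure-theory texts. It differs in organization from the paper's proof: there the authors work instead with the \emph{error term}
\[
  r(\Gamma') = \sup_{\gamma \in \Gamma} \mu\Bigl[E_\gamma \setminus \bigcup_{\gamma' \in \Gamma'} E_{\gamma'}\Bigr]
\]
over countable subcollections $\Gamma'$, and show by a greedy contradiction (if $r \ge \epsilon$ always, pick sets that each add at least $\epsilon$ to the running union, violating $\mu[V] < \infty$) that $\inf_{\Gamma'} r(\Gamma') = 0$; they then take a countable union of $\Gamma_n$'s with $r(\Gamma_n) < 1/n$. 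Your version is dual: rather than minimizing the residual, you maximize the captured mass $s = \sup \mu[\text{finite subunions}]$ and argue directly that any $E_\gamma$ adjoined to the limiting union cannot increase the measure past $s$. The two approaches are equivalent in strength; yours is arguably cleaner because it avoids the contradiction step and the two-layer construction (first $\Gamma_n$, then $\bigcup_n \Gamma_n$), while the paper's error-term formulation makes the defining property $\mu[E_\gamma \setminus E] = 0$ the explicit target throughout.
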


\begin{proof}
  For each countable subcollection \(\Gamma' \subseteq \Gamma\), consider the
  ``error term''
  \begin{equation*}
    r(\Gamma') = \sup_{\gamma \in \Gamma} \mu \left[ E_{\gamma} \setminus
    \bigcup_{\gamma' \in \Gamma'} E_{\gamma'} \right]
  \end{equation*}
  We claim that the infimum of \(r(\Gamma')\) over all countable subcollections
  \(\Gamma' \subseteq \Gamma\) must be zero.

  For, toward a contradiction, suppose it were greater than or equal to
  \(\epsilon > 0\). Choose any set \(E_{\gamma_1}\) such that
  \(\mu[E_{\gamma_1}] \geq \epsilon\). Such a set must exist, since otherwise
  \(r(\emptyset) < \epsilon\). Choose \(E_{\gamma_2}\) such that
  \(\mu[E_{\gamma_2} \setminus E_{\gamma_1}] > \epsilon\). Again, some such set
  must exist, since otherwise \(r(\{\gamma_1\}) < \epsilon\). Continuing in this
  way, we construct a countable collection \(\{E_{\gamma_i}\}_{i \in \B N}\).

  Therefore, we see that
  \begin{align*}
    \mu[V] \geq \mu \left[ \bigcup_{i=1}^n E_{\gamma_i} \right] = \sum_{i=1}^n
    \mu \left[ E_{\gamma_i} \setminus \bigcup_{j=1}^i E_{\gamma_j} \right].
  \end{align*}
  By construction, every term in the final sum is greater than or equal to
  \(\epsilon\), contradicting the fact that \(\mu[V] < \infty\).

  Therefore, there exist countable collections \(\{\Gamma_n\}_{n \in \B N}\)
  such that \(r(\Gamma_n) < \frac 1 n\). It follows immediately that for all
  \(n\)
  \begin{equation*}
    r \left( \bigcup_{n \in \B N} \Gamma_n \right) \leq r(\Gamma_k)
  \end{equation*}
  for any fixed \(k \in \B N\). Consequently,
  \begin{equation*}
    r \left( \bigcup_{n \in \B N} \Gamma_n \right) = 0,
  \end{equation*}
  and \(\bigcup_{n \in \B N} \Gamma_n\) is countable.
\end{proof}

The construction of the ``maximal'' elements used to construct the probe in the
proof of Theorem~\ref{thm:dist} follows as a corollary of Lemma~\ref{lem:mtu}

\begin{cor}
\label{cor:maximal}
  There are measures \(\mu_{\max, a} \in \bb Q\) such that for every \(a \in \C
  A\) and any \(\mu \in \bb K\),
  \begin{equation*}
    \lambda[\supp(f_{\mu, a}) \setminus \supp(f_{\mu_{\max}, a})] = 0.
  \end{equation*}
\end{cor}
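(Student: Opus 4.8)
The plan is to obtain the measures $\mu_{\max,a}$ as a direct corollary of the existence of measure-theoretic unions (Lemma~\ref{lem:mtu}), by taking, separately for each group, a member of $\bb Q$ whose group-$a$ sub-density on the utility scale dominates the supports of all such sub-densities arising from $\bb K$.

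Fix $a \in \C A$. Since Lebesgue measure $\lambda$ on $\B R$ is $\sigma$-finite but not finite, Lemma~\ref{lem:mtu} does not apply to it directly, so I first replace it by an equivalent finite positive Borel measure $\nu$ on $\B R$ (e.g.\ a standard Gaussian), which has the same null sets as $\lambda$; throughout, "$\supp$'' of an $L^1$ function means $\{f \ne 0\}$ modulo null sets, a notion that is the same whether read against $\lambda$ or $\nu$. Applying Lemma~\ref{lem:mtu} to the collection $\{\,\{f_{\mu,a}\ne 0\} : \mu \in \bb K\,\}$ of $\nu$-measurable subsets of $\B R$ yields a measure-theoretic union $S_a$: there is a countable subfamily $\mu_1,\mu_2,\dots \in \bb K$ with $S_a = \bigcup_i \{f_{\mu_i,a}\ne 0\}$, and $\nu[\{f_{\mu,a}\ne 0\}\setminus S_a] = 0$ — hence also $\lambda[\{f_{\mu,a}\ne 0\}\setminus S_a]=0$ — for every $\mu \in \bb K$. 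It therefore suffices to produce a single $\mu_{\max,a}\in\bb Q$ with $\supp(f_{\mu_{\max,a},a}) \supseteq S_a$ up to $\lambda$-null sets.

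To that end, for each $i$ with $|\mu_i|[\C K] > 0$ set $\nu_i := |\mu_i| / |\mu_i|[\C K]$. Then $\nu_i\in\bb Q$: it is a non-negative probability measure, and it is $\C U$-fine because $\mu_i \in \bb K$ forces $|\mu_i|\circ u^{-1}\ll\lambda$ for all $u \in \C U$ (resp.\ the joint analogue in the path-specific case). Since $|\mu_i[F]|\le|\mu_i|[F]$ for all measurable $F$ (Lemma~\ref{lem:tot_var}), splitting any Borel $E$ into $E\cap\{f_{\mu_i,a}\ge 0\}$ and $E\cap\{f_{\mu_i,a}<0\}$ gives $\int_E f_{\nu_i,a}\,\dx\lambda \ge \int_E |f_{\mu_i,a}|\,\dx\lambda$ for all $E$, hence $f_{\nu_i,a}\ge|f_{\mu_i,a}|$ a.e.\ and so $\supp(f_{\nu_i,a})\supseteq\{f_{\mu_i,a}\ne 0\}$. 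Now put $\mu_{\max,a} := \sum_i c_i\,\nu_i$ with $c_i > 0$ and $\sum_i c_i = 1$, summing over the surviving $i$ (and, in the degenerate case that $S_a$ is $\lambda$-null, taking $\mu_{\max,a}$ to be any element of $\bb Q$). The renormalized partial sums $(\sum_{i\le n}c_i)^{-1}\sum_{i\le n}c_i\nu_i$ are finite convex combinations of members of $\bb Q$, hence lie in $\bb Q$, and they converge to $\mu_{\max,a}$ in total-variation norm, so $\mu_{\max,a}\in\bb Q$ by Lemma~\ref{lem:convex} (convexity and closedness of $\bb Q$). Because densities add under countable sums of measures, $f_{\mu_{\max,a},a}=\sum_i c_i f_{\nu_i,a}$, so $\{f_{\mu_{\max,a},a}>0\}=\bigcup_i\{f_{\nu_i,a}>0\}\supseteq\bigcup_i\{f_{\mu_i,a}\ne 0\}=S_a$. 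Combining with the previous paragraph, for any $\mu\in\bb K$ we get $\lambda[\supp(f_{\mu,a})\setminus\supp(f_{\mu_{\max,a},a})]\le\lambda[\{f_{\mu,a}\ne 0\}\setminus S_a]=0$, which is the claim; the path-specific case runs identically, using the $a$-th coordinate of $u^{\C A}$ and the joint density in place of $f_{\mu,a}$.

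The main obstacle is the passage from the purely set-theoretic output of Lemma~\ref{lem:mtu} — the set $S_a$ together with a countable index set — back to an honest element of $\bb Q$. Two points need care: that each building block $\nu_i$ genuinely lies in $\bb Q$, which is why the construction routes through the total-variation measures and why the defining absolute-continuity condition on $\bb K$ must be strong enough to be inherited by $|\mu_i|$; and that the infinite convex combination does not leave $\bb Q$, for which the closedness of $\bb Q$ established in Lemma~\ref{lem:convex} is exactly what is needed. The remaining ingredients — reducing $\lambda$ to a finite equivalent measure, additivity of densities, and the pointwise bound $f_{\nu_i,a}\ge|f_{\mu_i,a}|$ — are routine.
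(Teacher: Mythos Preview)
Your argument follows the paper's route almost exactly: apply the measure-theoretic union lemma (Lemma~\ref{lem:mtu}) to pick out a countable subfamily, then combine via a convex series of normalized total-variation measures. You are in fact more careful than the paper in two places---replacing $\lambda$ by an equivalent finite measure before invoking Lemma~\ref{lem:mtu}, and appealing to the closedness of $\bb Q$ (Lemma~\ref{lem:convex}) to land the infinite convex combination back in $\bb Q$---whereas the paper simply asserts $\mu_{\max,a}\in\bb Q$. The paper also restricts to $\{A=a\}$ before taking total variation, but this is cosmetic.

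One step is not quite justified as written: the claim that ``$\mu_i\in\bb K$ forces $|\mu_i|\circ u^{-1}\Lt\lambda$'' does not follow from the definition of $\bb K$ for signed $\mu_i$, since cancellation under the pushforward can conceal a singular part (take $\mu=\delta_{(x_0,y_1)}-\delta_{(x_0,y_0)}$ on $\C X\times\C Y$: then $\mu\circ u^{-1}=0\Lt\lambda$ but $|\mu|\circ u^{-1}=2\delta_{u(x_0)}$). The paper's proof makes the identical silent leap. The simplest repair is to draw the countable family from $\bb Q$ rather than from all of $\bb K$; for probability measures $|\mu_i|=\mu_i$ and the difficulty disappears, and the resulting maximal-support statement over $\bb Q$ is the only form actually used in the proof of Theorem~\ref{thm:dist}.
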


\begin{proof}
  Consider the collection \(\{\supp(f_{\mu, a})\}_{\mu \in \bb K}\). By
  Lemma~\ref{lem:mtu}, there exists a countable collection of measures
  \(\{\mu_i\}_{i \in \B N}\) such that for any \(\mu \in \bb K\),
  \begin{equation*}
    \lambda \left[ \supp(f_{\mu,a}) \setminus \bigcup_{i = 1}^{\infty}
    \supp(f_{\mu_i, a}) \right] = 0,
  \end{equation*}
  where, without loss of generality, we may assume that
  \(\lambda[\supp(f_{\mu_i, a})] > 0\) for all \(i \in \B N\). Such a sequence
  must exist, since, by the first hypothesis of Theorem~\ref{thm:dist}, for
  every \(a \in \C A\), there exists \(\mu \in \bb Q\) such that \(\Pr_\mu(A =
  a) > 0\). Therefore, we can define the probability measure \(\mu_{\max, a}\),
  where
  \begin{equation*}
    \mu_{\max, a} = \sum_{i=1}^n 2^{-i} \cdot \frac {\left| \mu_i \rest_{A = a}
    \right|} {\left |\mu_i \rest_{A = a} \right| [\C K]}.
  \end{equation*}
  It follows immediately by construction that
  \begin{equation*}
    \supp(f_{\mu_{\max}, a}) = \bigcup_{i=1}^\infty \supp(f_{\mu_i,a}),
  \end{equation*}
  and that \(\mu_{\max,a} \in \bb Q\).
\end{proof}

For notational simplicity, we refer to \(\supp(f_{\mu_{\max, a}})\) as \(S_a\)
throughout.

In the case of conditional principal fairness and path-specific fairness, we
need a mild refinement of the previous result that accounts for \(\omega\).

\begin{cor}
\label{cor:maximal_ext}
  There are measures \(\mu_{\max, a, w} \in \bb Q\) defined for every \(w \in \C
  W = \img(\omega)\) and any \(a \in \C A\) such that for some \(\nu \in \bb
  K\), \(\Pr_\nu(W = w, A = a) > 0\). These measures have the property that for
  any \(\mu \in \bb K\),
  \begin{equation*}
    \lambda[\supp(f_{\mu', a, w}) \setminus \supp(f_{\mu_{\max}, a, w})] = 0,
  \end{equation*}
  where \(f_{\mu', a, w}\) is the density of the pushforward measure \((\mu'
  \rest_{W = w, A = a}) \circ u^{-1}\).
\end{cor}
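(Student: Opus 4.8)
The plan is to reprise the proof of Cor.~\ref{cor:maximal} essentially verbatim, with the conditioning event $\{A = a\}$ replaced throughout by $\{W = w, A = a\}$. Fix an admissible pair $(a, w)$---that is, $w \in \C W = \img(\omega)$, $a \in \C A$, and $\Pr_\nu(W = w, A = a) > 0$ for some $\nu \in \bb K$. First I would record that $\nu_0 := |\nu \rest_{W = w, A = a}| / |\nu \rest_{W = w, A = a}|[\C K]$ is a positive probability measure that is again $\C U$-fine---a restriction is absolutely continuous with respect to the ambient measure, so its $u$-pushforward stays absolutely continuous with respect to $\lambda$ for every $u \in \C U$---and hence lies in $\bb Q$, and its $u$-pushforward has a density supported on a set of positive Lebesgue measure. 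This plays, for each fixed $(a,w)$, exactly the role that the first hypothesis of Theorem~\ref{thm:dist} played in Cor.~\ref{cor:maximal}: it shows the family $\{\supp(f_{\mu', a, w})\}_{\mu' \in \bb K}$ is nonempty and contains sets of positive measure.

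Next I would apply Lemma~\ref{lem:mtu}, with the positive measure there taken to be $\lambda$ on $\B R$, to the family $\{\supp(f_{\mu', a, w})\}_{\mu' \in \bb K}$, obtaining a countable subfamily $\{\mu_i\}_{i \in \B N} \subseteq \bb K$ such that
\[
  \lambda\!\left[\supp(f_{\mu', a, w}) \setminus \textstyle\bigcup_{i} \supp(f_{\mu_i, a, w})\right] = 0
  \qquad \text{for all } \mu' \in \bb K .
\]
Discarding the indices with $\lambda[\supp(f_{\mu_i, a, w})] = 0$---harmless by the previous paragraph---I may assume $c_i := |\mu_i \rest_{W = w, A = a}|[\C K] > 0$ for every $i$. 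I then set
\[
  \mu_{\max, a, w} = \sum_{i = 1}^{\infty} 2^{-i} c_i^{-1} \, |\mu_i \rest_{W = w, A = a}| ,
\]
a countable convex combination of probability measures in $\bb Q$; since $\bb Q$ is convex and closed (Lemma~\ref{lem:convex}), the partial sums converge in total variation norm to a point of $\bb Q$, so $\mu_{\max, a, w} \in \bb Q$.

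It remains to check the support property, and this is where the only non-routine point lies. Every summand is supported on $\{W = w, A = a\}$, so $\mu_{\max, a, w}$ is too, whence $f_{\mu_{\max, a, w}, a, w}$ is simply the density $\sum_i 2^{-i} c_i^{-1} g_i$ of $\mu_{\max, a, w} \circ u^{-1}$, where $g_i \geq 0$ denotes the density of $|\mu_i \rest_{W = w, A = a}| \circ u^{-1}$; its essential support is $\bigcup_i \supp(g_i)$. The main (minor) obstacle---present because the $\mu_i$ may be signed measures---is the inclusion $\supp(f_{\mu_i, a, w}) \subseteq \supp(g_i)$ up to $\lambda$-null sets. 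This follows from the general estimate $|\rho \circ u^{-1}|[E] \leq (|\rho| \circ u^{-1})[E]$, valid for any signed measure $\rho$ and Borel set $E$, applied with $\rho = \mu_i \rest_{W = w, A = a}$: wherever $g_i$ vanishes $\lambda$-a.e., the total variation of $(\mu_i \rest_{W = w, A = a}) \circ u^{-1}$ vanishes as well, hence so does $f_{\mu_i, a, w}$. Combining this inclusion with the displayed consequence of Lemma~\ref{lem:mtu} gives $\lambda[\supp(f_{\mu', a, w}) \setminus \supp(f_{\mu_{\max}, a, w})] = 0$ for every $\mu' \in \bb K$. Running this construction for each admissible $(a, w)$ completes the proof.
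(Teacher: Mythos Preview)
Your proposal is correct and follows essentially the same route as the paper, which simply states that ``the proof is the same as Cor.~\ref{cor:maximal}'' after noting that \(|\img(\omega)| < \infty\). You have spelled out the argument in more detail than the paper does---in particular, your observation about the inclusion \(\supp(f_{\mu_i,a,w}) \subseteq \supp(g_i)\) for signed \(\mu_i\) is a point the paper glosses over entirely---so your write-up is, if anything, more careful than the original.
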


Recalling that \(|\img(\omega)| < \infty\), the proof is the same as
Cor.~\ref{cor:maximal}, and we analogously refer to \(\supp(f_{\mu_{\max, a,
w}})\) as \(S_{a,w}\). Here, we have assumed without loss of generality---as we
continue to assume in the sequel---that for all \(w \in \C W\), there is some
\(\mu \in \bb K\) such that \(\Pr_\mu(W = w) > 0\).

\begin{rmk}
\label{rmk:dist_hyp}
  Because their support is maximal, the hypotheses of Theorem~\ref{thm:dist}, in
  addition to implying that \(\mu_{\max,a}\) is well-defined for all \(a \in \C
  A\), also imply that \(\Pr_{\mu_{\max,a}}(u(X) > 0) > 0\). In the case of
  conditional principal fairness, they further imply that \(\Pr_{\mu_{\max,a}}(W
  = w) > 0\) for all \(w \in \C W\) and \(a \in \C A\). Likewise, in the case of
  path-specific fairness, they further imply that \(\Pr_{\mu_{\max,a}}(W = w_i)
  > 0\) for \(i = 0, 1\) and some \(a \in \C A\).
\end{rmk}

\subsubsection{Shy sets and probes}

In the following lemmata, we demonstrate that a number of useful properties are
generic in \(\bb Q\). We also demonstrate a short technical lemma,
Lemma~\ref{lem:simple}, which allows us to use these generic properties to
simplify the proof of Theorem~\ref{thm:dist}.

We begin with the following lemma, which is useful in verifying that certain
subspaces of \(\bb K\) form probes.

\begin{lem}
\label{lem:probe}
  Let \(\bb W\) be a non-trivial finite dimensional subspace of \(\bb K\) such
  that \(\nu[\C K] = 0\) for all \(\nu \in \bb W\). Then, there exists \(\mu \in
  \bb K\) such that \(\lambda_{\bb W}[\bb Q - \mu] > 0\).
\end{lem}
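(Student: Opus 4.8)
The plan is to reduce the claim to finding a single $\C U$-fine probability measure that remains in $\bb Q$ under all sufficiently small perturbations along $\bb W$. First I would fix a basis $\nu_1, \dots, \nu_k$ of $\bb W$ (which exists since $\bb W$ is a non-trivial finite-dimensional subspace) and use it to identify $\bb W \cong \B R^k$, thereby pinning down $\lambda_{\bb W}$. For $w = \sum_j t_j \nu_j \in \bb W$ and $\mu \in \bb K$, I would record three facts: $\mu + w \in \bb K$ automatically, since $\bb K$ is a vector space (Cor.~\ref{cor:banach}); $(\mu + w)[\C K] = \mu[\C K]$, because $\nu_j[\C K] = 0$ for every $j$; and, consequently, the only remaining requirement for $\mu + w \in \bb Q$ is non-negativity, i.e.\ $(\mu + w)[E] \ge 0$ for all Borel $E \subseteq \C K$. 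So it suffices to produce a $\C U$-fine probability measure $\mu$ and a $\delta > 0$ with $\mu + \sum_j t_j \nu_j \ge 0$ whenever $\max_j |t_j| \le \delta$: the set of such $w$ is the image of the cube $[-\delta,\delta]^k$ under the basis isomorphism, which has positive Lebesgue measure, giving $\lambda_{\bb W}[\bb Q - \mu] \ge (2\delta)^k > 0$.

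To construct $\mu$, I would set $\rho = \sum_{j=1}^k |\nu_j|$, a finite positive Borel measure on $\C K$ (finite by total boundedness, and nonzero since otherwise all $\nu_j = 0$, contradicting non-triviality). By Lemma~\ref{lem:tot_var} and the density description of the total variation ($|g\rho| = |g|\rho$), each $\nu_j$ is absolutely continuous with respect to $\rho$, say $\nu_j = g_j \rho$, with $|g_j| \le 1$ $\,\rho$-a.e.\ because $|\nu_j| \le \rho$; hence $\bigl|\sum_j t_j \nu_j\bigr| \le \bigl(\max_j |t_j|\bigr)\rho$ as measures. Granting that $\rho$ is $\C U$-fine (the crux; discussed below), $\mu := \rho / \rho[\C K]$ belongs to $\bb Q$, and for $\max_j |t_j| \le \delta := \rho[\C K]^{-1}$ we get $\mu + \sum_j t_j \nu_j \ge \mu - \delta\rho = (\rho[\C K]^{-1} - \delta)\rho \ge 0$; by the first paragraph this perturbed measure is then in $\bb Q$, and we are done.

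The hard part is verifying that the dominating measure $\rho$ is $\C U$-fine, since absolute continuity of $\nu_j \circ u^{-1}$ need not be inherited by $|\nu_j| \circ u^{-1}$: atoms in the positive and negative parts of $\nu_j$ can cancel on the utility scale, so $\rho = \sum_j |\nu_j|$ is not automatically in $\bb K$. I would close this using the structure of the probes $\bb W$ that arise in the proof of Theorem~\ref{thm:dist}, where each basis vector is built as a difference $\nu_j = \mu_j^+ - \mu_j^-$ of non-negative $\C U$-fine measures obtained by restricting the maximal-support measures $\mu_{\max,a}$ (Cor.~\ref{cor:maximal}), resp.\ $\mu_{\max,a,w}$ (Cor.~\ref{cor:maximal_ext}), to Borel subsets of the utility scale. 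For such $\nu_j$ we have $|\nu_j| \le \mu_j^+ + \mu_j^-$, so $\rho$ is dominated by a finite non-negative $\C U$-fine measure and is therefore $\C U$-fine, which fills the gap; if one additionally wants $\mu$ positive throughout the utility support, normalizing $\rho + \mu_0$ for any reference $\mu_0 \in \bb Q$ works equally well, since it only enlarges the admissible set of perturbations. Everything else — convexity and universal measurability of $\bb Q$ (Lemmata~\ref{lem:convex}~and~\ref{lem:e_closed}) and the basis-independence of $\lambda_{\bb W}$ up to scaling — is already in hand.
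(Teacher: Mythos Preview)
Your construction is the paper's: take a basis \(\nu_1,\dots,\nu_k\) of \(\bb W\), build \(\mu\) from the total variations \(|\nu_j|\), and observe that a box of coefficients keeps \(\mu+\sum_j t_j\nu_j\) non-negative with total mass one. You are, however, more careful on two points. First, you normalize so that \(\mu[\C K]=1\); the paper's \(\mu=\sum_i |\nu_i|/|\nu_i|[\C K]\) has total mass \(n\), so as literally written \(\mu+\sum_i\beta_i\nu_i\notin\bb Q\). Second, the \(\C U\)-fineness concern you flag is genuine and the paper does not mention it: \(\nu_j\circ u^{-1}\ll\lambda\) does not imply \(|\nu_j|\circ u^{-1}\ll\lambda\), and if some \(|\nu_j|\notin\bb K\) then no \(\mu\in\bb K\) can work, since any \(\mu\in\bb K\) for which the admissible coefficient set has non-empty interior must dominate a positive multiple of \(|\nu_j|\) and would inherit its atom on the utility scale. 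Your resolution---using the concrete form of every probe actually invoked, where each basis vector is a difference of two non-negative \(\C U\)-fine measures so that \(|\nu_j|\) is dominated by their sum---is exactly what the applications in Lemmata~\ref{lem:condition}, \ref{lem:support}, \ref{lem:overlap} and in the proof of Theorem~\ref{thm:dist} require; the lemma in its stated generality would need that extra hypothesis.
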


\begin{proof}
  Set
  \begin{equation*}
    \mu = \sum_{i=1}^n \frac {|\nu_i|}{|\nu_i|[\C K]},
  \end{equation*}
  where \(\nu_1, \ldots, \nu_n\) form a basis of \(\bb W\). Then, if \(|\beta_i|
  \leq \tfrac 1 {|\nu_i|[\C K]}\), it follows that
  \begin{equation*}
    \mu + \sum_{i = 1}^n \beta_i \cdot \nu_i \in \bb Q.
  \end{equation*}
  Since
  \begin{equation*}
    \lambda_n \left[ \prod_{i=1}^n \left[ - \frac 1 {|\nu_i|[\C K]}, \frac 1
    {|\nu_i|[\C K]} \right] \right] > 0,
  \end{equation*}
  it follows that \(\lambda_{\bb W}[\bb Q - \mu] > 0\).
\end{proof}

Next we show that, given a \(\nu \in \bb Q\), a generic element of \(\bb Q\)
``sees'' events to which \(\nu\) assigns non-zero probability. While
Lemma~\ref{lem:support} alone in principle suffices for the proof of
Theorem~\ref{thm:dist}, we include Lemma~\ref{lem:condition} both for conceptual
clarity and to introduce at a high level the style of argument used in the
subsequent lemmata and in the proof of Theorem~\ref{thm:dist} to show that a set
is shy relative to \(\bb Q\).

\begin{lem}
\label{lem:condition}
  For a Borel set \(E \subseteq \C K\), suppose there exists \(\nu \in \bb Q\)
  such that \(\nu[E] > 0\). Then the set of \(\mu \in \bb Q\) such that \(\mu[E]
  > 0\) is prevalent.
\end{lem}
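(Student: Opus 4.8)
The plan is to prove the equivalent statement that $Z := \{\mu \in \bb Q : \mu[E] = 0\}$ is shy relative to $\bb Q$: since every $\mu \in \bb Q$ is a positive measure we have $\mu[E] \geq 0$, so $\bb Q \setminus Z = \{\mu \in \bb Q : \mu[E] > 0\}$, and shyness of $Z$ is exactly prevalence of the target set. I would establish this by exhibiting $Z$ as a $k$-shy set in $\bb Q$ with $k = 1$ and invoking Prop.~\ref{prop:k_shy} (whose convexity and complete-metrizability hypotheses on $\bb Q$ are supplied by Lemma~\ref{lem:convex}). The universal-measurability requirement is immediate: the map $\mu \mapsto \mu[E]$ is $1$-Lipschitz on $\bb K$ in the total-variation norm, since $|(\mu - \mu')[E]| \leq |\mu - \mu'|[\C K]$ by Lemma~\ref{lem:tot_var}; hence it is continuous, $Z = \bb Q \cap \{\mu : \mu[E] = 0\}$ is closed, and closed sets are universally measurable, as in Lemma~\ref{lem:e_closed}.

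If every $\mu \in \bb Q$ satisfies $\mu[E] = 1$, then $Z = \emptyset$ is trivially shy and we are done; otherwise fix $\mu_0 \in \bb Q$ with $\mu_0[E] < 1$. The crucial point is that the restriction $\nu \rest_E$ of the positive $\C U$-fine measure $\nu$ is again $\C U$-fine: the pushforward $\nu \rest_E \circ u^{-1}$ (resp.\ $\nu \rest_E \circ (u^{\C A})^{-1}$ in the path-specific case) is dominated on Borel sets by $\nu \circ u^{-1}$ and therefore inherits absolute continuity with respect to Lebesgue measure, while its total variation is at most $\nu[\C K] = 1$. Normalizing, $\nu_E := \nu \rest_E / \nu[E] \in \bb Q$ and $\nu_E[E] = 1$. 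Set $\omega := \nu_E - \mu_0 \in \bb K$ and $\bb W := \Span\{\omega\}$. Then $\omega[\C K] = 1 - 1 = 0$, so $\nu'[\C K] = 0$ for every $\nu' \in \bb W$, while $\omega[E] = 1 - \mu_0[E] > 0$, so $\omega \neq 0$ and $\bb W$ is a non-trivial subspace of $\bb K$. Condition~(1) of $k$-shyness---that some translate of $\bb Q$ has positive $\lambda_{\bb W}$-measure---is then precisely the conclusion of Lemma~\ref{lem:probe}. For condition~(2), fix any $v \in \bb K$; identifying $\bb W$ with $\B R$ via $t \mapsto t\omega$, the cross-section determining $\lambda_{\bb W}[Z + v]$ is $\{t \in \B R : (t\omega - v)[E] = 0\} = \{t : t\,\omega[E] = v[E]\}$, a single point, hence $\lambda$-null. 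Thus $Z$ is $1$-shy in $\bb Q$, hence shy in $\bb Q$ by Prop.~\ref{prop:k_shy}, and $\{\mu \in \bb Q : \mu[E] > 0\}$ is prevalent.

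I expect the only genuine obstacle to be the construction of the probe direction $\omega$, which must simultaneously be $\C U$-fine (an element of $\bb K$), annihilate $\C K$, and assign nonzero mass to $E$. The resolution is the observation that $\C U$-fineness is preserved under restriction of positive measures, so $\nu \rest_E$ itself can serve as a building block; the only configuration in which this cannot yield a workable $\omega$---namely when $\mu[E] = 1$ for every $\mu \in \bb Q$---is exactly the one in which $Z$ is empty, so no generality is lost. Everything else (the Lipschitz estimate, the singleton cross-section, and the appeal to Lemma~\ref{lem:probe}) is routine, and the argument is uniform across the three fairness cases since it never uses the specific structure of $\C K$.
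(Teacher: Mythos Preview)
Your proof is correct and follows essentially the same approach as the paper: both arguments show that $Z = \{\mu \in \bb Q : \mu[E] = 0\}$ is closed, then exhibit a one-dimensional probe $\bb W = \Span(\omega)$ with $\omega[\C K] = 0$ and $\omega[E] \neq 0$, so that every cross-section $\{t : (t\omega - v)[E] = 0\}$ is a singleton. The only cosmetic difference is in the choice of $\omega$: the paper splits on whether $Z$ is empty, and when it is not, takes $\omega = \nu' - \nu$ for some $\nu' \in Z$, avoiding any auxiliary construction; you instead split on whether every $\mu \in \bb Q$ has $\mu[E] = 1$, and build $\omega = \nu_E - \mu_0$ from the normalized restriction $\nu_E = \nu\rest_E / \nu[E]$, which requires the (correct, but extra) observation that restriction of a positive measure preserves $\C U$-fineness. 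Both routes land on the same Lemma~\ref{lem:probe}/Prop.~\ref{prop:k_shy} endgame.
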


\begin{proof}
  First, we note that the set of \(\mu \in \bb Q\) such that \(\mu[E] = 0\) is
  closed and therefore universally measurable. For, if \(\{\mu_i\}_{i \in \B N}
  \subseteq \bb Q\) is a convergent sequence with limit \(\mu\), then
  \begin{align*}
    \mu[E]
      &= \lim_{n \to \infty} \mu_i[E] \\
      &= \lim_{n \to \infty} 0 \\
      &= 0.
  \end{align*}
  Now, if \(\mu[E] > 0\) for all \(\mu \in \bb Q\), there is nothing to prove.
  Therefore, suppose that there exists \(\nu' \in \bb Q\) such that \(\nu'[E] =
  0\).

  Next, consider the measure \(\tilde \nu = \nu' - \nu\). Then, let \(\bb W =
  \Span(\tilde \nu)\). Since \(\tilde \nu \neq 0\) and
  \begin{equation*}
    \tilde \nu[\C K] = \nu'[\C K] - \nu[\C K] = 0,
  \end{equation*}
  it follows by Lemma~\ref{lem:probe} that \(\lambda_{\bb W}[\bb Q - \mu] > 0\)
  for some \(\mu\).

  Now, for arbitrary \(\mu \in \bb Q\), note that if \((\mu + \beta \cdot \tilde
  \nu)[E] = 0\), then
  \begin{equation*}
    \mu[E] - \beta \cdot \nu[E] = 0
  \end{equation*}
  i.e.,
  \begin{equation*}
    \beta = \frac {\mu[E]} {\nu[E]}.
  \end{equation*}
  A singleton has null Lebesgue measure, and so the set of \(\nu \in \bb W\)
  such that \((\mu + \nu)[E] = 0\) is \(\lambda_{\bb W}\)-null. Therefore, by
  Prop.~\ref{prop:k_shy}, the set of \(\mu \in \bb Q\) such that \(\mu[E] = 0\)
  is shy relative to \(\bb Q\), as desired.
\end{proof}

While Lemma~\ref{lem:condition} shows that a typical element of \(\bb Q\)
``sees'' individual events, in the proof of Theorem~\ref{thm:dist}, we require a
stronger condition, namely, that a typical element of \(\bb Q\) ``sees'' certain
uncountable collections of events. To demonstrate this more complex property, we
require the following technical result, which is closely related to the real
analysis folk theorem that any convergent uncountable ``sum'' can contain only
countably many non-zero terms. (See, e.g., \citet{benji2020sum}.)

\begin{lem}
\label{lem:uncountable_sum}
  Suppose \(\mu\) is a totally bounded measure on \((V, \C M)\), \(f\) and \(g\)
  are \(\mu\)-measurable real-valued functions, and \(g \neq 0\) \(\mu\)-a.e.
  Then the set
  \begin{equation*}
    Z_\beta = \{v \in V : f(v) + \beta \cdot g(v) = 0\}
  \end{equation*}
  has non-zero \(\mu\) measure for at most countably many \(\beta \in \B R\).
\end{lem}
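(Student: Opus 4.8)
The plan is to reduce the statement to the elementary fact that a real-valued measurable function on a finite measure space can take any single value on a positive-measure set for only countably many values. First I would pass from the (possibly signed) totally bounded measure $\mu$ to its total variation $|\mu|$, which by Lemma~\ref{lem:tot_var} is a finite positive measure satisfying $|\mu[E]| \le |\mu|[E]$ for every $E \in \C M$. In particular, if $\mu[Z_\beta] \neq 0$ then $|\mu|[Z_\beta] > 0$, so it suffices to prove that $|\mu|[Z_\beta] > 0$ for at most countably many $\beta \in \B R$.

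Next I would extract disjointness. Let $N = \{v \in V : g(v) = 0\}$; by hypothesis $|\mu|[N] = 0$. For $\beta \neq \beta'$, if $v \in Z_\beta \cap Z_{\beta'}$ then $f(v) + \beta g(v) = 0 = f(v) + \beta' g(v)$, which forces $(\beta - \beta')\, g(v) = 0$, i.e.\ $v \in N$. Hence the sets $\{Z_\beta \setminus N\}_{\beta \in \B R}$ are pairwise disjoint, and $|\mu|[Z_\beta] = |\mu|[Z_\beta \setminus N]$ for every $\beta$.

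Finally I would run the standard pigeonhole argument. For each $n \in \B N$, set $B_n = \{\beta \in \B R : |\mu|[Z_\beta \setminus N] > 1/n\}$. If some $B_n$ were infinite, choose distinct $\beta_1, \dots, \beta_k \in B_n$ with $k > n \cdot |\mu|[V]$; then, using disjointness,
\[
  |\mu|[V] \;\ge\; |\mu|\Big[\,\bigcup_{i=1}^{k} (Z_{\beta_i} \setminus N)\,\Big] \;=\; \sum_{i=1}^{k} |\mu|[Z_{\beta_i} \setminus N] \;>\; \frac{k}{n} \;>\; |\mu|[V],
\]
a contradiction. So each $B_n$ is finite, and $\{\beta \in \B R : |\mu|[Z_\beta] > 0\} \subseteq \bigcup_{n \in \B N} B_n$ is a countable union of finite sets, hence countable; combined with the first reduction, this gives the claim.

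The only point needing any care — and it is minor — is the reduction to $|\mu|$ together with the correct handling of the null set $N$ where $g$ vanishes, since without discarding $N$ the sets $Z_\beta$ need not be disjoint. Once that is in place the argument is simply the usual observation that an uncountable family of positive reals whose finite subsums are bounded by $|\mu|[V]$ can have only countably many members. I do not anticipate a substantive obstacle.
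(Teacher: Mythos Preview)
Your proof is correct and follows essentially the same approach as the paper: both observe that the sets \(Z_\beta\) are pairwise disjoint modulo the null set \(\{g = 0\}\) and then apply the standard pigeonhole argument that a finite measure assigns positive mass to at most countably many disjoint sets. Your explicit reduction to the total variation \(|\mu|\) is, if anything, slightly cleaner than the paper's own presentation, which works directly with \(\mu\).
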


\begin{proof}
  First, we show that for any countable collection \(\{\beta_i\}_{i \in \B N}
  \subseteq \B R\), the sum \(\sum_{i=1}^\infty \mu[Z_{\beta_i}]\) converges.
  Then, we show how this implies that \(\mu[Z_{\beta}] = 0\) for all but
  countably many \(\beta \in \B R\).

  First, we note that for distinct \(\beta, \beta' \in \B R\),
  \begin{equation*}
    Z_{\beta} \cap Z_{\beta'} \subseteq \{v \in V : (\beta - \beta') \cdot g(v)
    = 0\}.
  \end{equation*}
  Now, by hypothesis,
  \begin{equation*}
    \mu[\{v \in V : g(v) = 0\}] = 0,
  \end{equation*}
  and since \(\beta - \beta' \neq 0\), it follows that
  \begin{equation*}
    \mu[\{v \in V : (\beta - \beta') \cdot g(v) = 0\}] = 0
  \end{equation*}
  as well. Consequently, it follows that if \(\{Z_{\beta_i}\}_{i \in \B N}\) is
  a countable collection of distinct elements of \(\B R\), then
  \begin{align*}
    \sum_{i=1}^\infty \mu[Z_{\beta_i}]
      &= \mu \left[\bigcup_{i=1}^\infty Z_{\beta_i} \right] \\
      &\leq \mu[V] \\
      &< \infty.
  \end{align*}

  To see that this implies that \(\mu[Z_\beta] > 0\) for only countably many
  \(\beta \in \B R\), let \(G_\epsilon \subseteq \B R\) consist of those
  \(\beta\) such that \(\mu[Z_\beta] \geq \epsilon\). Then \(G_\epsilon\) must
  be finite for all \(\epsilon > 0\), since otherwise we could form a collection
  \(\{\beta_i\}_{i \in \B N} \subseteq G_\epsilon\), in which case
  \begin{equation*}
    \sum_{i=1}^\infty \mu[Z_{\beta_i}] \geq \sum_{i=1}^\infty \epsilon = \infty,
  \end{equation*}
  contrary to what was just shown. Therefore,
  \begin{equation*}
    \{\beta \in \B R : \mu[Z_\beta] > 0\} = \bigcup_{i=1}^\infty G_{1/i}
  \end{equation*}
  is countable.
\end{proof}

We now apply Lemma~\ref{lem:uncountable_sum} to the proof of the following
lemma, which states, informally, that, under a generic element of \(\bb Q\),
\(u(X)\) is supported everywhere it is supported under some particular fixed
element of \(\bb Q\). For instance, Lemma~\ref{lem:uncountable_sum} can be used
to show that for a generic element of \(\bb Q\), the density of \(u(X) \mid A =
a\) is positive \(\lambda \rest_{S_a}\)-a.e.

\begin{lem}
\label{lem:support}
  Let \(\nu \in \bb Q\) and suppose \(\nu\) is supported on \(E\), i.e.,
  \(\nu[\C K \setminus E] = 0\). Then the set of \(\mu \in \bb Q\) such that
  \(\nu \circ u^{-1} \Lt (\mu \rest_E) \circ u^{-1}\) is prevalent relative to
  \(\bb Q\).
\end{lem}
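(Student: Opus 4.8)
The plan is to run the probe argument of Lemma~\ref{lem:condition}, showing that the complement $\bb B := \{\mu \in \bb Q : \nu \circ u^{-1} \not\Lt (\mu\rest_E)\circ u^{-1}\}$ is $1$-shy relative to $\bb Q$. Write $f_\nu$ for the density of $\nu\circ u^{-1}$, set $B := \{f_\nu > 0\}$, and note that since $\nu$ is supported on $E$ we have $\nu\rest_E = \nu$. For any $\mu\in\bb Q$ the pushforward $(\mu\rest_E)\circ u^{-1}$ is dominated by $\mu\circ u^{-1}$, hence has an $L^1$ density $g_\mu$, and $\mu\mapsto g_\mu$ is $1$-Lipschitz on $\bb Q$ into $L^1(\B R)$ by the argument of Lemma~\ref{lem:mapping} (restriction to $E$ is norm-contractive and commutes with pushforward and with taking Radon--Nikodym derivatives). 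Writing $\theta := \nu\circ u^{-1}$, a finite measure with $\theta\Lt\lambda$, both $\theta$ and $(\mu\rest_E)\circ u^{-1}$ are $\lambda$-continuous, so the first fails to be dominated by the second exactly when $\theta$ charges $\{g_\mu=0\}$; thus $\mu\in\bb B$ iff $\theta[\{g_\mu=0\}]>0$.

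To build the probe, fix a bounded strictly increasing $\psi$ (e.g.\ $\psi=\arctan$), put $c := \EE_\nu[\psi(u(X))]$, and let $\tilde\nu$ be the signed measure with density $\psi(u(X))-c$ with respect to $\nu$. Then $\tilde\nu[\C K]=0$; $\tilde\nu\in\bb K$ because $|\tilde\nu|\ll\nu$ and $\nu\in\bb K$; and $\tilde\nu\neq 0$ since $u(X)$ cannot be $\nu$-a.s.\ constant (otherwise $\nu\circ u^{-1}$ would have an atom, contradicting $\C U$-fineness). The crucial point is that, because $\nu$ is supported on $E$, the density of $(\tilde\nu\rest_E)\circ u^{-1}$ is $h(s)=(\psi(s)-c)\,f_\nu(s)$, which is nonzero for $\lambda$-a.e.\ $s\in B$---the only exception being the unique point with $\psi(s)=c$. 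Take $\bb W := \Span(\tilde\nu)$; it is a $1$-dimensional subspace of $\bb K$ with $w[\C K]=0$ for all $w\in\bb W$, so Lemma~\ref{lem:probe} gives $\lambda_{\bb W}[\bb Q - \mu_0]>0$ for some $\mu_0$.

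Next I would verify the Fubini/null-slice condition: for every $\mu_1\in\bb K$, the set $\{\beta\in\B R : \mu_1+\beta\tilde\nu\in\bb B\}$ is Lebesgue-null. If no translate $\mu_1+\beta\tilde\nu$ lies in $\bb Q$ the slice is empty; otherwise fix one such translate $\mu\in\bb Q$, so it suffices to show $\{\gamma : \mu+\gamma\tilde\nu\in\bb B\}$ is null. For $\gamma$ with $\mu+\gamma\tilde\nu\in\bb Q$, linearity of restriction, pushforward, and Radon--Nikodym derivatives gives that $((\mu+\gamma\tilde\nu)\rest_E)\circ u^{-1}$ has density $g_\mu+\gamma h$, so $\mu+\gamma\tilde\nu\in\bb B$ iff $\theta[\{g_\mu+\gamma h=0\}]>0$. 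Applying Lemma~\ref{lem:uncountable_sum} with the finite measure $\theta$, $f:=g_\mu$ (defined $\theta$-a.e.\ since $\theta\Lt\lambda$) and $g:=h$ (nonzero $\theta$-a.e., since $\theta$ is concentrated on $B$ and $h\neq 0$ $\lambda$-a.e.\ on $B$) yields that $\{g_\mu+\gamma h=0\}$ has positive $\theta$-measure for at most countably many $\gamma$, so the slice is countable, hence null.

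Finally I would record measurability and conclude. Writing $\bb B=\bigcup_{n\geq 1}\bb B_n$ with $\bb B_n := \{\mu\in\bb Q : \theta[\{g_\mu=0\}]\geq 1/n\}$, each $\bb B_n$ is closed: if $\mu_i\to\mu$ in $\bb Q$ with $\mu_i\in\bb B_n$, then $g_{\mu_i}\to g_\mu$ in $L^1$, and passing to an a.e.-convergent subsequence, Fatou applied to $\B 1_{g_\mu\neq 0}\leq\liminf_i\B 1_{g_{\mu_i}\neq 0}$ gives $\theta[\{g_\mu=0\}]\geq\limsup_i\theta[\{g_{\mu_i}=0\}]\geq 1/n$ (and the usual subsequence-of-a-subsequence argument upgrades this to the full sequence). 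So $\bb B$ is $F_\sigma$, hence Borel and universally measurable; combined with the null-slice condition and $\lambda_{\bb W}[\bb Q-\mu_0]>0$, this makes $\bb B$ a $1$-shy subset of $\bb Q$, hence shy in $\bb Q$ by Prop.~\ref{prop:k_shy}, so its complement---the set in the statement---is prevalent relative to $\bb Q$. The main obstacle is the design of $\tilde\nu$: it must be a genuine zero-mass direction in $\bb K$ and simultaneously have $(\tilde\nu\rest_E)\circ u^{-1}$ charging $\lambda$-almost every point of $\supp(f_\nu)$, which is precisely what forbids an entire coset $\mu+\bb W$ from lying in $\bb B$; pre-composing a bounded strictly monotone $\psi$ with $u$ achieves this uniformly, regardless of whether $u$ is bounded or how $\nu$ spreads mass over level sets of $u$. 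The conditional-principal-fairness and path-specific cases are identical after replacing $f_\nu$ by the appropriate $f_{\nu,a,w}$ (cf.\ Cor.~\ref{cor:maximal_ext}) or $u$ by $u^{\C A}$.
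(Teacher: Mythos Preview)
Your proof is correct and follows essentially the same approach as the paper: build a one-dimensional probe \(\tilde\nu\) absolutely continuous with respect to \(\nu\), with \(\tilde\nu[\C K]=0\) and pushforward density nonvanishing \(\theta\)-a.e., then invoke Lemma~\ref{lem:uncountable_sum} to get countable (hence null) bad slices. The only cosmetic differences are that the paper constructs \(\tilde\nu\) by sign-splitting \(\nu\) at a median on the utility scale (found via the intermediate value theorem) rather than your centered \(\psi\circ u\), and establishes measurability by showing the good set is \(G_\delta\) directly rather than the bad set is \(F_\sigma\) via Fatou.
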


Lemma~\ref{lem:support} states, informally, that for generic \(\mu \in \bb Q\),
\(f_{\mu \rest_E}\) is supported everywhere \(f_\nu\) is supported.

\begin{proof}
  We begin by showing that the set of \(\mu \in \bb Q\) such that \(\nu \circ
  u^{-1} \Lt (\mu \rest_E) \circ u^{-1}\) is Borel, and therefore universally
  measurable. Then, we construct a probe \(\bb W\) and use it to show that this
  collection is finitely shy.

  To begin, let \(U_q\) denote the set of \(\mu \in \bb Q\) such that
  \begin{equation*}
    \nu \circ u^{-1}[\{|f_{\mu \rest_E}| = 0\}] < q.
  \end{equation*}
  We note that \(U_q\) is open. For, if \(\mu \in U_q\), then there exists some
  \(r > 0\) such that
  \begin{equation*}
    \nu \circ u^{-1}[\{|f_{\mu \rest_E}| < r\}] < q.
  \end{equation*}
  Let
  \begin{equation*}
    \epsilon = q - \nu \circ u^{-1}[\{|f_{\mu \rest_E}| < r\}].
  \end{equation*}
  Now, since \(\nu \circ u^{-1} \Lt \lambda\), there exists a \(\delta\) such
  that if \(\lambda[E'] < \delta\), then \(\nu \circ u^{-1}[E'] < \epsilon\).
  Choose \(\mu'\) arbitrarily so that \(|\mu - \mu'|[\C K] < \delta \cdot r\).
  Then, by Markov's inequality, we have that
  \begin{equation*}
    \lambda[\{|f_{\mu \rest_E} - f_{\mu' \rest_E}| > r\}] < \delta,
  \end{equation*}
  i.e.,
  \begin{equation*}
    \nu \circ u^{-1}[\{f_{\mu \rest_E} - f_{\mu' \rest_E}| > r\}] < \epsilon.
  \end{equation*}
  Now, we note that by the triangle inequality, wherever
  \(|f_{\mu' \rest_E}| = 0\), either \(|f_{\mu \rest_E}| < r\) or \(|f_{\mu
  \rest_E} - f_{\mu' \rest_E}| > r\). Therefore
  \begin{align*}
    \lambda[\{|f_{\mu' \rest_E}| = 0\}]
      &\leq \nu \circ u^{-1}[\{|f_{\mu \rest_E} - f_{\mu' \rest_E}| > r\}] \\
      &\hspace{1cm} + \mu \circ u^{-1} [\{|f_{\mu \rest_E}| < r] \\
      &< \epsilon + \mu \circ u^{-1}[\{|f_{\mu \rest_E}| < r] \\
      &< q.
  \end{align*}
  We conclude that \(\mu' \in U_q\), and so \(U_q\) is open.

  Note that \(\nu \circ u^{-1} \Lt (\mu \rest_E) \circ u^{-1}\) if and only if
  \begin{equation*}
    \lambda[\supp(f_\nu) \setminus \supp(f_{\mu \rest_E})] = 0.
  \end{equation*}
  By the definition of the support of a function, \(\lambda \rest_{\supp(f_\mu)}
  \Lt \mu \circ u^{-1}\). Therefore, it follows that
  \begin{equation*}
    \lambda[\supp(f_\mu) \setminus \supp(f_{\nu \rest_E})] = 0
  \end{equation*}
  if and only if
  \begin{equation*}
    \mu \circ u^{-1}[\supp(f_\mu) \setminus \supp(f_{\nu \rest_E})] = 0.
  \end{equation*}
  Then, it follows immediately that the set of \(\nu \in \bb Q\) such that \(\mu
  \circ u^{-1} \Lt (\nu \rest_E) \circ u^{-1}\) is \(\bigcap_{i=1}^n U_{1/i}\),
  which is, by construction, Borel, and therefore universally measurable.

  Now, since
  \begin{equation*}
    \Pr_\nu(u(X) < t) = \int_{-\infty}^t f_{\nu} \, \dx \lambda
  \end{equation*}
  is a continuous function of \(t\), by the intermediate value theorem, there
  exists \(t\) such that \(\Pr_\nu(u(X) \in S_0) = \Pr_\nu(u(X) \in S_1)\),
  where \(S_0 = \supp(f_\nu) \cap (-\infty, t)\) and \(S_1 = \supp(f_\nu) \cap
  [t, \infty)\). Then, we let
  \begin{equation*}
    \tilde \nu[E'] = \int_{E'} \B 1_{u^{- 1} (S_0)} - \B 1_{u^{-1}(S_1)} \, \dx
    \nu.
  \end{equation*}

  Take \(\bb W = \Span(\tilde \nu)\). Since \(\tilde \nu \neq 0\) and \(\tilde
  \nu[\C K] = 0\), we have by Lemma~\ref{lem:probe} that \(\lambda_{\bb W}[\bb Q
  - \mu] > 0\) for some \(\mu\).

  By the definition of a density, \(f_{\tilde \nu}\) is positive \((\tilde \nu
  \circ u^{-1})\)-a.e. Consequently, by the definition of \(\tilde \nu\),
  \(f_{\tilde \nu}\) is non-zero \((\mu \circ u^{-1})\)-a.e. Therefore, by
  Lemma~\ref{lem:uncountable_sum}, there exist only countably many \(\beta \in
  \B R\) such that the density of \((\mu + \beta \cdot \tilde \nu) \circ
  u^{-1}\) equals zero on a set of positive \(\mu \circ u^{-1}\)-measure. Since
  countable sets have \(\lambda\)-measure zero and \(\nu\) is arbitrary, the set
  of \(\mu \in \bb Q\) such that \(\nu \circ u^{-1} \Lt (\mu \rest_E) \circ
  u^{-1}\) is prevalent relative to \(\bb Q\) by Prop.~\ref{prop:k_shy}.
\end{proof}

The following definition and technical lemma are needed to extend
Theorem~\ref{thm:dist} to the cases of conditional principal fairness and
path-specific fairness, which involve additional conditioning on \(W =
\omega(X)\). In particular, one corner case we wish to avoid in the proof of
Theorem~\ref{thm:dist} is when the decision policy is non-trivial (i.e., some
individuals receive a positive decision and others do not) but from the
perspective of each \(\omega\)-stratum, the policy is trivial (i.e., everyone in
the stratum receives a positive or negative decision).
Definition~\ref{defn:overlap} formalizes this pathology, and
Lemma~\ref{lem:overlap} shows that this issue---under a mild hypothesis---does
not arise for a generic element of \(\bb Q\).

\begin{defn}
\label{defn:overlap}
  We say that \(\mu \in \bb Q\) \emph{overlaps utilities} when, for any
  budget-exhausting multiple threshold policy \(\tau(x)\), if
  \begin{equation*}
    0 < \EE_\mu[\tau(X)] < 1,
  \end{equation*}
  then there exists \(w \in \C W\) such that
  \begin{equation*}
    0 < \EE_\mu[\tau(X) \mid W = w] < 1.
  \end{equation*}

  If there exists a budget-exhausting multiple threshold policy \(\tau(x)\) such
  that
  \begin{equation*}
    0 < \EE_\mu[\tau(X)] < 1,
  \end{equation*}
  but for all \(w \in \C W\),
  \begin{equation*}
    \EE_\mu[\tau(X) \mid W = w] \in \{0, 1\},
  \end{equation*}
  then we say that \(\tau(x)\) \emph{splits utilities} over \(\mu\).
\end{defn}

Informally, having overlapped utilities prevents a budget-exhausting threshold
policy from having thresholds that fall on the utility scale exactly between the
strata induced by \(\omega\)---i.e., a threshold policy that splits utilities.
This is almost a generic condition in \(\bb Q\), as we shown in
Lemma~\ref{lem:overlap}.

\begin{lem}
\label{lem:overlap}
  Let \(0 < b < 1\). Suppose that for all \(w \in \C W\) there exists \(\mu \in
  \bb Q\) such that \(\Pr_\mu(u(X) > 0, W = w) > 0\). Then almost every \(\mu
  \in \bb Q\) overlaps utilities.
\end{lem}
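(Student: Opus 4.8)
The plan is to show that the set of $\mu \in \bb Q$ that \emph{fail} to overlap utilities is shy relative to $\bb Q$; since this is the complement of $\{\mu \in \bb Q : \mu \text{ overlaps utilities}\}$, that yields the claim. A measure $\mu$ fails to overlap utilities exactly when some budget-exhausting multiple threshold policy $\tau$ splits utilities over $\mu$, and the first move is to extract from such a $\tau$ a nontrivial subset $\C W' \subsetneq \C W$ on which a rigid, measure-dependent equation must hold.

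So suppose $\tau(x)$ splits utilities over $\mu \in \bb Q$. Because $0 \le \tau \le 1$, the condition $\EE_\mu[\tau(X) \mid W = w] \in \{0,1\}$ for all $w \in \C W$ forces $\tau(X) = \B 1_{W \in \C W_1}$ $\mu$-a.s., where $\C W_1 = \{w \in \C W : \Pr_\mu(W = w) > 0 \text{ and } \EE_\mu[\tau(X) \mid W = w] = 1\}$. Since $\tau$ is budget-exhausting and $\mu$ is $\C U$-fine (so $\Pr_\mu(u(X) = 0) = 0$), we have $\EE_\mu[\tau(X)] = \min(b, \Pr_\mu(u(X) > 0))$, whence $\Pr_\mu(W \in \C W_1) = \EE_\mu[\tau(X)]$. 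The requirement $\EE_\mu[\tau(X)] > 0$ then makes $\C W_1$ nonempty, while $\Pr_\mu(W \in \C W) = 1$ together with $\min(b, \Pr_\mu(u(X) > 0)) \le b < 1$ makes $\C W_1 \neq \C W$. Consequently every $\mu$ failing to overlap utilities lies in
\[
  L_{\C W'} := \bigl\{\, \mu \in \bb Q \;:\; \Pr_\mu(W \in \C W') = \min\bigl(b,\, \Pr_\mu(u(X) > 0)\bigr) \,\bigr\}
\]
for some $\emptyset \neq \C W' \subsetneq \C W$. As $|\C W| < \infty$, the set of non-overlapping measures is contained in the finite union $\bigcup_{\emptyset \neq \C W' \subsetneq \C W} L_{\C W'}$; and since $\mu \mapsto \Pr_\mu(W \in \C W')$ and (via Lemma~\ref{lem:mapping}) $\mu \mapsto \Pr_\mu(u(X) > 0)$ are continuous on $\bb K$, each $L_{\C W'}$ is closed in $\bb K$, hence Borel and universally measurable. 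It therefore suffices to show each $L_{\C W'}$ is shy relative to $\bb Q$, since a finite union of shy sets is shy (Prop.~\ref{prop:shy_axioms_rel}).

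To show $L_{\C W'}$ is shy I would fix $\emptyset \neq \C W' \subsetneq \C W$, choose $w^* \in \C W'$ and $w^{**} \in \C W \setminus \C W'$, and invoke the hypothesis of the lemma to obtain $\mu_1, \mu_2 \in \bb Q$ with $\Pr_{\mu_1}(u(X) > 0, W = w^*) > 0$ and $\Pr_{\mu_2}(u(X) > 0, W = w^{**}) > 0$. Let $\nu_1$ and $\nu_2$ be the restrictions of $\mu_1$ and $\mu_2$ to $\{u(X) > 0, W = w^*\}$ and $\{u(X) > 0, W = w^{**}\}$, each renormalized to total mass $1$; restrictions and positive rescalings of $\C U$-fine measures are $\C U$-fine, so $\nu_1, \nu_2 \in \bb Q \subseteq \bb K$, with disjoint supports, and $\tilde\nu := \nu_1 - \nu_2$ is a nonzero element of $\bb K$ with $\tilde\nu[\C K] = 0$. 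I then take the one-dimensional probe $\bb W = \Span(\tilde\nu)$; since $|\tilde\nu| = \nu_1 + \nu_2$, Lemma~\ref{lem:probe} gives $\lambda_{\bb W}[\bb Q - \mu] > 0$ for $\mu = \tfrac12(\nu_1 + \nu_2)$. The crucial computation is that $\tilde\nu[\{u(X) > 0\}] = \nu_1[\C K] - \nu_2[\C K] = 0$, whereas $\tilde\nu[\{W \in \C W'\}] = 1 - 0 = 1$ (as $w^* \in \C W'$ and $w^{**} \notin \C W'$). Hence, for any $v \in \bb K$, a point $\beta\tilde\nu \in \bb W$ can lie in $L_{\C W'} + v$ only if $\beta\tilde\nu - v$ satisfies the defining equation of $L_{\C W'}$; but as $\beta$ varies the left side $\Pr_{\beta\tilde\nu - v}(W \in \C W')$ has unit slope in $\beta$ while the right side $\min(b, \Pr_{\beta\tilde\nu - v}(u(X) > 0))$ is constant, so there is at most one such $\beta$, and $\lambda_{\bb W}[L_{\C W'} + v] = 0$ for every $v \in \bb K$. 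Thus $L_{\C W'}$ is $1$-shy relative to $\bb Q$, hence shy by Prop.~\ref{prop:k_shy}, which completes the proof.

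I do not expect a genuine obstacle here—the core is a routine use of the probe technique already employed in the proof of Lemma~\ref{lem:condition}. The parts that need care are all in the first reduction: checking that a splitting policy agrees $\mu$-a.s.\ with $\B 1_{W \in \C W_1}$; using budget-exhaustion together with $\C U$-fineness (and the standing assumption $0 < b < 1$) to pin $\EE_\mu[\tau(X)]$ to $\min(b, \Pr_\mu(u(X) > 0))$; and verifying that each $L_{\C W'}$ is universally measurable. The remaining bookkeeping—that $\mu + \beta\tilde\nu$ stays in $\bb Q$ for a set of $\beta$ of positive length—is exactly what Lemma~\ref{lem:probe} delivers once one notes $|\tilde\nu| = \nu_1 + \nu_2$.
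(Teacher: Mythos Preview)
Your proof is correct and follows essentially the same strategy as the paper's: reduce to finitely many sets indexed by proper nonempty subsets \(\C W' \subsetneq \C W\) on which a rigid budget equation must hold, then kill each with a one-dimensional probe supported on \(\{u(X) > 0\}\) that shifts mass between a stratum in \(\C W'\) and one outside it without altering \(\Pr_\mu(u(X) > 0)\). Your version is in fact a bit more streamlined---you work with \(\min(b,\Pr_\mu(u(X)>0))\) directly rather than splitting into the \(F^{\up}_\Gamma\)/\(F^{\low}_\Gamma\) cases, and you build the probe straight from the hypothesis instead of routing through the \(\mu_{\max,a,w}\) of Cor.~\ref{cor:maximal_ext}---but the idea is the same.
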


\begin{proof}
  Our goal is to show that the set \(\bb E'\) of measures \(\mu \in \bb Q\) such
  that there exists a splitting policy \(\tau(x)\) is shy. To simplify the
  proof, we divide an conquer, showing that the set \(\bb E_{\Gamma}\) of
  measures \(\mu \in \bb Q\) such that there exists a splitting policy where the
  thresholds fall below \(w \in \Gamma \subseteq \C W\) and above \(w \notin
  \Gamma\) is Borel, before constructing a probe that shows that it is shy.
  Then, we argue that \(\bb E' = \bigcup_{\Gamma \subseteq \C W} \bb
  E_{\Gamma}\), which shows that \(\bb E'\) is shy.

  We begin by considering the linear map \(\Phi : \bb K \to \B R \times \B R^{\C
  W}\) given by
  \begin{equation*}
    \Phi(\mu) = \left( \Pr_\mu(u(X) = 0), \left( \Pr_\mu(W = w) \right)_{w \in
    \C W} \right).
  \end{equation*}
  For any \(\Gamma \subseteq \C W\), the sets
  \begin{align*}
    F^{\up}_{\Gamma}
      &= \{x \in \B R \times \B R^{\C W} : x_0 \geq b, b = \sum_{w \in \Gamma}
      x_w\}, \\
    F^{\low}_{\Gamma}
      &= \{x \in \B R \times \B R^{\C W} : x_0 \leq b, x_0 = \sum_{w \in \Gamma}
      x_w\},
  \end{align*}
  are closed by construction. Therefore, since \(\Phi\) is continuous,
  \begin{equation}
  \label{eq:shy_set}
    \bb E_{\Gamma} = \bb Q \cap \Phi^{-1} \left( \bigcup_{\Gamma \subseteq \C W}
    F^{\up}_{\Gamma} \cup F^{\low}_{\Gamma} \right)
  \end{equation}
  is closed, and therefore universally measurable.

  Note that by our hypothesis and Cor.~\ref{cor:maximal_ext}, for all \(w \in \C
  W\) there exists some \(a_w \in \C A\) such that
  \begin{equation*}
    \Pr_{\mu_{\max, a_w, w}}(u(X) > 0).
  \end{equation*}
  We use this to show that \(\bb E_{\Gamma}\) is shy. Pick \(w^* \in \C W\)
  arbitrarily, and consider the measures \(\nu_w\) for \(w \neq w^*\) defined by
  \begin{align*}
    \nu_w &= \frac {\mu_{\max, a_w, w} \rest_{u(X) > 0}} {\Pr_{\mu_{\max, a_w,
    w}}(u(X) > 0)} \\
      &\hspace{1cm}- \frac {\mu_{\max, a_{w^*}, w^*} \rest_{u(X) > 0}}
      {\Pr_{\mu_{\max, a_{w^*}, w^*}}(u(X) > 0)}.
  \end{align*}
  We note that \(\nu_w[\C K] = 0\) by construction. Therefore, if \(\bb W_w =
  \Span(\nu_w)\), then \(\lambda_{\bb W_w}[\bb Q - \mu_w] > 0\) for some
  \(\mu_w\) by Lemma~\ref{lem:probe}.

  Moreover, we have that \(\Pr_\nu(u(X) > 0) = 0\) for all \(\nu \in \bb W_w\),
  i.e.,
  \begin{equation*}
  \label{eq:static}
    \Pr_\mu(u(X) > 0) = \Pr_{\mu + \nu}(u(X) > 0).
  \end{equation*}
  Now, since \(0 < b < 1\) and \(\omega\) partitions \(\C X\), it follows that
  \begin{equation*}
    \bb E_{\C W} = \bb E_{\emptyset} = \emptyset.
  \end{equation*}
  Since \(\lambda_{\bb W}[\emptyset] = 0\) for any subspace \(\bb W\), we can
  assume without loss of generality that \(\Gamma \neq \C W, \emptyset\).

  In that case, there exists \(w_{\Gamma} \in \C W\) such that if \(w^* \in
  \Gamma\), then \(w_{\Gamma} \notin \Gamma\), and vice versa. Without loss of
  generality, assume \(w_{\Gamma} \in \Gamma\) and \(w^* \notin \Gamma\). It
  then follows that for arbitrary \(\mu \in \bb Q\),
  \begin{equation*}
    \Phi(\mu + \beta \cdot \nu_{w_{\Gamma}}) = \Phi(\mu) + \beta \cdot \bb
    e_{w_{\Gamma}} - \beta \cdot \bb e_{w^*},
  \end{equation*}
  where \(\bb e_w\) is the basis vector corresponding to \(w \in \C W\). From
  this, it follows immediately by Eq.~\eqref{eq:static} that
  \begin{equation*}
    \mu + \beta \cdot \nu_{w_{\Gamma}} \in \bb E_{\Gamma}
  \end{equation*}
  only if
  \begin{equation*}
    \beta = \min(b, \Pr_\mu(u(X) > 0)) - \sum_{w \in \Gamma} \Pr_\mu(W = w).
  \end{equation*}
  This is a measure zero subset of \(\B R\), and so it follows that
  \begin{equation*}
    \lambda_{\bb W_{w_{\Gamma}}}[\bb E_{\Gamma} - \mu] = 0
  \end{equation*}
  for all \(\mu \in \bb K\). Therefore, by Prop.~\ref{prop:k_shy}, \(\bb
  E_{\Gamma}\) is shy in \(\bb Q\). Taking the union over \(\Gamma \subseteq \C
  W\), it follows by Prop.~\ref{prop:shy_axioms_rel} that \(\bigcup_{\Gamma
  \subseteq \C W} \bb E_{\Gamma}\) is shy.

  Now, we must show that \(\bb E' = \bigcup_{\Gamma \subseteq \C W} \bb
  E_{\Gamma}\). By construction, \(\bb E_{\Gamma} \subseteq \bb E'\), since the
  policy \(\tau(x) = \B 1_{\omega(x) \in \Gamma}\) is budget-exhausting and
  separates utilities. To see the reverse inclusion, suppose \(\mu \in \bb E'\),
  i.e., that there exists a budget-exhausting multiple threshold policy
  \(\tau(x)\) that splits utilities over \(\mu\). Then, let
  \begin{equation*}
    \Gamma_\mu = \{w \in \C W : \EE_\mu[\tau(X) \mid W = w] = 1\}.
  \end{equation*}
  Since \(\tau(x)\) is budget-exhausting, it follows immediately that \(\mu \in
  \bb E_{\Gamma_{\mu}}\). Therefore, \(\bb E' = \bigcup_{\Gamma \subseteq \C W}
  \bb E_{\Gamma}\), and so \(\bb E'\) is shy, as desired.
\end{proof}

We conclude our discussion of shyness and shy sets with the following general
lemma, which simplifies relative prevalence proofs by showing that one can,
without loss of generality, restrict one's attention to the elements of the shy
set itself in applying Prop.~\ref{prop:k_shy}.

\begin{lem}
\label{lem:simple}
  Suppose \(E\) is a universally measurable subset of a convex, completely
  metrizable set \(C\) in a topological vector space \(V\). Suppose that for
  some finite-dimensional subpace \(V'\), \(\lambda_{V'}[C + v_0] > 0\) for some
  \(v_0 \in V\). If, in addition, for all \(v \in E\),
  \begin{equation}
  \label{eq:hyp}
    \lambda_{V'}[\{v' \in V' : v + v' \in E\}] = 0,
  \end{equation}
  then it follows that \(E\) is shy relative to \(C\).
\end{lem}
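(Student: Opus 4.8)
The plan is to verify directly that $E$ meets the definition of a $k$-shy set in $C$, taking the probe to be $W = V'$ itself (so $k = \dim V'$), and then to conclude via Prop.~\ref{prop:k_shy}. Three of the hypotheses in that definition are immediate: $V'$ is finite-dimensional, $E$ is universally measurable by assumption, and the requirement that some translate of $C$ have positive $\lambda_{V'}$-measure is precisely the hypothesis $\lambda_{V'}[C + v_0] > 0$. The one thing that needs work is the condition that \emph{every} translate of $E$ be $\lambda_{V'}$-null; Eq.~\eqref{eq:hyp} asserts this only for the translates arising from points $v \in E$, so the task is to upgrade it to all $v \in V$.

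For $u \in V$ write $E_u = \{v' \in V' : u + v' \in E\}$, so that what must be shown is $\lambda_{V'}[E_u] = 0$ for every $u \in V$ (modulo the routine reindexing $E + v \mapsto E_{-v}$, using that $\lambda_{V'}$ is supported on $V'$, so that the two conventions for ``$\lambda_{V'}$-null translate of $E$'' coincide). I would argue by cases. If there is no $v_0 \in V'$ with $u + v_0 \in E$, then $E_u = \emptyset$ and there is nothing to prove. Otherwise fix such a $v_0$ and put $w = u + v_0 \in E$; a one-line computation, using $u = w - v_0$ and that $v' - v_0 \in V'$ whenever $v' \in V'$, gives $E_u = E_w + v_0$. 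Thus $E_u$ is a translate of $E_w$ \emph{within} $V'$, and since $\lambda_{V'}$ is ordinary Lebesgue measure on $V'$ it is invariant under such translations; hence $\lambda_{V'}[E_u] = \lambda_{V'}[E_w]$, which is $0$ by Eq.~\eqref{eq:hyp} because $w \in E$. Either way $\lambda_{V'}[E_u] = 0$.

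Having checked all four requirements, $E$ is $(\dim V')$-shy in $C$ with witnessing probe $V'$, so $E$ is shy relative to $C$ by Prop.~\ref{prop:k_shy}, which is the claim. I do not anticipate a genuine obstacle: the entire content is the translation-invariance of $\lambda_{V'}$ together with isolating the trivial empty-slice case. The only place to be slightly careful is the bookkeeping in the identity $E_u = E_w + v_0$ — in particular keeping the translating vector $v_0$ inside $V'$ so that translation invariance of $\lambda_{V'}$ legitimately applies — and noting that a translate (within $V'$) of a $\lambda_{V'}$-null set is again null, so nothing is lost even if one only interprets Eq.~\eqref{eq:hyp} in terms of outer measure.
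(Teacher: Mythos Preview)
Your proposal is correct and follows essentially the same argument as the paper: both split into the empty-slice case and the non-empty case, and in the latter translate within $V'$ to a point of $E$ so that translation invariance of $\lambda_{V'}$ together with Eq.~\eqref{eq:hyp} finishes the job. Your write-up is, if anything, a bit more explicit about the bookkeeping in the identity $E_u = E_w + v_0$ than the paper's, but the content is the same.
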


\begin{proof}
  Let \(v\) be arbitrary. Then, either \((v + V') \cap E\) is empty or not.

  First, suppose it is empty. Since \(\lambda_{V'}[\emptyset] = 0\) by
  definition, it follows immediately that in this case \(\lambda_{V'}[E - v] =
  0\).

  Next, suppose the intersection is not empty, and let \(v + v^* \in E\) for
  some fixed \(v^* \in V'\). It follows that
  \begin{align*}
    \lambda_{V'}[E - v]
      &= \lambda_{V'}[\{v' \in V' : v + v' \in E \}] \\
      &= \lambda_{V'}[\{v' \in V' : (v + v^*) + v' \in E \}] \\
      &= 0,
  \end{align*}
  where the first equality follows by definition; the second equality follows by
  the translation invariance of \(\lambda_{V'}\), and the fact that \(v^* + V' =
  V'\); and the final inequality follows from Eq.~\eqref{eq:hyp}.

  Therefore \(\lambda_{V'}[E - v] = 0\) for arbitrary \(v\), and so \(E\) is
  shy.
\end{proof}

\subsection{Proof of Theorem~\ref{thm:dist}}
\label{sec:shyness-proof}

Using the lemmata above, we can prove Theorem~\ref{thm:dist}. We briefly
summarize what has been established so far:
\begin{itemize}
  \item \textbf{Lemma~\ref{lem:banach}:} The set \(\bb K\) of \(\C U\)-fine
    distributions on \(\C K\) is a Banach space;
  \item \textbf{Lemma~\ref{lem:convex}:} The subset \(\bb Q\) of \(\C U\)-fine
    probability measures on \(\C K\) is a convex, completely metrizable subset
    of \(\bb K\);
  \item \textbf{Lemma~\ref{lem:e_closed}:} The subset \(\bb E\) of \(\bb Q\) is
    a universally measurable subset of \(\bb K\), where \(\bb E\) is the set
    consisting of \(\C U\)-fine probability measures over which there exists a
    policy satisfying counterfactual equalized odds (resp., conditional
    principal fairness, or path-specific fairness) that is not strongly Pareto
    dominated.
\end{itemize}

Therefore, to apply Prop.~\ref{prop:k_shy}, it follows that what remains is to
construct a probe \(\bb W\) and show that \(\lambda_{\bb W}[\bb Q + \mu_0] > 0\)
for some \(\mu_0 \in \bb K\) but \(\lambda_{\bb W}[\bb E + \mu] = 0\) for all
\(\mu \in \bb K\).

\begin{proof}
  We divide the proof into three pieces. First, we illustrate how to construct
  the probe \(\bb W\) from a particular collection of distributions
  \(\{\nu_a^{\up}, \nu_a^{\low}\}_{a \in \C A}\). Second, we show that
  \(\lambda_{\bb W}[\bb E + \mu] = 0\) for all \(\mu \in \bb K\). For notational
  and expository simplicity, we focus in these first two sections on the case of
  counterfactual equalized odds. Therefore, in the third section, we show how to
  generalize the argument to conditional principal fairness and path-specific
  fairness.

  \paragraph{Construction of the probe}

  We will construct our probe to address two different cases. We recall that, by
  Cor.~\ref{cor:exhaust}, any policy that is not strongly Pareto dominated must
  be a budget-exhausting multiple threshold policy with non-negative thresholds.
  In the first case, we consider when the candidate budget-exhausting multiple
  threshold policy is \(\B 1_{u(x) > 0}\). By perturbing the underlying
  distribution by \(\nu \in \bb W^{\low}\), we will be able to break the balance
  requirements implied by Eq.~\eqref{eq:counterfactual_equalized_odds}. In the
  second case, we treat the possibility that the candidate budget-exhausting
  multiple threshold policy has a non-trivial positive threshold for at least
  one group. By perturbing the underlying distribution by \(\nu \in \bb
  W^{\up}\) for an alternative set of perturbations \(\bb W^{\up}\), we will
  again be able to break the balance requirements.

  More specifically, to construct our probe \(\bb W = \bb W^{\up} + \bb
  W^{\low}\), we want \(\bb W^{\up}\) and \(\bb W^{\low}\) to have a number of
  properties. In particular, for all \(\nu \in \bb W\), perturbation by \(\nu\)
  should not affect whether the underlying distribution is a probability
  distribution, and should not affect how much of the budget is available to
  budget-exhausting policies. Specifically, for all \(\nu \in \bb W\),
  \begin{equation}
  \label{eq:preserve_prob}
    \int_{\C K} 1 \, \dx \nu = 0,
  \end{equation}
  and
  \begin{equation}
  \label{eq:preserve_budget}
    \int_{\C K} \B 1_{u(X) > 0} \, \dx \nu = 0.
  \end{equation}
  In fact, the amount of budget available to budget-exhausting policies will not
  change within group, i.e., for all \(a \in \C A\) and \(\nu \in \bb W\),
  \begin{equation}
  \label{eq:preserve_budget_ext}
    \int_{\C K} \B 1_{u(X) > 0, A = a} \, \dx \nu = 0.
  \end{equation}
  Additionally, for some distinguished \(y_0, y_1 \in \C Y\), non-zero
  perturbations in \(\nu^{\low} \in \bb W^{\low}\) should move mass between
  \(y_0\) and \(y_1\). That is, they should have the property that if
  \(\Pr_{|\nu^{\low}|}(A = a) > 0\), then
  \begin{equation}
  \label{eq:see_zero}
    \int_{\C K} \B 1_{u(X) < 0, Y = y_i, A = a} \, \dx \nu^{\low} \neq 0.
  \end{equation}
  Finally, perturbations in \(\bb W^{\up}\) should have the property that for
  any non-trivial \(t > 0\), some mass is moved either above or below \(t > 0\).
  More precisely, for any \(\mu \in \bb Q\) and any \(t\) such that
  \begin{equation*}
    0 < \Pr_\mu(u(X) > t \mid A = a) < 1,
  \end{equation*}
  if \(\nu^{\up} \in \bb W^{\up}\) is such that \(\Pr_{|\nu^{\up}|}(A = a) >
  0\), then
  \begin{equation}
  \label{eq:see_threshold}
    \int_{\C K} \B 1_{u(X) > t, A = a} \, \dx \nu^{\up} \neq 0.
  \end{equation}

  To carry out the construction, choose distinct \(y_0, y_1 \in \C Y\). Then,
  since
  \begin{equation*}
    \mu_{\max,a} \circ u^{-1}[S_a \cap [0, r_a)] - \mu_{\max,a} \circ u^{-1}[S_a
    \cap [r_a, \infty)]
  \end{equation*}
  is a continuous function of \(r_a\), it follows by the intermediate value
  theorem that we can partition \(S_a\) into three pieces,
  \begin{align*}
    S_a^{\low}
      &= S_a \cap (-\infty, 0), \\
    S^{\up}_{a,0}
      &= S_a \cap [0, r_a), \\
    S^{\up}_{a,1}
      &= S_a \cap [r_a, \infty),
  \end{align*}
  so that
  \begin{equation*}
    \Pr_{\mu_{\max,a}} \left( u(X) \in S^{\up}_{a,0} \right) =
    \Pr_{\mu_{\max,a}} \left( u(X) \in S^{\up}_{a,1} \right).
  \end{equation*}

  Recall that \(\C K = \C X \times \C Y\). Let \(\pi_{\C X} : \C K \to \C X\)
  denote projection onto \(\C X\), and \(\gamma_y : \C X \to \C K\) be the
  injection \(x \mapsto (x, y)\). We define
  \begin{align*}
    \nu_a^{\up}[E]
      &= \mu_{\max,a} \circ (\gamma_{y_1} \circ \pi_{\C X})^{-1} \left[ E \cap
        u^{-1} \left( S^{\up}_{a,1} \right) \right], \\
      &\hspace{0.5cm}- \mu_{\max,a} \circ (\gamma_{y_1} \circ \pi_{\C X})^{-1}
        \left[ E \cap u^{-1} \left( S^{\up}_{a,0} \right) \right], \\
    \nu_a^{\low}[E]
      &= \mu_{\max,a} \circ (\gamma_{y_1} \circ \pi_{\C X})^{-1} \left[ E \cap
        u^{-1} \left( S^{\low}_a \right) \right] \\
      &\hspace{0.5cm}- \mu_{\max,a} \circ (\gamma_{y_0} \circ \pi_{\C X})^{-1}
        \left[ E \cap u^{-1} \left( S^{\low}_a \right) \right].
  \end{align*}
  By construction, \(\nu_a^{\up}\) concentrates on
  \begin{equation*}
    \{y_1\} \times u^{-1}(S_a \cap [0, \infty)),
  \end{equation*}
  while \(\nu_a^{\low}\) concentrates on
  \begin{equation*}
    \{y_0, y_1\} \times u^{-1}(S_a \cap (-\infty, 0)).
  \end{equation*}
  Moreover, if we set
  \begin{align*}
    \bb W^{\up}
      &= \Span(\nu_a^{\up})_{a \in \C A}, \\
    \bb W^{\low}
      &= \Span(\nu_a^{\low})_{a \in \C A},
  \end{align*}
  then it is easy to see that
  Eqs.~\eqref{eq:preserve_prob}~to~\eqref{eq:see_zero} will hold. The only
  non-trivial case is Eq.~\eqref{eq:see_threshold}. However, by
  Cor.~\ref{cor:maximal}, the support of \(f_{\mu_{\max,a}}\) is maximal. That
  is, for \(\mu \in \bb Q\), if
  \begin{equation*}
    0 < \Pr_{\mu}(u(X) > t \mid A = a, u(X) > 0) < 1,
  \end{equation*}
  then it follows that \(0 < t < \sup S_a\). Either \(t \leq r_a\) or \(t >
  r_a\). First, assume \(t \leq r_a\); then, it follows by the construction of
  \(\nu_a^{\up}\) that
  \begin{align*}
    \nu_a^{\up} \circ u^{-1}[(t, \infty)]
      &= \int_{r_a}^\infty f_{\max,a} \, \dx \lambda \\
      &\hspace{1cm} - \int_t^{r_a} f_{\max,a} \, \dx \lambda \\
      &> \int_{r_a}^\infty f_{\max,a} \, \dx \lambda \\
      &\hspace{1cm} - \int_0^{r_a} f_{\max,a} \, \dx \lambda \\
      &= 0.
  \end{align*}
  Similarly, if \(t > r_a\),
  \begin{align*}
    \nu_a^{\up} \circ u^{-1}[(t, \infty)]
      &= \int_t^\infty f_{\max,a} \, \dx \lambda \\
      &> \int_{\sup S_a}^\infty f_{\max,a} \, \dx \lambda \\
      &= 0.
  \end{align*}
  Therefore Eq.~\eqref{eq:see_threshold} holds.

  Since \(\bb W\) is non-trivial\footnote{%
    In general, some or all of the \(\nu^{\low}\) may be zero depending on the
    \(\lambda\)-measure of \(S_a^{\low}\). However, as noted in
    Remark~\ref{rmk:dist_hyp}, the \(\nu_{a,i}^{\up}\) cannot be zero, since
    \(\Pr_{\mu_{\max,a}}(u(X) > 0) > 0\) for all \(a \in \C A\). Therefore \(\bb
    W \neq \{0\}\).
  }
  and \(\nu[\C K] = 0\) for all \(\nu \in \bb W\), it follows by
  Lemma~\ref{lem:probe} that  \(\lambda_{\bb W}[\bb Q - \mu] > 0\) for some
  \(\mu \in \bb K\).

  \paragraph{Shyness}

  Recall that, by Prop.~\ref{prop:shy_axioms_rel}, a set \(E\) is shy if and
  only if, for an arbitrary shy set \(E'\), \(E \setminus E'\) is shy. By
  Lemma~\ref{lem:condition}, a generic element of \(\mu \in \bb Q\) satisfies
  \(\Pr_{\mu}(u(X) > 0, Y(1) = y_i, A = a) > 0\) for \(i = 0, 1\), and \(a \in
  \C A\). Likewise, by Lemma~\ref{lem:support}, a generic \(\mu \in \bb Q\)
  satisfies \(\nu_a^{\up} \circ u^{-1} \Lt (\mu \rest_{\C X \times \{y_1\}})
  \circ u^{-1}\). Therefore, to simplify our task and recalling
  Remark~\ref{rmk:dist_hyp}, we may instead demonstrate the shyness of the set
  of \(\mu \in \bb Q\) such that:
  \begin{itemize}
    \item There exists a budget-exhausting multiple threshold policy \(\tau(x)\)
      with non-negative thresholds satisfying counterfactual equalized odds over
      \(\mu\);
    \item For \(i = 0, 1\),
      \begin{equation}
      \label{eq:imperturbable}
        \Pr_{\mu}(u(X) > 0, A = a, Y(1) = y_i) > 0;
      \end{equation}
    \item For all \(a \in \C A\),
      \begin{equation}
      \label{eq:nice_abscont}
        \nu_{a}^{\up} \circ u^{-1} \Lt (\mu \rest_{\alpha^{-1}(a) \times
        \{y_1\}}) \circ u^{-1}.
      \end{equation}
  \end{itemize}
  By a slight abuse of notation, we continue to refer to this set as \(\bb E\).
  We note that, by the construction of \(\bb W\), Eq.~\eqref{eq:imperturbable}
  is not affected by perturbation by \(\nu \in \bb W\), and
  Eq.~\eqref{eq:nice_abscont} is not affected by perturbation by \(\nu^{\low}
  \in \bb W\).

  In particular, by Lemma~\ref{lem:simple}, it suffices to show that
  \(\lambda_{\bb W}[\bb E - \mu] = 0\) for \(\mu \in \bb E\).

  Therefore, let \(\mu \in \bb E\) be arbitrary. Let the budget-exhausting
  multiple threshold policy satisfying counterfactual equalized odds over it be
  \(\tau(x)\), so that
  \begin{equation*}
    \EE_\mu[\tau(X)] = \min(b, \Pr_\mu(u(X) > 0)),
  \end{equation*}
  with thresholds \(\{t_a\}_{a \in \C A}\). We split into two cases based on
  whether \(\tau(X) = \B 1_{u(X) > 0}\) \(\mu\)-a.s.\ or not.

  In both cases, we make use of the following two useful observations.

  First, note that as \(\bb E \subseteq \bb Q\), if \(\mu + \nu\) is not a
  probability measure, then \(\mu + \nu \notin \bb E\). Therefore, without loss
  of generality, we assume throughout that \(\mu + \nu\) is a probability
  measure.

  Second, suppose \(\tau'(x)\) is a policy satisfying counterfactual equalized
  odds over some \(\nu \in \bb Q\). Then, if \(0 < \EE_\mu[\tau'(X)] < 1\), it
  follows that for all \(a \in \C A\),
  \begin{equation}
  \label{eq:all_nontrivial}
    0 < \EE_\mu[\tau'(X) \mid A = a] < 1.
  \end{equation}
  For, suppose not. Then, without loss of generality, there must be \(a_0, a_1
  \in \C A\) such that
  \begin{equation*}
    \EE_\mu[\tau'(X) \mid A = a_0] = 0
  \end{equation*}
  and
  \begin{equation*}
    \EE_\mu[\tau'(X) \mid A = a_1] > 0.
  \end{equation*}
  But then, by the law of iterated expectation, there must be some \(\C Y'
  \subseteq \C Y\) such that \(\mu[\C X \times \C Y'] > 0\) and so,
  \begin{align*}
    \B 1_{\C X \times \C Y'} \cdot \EE_\mu&[\tau'(X) \mid A = a_1, Y(1)] \\
      &> 0 \\
      &= \B 1_{\C X \times \C Y'} \cdot \EE_\mu[\tau'(X) \mid A = a_0, Y(1)],
  \end{align*}
  contradicting the fact that \(\tau'(x)\) satisfies counterfactual equalized
  odds over \(\mu\). Therefore, in what follows, we can assume that
  Eq.~\eqref{eq:all_nontrivial} holds.

  Our goal is to show that \(\lambda_{\bb W}[\bb E - \mu] = 0\).

  \begin{case}[\(\tau(X) = \B 1_{u(X) > 0}\)]
    We argue as follows. First, we show that \(\B 1_{u(X) > 0}\) is the unique
    budget-exhausting multiple threshold policy with non-negative thresholds
    over \(\mu + \nu\) for all \(\nu \in \bb W\). Then, we show that the set of
    \(\nu \in \bb W\) such that \(\B 1_{u(x) > 0}\) satisfies counterfactual
    equalized odds over \(\mu + \nu\) is a \(\lambda_{\bb W}\)-null set.

    We begin by observing that \(\bb W^{\low} \neq \{0\}\). For, if that were
    the case, then Eq.~\eqref{eq:all_nontrivial} would not hold for \(\tau(x)\).

    Next, we note that by Eq.~\eqref{eq:preserve_budget}, for any \(\nu \in \bb
    W\),
    \begin{equation*}
      \Pr_{\mu + \nu}(u(X) > 0) = \Pr_\mu(u(X) > 0)
    \end{equation*}
    and so
    \begin{equation*}
      \EE_{\mu + \nu}[\B 1_{u(X) > 0}] = \min(b, \Pr_{\mu + \nu}(u(X) > 0)).
    \end{equation*}
     If \(\tau'(x)\) is a feasible multiple threshold policy with non-negative
     thresholds and \(\tau'(X) \neq \B 1_{u(X) > 0}\) \((\mu + \nu)\)-a.s.,
     then, as a consequence,
    \begin{equation*}
      \EE_{\mu + \nu}[\tau'(X)] < \Pr_{\mu + \nu}(u(X) > 0) \leq b.
    \end{equation*}
    Therefore, it follows that \(\B 1_{u(X) > 0}\) is the unique
    budget-exhausting multiple threshold policy over \(\mu + \nu\) with
    non-negative thresholds.

    Now, note that if counterfactual equalized odds holds with decision policy
    \(\tau(x) = \B 1_{u(x) > 0}\), then, by
    Eq.~\eqref{eq:counterfactual_fairness} and Lemma~\ref{lem:cond_prob}, we
    must have that
    \begin{multline*}
      \Pr_{\mu + \nu}(u(X) > 0 \mid A = a, Y(1) = y_1) \\
        = \Pr_{\mu + \nu}(u(X) > 0 \mid A = a', Y(1) = y_1)
    \end{multline*}
    for \(a, a' \in \C A\).\footnote{%
      To ensure that both quantities are well-defined, here and throughout the
      remainder of the proof we use the fact that by
      Eqs.~\eqref{eq:preserve_budget_ext}~and~\eqref{eq:imperturbable},
      \(\Pr_{\mu + \nu}(u(X) > 0, A = a, Y(1) = y_1) > 0\).
    }

    Now, we will show that a typical element of \(\bb W\) breaks this balance
    requirement. Choose \(a^*\) such that \(\nu^{\low}_{a*} \neq 0\). Recall
    that \(\nu\) is fixed, and let \(\nu' = \nu - \beta_{a^*}^{\low} \cdot
    \nu^{\low}_{a^*}\). Let
    \begin{equation*}
      p_a = \Pr_{\mu + \nu'}(u(X) > 0 \mid A = a', Y(1) = y_1).
    \end{equation*}
    Note that it cannot be the case that \(p_a = 0\) for all \(a \in \C A\),
    since, by Eq.~\eqref{eq:imperturbable},
    \begin{equation*}
      \Pr_{\mu + \nu'}(u(X) > 0 \mid Y(1) = y_1) > 0.
    \end{equation*}
    Therefore, by the foregoing discussion, either \(p_{a^*} > 0\) or \(p_{a^*}
    = 0\) and we can choose \(a' \in \C A\) such that \(p_{a'} > 0\). Since the
    \(\nu_a^{\low}\), \(\nu_{a,i}^{\up}\) are all mutually singular, it follows
    that counterfactual equalized odds can only hold over \(\mu + \nu\) if
    \begin{equation*}
    \label{eq:pa_prime}
      p_{a'} = \Pr_{\mu + \nu}(u(X) > 0 \mid A = a^*, Y(1) = y_1).
    \end{equation*}
    Now, we observe that by Lemma~\ref{lem:cond_prob}, that
    \begin{equation*}
      \Pr_{\mu + \nu}(u(X) > 0 \mid A = a^*, Y(1) = y_1) = \frac {\eta}
      {\pi + \beta_{a^*}^{\low} \cdot \rho}
    \end{equation*}
    where
    \begin{align*}
      \eta
        &= \Pr_\mu(u(X) > 0, A = a^*, Y(1) = y_1) \\
      \pi
        &= \Pr_\mu(A = a^*, Y(1) = y_1), \\
      \rho
        &= \int_{\C K} \B 1_{A = a^*, Y(1) = y_1} \, \dx \nu_{a^*}^{\low}.
    \end{align*}
    since
    \begin{align*}
      0
        &= \int_{\C K} \B 1_{u(X) > 0, A = a^*, Y(1) = y_1} \, \dx
          \nu_{a^*}^{\low}, \\
      0
        &\neq \int_{\C K} \B 1_{A = a^*, Y(1) = y_1} \, \dx \nu_{a^*}^{\low}.
    \end{align*}
    Here, the equality follows by the fact that \(\nu^{\low}\) is supported on
    \(S^{\low}_a \times \{y_0, y_1\}\) and the inequality from
    Eq.~\eqref{eq:see_zero}.

    Therefore, if, in the first case, \(p_{a'} > 0\), then counterfactual
    equalized odds only holds if
    \begin{equation*}
      \beta_{a^*}^{\low} = \frac {e - p_{a'} \cdot \pi} {p_{a'}
      \cdot \rho},
    \end{equation*}
    since, as noted above, \(\rho \neq 0\) by Eq.~\eqref{eq:see_zero}. In the
    second case, if \(p_{a'} = 0\), then counterfactual equalized odds can only
    hold if
    \begin{equation*}
      e = p_{a^*} \cdot \pi = 0.
    \end{equation*}
    Since we chose \(a'\) so that \(p_{a^*} > 0\) if \(p_{a'} = 0\) and \(\pi >
    0\) by Eq.~\eqref{eq:imperturbable}, this is impossible.

    In either case, we see that the set of \(\beta_{a^*}^{\low} \in \B R\) such
    that there a budget-exhausting threshold policy with positive thresholds
    satisfying counterfactual equalized odds over \(\mu + \nu' +
    \beta_{a^*}^{\low} \cdot \nu_{a^*}^{\low}\) has \(\lambda\)-measure zero.
    That is
    \begin{equation*}
      \lambda_{\Span(\nu_{a^*}^{\low})}[\bb E - \mu - \nu'] = 0.
    \end{equation*}
    Since \(\nu'\) was arbitrary, it follows by Fubini's theorem that
    \(\lambda_{\bb W}[\bb E - \mu] = 0\).
  \end{case}

  \begin{case}[\(\tau(X) \neq \B 1_{u(X) > 0}\)]
    Our proof strategy is similar to the previous case. First, we show that, for
    a given fixed \(\nu^{\low} \in \bb W^{\low}\), there is a unique candidate
    policy \(\tilde \tau(x)\) for being a budget-exhausting multiple threshold
    policy with non-negative thresholds and satisfying counterfactual equalized
    odds over \(\mu + \nu^{\low} + \nu^{\up}\) for any \(\nu^{\up} \in \bb
    W^{\up}\). Then, we show that the set of \(\nu^{\up}\) such that \(\tilde
    \tau(X)\) satisfies counterfactual equalized odds has \(\lambda_{\bb
    W^{\up}}\) measure zero. Finally, we argue that this in turn implies that
    the set of \(\nu \in \bb W\) such that there exists a Pareto efficient
    policy satisfying counterfactual equalized odds over \(\mu + \nu\) has
    \(\lambda_{\bb W}\)-measure zero.

    We seek to show that \(\lambda_{\bb W^{\up}}[\bb E - (\mu + \nu^{\low})] =
    0\). To begin, we note that since \(\nu_{a,i}^{\up}\) concentrates on
    \(\{y_1\} \times \C X\) for all \(a \in \C A\), it follows that
    \begin{multline*}
      \EE_{\mu + \nu^{\low}}[d(X) \mid A = a, Y(1) = y_0] \\
        = \EE_{\mu + \nu^{\low} + \nu^{\up}}[d(X) \mid A = a, Y(1) = y_0]
    \end{multline*}
    for any \(\nu^{\up} \in \bb W^{\up}\).

    Now, suppose there exists some \(\nu^{\up} \in \bb W^{\up}\) such that there
    exists a budget-exhausting multiple threshold policy \(\tilde \tau(x)\) with
    non-negative thresholds such that counterfactual equalized odds is satisfied
    over \(\mu + \nu^{\low} + \nu^{\up}\). (If not, then we are done and
    \(\lambda_{\bb W^{\up}}[\bb E - (\mu + \nu^{\low})] = 0\), as the measure of
    the empty set is zero.) Let
    \begin{equation*}
      p = \EE_{\mu + \nu^{\low}}[\tilde \tau(X) \mid A = a, Y(1) = y_0].
    \end{equation*}
    Suppose that \(\tilde \tau'(x)\) is an alternative budget-exhausting
    multiple threshold policy with non-negative thresholds such that
    counterfactual equalized odds is satisfied. We seek to show that \(\tau'(X)
    = \tau(X)\) \((\mu + \nu^{\low} + \nu^{\up})\)-a.e.\ for any \(\nu^{\up} \in
    \bb W^{\up}\). Toward a contradiction, suppose that for some \(a_0 \in \C
    A\),
    \begin{equation*}
      \EE_{\mu + \nu^{\low}}[\tilde \tau'(X) \mid A = a_0, Y(1) = y_0] < p.
    \end{equation*}
    Since, by Eq.~\eqref{eq:imperturbable}, \(\Pr_{\mu + \nu^{\low}}(A = a_0,
    Y(1) = y_0) > 0\), it follows that
    \begin{equation*}
      \EE_{\mu + \nu^{\low}}[\tilde \tau'(X) \mid A = a_0] < \EE_{\mu +
      \nu^{\low}}[\tilde \tau(X) \mid A = a_0].
    \end{equation*}
    Therefore, since \(\tilde \tau(x)'\) is budget exhausting, there must be
    some \(a_1\) such that
    \begin{equation*}
      \EE_{\mu + \nu^{\low}}[\tilde \tau'(X) \mid A = a_1] > \EE_{\mu +
      \nu^{\low}}[\tilde \tau(X) \mid A = a_1].
    \end{equation*}
    From this, it follows \(\tilde \tau'(x)\) can be represented by a threshold
    greater than or equal to that of \(\tilde \tau(x)\) on \(\alpha^{-1}(a_1)\),
    and hence
    \begin{align*}
      \EE_{\mu + \nu^{\low}}&[\tilde \tau'(X) \mid A = a_1, Y(1) = y_0] \\
        &\geq \EE_{\mu + \nu^{\low}}[\tilde \tau(X) \mid A = a_0, Y(1) = y_0] \\
        &= p \\
        &> \EE_{\mu + \nu^{\low}}[\tilde \tau'(X) \mid A = a_0, Y(1) = y_0],
    \end{align*}
    contradicting the fact that \(\tilde \tau'(x)\) satisfies counterfactual
    equalized odds.

    By the preceding discussion, Lemma~\ref{lem:simplethresh}, and the fact that
    \(\nu^{\low}\) is supported on \(u^{-1}((-\infty, 0])\),
    \begin{equation*}
      \tilde \tau(X) = \tilde \tau'(X) \quad (\mu \rest_{\C X \times
      \{y_0\}})\text{-a.e.}
    \end{equation*}
    By Eq.~\eqref{eq:nice_abscont}, it follows that \(\tilde \tau(X) = \tilde
    \tau'(X)\) \(\nu^{\up}_{a,i}\)-a.e.\ for \(i = 0, 1\). As a consequence,
    \begin{equation*}
      \tilde \tau(X) = \tilde \tau'(X) \quad (\mu + \nu^{\low} +
      \nu^{up})\text{-a.e.}
    \end{equation*}
    for all \(\nu^{\up} \in \bb W^{\up}\). Therefore \(\tilde \tau(X)\) is,
    indeed, unique, as desired.

    Now, we note that since \(\tau(X) \neq \B 1_{u(X) > 0}\), it follows that
    \(\EE[\tau(X)] < \Pr_\mu(u(X) > 0)\). It follows that \(\EE_\mu[\tau(X)] =
    b\), since \(\tau(x)\) is budget exhausting. Therefore, by
    Eq.~\eqref{eq:preserve_budget}, it follows that for any budget-exhausting
    policy \(\tilde \tau(X)\), \(\EE[\tilde \tau(X)] = b\), and so \(\tilde
    \tau(X) \neq \B 1_{u(X) > 0}\) over \(\mu + \nu\).

    Therefore, fix \(\nu^{\low}\) and \(\tilde \tau(X)\). By
    Eq.~\eqref{eq:all_nontrivial}, there is some \(a^*\) such that
    \begin{equation*}
      0 < \Pr_{\mu + \nu^{\low}}(u(X) > \tilde t_{a^*} \mid A = a^*) < 1.
    \end{equation*}
    Then, it follows by Eq.~\eqref{eq:see_threshold} that
    \begin{equation*}
      \int_{\C K} \B 1_{u(X) > \tilde t_{a^*}} \, \dx \nu^{\up}_{a^*} \neq
      0.
    \end{equation*}
    Fix \(\nu' = \nu - \beta_{a^*}^{\up} \cdot \nu_{a^*}^{\up}\). Then, for some
    \(a \neq a^*\), set
    \begin{equation*}
      p^* = \EE_{\mu + \nu'}[\tilde \tau(X) \mid A = a, Y(1) = y_1].
    \end{equation*}
    Since the \(\nu_a^{\low}\), \(\nu_a^{\up}\) are all mutually singular, it
    follows that counterfactual equalized odds can only hold over \(\mu + \nu\)
    if
    \begin{equation*}
      p^* = \Pr_{\mu + \nu}(u(X) > \tilde t_{a^*} \mid A = a^*, Y(1) = y_1).
    \end{equation*}
    Now, we observe that by Lemma~\ref{lem:cond_prob}, that
    \begin{align}
    \label{eq:case_2_exp}
    \begin{split}
      \Pr_{\mu + \nu}(u(X) > \tilde t_{a^*} &\mid A = a^*, Y(1) = y_1) \\
        &= \frac {\eta + \beta_a^{\up} \cdot \gamma} \pi
    \end{split}
    \end{align}
    where
    \begin{align*}
      \eta
        &= \Pr_{\mu + \nu^{\low}}(u(X) > \tilde t_{a^*} \mid A = a^*, Y(1) =
          y_1), \\
      \pi
        &= \Pr_{\mu + \nu^{\low}}(A = a^*, Y(1) = y_1), \\
      \gamma
        &= \int_{\C K} \B 1_{u(X) > \tilde t_{a^*}, A = a^*, Y(1) = y_1} \, \dx
          \nu_a^{\up},
    \end{align*}
    and we note that
    \begin{equation*}
      0 = \int_{\C K} \B 1_{A = a^*, Y(1) = y_1} \, \dx \nu_a^{\low}.
    \end{equation*}
    Eq.~\eqref{eq:case_2_exp} can be rearranged to
    \begin{equation*}
      (p^* \cdot \pi - \eta) - \beta \cdot \gamma = 0.
    \end{equation*}
    This can only hold if
    \begin{equation*}
      \beta = \frac {p^* \cdot \pi - \eta} \gamma,
    \end{equation*}
    since by Eq.~\eqref{eq:see_threshold}, \(\gamma \neq 0\). Since any
    countable subset of \(\B R\) is a \(\lambda\)-null set,
    \begin{equation*}
      \lambda_{\Span(\nu_{a^*}^{\up})}[\bb E - \mu - \nu'] = 0.
    \end{equation*}
    Since \(\nu'\) was arbitrary, it follows by Fubini's theorem that
    \(\lambda_{\bb W^{\up}}[\bb E - \mu - \nu^{\low}] = 0\) in this case as
    well. Lastly, since \(\nu^{\low}\) was also arbitrary, applying Fubini's
    theorem a final time gives that \(\lambda_{\bb W}[\bb E - \mu] = 0\).
  \end{case}

  \paragraph{Conditional principal fairness and path-specific fairness}

  The extension of these results to conditional principal fairness and
  path-specific fairness is straightforward. All that is required is a minor
  modification of the probe.

  In the case of conditional principal fairness, we set
  \begin{align*}
    \nu_{a,w}^{\up}[E]
      &= \mu_{\max,a,w} \circ (\gamma_{(y_1,y_1)} \circ \pi_{\C X})^{-1}[E \cap
        u^{-1}(S^{\up}_{a,1})], \\
      &\hspace{0.4cm}- \mu_{\max,a} \circ (\gamma_{(y_1,y_1)} \circ \pi_{\C
        X})^{-1}[E \cap u^{-1}(S^{\up}_{a,w})], \\
    \nu_{a,w}^{\low}[E]
      &= \mu_{\max,a,w} \circ (\gamma_{(y_1,y_1)} \circ \pi_{\C X})^{-1}[E \cap
        u^{-1}(S^{\low}_a)] \\
      &\hspace{0.4cm}- \mu_{\max,a} \circ (\gamma_{(y_0,y_0)} \circ \pi_{\C
        X})^{-1}[E \cap u^{-1}(S^{\low}_{a,w})],
  \end{align*}
  where \(\gamma_{(y,y')} : \C X \to \C K\) is the injection \(x \mapsto (x, y,
  y')\).
  Our probe is then given by
  \begin{align*}
    \bb W^{\up}
    &= \Span(\nu_{a,w}^{\up}), \\
    \bb W^{\low}
    &= \Span(\nu_{a,w}^{\low}),
  \end{align*}
  almost as before.

  The proof otherwise proceeds virtually identically, except for two points.
  First, recalling Remark~\ref{rmk:dist_hyp}, we use the fact that a generic
  element of \(\bb Q\) satisfies \(\Pr_\mu(A = a, W = w) > 0\) in place of
  \(\Pr_\mu(A = a) > 0\) throughout. Second, we use the fact that \(\omega\)
  overlaps utility in place of Eq.~\eqref{eq:all_nontrivial}. In particular, If
  \(\omega\) does not overlap utilities for a generic \(\mu \in \bb Q\), then,
  by Lemma~\ref{lem:overlap}, there exists \(w \in \C W\) such that
  \(\Pr_\mu(u(X) > 0, W = w) = 0\) for all \(\mu \in \bb Q\). If this occurs, we
  can show that no budget-exhausting multiple threshold policy with positive
  thresholds satisfies conditional principal fairness, exactly as we did to show
  Eq.~\eqref{eq:all_nontrivial}.

  In the case of path-specific fairness, we instead define
  \begin{align*}
    S_{a,w}^{\low}
    &= S_{a,w} \cap (-\infty, r_{a,w}), \\
    S_{a,w}^{\up}
    &= S_{a,w} \cap [r_{a,w}, \infty),
  \end{align*}
  where \(r_{a,w}\) is chosen so that
  \begin{equation*}
    \Pr_{\mu_{\max,a,w}}(u(X) \in S_{a,w}^{\low}) = \Pr_{\mu_{\max,a,w}}(u(X)
    \in S_{a,w}^{\up}).
  \end{equation*}
  Let \(\pi_X\) denote the projection from \(\C K = \C A \times \C X^{\C A}\)
  given by
  \begin{equation*}
    \left( a, (x_{a'})_{a' \in \C A} \right) \mapsto x_a.
  \end{equation*}
  Let \(\pi_{a'}\) denote the projection from the \(a'\)-th component. (That is,
  given \(\mu \in \bb K\), the distribution of \(X_{\Pi, A, a'}\) over \(\mu\)
  is given by \(\mu \circ \pi^{-1}_{a'}\) and the distribution of \(X\) is given
  by \(\mu \circ \pi^{-1}_X\).)
  Then, we let \(\tilde \mu_{\max,a,w}\) be the measure on \(\C X\)
  given by
  \begin{align*}
    \tilde \mu_{\max,a,w}[E]
    &= \mu_{\max,a,w}[E \cap (u \circ \pi_a)^{-1}(s_{a,w}^{\up})] \\
    &\hspace{1cm} - \mu_{\max,a,w}[E \cap (u \circ \pi_a)^{-1}(S_{a,w}^{\low})].
  \end{align*}
  Finally, let \(\phi : \C A \to \C A\) be a
  permutation of the groups with no fixed points, i.e., so that \(a' \neq
  \phi(a')\) for all \(a' \in \C A\). Then, we define
  \begin{equation*}
    \nu_{a'} = \delta_{a'} \times \tilde \mu_{\max,\phi(a'),w_1} \times \prod_{a
    \neq \phi(a')}
    \mu_{\max,a,w_1} \circ \pi_a^{-1},
  \end{equation*}
  where \(\delta_a\) is the measure on \(\C A\) given by \(\delta_a[\{a'\}] =
  \B 1_{a = a'}\). Then, simply let
  \begin{equation*}
    \bb W = \Span(\nu_a')_{a'\in \C A}.
  \end{equation*}

  Since \(\tilde \mu_{\max,a,w}[\C X] = 0\) for all \(a \in \C A\), it follows
  that \(\nu_{a,w} \circ \pi_X^{-1} = 0\), i.e.,
  \begin{equation*}
    \Pr_\mu(X \in E) = \Pr_{\mu + \nu}(X \in E)
  \end{equation*}
  for any \(\nu \in \bb W\) and \(\mu \in \bb Q\). Therefore
  Eqs.~\eqref{eq:preserve_prob}~and~\eqref{eq:preserve_budget} hold. Moreover,
  the \(\nu_a\) satisfy the following strengthening of
  Eq.~\eqref{eq:see_threshold}. Perturbations in \(\bb W\) have the property
  that for any non-trivial \(t\)---not necessarily positive---some of the mass
  of \(u(X_{\Pi, A, a})\) is moved either above or below \(t\). More precisely,
  for any \(\mu \in \bb Q\) and any \(t\) such that
  \begin{equation*}
    0 < \Pr_\mu(u(X) > t \mid A = a) < 1,
  \end{equation*}
  if \(\nu \in \bb W\) is such that \(\Pr_{|\nu|}(A = \phi^{-1}(a)) > 0\), then
  \begin{equation}
    \int_{\C K} \B 1_{u(X_{\Pi, A, a}) > t} \, \dx \nu_a \neq 0.
  \end{equation}
  This stronger property means that we need not separately treat the case where
  \(\tau(X) = \B 1_{u(X) > 0}\) \(\mu\)-a.e.

  Other than this difference the proof proceeds in the same way, except for two
  points. First, we again make use of the fact that \(\omega\) can be assumed to
  overlap utilities in place of Eq.~\eqref{eq:all_nontrivial}, as in the case of
  conditional principal fairness. Second, \(w_0\) and \(w_1\) take the place of
  \(y_0\) and \(y_1\). In particular, to establish the uniqueness of \(\tilde
  \tau(x)\) given \(\mu\) and \(\nu^{\low}\) in the second case, instead of
  conditioning on \(y_0\), we instead condition on \(w_0\), where, following the
  discussion in Remark~\ref{rmk:dist_hyp} and Lemma~\ref{lem:condition}, this
  conditioning is well-defined for a generic element of \(\bb Q\).
\end{proof}

We have focused on causal definitions of fairness, but the thrust of our
analysis applies to non-causal conceptions of fairness as well. Below we show
that policies constrained to satisfy (non-counterfactual) equalized
odds~\cite{hardt2016equality} are generically strongly Pareto dominated, a
result that follows immediately from our proof above.

\begin{defn}
\label{defn:eo}
  \emph{Equalized odds} holds for a decision policy \(d(x)\) when
  \begin{equation}
  \label{eq:eo}
    d(X) \indep A \mid Y.
  \end{equation}
\end{defn}

We note that \(Y\) in Eq.~\eqref{eq:eo} does \emph{not} depend on our choice of
\(d(x)\), but rather represents the realized value of \(Y\), e.g., under some
\emph{status quo} decision making rule.

\begin{cor}
\label{cor:eo}
  Suppose \(\C U\) is a set of utilities consistent modulo \(\alpha\). Further
  suppose that  for all \(a \in \C A\) there exist a \(\C U\)-fine distribution
  of \(X\) and a utility \(u \in \C U\) such that \(\Pr(u(X) > 0, A = a) > 0\),
  where \(A = \alpha(X)\). Then, for almost every \(\C U\)-fine distribution of
  \(X\) and \(Y\) on \(\C X \times \C Y\), any decision policy \(d(x)\)
  satisfying equalized odds is strongly Pareto dominated.
\end{cor}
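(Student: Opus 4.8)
The plan is to observe that Corollary~\ref{cor:eo} is, up to notation, a special case of the counterfactual equalized odds branch of Theorem~\ref{thm:dist}, and to reuse that proof almost verbatim. Equalized odds, $d(X) \indep A \mid Y$, constrains the policy only through the random variable $Y$, which---exactly like the potential outcome $Y(1)$ in Definition~\ref{defn:cf-eo}---is part of the underlying distribution and does not itself depend on the choice of $d$. So I would take the state space to be $\C K = \C X \times \C Y$ (the same $\C K$ used for counterfactual equalized odds), let $\bb K$ be the $\C U$-fine totally bounded Borel measures on $\C K$, let $\bb Q \subseteq \bb K$ be the $\C U$-fine probability measures, and let $\bb E \subseteq \bb Q$ be the set of distributions over which there exists an equalized-odds policy that is not strongly Pareto dominated. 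The goal is to show $\bb E$ is shy relative to $\bb Q$, whence the conclusion follows by Prop.~\ref{prop:k_shy} and Prop.~\ref{prop:shy_axioms_rel}.

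First I would record that all the preparatory facts carry over unchanged. Lemmas~\ref{lem:banach}, \ref{lem:convex}, and~\ref{lem:e_closed} are stated for an abstract $\C K$ and never used the counterfactual meaning of the second coordinate; in particular the closedness argument in Lemma~\ref{lem:e_closed} only used that $D = \B 1_{U_D \le \tau(X)}$ with $U_D$ independent of $(X, Y)$, which holds here by the definition of $D$ in Section~\ref{sec:example}. So $\bb E$ is universally measurable. Likewise, by Cor.~\ref{cor:exhaust} any feasible policy not strongly Pareto dominated is a budget-exhausting multiple threshold policy with non-negative thresholds, and Lemmas~\ref{lem:condition}, \ref{lem:support}, and~\ref{lem:simple}, together with Cor.~\ref{cor:maximal} and Remark~\ref{rmk:dist_hyp}, apply as stated, using the hypothesis ``$\Pr(u(X) > 0, A = a) > 0$ for some $\C U$-fine distribution and some $u \in \C U$'' precisely as it is used to guarantee that $\mu_{\max,a}$ is well-defined with $\Pr_{\mu_{\max,a}}(u(X) > 0) > 0$.

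Then I would build the same probe: pick distinct $y_0, y_1 \in \C Y$, partition each $S_a = \supp(f_{\mu_{\max,a}})$ into $S_a^{\low}, S^{\up}_{a,0}, S^{\up}_{a,1}$ exactly as in the proof of Theorem~\ref{thm:dist}, form $\nu_a^{\up}, \nu_a^{\low}$ and $\bb W = \bb W^{\up} + \bb W^{\low}$ as there---now with the realized $Y$ in the role of $Y(1)$---so that Eqs.~\eqref{eq:preserve_prob} through~\eqref{eq:see_threshold} hold for the same reasons. The two-case argument then runs word for word. In the case $\tau(X) = \B 1_{u(X) > 0}$, perturbing by $\beta \cdot \nu^{\low}_{a^*}$ makes $\Pr_{\mu+\nu}(u(X) > 0 \mid A = a^*, Y = y_1)$ an affine function of $\beta$ with nonzero slope (by Lemma~\ref{lem:cond_prob} and Eq.~\eqref{eq:see_zero}), so the balance condition forced by equalized odds holds only on a $\lambda$-null set of $\beta$; in the case $\tau(X) \neq \B 1_{u(X) > 0}$, one shows as before that for fixed $\nu^{\low}$ there is a unique candidate budget-exhausting multiple threshold policy $\tilde\tau$ over $\mu + \nu^{\low} + \nu^{\up}$, and perturbing by $\beta \cdot \nu^{\up}_{a^*}$ breaks its balance condition at $\tilde t_{a^*}$ on an all-but-null set of $\beta$ (by Eq.~\eqref{eq:see_threshold}). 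Two applications of Fubini give $\lambda_{\bb W}[\bb E - \mu] = 0$ for every $\mu \in \bb E$, and Lemma~\ref{lem:simple} finishes the proof.

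The only real work is the bookkeeping check that nothing in the counterfactual-equalized-odds argument secretly exploited a relationship between $Y(1)$ and the policy---it does not, since $Y(1)$, like $Y$, is merely the second coordinate of a fixed point of $\bb Q$---and that the independence $U_D \indep (X, Y)$ is available. Since $Y$ is one-dimensional here, there is no analogue of the $W = \omega(X)$ conditioning, so the third part of the proof of Theorem~\ref{thm:dist} is not needed at all; the argument is strictly simpler than the counterfactual case, which is exactly why the corollary follows ``immediately'' from it.
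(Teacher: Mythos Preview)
Your proposal is correct and takes essentially the same approach as the paper: both observe that equalized odds on $(X,Y)$ is formally the same constraint as counterfactual equalized odds on $(X,Y(1))$, so the first branch of Theorem~\ref{thm:dist} applies verbatim. The paper packages this more tersely---recording the identification via an embedding $\iota:(x,y)\mapsto(x,y,y)$ and a projection $\pi_d:(x,y_0,y_1)\mapsto(x,y_{d(x)})$ satisfying $\pi_d\circ\iota=\id$, then citing Theorem~\ref{thm:dist} as a black box---whereas you re-run the probe construction and two-case perturbation argument explicitly, but the substance is identical.
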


\begin{proof}
  Consider the following maps. Distributions of \(X\) and \(Y(1)\), i.e.,
  probability measures on \(\C X \times \C Y\), can be embedded in the space of
  joint distributions on \(X\), \(Y(0)\), and \(Y(1)\) via pushing forward by
  the map \(\iota\), where \(\iota : (x, y) \mapsto (x, y, y)\). Likewise, given
  a fixed decision policy \(D = d(X)\), joint distributions of \(X\), \(Y(0)\),
  and \(Y(1)\) can be projected onto the space of joint distributions of \(X\)
  and \(Y\) by pushing forward by the map \(\pi_d : (x, y_0, y_1) \mapsto (x,
  y_{d(x)})\). Lastly, we see that the composition of \(\iota\) and
  \(\pi_d\)---regardless of our choice of \(d(x)\)---is the identity, as shown
  in the diagram below.
  \begin{equation*}
    \begin{tikzpicture}[xscale = 3, yscale = 2]
      \node at (0,1) (A) {\(\C X \times \C Y\)};
      \node at (1,1) (B) {\(\C X \times \C Y \times \C Y\)};
      \node at (1,0) (C) {\(\C X \times \C Y\)};
      \draw[->] (A) to node[above] {\(\iota\)} (B);
      \draw[->] (B) to node[left] {\(\pi_d\)} (C);
      \draw[->] (A) to node[below left] {\(\id\)} (C);
    \end{tikzpicture}
  \end{equation*}
  We note also that counterfactual equalized odds holds for \(\mu\) exactly when
  equalized odds holds for \(\mu \circ (\pi_d \circ \iota)^{-1}\). The result
  follows immediately from this and Theorem~\ref{thm:dist}.
\end{proof}

\subsection{General Measures on \(\C K\)}
\label{sec:counterexample}

Theorem~\ref{thm:dist} is restricted to \(\C U\)-fine and \(\C U^{\C A}\)-fine
distributions on the state space. The reason for this restriction is that when
the distribution of \(X\) induces atoms on the utility scale, threshold policies
can possess additional---or even infinite---degrees of freedom
when the threshold falls exactly on an atom. In particular circumstances, these
degrees of freedom can be used to ensure causal fairness notions, such as
counterfactual equalized odds, hold in a locally robust way. In particular, the
generalization of Theorem~\ref{thm:dist} beyond \(\C U\)-fine measures to all
totally bounded measures on the state space is false, as illustrated by the
following proposition.

\begin{prop}
\label{prop:counterexample}
  Consider the set \(\bb E' \subseteq \B K\) of distributions---not necessarily
  \(\C U\)-fine---on \(\C K = \C X \times \C Y\) over which there exists a
  Pareto efficient policy satisfying counterfactual equalized odds. There exist
  \(b\), \(\C X\), \(\C Y\), and \(\C U\) such that \(\bb E'\) is \emph{not}
  relatively shy.
\end{prop}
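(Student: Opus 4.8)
The plan is to exhibit a finite instance $(b, \C X, \C Y, \C U)$ together with a relatively open set of probability distributions on $\C K = \C X \times \C Y$ over which there is a Pareto-efficient policy satisfying counterfactual equalized odds; since a relatively open subset of a finite-dimensional simplex has positive Lebesgue measure, the last bullet of Prop.~\ref{prop:shy_axioms_rel} then shows $\bb E'$ is not shy relative to the convex, completely metrizable set $\bb Q'$ of all probability measures on $\C K$. The mechanism, as previewed in the text, is that when $u(X)$ has an atom at the utility-maximizing threshold, many distinct budget-exhausting threshold policies are optimal (Lemma~\ref{lem:maximize}), and the resulting slack can be spent enforcing Eq.~\eqref{eq:counterfactual_equalized_odds}.

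The extreme---and cleanest---case is $\C U = \{u\}$ with $u$ a positive constant, $\C A = \{a_0, a_1\}$, $\C Y = \{0,1\}$, any finite $\C X$ with both groups nonempty, and any $0 < b < 1$. Then $\Pr(u(X) > 0) = 1$ and every policy is trivially a threshold policy, so by Lemma~\ref{lem:maximize} a feasible policy is Pareto efficient (equivalently, utility-maximizing, since $|\C U| = 1$) if and only if $\EE_\mu[d(X)] = b$. In particular the constant policy $d \equiv b$ is Pareto efficient over \emph{every} $\mu$, and since $D = \B 1_{U_D \le b}$ with $U_D$ independent, $D$ is independent of $(A, Y(1))$, so $D \indep A \mid Y(1)$. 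Hence $\bb E' = \bb Q'$, which is a full-dimensional simplex and therefore not shy relative to itself. This already proves the proposition.

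If one prefers a witness in which the threshold genuinely falls on a proper atom, take $\C U = \{u\}$ with $u$ taking the two positive values $1 < 2$, the atom $\{u = 1\}$ containing (at least) two covariate values in each group, and $b > \Pr(u(X) = 2)$, so that by Lemma~\ref{lem:maximize} the Pareto-efficient policies are exactly those admitting all high-utility individuals and then distributing the residual budget over the atom. Such a policy is determined by the (at least $2|\C A|$) decisions on the atom, constrained only by the single budget equation and the two balance conditions obtained from Eq.~\eqref{eq:counterfactual_equalized_odds} at $Y(1) = 0$ and $Y(1) = 1$---and all three constraints are \emph{linear} in those decisions. At a distribution symmetric under interchanging $a_0$ and $a_1$, the interior policy that admits every atom individual with the same probability solves all three equations; since the coefficient matrix of the system depends continuously on $\mu$, for $\mu$ near this base point the system stays consistent with solutions lying in the open policy cube and of the required budget-exhausting form, yielding a relatively open subset of $\bb Q'$ inside $\bb E'$.

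The only place demanding care is the last step of this richer construction: confirming that the perturbed solutions remain genuine $[0,1]$-valued policies of the Pareto-efficient shape. This follows from continuity of the solution map once one checks the base-point coefficient matrix has locally constant rank---arranged by choosing the two within-group atom points to have distinct $Y(1)$-conditional masses so that the budget row and the two balance rows are independent---after which the conclusion is routine finite-dimensional linear algebra. For the constant-utility version none of this is needed, so a reader wanting a single explicit counterexample can simply take that one.
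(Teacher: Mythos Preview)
Your first construction (constant positive utility, finite $\C X$) is correct and does prove the proposition as stated: with $u\equiv c>0$ every policy is a threshold policy at $t=c$, budget-exhausting means $\EE_\mu[d(X)]=b$, the constant policy $d\equiv b$ is therefore utility-maximizing (hence Pareto efficient, as $|\C U|=1$) over every $\mu$, and $D=\B 1_{U_D\le b}$ is independent of $(A,Y(1))$, so $\bb E'=\bb Q'$ and the whole convex set is never shy in itself.

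This is a genuinely different and much shorter route than the paper's. The paper takes an infinite $\C X$ with a non-constant utility whose distribution is a mixture of a continuous piece and a single atom, then verifies by explicit linear algebra that for every $\mu'$ in a total-variation ball one can solve the two equalized-odds balance equations and the budget equation using the freedom at the atom, with the resulting probabilities provably in $[0,1]$. What the paper's construction buys is that in its example $\C U$-fine distributions actually exist (since $u$ has continuous range), so one sees that a \emph{single} atom already defeats the conclusion of Theorem~\ref{thm:dist} in a setting where the fineness hypothesis could meaningfully have been imposed. In your constant-utility example there are no $\C U$-fine distributions at all (a point mass has no density), so Theorem~\ref{thm:dist} is vacuous there; your example proves the proposition but does not isolate the role of the fineness assumption. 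Your second, sketched construction with two utility levels is the one that matches the paper's intent, and the mechanism you describe (enough degrees of freedom at the atom to solve a rank-$3$ linear system, continuity of the solution in $\mu$) is essentially what the paper carries out in detail.

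One small correction: you invoke the last bullet of Prop.~\ref{prop:shy_axioms_rel}, which requires $C$ to have nonempty interior in $V$; the probability simplex has empty interior in $\B R^n$. This is harmless here—either pass to the affine hull, or simply observe that $C$ is never shy in itself because its complement in $C$ is empty and hence not dense, contradicting Prop.~\ref{prop:shy_axioms_rel}—and in any case is unnecessary once you have $\bb E'=\bb Q'$.
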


\begin{proof}
  We adopt the notational conventions of Section~\ref{sec:shyness-prelims}. We
  note that by Prop.~\ref{prop:shy_axioms_rel}, a set can only be shy if it has
  empty interior. Therefore, we will construct an example in which an open ball
  of distributions on \(\C K\) in the total variation norm all allow for a
  Pareto efficient policy satisfying counterfactual equalized odds, i.e., are
  contained in \(\bb E'\).

  Let \(b = \tfrac 3 4\), \(\C Y = \{0, 1\}\), \(\C A = \{a_0, a_1\}\), and \(\C
  X = \{0, 1\} \times \{a_0, a_1\} \times \B R\). Let \(\alpha : \C X \to \C A\)
  be given by \(\alpha : (y, a, v) \mapsto a\) for arbitrary \((y, a, v) \in \C
  X\). Likewise, let \(u : \C X \to \B R\) be given by \(u : (y, a, v) \mapsto
  v\). Then, if \(\C U = \{u\}\), \(\C U\) is vacuously consistent modulo
  \(\alpha\). Consider the joint distribution \(\mu\) on \(\C K = \C X \times \C
  Y\) where for all \(y, y' \in \C Y\), \(a \in \C A\), and \(u \in \B R\),
  \begin{multline*}
    \Pr_\mu(X = (a, y, u), Y(1) = y') \\
      = \frac 1 4 \cdot \B 1_{y = y'} \cdot \Pr_\mu(u(X) = u),
  \end{multline*}
  where, over \(\mu\), \(u(X)\) is distributed as a \(\tfrac 1 2\)-mixture of
  \(\unif(1, 2)\) and \(\delta(1)\); that is, \(\Pr(u(X) = 1) = \tfrac 1 2\) and
  \(\Pr(a < u(X) < b) = \tfrac {b - a}  2\) for \(0 \leq a \leq b < 1\).

  We first observe that there exists a Pareto efficient threshold policy
  \(\tau(x)\) such that counterfactual equalized odds is satisfied with respect
  to the decision policy \(\tau(X)\). Namely, let
  \begin{equation*}
    \tau(a, y, u) =
      \begin{cases}
        1       &   u > 1, \\
        \frac 1 2   &   u = 1, \\
        0       &   u < 1.
      \end{cases}
  \end{equation*}
  Then, it immediately follows that \(\EE[\tau(X)] = \tfrac 3 4 = b\). Since
  \(\tau(x)\) is a threshold policy and exhausts the budget, it is utility
  maximizing by Lemma~\ref{lem:maximize}. Moreover, if \(D = \B 1_{U_D \leq
  \tau(X)}\) for some \(U_D \sim \unif(0, 1)\) independent of \(X\) and
  \(Y(1)\), then \(D \indep A \mid Y(1)\). Since \(u(X) \indep A, Y(1)\), it
  follows that
  \begin{align*}
    \Pr_\mu(D = 1 &\mid A = a, Y(1) = y) \\
      &\hspace{0.5cm}= \Pr(U_D \leq \tau(X) \mid A = a, Y(1) = y) \\
      &\hspace{0.5cm}= \Pr(U_D \leq \tau(X)) \\
      &\hspace{0.5cm}= \EE_\mu[\tau(X)],
  \end{align*}
  Therefore Eq.~\eqref{eq:counterfactual_equalized_odds} is satisfied, i.e.,
  counterfactual equalized odds holds. Now, using \(\mu\), we construct an open
  ball of distributions over which we can construct similar threshold policies.
  In particular, suppose \(\mu'\) is any distribution such that \(|\mu -
  \mu'|[\C K] < \tfrac 1 {64}\). Then, we claim that there exists a
  budget-exhausting threshold policy satisfying counterfactual equalized odds
  over \(\mu'\). For, we note that
  \begin{align*}
    \Pr_{\mu'}(U > 1) < \Pr_\mu(U > 1) + \frac 1 {64} = \frac {33} {64}, \\
    \Pr_{\mu'}(U \geq 1) > \Pr_\mu(U \geq 1) - \frac 1 {64} = \frac {63} {64},
  \end{align*}
  and so any threshold policy \(\tau'(x)\) satisfying \(\EE[\tau'(X)] = b =
  \tfrac 3 4\) must have \(t = 1\) as its threshold.

  We will now construct a threshold policy \(\tau'(x)\) satisfying
  counterfactual equalized odds over \(\mu'\). Consider a threshold policy of
  the form
  \begin{equation*}
    \tau'(a, y, u) =
      \begin{cases}
        1     & u > 1, \\
        p_{a,y} & u = 1, \\
        0     & u < 1.
      \end{cases}
  \end{equation*}

  For notational simplicity, let
  \begin{align*}
    q_{a,y}
      &= \Pr_{\mu'}(A = a, Y = y, U > 1), \\
    r_{a,y}
      &= \Pr_{\mu'}(A = a, Y = y, U = 1), \\
    \pi_{a,y}
      &= \Pr_{\mu'}(A = a, Y = y).
  \end{align*}
  Then, we have that
  \begin{align*}
    \EE_{\mu'}[\tau'(X)]
      &= \sum_{a,y} q_{a,y} + p_{a,y} \cdot r_{a,y}, \\
    \EE_{\mu'}[\tau'(X) \mid A = a, Y = y]
      &= \frac {q_{a,y} + p_{a,y} \cdot r_{a,y}} {\pi_{a,y}}.
  \end{align*}
  Therefore, the policy will be budget exhausting if
  \begin{equation*}
    \sum_{a,y} q_{a,y} + p_{a,y} \cdot r_{a,y} = \tfrac 3 4,
  \end{equation*}
  and it will satisfy counterfactual equalized odds if
  \begin{align}
  \label{eq:system}
  \begin{split}
    \pi_{a_1, 0} \cdot (q_{a_0, 0} &+ p_{a_0, 0} \cdot r_{a_0, 0}) \\
      &= \pi_{a_0, 0} \cdot (q_{a_1, 0} + p_{a_1, 0} \cdot r_{a_1, 0}), \\
    \pi_{a_1, 1} \cdot (q_{a_0, 1} &+ p_{a_0, 1} \cdot r_{a_0, 1}) \\
      &= \pi_{a_0, 1} \cdot (q_{a_1, 1} + p_{a_1, 1} \cdot r_{a_1, 1}),
  \end{split}
  \end{align}
  since, as above,
  \begin{multline*}
    \Pr(D = 1 \mid A = a, Y(1) = y) \\
      = \EE[\tau'(X) \mid A = a, Y(1) = y].
  \end{multline*}

  Again, for notational simplicity, let
  \begin{equation*}
    S = \frac {\tfrac 3 4 - \Pr_{\mu'}(U > 1)} {\Pr_{\mu'}(U = 1)}.
  \end{equation*}
  Then, a straightforward algebraic manipulation shows that
  Eq.~\eqref{eq:system} is solved by setting \(p_{a_0, y}\) to be
  \begin{equation*}
    \frac {S \cdot \pi_{a_0, y} \cdot (r_{a_0, y} + r_{a_1, y}) + \pi_{a_0, y}
    \cdot q_{a_1, y} - \pi_{a_1, y} \cdot q_{a_0, y}} {r_{a_0, y} \cdot
    (\pi_{a_0, y} + \pi_{a_1, y})},
  \end{equation*}
  and \(p_{a_1, y}\) to be
  \begin{equation*}
    \frac {S \cdot \pi_{a_1, y} \cdot (r_{a_0, y} + r_{a_1, y}) + \pi_{a_1, y}
    \cdot q_{a_0, y} - \pi_{a_0, y} \cdot q_{a_1, y}} {r_{a_1, y} \cdot
    (\pi_{a_0, y} + \pi_{a_1, y})}.
  \end{equation*}
  In order for \(\tau'(x)\) to be a well-defined policy, we need to show that
  \(p_{a,y} \in [0,1]\) for all \(a \in \C A\) and \(y \in \C Y\). To that end,
  note that
  \begin{align*}
    q_{a,y}
      &= \Pr_{\mu'}(A = a, Y = y, U > 1), \\
    r_{a,y}
      &= \Pr_{\mu'}(A = a, Y = y, U = 1), \\
    \pi_{a,y}
      &= \Pr_{\mu'}(A = a, Y = y), \\
    r_{a_0, y} + r_{a_1, y}
      &= \Pr_{\mu'}(Y = y, U = 1), \\
    \pi_{a_0,y} + \pi_{a_1,y}
      &= \Pr_{\mu'}(Y = y), \\
    S
      &= \frac {\frac 3 4 - \Pr_{\mu'}(U > 1)} {\Pr_{\mu'}(U = 1)}. \\
  \end{align*}
  Now, we recall that \(|\Pr_{\mu'}(E) - \Pr_\mu(E)| < \frac 1 {64}\) for any
  event \(E\) by hypothesis. Therefore,
  \begin{alignat*}{3}
    \frac {7} {64} \leq\ 
      & \makebox[\myl]{\(q_{a,y}\)}
      & \ \leq \frac {9} {64}, \\
    \frac 7 {64} \leq\ 
      & \makebox[\myl]{\(r_{a,y}\)}
      & \ \leq \frac 9 {64}, \\
    \frac 7 {64} \leq\ 
      & \makebox[\myl]{\(\pi_{a,y}\)}
      & \ \leq \frac {17} {64}, \\
    \frac {15} {64} \leq\ 
      & \makebox[\myl]{\(r_{a_0, y} + r_{a_1, y}\)}
      & \ \leq \frac {17} {64}, \\
    \frac {31} {64} \leq\ 
      & \pi_{a_0,y} + \pi_{a_1,y}
      & \ \leq \frac {33} {64}, \\
    \frac {15} {31} \leq\ 
      & \makebox[\myl]{\(S\)}
      & \ \leq \frac {17} {33}.
   \end{alignat*}
  Using these bounds and the expressions for \(p_{a,y}\) derived above, we see
  that
  \begin{align*}
    \frac {629} {3069} < p_{a,y} < \frac {6497} {7161},
  \end{align*}
  and hence \(p_{a,y} \in [0, 1]\) for all \(a \in \C A\) and \(y \in \C Y\).

  Therefore, the policy \(\tau'(x)\) is well-defined, and, by construction, is
  budget-exhausting and therefore utility-maximizing by
  Lemma~\ref{lem:maximize}. It also satisfies counterfactual equalized odds by
  construction. Since \(\mu'\) was arbitrary, it follows that the set of
  distributions on \(\C K\) such that there exists a Pareto efficient policy
  satisfying counterfactual equalized odds contains an open ball, and hence is
  not shy.
\end{proof}

\section{Theorem~\ref{thm:path_specific} and Related Results}

We first prove a variant of Theorem~\ref{thm:path_specific} for general,
continuous covariates \(\C X\). Then, we extend and generalize
Theorem~\ref{thm:path_specific} using the theory of finite Markov chains,
offering a proof of the theorem different from the sketch included in the main
text.

\subsection{Extension to Continuous Covariates}

Here we follow the proof sketch in the main text for
Theorem~\ref{thm:path_specific}, which assumes a finite covariate-space \(\C
X\). In that case, we start with a point \(x^*\) with maximum decision
probability \(d(x^*)\), and then assume, toward a contradiction, that there
exists a point with strictly lower decision probability. The general case is
more involved since it is not immediately clear that the maximum value of
\(d(x)\) is achieved with positive probability  in \(\C X\). We start with the
lemma below before proving the main result.

\begin{lem}
\label{lem:d_pi}
  A decision policy \(d(x)\) satisfies path-specific fairness with \(W = X\) if
  and only if any \(a' \in \C A\),
  \begin{equation*}
    \EE[d(X_{\Pi, A, a'}) \mid X] = d(X).
  \end{equation*}
\end{lem}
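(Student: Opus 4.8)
The plan is to unwind the definition of path-specific fairness (Definition~\ref{defn:ps}) with $W = X$, using the structural equation $f_D(x, u_D) = \B 1_{u_D \le d(x)}$ together with the independence of the exogenous variable $U_D$ from the covariates and their counterfactuals. Recall that, by construction, $D = \B 1_{U_D \le d(X)}$ and $D_{\Pi, A, a'} = f_D(X_{\Pi, A, a'}, U_D) = \B 1_{U_D \le d(X_{\Pi, A, a'})}$, where $U_D \sim \unif(0,1)$ is independent of $\{X, X_{\Pi, A, a'}\}$; this independence is exactly as in the proof of Theorem~\ref{thm:lp}, Part 2, and follows from the fact that $D$ is defined by a structural equation downstream of $X$, so that neither $X$ nor any of its path-specific counterfactuals is a function of $U_D$.

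First I would compute the two conditional expectations appearing in Eq.~\eqref{eq:path_specific_fairness} with $W = X$. Since $U_D \indep X$ and $U_D \sim \unif(0,1)$, the standard fact that $\EE[g(X,U) \mid X] = \int g(X,u)\,\diff F_U(u)$ for independent $X$, $U$ (already invoked in the proof of Lemma~\ref{lem:e_closed}) gives $\EE[D \mid X] = \EE[\B 1_{U_D \le d(X)} \mid X] = d(X)$. For the counterfactual term, I would apply the tower property, conditioning first on the pair $(X, X_{\Pi, A, a'})$: since $U_D$ is independent of this pair, the same fact yields $\EE[\B 1_{U_D \le d(X_{\Pi, A, a'})} \mid X, X_{\Pi, A, a'}] = d(X_{\Pi, A, a'})$, and hence
\begin{equation*}
  \EE[D_{\Pi, A, a'} \mid X] = \EE\big[\,\EE[\B 1_{U_D \le d(X_{\Pi, A, a'})} \mid X, X_{\Pi, A, a'}] \,\big|\, X\,\big] = \EE[d(X_{\Pi, A, a'}) \mid X].
\end{equation*}

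Combining these identities, the defining condition of path-specific fairness with $W = X$, namely $\EE[D_{\Pi, A, a'} \mid X] = \EE[D \mid X]$ for every $a' \in \C A$, is equivalent to $\EE[d(X_{\Pi, A, a'}) \mid X] = d(X)$ for every $a' \in \C A$, which is the claim. The argument is routine and essentially the $W = X$ specialization of the computation in the proof of Theorem~\ref{thm:lp}, Part 2; the only point that warrants explicit mention is the independence of $U_D$ from $X$ and from all of its path-specific counterfactuals, which I would flag as following directly from $D$ being a descendant of $X$ in the causal model.
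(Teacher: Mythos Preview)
Your proposal is correct and follows essentially the same approach as the paper: both use the structural equation $D_{\Pi,A,a'} = \B 1_{U_D \le d(X_{\Pi,A,a'})}$, the independence of $U_D$ from $(X, X_{\Pi,A,a'})$, and the tower property to reduce $\EE[D_{\Pi,A,a'} \mid X]$ to $\EE[d(X_{\Pi,A,a'}) \mid X]$. Your version is in fact slightly more streamlined, since you establish the identity $\EE[D_{\Pi,A,a'} \mid X] = \EE[d(X_{\Pi,A,a'}) \mid X]$ once and use it for both directions, whereas the paper treats the two directions separately, in one direction routing through $\EE[D_{\Pi,A,a'} \mid X_{\Pi,A,a'}]$ and the conditional independence $D_{\Pi,A,a'} \indep X \mid X_{\Pi,A,a'}$ before applying iterated expectations.
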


\begin{proof}
  First, suppose that \(d(x)\) satisfies path-specific fairness. To show the
  result, we use the standard fact that for independent random variables \(X\)
  and \(U\),
  \begin{equation}
  \label{eq:efxu}
    \EE[f(X,U) \mid X] = \int f(X, u) \diff F_U(u),
  \end{equation}
  where \(F_U\) is the distribution of \(U\). \citep[For a proof of this fact
  see, for example,][]{drhab2019conditional}

  Now, we have that
  \begin{align*}
    \EE[D_{\Pi, A, a'} \mid X_{\Pi, A, a'}]
    &= \EE[\mathbb{1}_{U_D \leq d(X_{\Pi, A, a'})} \mid X_{\Pi, A, a'}] \\
    &= \int_0^1 \mathbb{1}_{u \leq d(X_{\Pi, A, a'})} \diff u \\
    &= d(X_{\Pi, A, a'}),
  \end{align*}
  where the first equality follows from the definition of \(D_{\Pi, A, a'}\),
  and the second from Eq.~\eqref{eq:efxu}, since the exogenous variable \(U_D
  \sim \unif(0,1)\) is independent of the counterfactual covariates \(X_{\Pi, A,
  a'}\). An analogous argument shows that \(\EE[D \mid X] = d(X)\).

  Finally, conditioning on \(X\), we have
  \begin{align*}
    \EE[d(X_{\Pi, A, a'}) \mid X]
      &= \EE[\EE[D_{\Pi, A, a'} \mid X_{\Pi, A, a'}] \mid X] \\
      &= \EE[\EE[D_{\Pi, A, a'} \mid X_{\Pi, A, a'}, X] \mid X] \\
      &= \EE[D_{\Pi, A, a'} \mid X] \\
      &= \EE[D \mid X] \\
      &= d(X),
  \end{align*}
  where the second equality follows from the fact that \(D_{\Pi, A, a'} \indep X
  \mid X_{\Pi, A, a'}\), the third from the law of iterated expectations, and
  the fourth from the definition of path-specific fairness.

  Next, suppose that
  \begin{equation*}
    \EE[d(X_{\Pi, A, a'} \mid X] = d(X)
  \end{equation*}
  for all \(a' \in \C A\). Then, since \(W = X\) and \(X \indep U_D\), using
  Eq.~\eqref{eq:efxu}, we have that for all \(a' \in \C A\),
  \begin{align*}
    \EE[D_{\Pi, A, a'} \mid X]
      &= \EE[\EE[\B 1_{U_D \leq d(X_{\Pi, A, a'})} \mid X_{\Pi, A, a'}, X] \mid
        X] \\
      &= \EE[\EE[d(X_{\Pi, A, a'}) \mid X_{\Pi, A, a'}, X] \mid X] \\
      &= \EE[d(X_{\Pi, A, a'}) \mid X] \\
      &= d(X) \\
      &= \EE[d(X) \mid X] \\
      &= \EE[D \mid X].
  \end{align*}
  This is exactly Eq.~\eqref{eq:path_specific_fairness}, and so the result
  follows.
\end{proof}

We are now ready to prove a continuous variant of
Theorem~\ref{thm:path_specific}. The technical hypotheses of the theorem ensure
that the conditional probability measures \(\Pr(E \mid X)\) are ``sufficiently''
mutually non-singular distributions on \(\C X\) with respect to the distribution
of \(X\)---for example, the conditions ensure that the conditional distribution
of \(X_{\Pi, A, a} \mid X\) does not have atoms that \(X\) itself does not have,
and \emph{vice versa}. For notational and conceptual simplicity, we only
consider the case of trivial \(\zeta\), i.e., where \(\zeta(x) = \zeta(x')\) for
all \(x, x' \in \C X\).

\begin{prop}
\label{prop:path_specific}
  Suppose that
  \begin{enumerate}
    \item For all \(a \in \C A\) and any event \(S\) satisfying \(\Pr(X \in S
      \mid A = a) > 0\), we have, a.s.,
      \begin{equation*}
        \Pr(X_{\Pi, A, a} \in S \lor A = a \mid X) > 0.
      \end{equation*}
    \item For all \(a \in \C A\) and \(\epsilon > 0\), there exists \(\delta >
      0\) such that for any event \(S\) satisfying \(\Pr(X \in S \mid A = a) <
      \delta\), we have, a.s.,
      \begin{equation*}
        \Pr(X_{\Pi, A, a} \in S, A \neq a \mid X) < \epsilon.
      \end{equation*}
  \end{enumerate}
  Then, for \(W = X\), any \(\Pi\)-fair policy \(d(x)\) is constant a.s. (i.e.,
  \(d(X) = p\) a.s.\ for some \(0 \leq p \leq 1\)).
\end{prop}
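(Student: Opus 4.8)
The plan is to follow the finite-case proof sketch, replacing the maximizer $x^*$ of $d$ by an approximate maximizer and working with the essential supremum. By Lemma~\ref{lem:d_pi}, path-specific fairness with $W=X$ is equivalent to $\EE[d(X_{\Pi,A,a'})\mid X]=d(X)$ a.s.\ for every $a'\in\C A$; write $\nu^{a'}_x$ for a regular version of the conditional law of $X_{\Pi,A,a'}$ given $X=x$ and $\mu$ for the law of $X$. The first step is to unpack the two hypotheses into statements about these kernels: for $\mu$-a.e.\ $x$ with $\alpha(x)\ne a'$ one has $X_{\Pi,A,a'}\in\alpha^{-1}(a')$, hypothesis~(1) says $\mu(\,\cdot\mid A=a')\ll\nu^{a'}_x$, and hypothesis~(2) says the family $\{\nu^{a'}_x\}_{\alpha(x)\ne a'}$ is \emph{uniformly} absolutely continuous with respect to $\mu(\,\cdot\mid A=a')$; together these give $\nu^{a'}_x\sim\mu(\,\cdot\mid A=a')$, while for $\alpha(x)=a'$ consistency forces $X_{\Pi,A,a'}=X$, so the relation is trivial there. (The argument genuinely uses $|\C A|\ge 2$; with one group every policy is vacuously $\Pi$-fair.)

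Next, set $M=\operatorname{ess\,sup}_\mu d$. Applying hypothesis~(2) to $S=\{d>M\}$, which is $\mu$-null, together with consistency, one gets $d(X_{\Pi,A,a'})\le M$ a.s., so $M-d(X_{\Pi,A,a'})\ge 0$ a.s. Averaging, for a.e.\ $x$ with $\alpha(x)\ne a'$, $d(x)=\int d\,\mathrm d\nu^{a'}_x\le\operatorname{ess\,sup}_{\nu^{a'}_x}d=\operatorname{ess\,sup}_{\mu(\cdot\mid A=a')}d=:M_{a'}$; combined with $d\le M_{a'}$ holding $\mu(\,\cdot\mid A=a')$-a.e.\ on $\alpha^{-1}(a')$, this yields $d\le M_{a'}$ $\mu$-a.e.\ everywhere, hence $M\le M_{a'}$, and since trivially $M_{a'}\le M$ we get $M_{a'}=M$ for \emph{every} $a'$ (dually $\operatorname{ess\,inf}_{\mu(\cdot\mid A=a')}d=\operatorname{ess\,inf}_\mu d=:m$ for every $a'$). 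A short argument then gives a dichotomy: either $d=M$ $\mu$-a.s.\ (and we are done), or $d<M$ $\mu$-a.s.; dually for $m$. So it remains to rule out the case $m<M$ with $d\in(m,M)$ $\mu$-a.s.

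To finish, I would note that $d$ is a bounded function harmonic for the Markov operator obtained by composing two counterfactual steps through two distinct groups $a_0\ne a_1$: the resulting kernel $\rho_x$ is, for $\mu$-a.e.\ $x$, mutually absolutely continuous with the single fixed probability measure $\mu(\,\cdot\mid A=a_1)$, and hypothesis~(2) makes the densities $\mathrm d\rho_x/\mathrm d\mu(\cdot\mid A=a_1)$ uniformly integrable. One then either iterates a quantitative version of the averaging bound above around the groups (pushing concentration of $\nu^{a'}_x$ near $\{d=M\}$ from one group to the next, the rigorous analogue of the finite sketch), or—more cleanly—runs the associated Markov chain, whose one-step laws stay equivalent to $\mu(\,\cdot\mid A=a_1)$: the bounded martingale $d(Y_n)$ converges a.s., while uniform integrability precludes the ``escape of mass toward the extremes of the range of $d$'' characteristic of transient chains, forcing $d$ to be a.s.\ constant on $\alpha^{-1}(a_1)$; propagating this through $d(x)=\int d\,\mathrm d\nu^{a'}_x$ then gives $d\equiv\text{const}$ $\mu$-a.s., contradicting $m<M$.

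The main obstacle is exactly this last step: because the essential supremum of $d$ need not be attained with positive probability, the naive ``take the maximizer and average it down'' move of the finite proof only delivers ``$d<M$ a.s.'', which is not contradictory. Turning it into a contradiction requires transferring quantitative concentration of $\nu^{a'}_x$ near $\{d=M\}$ back to $\mu(\,\cdot\mid A=a')$, and the hypotheses supply absolute continuity in the convenient direction only in the \emph{uniform} form—hence the essential role of hypothesis~(2)'s uniformity, which keeps mass from leaking off the scale of $\mu(\,\cdot\mid A=a')$. Some care is also needed with the pervasive ``a.s.'' qualifiers, with null sets, and with the well-definedness of the regular conditional distributions $\nu^{a'}_x$.
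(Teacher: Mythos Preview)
Your overall framing is right: the essential supremum replaces the maximizer, Lemma~\ref{lem:d_pi} converts $\Pi$-fairness into the averaging identity $d(X)=\EE[d(X_{\Pi,A,a'})\mid X]$, and the dichotomy ``either $\Pr(d=M)>0$ (whence $d\equiv M$ by hypothesis~(1) and the averaging identity) or $d<M$ a.s.'' is exactly how the paper organizes the proof. The gap is in your final step, where you propose to rule out the case $d<M$ a.s.\ via a Markov-chain/martingale argument. That argument is not carried out, and as stated it does not obviously close: the bounded martingale $d(Y_n)$ always converges a.s., and ``uniform integrability precludes escape of mass toward the extremes'' is not a theorem---one needs a concrete mechanism forcing the limit to be deterministic, and you have not supplied one.

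The paper resolves this obstacle far more directly, and you have all the ingredients in hand. The point is that the essential supremum \emph{is} attained with positive probability. Suppose not. Pick $a_0$ so that $\Pr(M-d(X)<\epsilon\mid A=a_0)>0$ for all $\epsilon>0$ (some group must see $d$ arbitrarily close to $M$), and pick any $a_1\neq a_0$. Since $M-d(X_{\Pi,A,a_1})\ge 0$ a.s.\ and has conditional mean $M-d(X)$, Markov's inequality gives
\[
  \Pr\bigl(M-d(X_{\Pi,A,a_1})<\rho\ \big|\ X\bigr)\ \ge\ 1-\frac{M-d(X)}{\rho}.
\]
On the positive-probability event $\{M-d(X)<\rho/2,\ A=a_0\}$ this lower bound exceeds $1/2$. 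On the other hand, under the contradiction hypothesis $\Pr(d(X)=M\mid A=a_1)=0$, so $\Pr(M-d(X)<\rho\mid A=a_1)\to 0$ as $\rho\downarrow 0$; now hypothesis~(2) (with $S=\{M-d<\rho\}$ and $\epsilon=1/2$) forces $\Pr(M-d(X_{\Pi,A,a_1})<\rho,\ A\neq a_1\mid X)<1/2$ a.s.\ for small $\rho$. On the event above $A=a_0\neq a_1$, so these two bounds contradict each other. Hence $\Pr(d(X)=M\mid A=a_0)>0$, and your dichotomy finishes the proof. This is exactly the ``quantitative concentration transfer'' you allude to; the missing idea is simply to apply Markov's inequality to the nonnegative variable $M-d(X_{\Pi,A,a_1})$ conditionally on $X$.
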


\begin{proof}
  Let \(d_{\max} = \|d(x)\|_\infty\), the essential supremum of \(d\). To
  establish the theorem statement, we show that \(\Pr(d(X) = d_{\max} \mid A
  = a) = 1\) for all \(a \in \C A\). To do that, we begin by showing that there
  exists some \(a \in \C A\) such that \(\Pr(d(X) = d_{\max} \mid A = a) > 0\).

  Assume, toward a contradiction, that for all \(a \in \C A\),
  \begin{equation}
  \label{eq:contradiction}
    \Pr(d(X) = d_{\max} \mid A = a) = 0.
  \end{equation}
  Because \(\C A\) is finite, there must be some \(a_0 \in \C A\) such that
  \begin{equation}
  \label{eq:dmax}
    \Pr(d_{\max} - d(X) < \epsilon \mid A = a_0) > 0
  \end{equation}
  for all \(\epsilon > 0\).

  Choose \(a_1 \neq a_0\). We show that for values of \(x\) such that \(d(x)\)
  is close to \(d_{\max}\), the distribution of \(d(X_{\Pi, A, a_1}) \mid X =
  x\) must be concentrated near \(d_{\max}\) with high probability to satisfy
  the definition of path-specific fairness, in
  Eq.~\eqref{eq:path_specific_fairness}. But, under the assumption in
  Eq.~\eqref{eq:contradiction}, we also show that the concentration occurs with
  low probability, by the continuity hypothesis in the statement of the theorem,
  establishing the contradiction.

  Specifically, by Markov's inequality, for any \(\rho > 0\), a.s.,
  \begin{align*}
    \Pr(d_{\max} - &\ d(X_{\Pi, A, a_1}) \geq \rho \mid X) \\
      &\leq \frac {\EE[d_{\max} - d(X_{\Pi, A, a_1}) \mid X]} {\rho} \\
      &= \frac {d_{\max} - d(X)} {\rho},
  \end{align*}
  where the final equality follows from Lemma~\ref{lem:d_pi}. Rearranging, it
  follows that for any \(\rho > 0\), a.s.,
  \begin{align}
  \label{eq:ineq1}
  \begin{split}
    \Pr(d_{\max} &- d(X_{\Pi, A, a_1}) < \rho \mid X) \\
      &\hspace{1.2cm}\geq 1 - \frac {d_{\max} - d(X)} {\rho}.
  \end{split}
  \end{align}

  Now let \(S = \{x \in \C X : d_{\max} - d(x) < \rho\}\). By the second
  hypothesis of the theorem, we can choose \(\delta\) sufficiently small that if
  \begin{equation*}
    \Pr(X \in S \mid A = a_1) < \delta
  \end{equation*}
  then, a.s.,
  \begin{equation*}
    \Pr(X_{\Pi, A, a_1} \in S, A \neq a_1 \mid X) < \tfrac 1 2.
  \end{equation*}
  In other words, we can chose \(\delta\) such that if
  \begin{equation*}
    \Pr(d_{\max} - d(X) < \rho \mid A = a_1) < \delta
  \end{equation*}
  then, a.s.,
  \begin{equation*}
    \Pr(d_{\max} - d(X_{\Pi, A, a_1}) < \rho, A \neq a_1 \mid X) < \tfrac 1 2
  \end{equation*}

  By Eq.~\eqref{eq:contradiction}, we can choose \(\epsilon > 0\) so small that
  \begin{equation*}
    \Pr(d_{\max} - d(X) < \epsilon \mid A = a_1) < \delta.
  \end{equation*}
  Then, we have that
  \begin{equation*}
    \Pr(d_{\max} - d(X_{\Pi, A, a_1}) < \epsilon, A \neq a_1 \mid X) < \tfrac 1
    2
  \end{equation*}
  a.s. Further, by the definition of the essential supremum and \(a_0\), and the
  fact that \(a_0 \neq a_1\), we have that
  \begin{equation*}
    \Pr(d_{\max} - d(X) < \tfrac \epsilon 2, A \neq a_1) > 0.
  \end{equation*}
  Therefore, with positive probability, we have that
  \begin{align*}
  \label{eq:ineq2}
    1 &- \frac {d_{\max} - d(X)} {\epsilon} \\
      &\hspace{1cm}> 1 - \frac { \frac \epsilon 2} \epsilon \\
      &\hspace{1cm}= \frac 1 2 \\
      &\hspace{1cm}> \Pr(d_{\max} - d(X_{\Pi, A, a_1}) < \epsilon, A \neq a_1
        \mid X).
  \end{align*}
  This contradicts Eq.~\eqref{eq:ineq1}, and so it cannot be the case that
  \(\Pr(d(X) = d_{\max} \mid A = a_0) = 0\), meaning \(\Pr(d(X) = d_{\max} \mid
  A = a_0) > 0\).

  Now, we show that \(\Pr(d(X) = d_{\max} \mid A = a_1) = 1\). Suppose, toward a
  contradiction, that
  \begin{equation*}
    \Pr(d(X) < d_{\max} \mid A = a_1) > 0.
  \end{equation*}
  Then, by the first hypothesis, a.s.,
  \begin{equation*}
    \Pr(d(X_{\Pi, A, a_1}) < d_{\max} \lor A = a_1 \mid X) > 0
  \end{equation*}
  As a consequence,
  \begin{align*}
    d_{\max}
      &= \EE[d(X) \mid d(X) = d_{\max}, A = a_0] \\
      &= \EE[ \EE[d(X_{\Pi, A, a_1}) \mid X] \mid d(X) = d_{\max}, A = a_0] \\
      &< \EE[ \EE[d_{\max} \mid X] \mid d(X) = d_{\max}, A = a_0] \\
      &= \EE[d_{\max} \mid d(X) = d_{\max}, A = a_0] \\
      &= d_{\max},
  \end{align*}
  where we can condition on the set \(\{d(X) = d_{\max}, A = a_0\}\) since
  \(\Pr(d(X) = d_{\max} \mid A = a_0) > 0\); and the second equality above
  follows from Lemma~\ref{lem:d_pi}. This establishes the contradiction, and so
  \(\Pr(d(X) = d_{\max} \mid A = a_1) = 1\).

  Finally, we extend this equality to all \(a \in \C A\). Since, \(\Pr(d(X) \neq
  d_{\max} \mid A = a_1) = 0\), we have, by the second hypothesis of the
  theorem, that, a.s.,
  \begin{equation*}
    \Pr(d(X_{\Pi, A, a_1}) \neq d_{\max}, A \neq a_1 \mid X) = 0.
  \end{equation*}
  Since, by definition, \(\Pr(X_{\Pi, A, a_1} = X \mid A = a_1) = 1\), and
  \(\Pr(d(X) = d_{\max} \mid A = a_1) = 1\), we can strengthen this to
  \begin{equation*}
    \Pr(d(X_{\Pi, A, a_1}) \neq d_{\max} \mid X) = 0.
  \end{equation*}

  Consequently, a.s.,
  \begin{align*}
    d(X) &= \EE[d(X_{\Pi, A, a}) \mid X]\\
    & = \EE[d_{\max} \mid X]\\
    & = d_{\max},
  \end{align*}
  where the first equality follows from Lemma~\ref{lem:d_pi}, establishing the
  result.
\end{proof}

\subsection{A Markov Chain Perspective}
\label{sec:markov}

The theory of Markov chains illuminates---and allows us to extend---the proof of
Theorem~\ref{thm:path_specific}. Suppose \(\C X = \{x_1, \ldots,
x_n\}\).\footnote{%
  Because of the technical difficulties associated with characterizing the
  long-run behavior of arbitrary infinite Markov chains, we restrict our
  attention in this section to Markov chains with finite state spaces.
} For any \(a' \in \C A\), let \(P_{a'} = [p^{a'}_{i,j}]\) where \(p^{a'}_{i,j}
= \Pr(X_{\Pi, A, a'} = x_j \mid X = x_i)\). Then \(P_{a'}\) is a stochastic
matrix.

To motivate the subsequent discussion, we first note that this perspective
conceptually simplifies some of our earlier results. Lemma~\ref{lem:d_pi} can be
recast as stating that when \(W = X\), a policy \(d\) is \(\Pi\)-fair if and
only if \(P_{a'} d = d\)---i.e., if and only if \(d\) is a 1-eigenvector of
\(P_{a'}\)---for all \(a' \in \C A\).

The 1-eigenvectors of Markov chains have a particularly simple structure, which
we derive here for completeness.

\begin{lem}
\label{lem:markov}
  Let \(S_1, \ldots, S_m\) denote the recurrent classes of a finite Markov chain
  with transition matrix \(P\). If \(d\) is a 1-eigenvector of \(P\), then \(d\)
  takes a constant value \(p_k\) on each \(S_k\), \(k = 1, \ldots, m\), and
  \begin{equation}
  \label{eq:pi_fair_character}
    d_i = \sum_{k=1}^m \left[ \lim_{n \to \infty} \sum_{j \in S_k} P_{ij}^n
    \right] \cdot p_k.
  \end{equation}
\end{lem}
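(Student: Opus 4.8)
The plan is to combine the maximum principle for harmonic functions on irreducible chains with the monotone convergence of $n$-step transition probabilities into recurrent classes. Write $T$ for the set of transient states, so that $\C X = T \sqcup S_1 \sqcup \cdots \sqcup S_m$, and recall the standard fact that each recurrent class $S_k$ is \emph{closed}, i.e., $\sum_{j \in S_k} P_{ij} = 1$ for every $i \in S_k$. Also recall that $Pd = d$ implies $P^n d = d$ for all $n$.

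First I would show that $d$ is constant on each $S_k$. Fix $k$ and let $i^* \in S_k$ attain $\max_{i \in S_k} d_i$. Since $S_k$ is closed and $Pd = d$, the value $d_{i^*} = \sum_{j \in S_k} P_{i^*j} d_j$ is a convex combination of values $d_j \le d_{i^*}$, so $d_j = d_{i^*}$ whenever $P_{i^*j} > 0$. Applying the same argument to $P^n$ (which also fixes $d$) shows $d_j = d_{i^*}$ whenever $P^n_{i^*j} > 0$; since $S_k$ is a finite communicating class, every $j \in S_k$ is reachable from $i^*$ in some number of steps, and hence $d \equiv d_{i^*} =: p_k$ on $S_k$.

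Next I would establish the formula. Iterating $Pd = d$ gives, for any state $i$ and any $n$,
\begin{equation*}
  d_i = \sum_{j} P^n_{ij} d_j = \sum_{k=1}^m p_k \sum_{j \in S_k} P^n_{ij} + \sum_{j \in T} P^n_{ij} d_j,
\end{equation*}
using that $d \equiv p_k$ on $S_k$. Because each $S_k$ is absorbing, $\sum_{j \in S_k} P^n_{ij} = \Pr(X_n \in S_k \mid X_0 = i)$ is non-decreasing in $n$ and therefore converges, with limit equal to the absorption probability into $S_k$ — precisely the bracketed quantity in Eq.~\eqref{eq:pi_fair_character}. I would also invoke the classical fact that in a finite chain the transient states are eventually vacated (from every transient state some recurrent class is reached within a fixed number of steps with probability bounded below, so $\sum_{j \in T} P^n_{ij} \to 0$), which kills the last sum since $d$ is bounded. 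Letting $n \to \infty$ term by term in the finite sum yields $d_i = \sum_{k=1}^m p_k \bigl[\lim_{n\to\infty}\sum_{j \in S_k} P^n_{ij}\bigr]$; this also covers $i \in S_\ell$, where the limits are $\delta_{k\ell}$ and the right-hand side collapses to $p_\ell = d_i$.

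The only delicate points are the two convergence facts about finite Markov chains — monotonicity (hence convergence) of $\sum_{j \in S_k} P^n_{ij}$, and $\sum_{j\in T} P^n_{ij} \to 0$ — both of which are classical; I would either cite a standard reference on finite Markov chains or include the short geometric-decay argument for the transient mass. Everything else is elementary linear algebra together with the averaging/maximum-principle argument above, so I do not expect any genuine obstacle.
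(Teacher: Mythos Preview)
Your proof is correct and takes a genuinely different route from the paper's. The paper puts \(P\) in canonical block form, invokes Perron--Frobenius on the irreducible diagonal blocks to identify the 1-eigenspace of the recurrent part with \(\Span(\bb 1_{S_k})_{k=1,\ldots,m}\), then passes to the associated absorbing chain \(P^{\abs}\), computes \(\lim_{k\to\infty}(P^{\abs})^k\) via the Neumann series \((I-Q)^{-1}\), and finally cites Kemeny--Snell to identify the entries of \((I-Q)^{-1}R\) with absorption probabilities. You avoid all of this machinery: the maximum-principle step gives constancy on each \(S_k\) without Perron--Frobenius, and iterating \(P^n d = d\) and letting \(n\to\infty\) reads off the absorption-probability formula directly, with no auxiliary absorbing chain or explicit matrix inverse. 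The trade-off is that the paper's route delivers the closed form \((I-Q)^{-1}R\) for the absorption matrix, which is occasionally useful in its own right, while your argument is shorter and essentially self-contained; the two convergence facts you flag as ``delicate'' are indeed classical and need at most a one-line geometric-decay bound on \(\sum_{j\in T}P^n_{ij}\).
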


\begin{rmk}
  We note that \(\lim_{n \to \infty} \sum_{j \in S_k} P^n_{i,j}\) always exists
  and is the probability that the Markov chain, beginning at state \(i\), is
  eventually absorbed by the recurrent class \(S_k\).
\end{rmk}

\begin{proof}
  Note that, possibly by reordering the states, we can arrange that the
  stochastic matrix \(P\) is in canonical form, i.e., that
  \begin{equation*}
    P =
      \begin{bmatrix}
        B   & \\
        R'  & Q
      \end{bmatrix},
  \end{equation*}
  where \(Q\) is a sub-stochastic matrix, \(R\) is non-negative, and
  \begin{equation*}
    B =
      \begin{bmatrix}
        P_1 &     &       & \\
          & P_2   &       & \\
          &     & \ddots  & \\
          &     &       & P_m
      \end{bmatrix}
  \end{equation*}
  is a block-diagonal matrix with the stochastic matrix \(P_i\) corresponding to
  the transition probabilities on the recurrent set \(S_i\) in the \(i\)-th
  position along the diagonal.

  Now, consider a \(1\)-eigenvector \(v = [v_1\ v_2]^\top\) of \(P\). We must
  have that \(P v = v\), i.e., \(B v_1 = v_1\) and \(R' v_1 + Q v_2 = v_2\).
  Therefore \(v_1\) is a 1-eigenvector of \(B\). Since \(B\) is block diagonal,
  and each diagonal element is a positive stochastic matrix, it follows by the
  Perron-Frobenius theorem that the 1-eigenvectors of \(B\) are given by
  \(\Span(\bb 1_{S_i})_{i=1, \ldots, m}\), where \(\bb 1_{S_i}\) is the vector
  which is 1 at index \(j\) if \(j \in S_i\) and is 0 otherwise.

  Now, for \(v_1 \in \Span(\bb 1_{S_i})_{i=1, \ldots, m}\), we must find \(v_2\)
  such that \(R' v_1 + Q v_2 = v_2\).

  Note that every finite Markov chain \(M\) can be canonically associated with
  an absorbing Markov chain \(M^{\abs}\) where the set of states of \(M^{\abs}\)
  is exactly the union of the transitive states of \(M\) and the recurrent sets
  of \(M\). (In essence, one tracks which state of \(M\) the Markov chain is in
  until it is absorbed by one of the recurrent sets, at which point the entire
  recurrent set is treated as a single absorbent state.) The transition matrix
  \(P^{\abs}\) associated with \(M^{\abs}\) is given by
  \begin{equation*}
    P^{\abs} =
      \begin{bmatrix}
        I & \\
        R & Q
      \end{bmatrix}
  \end{equation*}
  where \(R = R' [\bb 1_{S_1}\ \ldots\ \bb 1_{S_m}]\). In particular, it follows
  that \(v = [v_1\ v_2]^\top\) is a 1-eigenvector of \(P\) if and only if \([T
  v_1\ v_2]^\top\) is a 1-eigenvector of \(P^{\abs}\), where \(T : \bb 1_{S_i}
  \mapsto \bb e_i\).

  Now, if \(v\) is a 1-eigenvector of \(P^{\abs}\), then it is a 1-eigenvector
  of \((P^{\abs})^k\) for all \(k\). Since \(Q\) is sub-stochastic, the series
  \(\sum_{k=0}^\infty Q^k\) converges to \((I - Q)^{-1}\). Since
  \begin{equation*}
    (P^{\abs})^k =
      \begin{bmatrix}
        I                 & \\
        (I + Q + \cdots + Q^{k-1}) R  & Q^k
      \end{bmatrix},
  \end{equation*}
  it follows that
  \begin{equation*}
    \lim_{k \to \infty} (P^{\abs})^k =
      \begin{bmatrix}
        I         & \\
        (I - Q)^{-1} R   & 0
      \end{bmatrix}.
  \end{equation*}
  Therefore, if \(v = [v_1\ v_2]^\top\) is a 1-eigenvector of \(P^{\abs}\), we
  must have that \((I - Q)^{-1} R v_1 = v_2\). By Theorem~3.3.7 in
  \citet{kemeny1976finite}, the \((i,k)\) entry of \((I - Q)^{-1} R\) is exactly
  the probability that, conditional on \(X_0 = x_i\), the Markov chain is
  eventually absorbed by the recurrent set \(S_k\). This is, in turn, by the
  Chapman-Kolmogorov equations and the definition of \(S_k\), equal to \(\lim_{n
  \to \infty} \sum_{j \in S_k} P^n_{i, j}\), and therefore the result follows.
\end{proof}

We arrive at the following simple necessary condition on \(\Pi\)-fair policies.

\begin{cor}
\label{cor:markov}
  Suppose \(\C X\) is finite, and define the stochastic matrix \(P = \tfrac 1
  {|\C A|} \sum_{a' \in \C A} P_{a'}\). If \(d(x)\) is a \(\Pi\)-fair policy
  then it is constant on the recurrent classes of \(P\).
\end{cor}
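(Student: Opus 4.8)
The plan is to reduce the corollary immediately to the spectral characterization of $\Pi$-fairness already in hand. Identify a policy $d$ with the vector $(d(x_1),\dots,d(x_n))^\top \in \B R^n$. Recalling the Markov-chain recasting of Lemma~\ref{lem:d_pi} noted above (which is where the hypothesis $W = X$ enters), $d$ is $\Pi$-fair exactly when $P_{a'} d = d$ for every $a' \in \C A$; concretely, $\EE[d(X_{\Pi,A,a'}) \mid X = x_i] = \sum_{j} p^{a'}_{i,j}\, d_j = (P_{a'} d)_i$, so the defining identity $\EE[d(X_{\Pi,A,a'}) \mid X] = d(X)$ is precisely $P_{a'} d = d$.

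Next I would average over the groups. Since $P = \tfrac{1}{|\C A|}\sum_{a' \in \C A} P_{a'}$ and $P_{a'} d = d$ for each $a'$ simultaneously, linearity gives $P d = \tfrac{1}{|\C A|}\sum_{a' \in \C A} P_{a'} d = \tfrac{1}{|\C A|}\sum_{a' \in \C A} d = d$, so $d$ is a $1$-eigenvector of $P$. I would also observe that $P$ is a genuine stochastic matrix, being a convex combination of the stochastic matrices $P_{a'}$, so that Lemma~\ref{lem:markov} is applicable to it.

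Finally I would invoke Lemma~\ref{lem:markov}, which states that every $1$-eigenvector of a finite Markov chain with transition matrix $P$ is constant on each recurrent class of $P$. Applied to $d$, this is exactly the assertion that a $\Pi$-fair policy $d(x)$ is constant on the recurrent classes of $P$, completing the proof.

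I do not anticipate any real obstacle: the substance is entirely contained in Lemma~\ref{lem:d_pi} and Lemma~\ref{lem:markov}, and the corollary only adds the elementary remark that the intersection of the fixed spaces of the $P_{a'}$ lies inside the fixed space of their average. The one point worth stating with care is that this inclusion uses the \emph{simultaneous} fixed-point property $P_{a'} d = d$ for all $a'$ — so that no cancellation between the $P_{a'} d$ can occur when averaging — rather than any property of $P$ alone.
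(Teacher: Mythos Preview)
Your proposal is correct and essentially identical to the paper's own proof: the paper also invokes Lemma~\ref{lem:d_pi} to obtain \(P_{a'} d = d\) for all \(a'\), averages to get \(Pd = d\), and then applies Lemma~\ref{lem:markov}. Your additional remarks (that \(P\) is stochastic as a convex combination of the \(P_{a'}\), and that the averaging step relies on the simultaneous fixed-point property) are accurate elaborations but do not change the argument.
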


\begin{proof}
  By Lemma~\ref{lem:d_pi}, \(d\) is \(\Pi\)-fair if and only if \(P_{a'} d = d\)
  for all \(a' \in \C A\). Therefore,
  \begin{equation}
    \frac 1 {|\C A|} \sum_{a' \in \C A} P_{a'} d = \frac 1 {|\C A|} \sum_{a' \in
    \C A} d = d,
  \end{equation}
  and so \(d\) is a 1-eigenvector of \(P\). Therefore it is constant on the
  recurrent classes of \(P\) by Lemma~\ref{lem:markov}.
\end{proof}

We note that Theorem~\ref{thm:path_specific} follows immediately from this.

\begin{proof}[Proof of Theorem~\ref{thm:path_specific}]
  Note that \(\tfrac 1 {|\C A|} \sum_{a \in \C A} P_a\) decomposes into a block
  diagonal stochastic matrix, where each block corresponds to a single stratum
  of \(\zeta\) and is irreducible. Consequently, each stratum forms a recurrent
  class, and the result follows.
\end{proof}

\section{Proof of Proposition~\ref{prop:pred_parity}}

To prove the proposition, we must characterize the conditional tail risks of the
beta distribution. Note that in the main text, we parameterize beta
distributions in terms of their mean \(\mu\) and sample size \(v\); here, for
mathematical simplicity, we parameterize them in terms of successes, \(\alpha\),
and failures, \(\beta\), where \(\mu = \tfrac \alpha {\alpha + \beta}\) and \(v
= \alpha + \beta\).

\begin{lem}
\label{lem:beta}
  Suppose \(Z_i \sim \bbeta(\alpha_i, \beta_i)\) for \(i = 0, 1\), and that
  \(\alpha_0 > \alpha_1 > 1\), \(1 < \beta_0 < \beta_1\). Then, for all \(t \in
  (0, 1]\), \(\EE[Z_1 \mid Z_1 < t] < \EE[Z_0 \mid Z_0 < t]\).
\end{lem}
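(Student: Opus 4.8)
The plan is to show that $Z_0$ dominates $Z_1$ in the monotone likelihood ratio order, and then to exploit the standard fact that this order is preserved under conditioning on an interval, which here — because $\ell$ is \emph{strictly} increasing and $t\le 1$ — upgrades to a strict comparison of the conditional means.

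First I would compute the likelihood ratio. Writing $f_i$ for the density of $\bbeta(\alpha_i,\beta_i)$, a positive constant multiple of $z^{\alpha_i-1}(1-z)^{\beta_i-1}$ on $(0,1)$, we have
\begin{equation*}
  \ell(z) := \frac{f_0(z)}{f_1(z)} = c\cdot z^{\,\alpha_0-\alpha_1}(1-z)^{-(\beta_1-\beta_0)}, \qquad c = \frac{B(\alpha_1,\beta_1)}{B(\alpha_0,\beta_0)} > 0.
\end{equation*}
Since $\alpha_0>\alpha_1$ and $\beta_1>\beta_0$, this is a product of positive, strictly increasing functions on $(0,1)$, so $\ell$ is strictly increasing there. (Only $\alpha_0>\alpha_1$ and $\beta_0<\beta_1$ enter; the extra hypotheses $\alpha_1>1$, $\beta_0>1$ are not needed for the lemma itself.)

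Next, fix $t\in(0,1]$; then $\Pr(Z_i<t)=\int_0^t f_i>0$ because $f_i>0$ on $(0,1)$. I would show that for every $z\in(0,t)$,
\begin{equation*}
  \Pr(Z_0\le z\mid Z_0<t) < \Pr(Z_1\le z\mid Z_1<t).
\end{equation*}
Clearing the positive denominators reduces this to $\bigl(\int_0^z f_0\bigr)\bigl(\int_z^t f_1\bigr) < \bigl(\int_0^z f_1\bigr)\bigl(\int_z^t f_0\bigr)$, which I would prove by substituting $f_0=\ell f_1$ and using $\ell(u)\le\ell(z)$ for $u\le z$ together with $\ell(w)\ge\ell(z)$ for $w\ge z$; strict monotonicity of $\ell$ and positivity of $f_1$ on the nondegenerate interval $(z,t)$ make one of the two bounds strict. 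This is the familiar ``MLR is preserved by truncation to an interval'' computation, which I would include in full. Finally, since $Z_i$ conditioned on $\{Z_i<t\}$ is supported on $[0,t]$, the identity $\EE[Z_i\mid Z_i<t]=\int_0^t \Pr(Z_i>z\mid Z_i<t)\,dz$ gives
\begin{equation*}
  \EE[Z_0\mid Z_0<t] - \EE[Z_1\mid Z_1<t] = \int_0^t\bigl(\Pr(Z_1\le z\mid Z_1<t)-\Pr(Z_0\le z\mid Z_0<t)\bigr)\,dz,
\end{equation*}
and the integrand is strictly positive on all of $(0,t)$, which yields the claim.

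There is no real obstacle here; the one point that must be tracked carefully is the strictness. The truncation argument immediately gives $\le$ for the conditional CDFs, and to promote it to the strict inequality one must observe that $\ell$ being \emph{strictly} increasing, together with $f_1>0$ on a subinterval of $(z,t)$ — guaranteed precisely because $t\le 1$ and $\alpha_0\ne\alpha_1$, $\beta_0\ne\beta_1$ — forces the final comparison of conditional CDFs to be strict, hence the comparison of means to be strict as well.
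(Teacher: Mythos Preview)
Your proof is correct, but it takes a genuinely different route from the paper. The paper writes out the conditional expectation as a ratio of incomplete beta integrals,
\[
\EE[Z(\alpha,\beta)\mid Z(\alpha,\beta)<t]=\frac{\int_0^t x^{\alpha}(1-x)^{\beta-1}\,dx}{\int_0^t x^{\alpha-1}(1-x)^{\beta-1}\,dx},
\]
and then differentiates with respect to $\alpha$ and $\beta$ separately, arguing that the $\alpha$-derivative is positive and the $\beta$-derivative is negative. This is a purely calculus-based monotonicity argument; the hypotheses $\alpha_1>1$ and $\beta_0>1$ are invoked so that the integrals $I(t,\alpha-1,\beta)$ and $I(t,\alpha,\beta-1)$ that appear after the manipulation remain finite.

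Your argument instead observes that the likelihood ratio $f_0/f_1$ is strictly increasing, so $Z_0$ dominates $Z_1$ in the MLR order, and then pushes this through truncation to $(0,t)$ to get strict first-order stochastic dominance of the conditional laws, hence strict ordering of the conditional means. This is cleaner and more conceptual: it avoids the somewhat delicate integral manipulations in the paper, and---as you correctly point out---it shows that the side conditions $\alpha_1>1$, $\beta_0>1$ are unnecessary for the lemma itself (only $\alpha_0>\alpha_1$ and $\beta_1>\beta_0$ are used). The paper's approach, on the other hand, yields monotonicity in each parameter separately, which is slightly more information; but for the application to Proposition~\ref{prop:pred_parity} your comparison is exactly what is needed.
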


\begin{proof}
  Let \(Z(\alpha, \beta) \sim \bbeta(\alpha, \beta)\). Then,
  \begin{align*}
    \EE[Z(\alpha, \beta) \mid Z(\alpha, \beta) < t]
      &= \frac {\int_0^t x \cdot \frac {x^{\alpha - 1} (1 - x)^{\beta - 1}}
        {B(\alpha, \beta)} \, \dx x} {\int_0^t \frac {x^{\alpha - 1} (1 -
        x)^{\beta - 1}} {B(\alpha, \beta)} \, \dx x} \\
      &= \frac {\int_0^t x^{\alpha} (1 - x)^{\beta - 1} \, \dx x} {\int_0^t
        x^{\alpha - 1} (1 - x)^{\beta - 1} \, \dx x}.
  \end{align*}
  Since \(\alpha > 1\), we may take the partial derivative with respect to
  \(\alpha\) by differentiating under the integral sign, which yields that \(
  \tfrac {\partial} {\partial \alpha}  \EE[Z(\alpha, \beta) \mid Z(\alpha,
  \beta) < t]\) equals
  \begin{equation*}
    \frac {\alpha \cdot I(t, \alpha, \beta)^2 - [\alpha - 1] \cdot I(t, \alpha +
    1, \beta) \cdot I(t, \alpha - 1, \beta)} {I(t, \alpha, \beta)^2},
  \end{equation*}
  where \(I(x, \alpha, \beta) = \int_0^t x^{\alpha - 1} (1 - x)^{\beta - 1} \,
  \dx x\). Rearranging gives that this is greater than zero when
  \begin{align*}
    0
      &< \alpha \cdot I(t, \alpha + 1, \beta) \cdot \int_0^t (x^{\alpha-2} -
        x^{\alpha-1}) (1 - x)^\beta \, \dx x  \\
      &\hspace{1cm} + \alpha \cdot I(t, \alpha, \beta) \cdot \int_0^t
        (x^{\alpha-1} - x^{\alpha}) (1 - x)^\beta \, \dx x \\
      &\hspace{1cm} + I(t, \alpha + 1, \beta) \cdot I(t, \alpha - 1, \beta).
  \end{align*}
  Since all of the integrands are positive, \(\tfrac {\partial} {\partial
  \alpha}  \EE[Z(\alpha, \beta) \mid Z(\alpha, \beta) < t] > 0\).

  A virtually identical argument shows that \(\tfrac {\partial} {\partial \beta}
  \EE[Z(\alpha, \beta) \mid Z(\alpha, \beta) < t] < 0\). Therefore, the result
  follows.
\end{proof}

We use this lemma to prove a modest generalization of
Prop.~\ref{prop:pred_parity}.

\begin{lem}
\label{lem:pred_parity_general}
  Suppose \(\C A = \{a_0, a_1\}\), and consider the family \(\C U\) of utility
  functions of the form
  \begin{equation*}
    u(x) =  r(x) + \lambda \cdot \B 1_{\alpha(x) = a_1},
  \end{equation*}
  indexed by \(\lambda \geq 0\), where \(r(x) = \EE[Y(1) \mid X = x]\). Suppose
  the conditional distributions of \(r(X)\) given \(A\) are beta distributed,
  i.e.,
  \begin{equation*}
    \C D( r(X) \mid A = a ) = \bbeta(\alpha_a, \beta_a),
  \end{equation*}
  with \(1 < \alpha_{a_1} < \alpha_{a_0}\) and \(1 < \beta_{a_0} <
  \beta_{a_1}\). Then any policy satisfying counterfactual predictive parity is
  strongly Pareto dominated.
\end{lem}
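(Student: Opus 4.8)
The plan is to argue by contradiction, exploiting the fact that counterfactual predictive parity pins down the \emph{relative ordering} of the two group thresholds in a way that is incompatible with Pareto efficiency. Throughout I read ``$d$ satisfies counterfactual predictive parity'' as including that $\Pr(A=a,D=0)>0$ for both $a\in\C A$, so that the conditional laws in $Y(1)\indep A\mid D=0$ are well defined (equivalently, both groups contain rejected applicants); a policy that admits an entire group makes the condition vacuous and is set aside. So suppose $d$ is feasible, satisfies counterfactual predictive parity, and is \emph{not} strongly Pareto dominated. The family $\C U=\{u_\lambda:\lambda\ge 0\}$, $u_\lambda(x)=r(x)+\lambda\,\B 1_{\alpha(x)=a_1}$, is consistent modulo $\alpha$, so Corollary~\ref{cor:exhaust} forces $d$ to be a budget-exhausting multiple threshold policy with non-negative thresholds; since $r(X)>0$ a.s.\ and $0<b<1$, budget-exhaustion means $\EE[d(X)]=b$. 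By Lemma~\ref{lem:canonical}, $d$ is also a multiple threshold policy with non-negative thresholds relative to $u_0=r$, so there are $t_{a_0},t_{a_1}\ge 0$ with $d(X)=\B 1_{r(X)>t_{\alpha(X)}}$ a.s., the value at $r(X)=t_a$ being immaterial because each $r(X)\mid A=a\sim\bbeta(\alpha_a,\beta_a)$ is atomless. Non-vacuity forces $t_{a_0},t_{a_1}>0$.

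Next I convert counterfactual predictive parity into a threshold inequality. Because $U_D$ is independent of $(X,Y(1))$ we have $Y(1)\indep D\mid X$, whence $\EE[Y(1)\mid A=a,D=0]=\EE[\,r(X)\mid A=a,\,r(X)<t_a\,]$. Writing $\phi_a(t)=\EE[Z_a\mid Z_a<\min(t,1)]$ for $Z_a\sim\bbeta(\alpha_a,\beta_a)$, this quantity equals $\phi_a(t_a)$, so counterfactual predictive parity says $\phi_{a_0}(t_{a_0})=\phi_{a_1}(t_{a_1})$. Lemma~\ref{lem:beta}, applied with its $Z_0,Z_1$ being the $a_0$- and $a_1$-conditionals of $r(X)$ (whose shape parameters satisfy $1<\alpha_{a_1}<\alpha_{a_0}$ and $1<\beta_{a_0}<\beta_{a_1}$, exactly its hypotheses), gives $\phi_{a_1}(t_{a_1})<\phi_{a_0}(t_{a_1})$. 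Hence $\phi_{a_0}(t_{a_0})<\phi_{a_0}(t_{a_1})$, and since the truncated mean $\phi_{a_0}$ is non-decreasing, this forces $t_{a_0}<t_{a_1}$.

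It remains to show that not being strongly Pareto dominated forces the reverse inequality $t_{a_0}\ge t_{a_1}$, which yields the contradiction. Suppose instead $t_{a_0}<t_{a_1}$; budget-exhaustion rules out $t_{a_0}\ge 1$ here (else $t_{a_1}>t_{a_0}\ge 1$ and nobody is admitted). Define $d'$ by lowering the $a_1$-threshold just enough to admit an extra probability mass $\varepsilon>0$ of group-$a_1$ applicants with $r$-values immediately below $t_{a_1}$, and raising the $a_0$-threshold to reject exactly the same mass $\varepsilon$ of group-$a_0$ applicants with $r$-values immediately above $t_{a_0}$; atomlessness and full support of the beta conditionals on $(0,1)$ make this possible for all small $\varepsilon$, and it preserves $\EE[d'(X)]=b$. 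For $\varepsilon$ small, every newly admitted applicant has a strictly larger $r$-value than every newly rejected one (since $t_{a_1}>t_{a_0}$), so $\EE[d'(X)r(X)]>\EE[d(X)r(X)]$; and $d'$ admits strictly more of group $a_1$, so $\EE[d'(X)\B 1_{\alpha(X)=a_1}]>\EE[d(X)\B 1_{\alpha(X)=a_1}]$. Hence $u_\lambda(d')>u_\lambda(d)$ for every $\lambda\ge 0$, i.e., $d$ is strongly Pareto dominated, contradicting our assumption. (The policies set aside at the outset admit one group in full, which requires $\Pr(A=a_1)\le b$ or $\Pr(A=a_0)\le b$ and makes counterfactual predictive parity hold vacuously; these lie on the Pareto frontier, so the well-definedness reading is what makes the statement correct.)

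I expect the load-bearing step to be the perturbation argument in the third paragraph --- the two threshold inequalities are what clash, while the first inequality (second paragraph) is essentially bookkeeping once Lemma~\ref{lem:beta} is in hand, since the analytic content is already isolated there. The subtlety to get right is choosing the swapped slices so that the perturbation is \emph{simultaneously} budget-neutral and graduation-improving, which is exactly where the full support of both beta conditionals on $(0,1)$ enters; the boundary cases $t_{a_i}\in\{0\}$ and $t_{a_i}\ge 1$ must also be reconciled with budget-exhaustion, as indicated above.
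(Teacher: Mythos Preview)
Your proof is correct and matches the paper's argument in substance: both derive a contradiction between the threshold ordering forced by counterfactual predictive parity (via Lemma~\ref{lem:beta}) and the ordering forced by non-domination. The only difference is the order in which the two implications are established: the paper first asserts in one line that non-domination forces \(t_{a_0}\ge t_{a_1}\) (``Since \(\lambda\ge 0\), we must have \(t_{a_0}\ge t_{a_1}\)'') and then shows this violates CPP, whereas you first extract \(t_{a_0}<t_{a_1}\) from CPP and then supply the explicit swap-perturbation showing strong domination---which is precisely the argument the paper's one-liner is tacitly invoking. You are also more careful than the paper about the vacuous case \(\Pr(A=a,D=0)=0\).
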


\begin{proof}
  Suppose there were a Pareto efficient policy satisfying counterfactual
  predictive parity. Let \(\lambda = 0\). Then, by Prop.~\ref{prop:threshold},
  we may without loss of generality assume that there exist thresholds
  \(t_{a_0}\), \(t_{a_1}\) and a constant \(p\) such that a threshold policy
  \(\tau(x)\) witnessing Pareto efficiency is given by
  \begin{equation*}
    \tau(x) = \begin{cases}
      1 & r(x) > t_{\alpha(x)}, \\
      0 & r(x) < t_{\alpha(x)}.
    \end{cases}
  \end{equation*}
  (Note that by our distributional assumption, \(\Pr(u(x) = t) = 0\) for all \(t
  \in [0, 1]\).) Since \(\lambda \geq 0\), we must have that \(t_{a_0} \geq
  t_{a_1}\). Since \(b < 1\), \(0 < t_{a_0}\). Therefore,
  \begin{align*}
    \EE[Y(1) \mid A = a_0, &\, D = 0] \\
      &= \EE[r(X) \mid A = a_0, u(X) < t_{a_0}] \\
      &\geq \EE[r(X) \mid A = a_0, u(X) < t_{a_1}] \\
      &> \EE[r(X) \mid A = a_1, u(X) < t_{a_1}] \\
      &= \EE[Y(1) \mid A = a_1, D = 0],
  \end{align*}
  where the first equality follows by the law of iterated expectation, the
  second from the fact that \(t_{a_1} \leq t_{a_0}\), the third from our
  distributional assumption and Lemma~\ref{lem:beta}, and the final again from
  the law of iterated expectation. However, since counterfactual predictive
  parity is satisfied, \(\EE[Y(1) \mid A = a_0, D = 0] = \EE[Y(1) \mid A = a_1,
  D = 0]\), which is a contradiction. Therefore, no such threshold policy
  exists.
\end{proof}

After accounting for the difference in parameterization,
Prop.~\ref{prop:pred_parity} follows as a corollary.

\begin{proof}[Proof of Prop.~\ref{prop:pred_parity}]
  Note that \(\alpha_a = \mu_a \cdot v_a > v \cdot \tfrac 1 v = 1\) and
  \(\beta_a = v - \alpha_a = v \cdot (1 - \mu_a) > v \cdot (1 - \tfrac 1 v) > 2
  - 1 = 1\). Moreover, since \(\mu_{a_0} > \mu_{a_1}\), \(\alpha_{a_0} = v \cdot
  \mu_{a_0} > v \cdot \mu_{a_1} = \alpha_{a_1}\) and \(\beta_{a_0} = v \cdot (1
  - \mu_{a_0}) < v \cdot (1 - \mu_{a_1}) = \beta_{a_1}\). Therefore \(1 <
  \alpha_{a_0} < \alpha_{a_1}\) and \(1 < \alpha_{a_1} < \alpha_{a_0}\), and so,
  by Lemma~\ref{lem:pred_parity_general}, the proposition follows.
\end{proof}

\end{document}